\newcommand{\AT}[1]{\textcolor{blue}{#1}}
\mathchardef\mhyphen="2D
\DeclareMathOperator*{\argsup}{argsup}
\theoremstyle{plain}
\newtheorem{theorem}{Theorem}[section]
\newtheorem{lemma}[theorem]{Lemma}
\newtheorem{proposition}[theorem]{Proposition}
\newtheorem{definition}[theorem]{Definition}
\newtheorem{assumption}[theorem]{Assumption}
\newtheorem{remark}{Remark}
\DeclareMathOperator*{\argmax}{arg\,max}
\DeclareMathOperator*{\argmin}{arg\,min}
\renewcommand{\P}{\mathbb{P}}
\newcommand{\pa}{\mathrm{\pa}}
\newcommand{\RN}[1]{%
  \textup{\uppercase\expandafter{\romannumeral#1}}%
}
\newcommand{\kibitz}[2]{\ifnum\Comments=1\textcolor{#1}{#2}\fi}
\title{Optimal Best Arm Identification with Fixed Confidence in Restless Bandits}
\author[1]{P. N. Karthik}
\author[2]{Vincent Y. F. Tan}
\author[3]{Arpan Mukherjee}
\author[3]{Ali Tajer}
\affil[1]{Indian Institute of Technology Hyderabad} \affil[2]{National University of Singapore}
\affil[3]{Rensselaer Polytechnic Institute

Emails: \href{mailto:pnkarthik@ai.iith.ac.in}{pnkarthik@ai.iith.ac.in}, 
\href{mailto:vtan@nus.edu.sg}{vtan@nus.edu.sg},
\href{mailto:mukhea5@rpi.edu}{mukhea5@rpi.edu}, \href{mailto:tajer@ecse.rpi.edu}{tajer@ecse.rpi.edu}}
\begin{document}

\maketitle

\begin{abstract}
We study best arm identification in a {\em restless} multi-armed bandit setting with finitely many arms. The discrete-time data generated by each arm forms a homogeneous Markov chain taking values in a common, finite state space. The state transitions in each arm are captured by an {\em ergodic} transition probability matrix (TPM) that is a member of a single-parameter exponential family of TPMs. The real-valued parameters of the arm TPMs are {\em unknown} and belong to a given space. Given a function~$f$ defined on the common state space of the arms, the goal is to identify the best arm---the arm with the largest average value of $f$ evaluated under the arm's stationary distribution---with the fewest number of samples, subject to an upper bound on the decision's error probability (i.e., the {\em fixed-confidence} regime). 
A lower bound on the growth rate of the expected stopping time is established in the asymptote of a vanishing error probability. Furthermore, a policy for best arm identification is proposed, and its expected stopping time is proved to have an asymptotic growth rate that matches the lower bound. It is demonstrated that tracking the long-term behavior of a certain Markov decision process and its state-action visitation proportions are the key ingredients in analyzing the converse and achievability bounds. It is shown that under every policy, the state-action visitation proportions satisfy a specific approximate flow conservation constraint and that these proportions match the optimal proportions dictated by the lower bound under any asymptotically optimal policy. 
The prior studies on best arm identification in restless bandits focus on {\em independent observations} from the arms, {\em rested} Markov arms, and restless Markov arms with {\em known} arm TPMs. In contrast, this work is the first to study best arm identification in restless bandits with unknown arm TPMs.

\end{abstract}

\section{\sc Introduction}
Multi-armed bandits constitute an effective probabilistic model for sequential decision-making under uncertainty. 
In the canonical multi-armed bandit models, each arm is assumed to yield random rewards generated by an unknown reward distribution. The arms are selected sequentially over time to optimize a pre-specified reward measure. The two common frameworks to formalize bandit algorithms are {\em regret minimization} and {\em pure exploration}. In regret minimization, the objective is to have an arm selection policy that minimizes the difference between the expected reward realized and the maximum reward achievable by an oracle that knows the true reward distributions. Minimizing such regret measures captures the inherent {\em exploration-exploitation} trade-off that specifies the balance between the desire to choose the arms with high expected rewards (exploitation) against the need to explore other arms to acquire better information discrimination (exploration). In this context, there exists a wide range of algorithms for different settings based on the notions of Upper Confidence Bound (UCB)  \cite{lai1985asymptotically, lai1987adaptive}
and Thompson Sampling \cite{thompson1933likelihood}
An in-depth analysis of these algorithms and a detailed survey of other studies on regret minimization can be found in \cite{lattimore2020bandit}.

The pure exploration framework, on the other hand, focuses on identifying one or a group of arms with specified properties using the fewest samples. Pure exploration disregards the reward regret incurred and is closely related to the literature on sequential hypothesis testing~\cite{chernoff1959sequential,albert1961sequential}. In pure exploration, algorithm design involves forming optimal data-adaptive sampling decisions and characterizing optimal stopping times. 

A practical instance of a pure exploration problem is {\em best arm identification} (BAI), which entails finding the best arm---the arm with the largest mean reward---as quickly and accurately as possible. Broadly, BAI is studied in two complementary regimes: the {\em fixed-budget} regime, in which the objective is to use a pre-specified number of arm sampling rounds to identify the best arm with minimal error probability, and the {\em fixed-confidence} regime, where the goal is to minimize the number of arm sampling rounds required to find the best arm with a pre-specified decision accuracy level. In this paper, we focus on BAI in a multi-armed bandit with restless Markov arms and focus on the fixed-confidence regime. In the rest of this section, we specify the problem framework and the technical contributions.

\subsection{\sc Problem Description}
We consider a {\em restless} multi-armed bandit setting with finitely many arms. In restless bandits, each arm has a finite number of states that evolve over time according to a homogeneous Markov chain taking values in a common, finite state space. We assume that the transition probability matrix (TPM) governing the state transitions in each arm belongs to a single-parameter exponential family of ergodic TPMs. Hence, a restless bandit setting with $K$ arms can be specified by $K$ TPMs. The real-valued parameters of the TPMs are unknown and belong to a given parameter space. The vector of TPM parameters specifies the problem instance. The TPM of each arm, being ergodic, is associated with a unique stationary distribution.

In this setting, we adopt a non-constant \emph{reward} function defined on the common state space of the arms. Accordingly, the {\em best arm} is defined as the arm with the largest average reward computed under the arm's stationary distribution. The learner is unaware of the underlying arm parameters and is faced with the task of identifying the best arm.
The learner selects the arms sequentially and one at a time \footnote{For simplicity in presentation, we assume that the learner selects only one arm at each time instant. The results of this paper can be easily extended to the case when the learner samples a subset of arms at each time instant.} Upon selecting an arm, the learner observes the current state of the arm. At the same time, the unobserved arms are \emph{restless} and {\em continue} undergoing state transitions.
Given a pre-specified confidence level $\delta \in (0,1)$, the learner's goal is to minimize the expected number of arm selections required to find the best arm while ensuring that the terminal error probability does not exceed $\delta$.


\subsection{\sc Key Analytical Challenges}
\label{subsec:key-analytical-challenges}
The continuous evolution of the unobserved arms necessitates that the learner, at each time instance,  maintains a record of (a) each arm's {\em delay}, which is defined as the time elapsed since an arm was last selected, and (b) each arm's {\em last observed state}, which is the state of each arm as observed at the last instance that it was selected. Keeping track of each arm's delay and the last observed state provides the learner with a historical perspective on how each arm performed or behaved during its previous selection. This information serves as a reference point for understanding an arm's characteristics or potential changes, helping the learner assess the arm's current state relative to its past behavior.
The existing studies on restless bandits establish that the arm delays and the last observed states collectively form a {\em controlled Markov chain}, with the arm selections serving as the controls and thereby influencing the overall behavior of the system (e.g., \cite[Section 5]{ortner2012regret}). In other words, we are in the setting of a {\em Markov decision process} (MDP) in which the state space is the space of all arm delays and last observed states, and the action space is the set of arms. We remark that such an MDP has potentially a {\em countably infinite} state space induced by the arm delays that may progressively increase with time. We write $\mathcal{M}$ as a shorthand representation for the above MDP.


\textbf{MDP ergodicity.\ }
Previous studies on restless arms, such as \cite{ortner2012regret, wang2020restless,karthik2021detecting}, have emphasized the importance of considering the ergodicity properties of the MDP $\mathcal{M}$ in their analysis. These studies typically establish some form of convergence of empirical functionals (e.g., reward, cost, and state-action visitations) to their respective true values, relying on the ergodicity/communication properties of $\mathcal{M}$. The task of proving such convergence is exacerbated when dealing with countable state MDPs (such as $\mathcal{M}$). Prior studies on countable-state MDPs reveal that guaranteeing the desired ergodicity properties relies on various regularity conditions.
For example, \cite{borkar1982identification}~and~\cite{borkar1988control} assume that the countable-state MDPs therein are ergodic under {\em every} stationary control policy. This condition is met in \cite{karthik2021detecting,karthik2021learning} under a so-called ``trembling hand'' model. However, imposing similar conditions in our work has significant implications. It restricts the learner's choice of allowable policies to only those that make $\mathcal{M}$ ergodic; as such, $\mathcal{M}$ is merely communicating (see Lemma~\ref{lem:MDP-is-communicating}, a weaker property than ergodicity \cite[Section 8.3.1]{puterman2014markov}.
One central challenge in this paper is devising a policy under which the MDP $\mathcal{M}$ has ``near-ergodicity'' properties and yet is amenable to analysis.


\textbf{Tracking the proportions of state-action visitations.\ }
Prior studies on BAI in the fixed-confidence regime have established problem-dependent lower bounds on the expected time required to find the best arm~\cite{garivier2016optimal, Kaufmann2016}. Characterizing these bounds involves solving sup-inf optimization problems, where the outer supremum is with respect to all probability distributions on the arms, while the inner infimum  accounts for alternative problem instances with varying best arm locations. The key to achieving such lower bounds is tracking the proportions of arm selections with time and ensuring that these proportions match the unique optimal (``sup''-attaining) proportion in the long run. These are the principles in the design of, for instance, ``C-tracking'' and ``D-tracking'' algorithms in \cite{garivier2016optimal}. In contrast to these known results, when dealing with restless Markov arms, the lower bounds are characterized not by the proportions of arm selections but rather by the proportions of {\em state-action visitations} of the MDP $\mathcal{M}$. Achieving such lower bounds necessitates ensuring that the proportion of visits to each state-action pair in the long term aligns with the optimal proportion specified by the lower bound. In particular, {\em merely matching the long-term proportions of action visitations (arm selections) with the optimal arm selection proportions may not lead to achieving the lower bound}. The primary challenge here is that while the learner can directly control the arm selections and the associated visitations, the learner lacks control over the state evolution of the MDP and, thereby, the state visitation proportions. Consequently, devising a policy that inherently guarantees the correct visitation proportion for each state-action pair is pivotal to achieving the lower bound. In this paper, we provide a comprehensive solution to this complex challenge.

\subsection{\sc Our Contributions}
We highlight the key contributions of the paper and how we address the challenges outlined in the previous section.
\begin{enumerate}[leftmargin=*]
    \item {\bf Maximum-delay constraint.} As mentioned earlier, the customary ergodicity assumptions of prior works, critical for analytical tractability, do not apply directly to our specific setting. As a solution to render the countable-state MDP $\mathcal{M}$ amenable to analysis, we constrain the {\em maximum delay} of each arm to be equal to a fixed and large positive integer denoted by $R$. This reduces the MDP's countably infinite state space to a finite state space. Despite this reduction, we show that the communication properties of the finite-state MDP with max-delay equal to $R$ (denoted $\mathcal{M}_R$) and the unconstrained MDP $\mathcal{M}$ are identical (Lemma~\ref{lem:MDP-R-is-communicating}), thereby not compromising our results significantly. We note that while it is computationally prohibitive to realize the countable-state MDP $\mathcal{M}$ on a machine with finite memory, the finite-state MDP $\mathcal{M}_R$ can indeed be realized on a machine with finite memory.
    
    \item {\bf Instance-dependent lower bound.} Given a problem instance specified by a vector of arm parameters, we establish a problem-dependent lower bound on the limiting growth rate of the expected number of arm selections (or simply the expected stopping time) required to find the best arm (Proposition~\ref{prop:lower-bound}). This growth rate is captured by the solution to a sup-inf optimization problem. In this problem, the outer supremum is over the {\em polytope} of all state-action distributions satisfying the flow constraint and the maximum delay constraint, and the inner infimum is over all alternative problem instances with the best arm distinct from the best arm in the true problem instance. Furthermore, the set over which the supremum is evaluated {\em depends} on the true problem instance. This is in contrast to the existing literature on BAI \cite{garivier2016optimal,Kaufmann2016,moulos2019optimal}. Consequently, it is unclear if this supremum is attained by a unique element in the set. Notably, the uniqueness of the sup-attaining solution in \cite{garivier2016optimal, Kaufmann2016, moulos2019optimal} significantly simplifies the subsequent analysis. 

    \item {\bf Sup-inf optimization.} 
    We show that when $R$ is the maximum delay of each arm, the objective function appearing in the sup-inf optimization of the lower bound contains Kullback--Leibler divergence terms that are functions of {\em powers} of TPMs up to order~$R$. The presence of second- and higher-order TPM powers further hinders simplifying the inner infimum, unlike in \cite{garivier2016optimal,Kaufmann2016,moulos2019optimal} where the inner infimum may be simplified further and cast as a minimum over finitely many non-best arms. Notwithstanding this, we employ a version of Berge's maximum theorem for non-compact sets~\cite[Theorem 1.2]{feinberg2014berges} to show that the inner infimum expression is a {\em continuous} function in its arguments despite the non-compactness of the set of alternative problem instances. We use this result to show that the potential {\em set} of sup-attaining solutions is convex, compact, and upper-hemicontinuous in the arm parameters. 

    \item {\bf Policy design.} We design a policy that selects the arms according to a certain time-dependent probability distribution on the arms, {\em conditional} on the current state of the MDP $\mathcal{M}_R$, while respecting the maximum delay constraint. This is in contrast to the explicit selection of arms under the C-tracking and D-tracking algorithms in \cite{garivier2016optimal}. 
    We show that under this policy, the MDP $\mathcal{M}_R$ is ``near-ergodic'' in the following sense: if the probability distribution for selecting the arms at any given time $n$ were to be frozen and used to select the arms for all subsequent times $t \geq n$, then the MDP $\mathcal{M}_R$ becomes ergodic, admits a unique stationary distribution (on the space of state-action pairs), and consequently 
    every state-action pair is visited infinitely often (Lemma~\ref{lem:sufficient-exploration-of-state-actions}). 

    \item {\bf Convergence of state-action visitations and asymptotic optimality.} 
    We compute the empirical {\em state-action-state} transition probabilities and use this to design a test statistic that mimics the form of the inner infimum term in the lower bound expression (see~\eqref{eq:Z-of-n}). We employ this test statistic in conjunction with a random, time-dependent {\em threshold} that is a function of state-action visitations, and stop further selection of arms whenever the test statistic exceeds the threshold. We show that this leads to stopping in finite time almost surely and declaring the best arm correctly with the desired accuracy (Proposition~\ref{prop:stop-in-finite-time-and-error-prob-less-than-delta}). Furthermore, we show that the limiting growth rate of the expected stopping time satisfies an upper bound that matches the lower bound (Proposition~\ref{prop:upper-bound-on-expected-stopping-time}). 
    Our proof of the upper bound relies on showing the convergence of the empirical state-action visitation proportions to the {\em set} of sup-attaining state-action probability distributions governing the lower bound (Lemma~\ref{lem:concentration-of-state-action-visitations}).
\end{enumerate}

\subsection{\sc Overview of Prior Studies}
\textbf{Prior works on BAI. } BAI falls within the active sequential hypothesis testing framework of Chernoff~\cite{chernoff1959sequential} and Albert~\cite{albert1961sequential}, and has since been studied in a plethora of contexts. \cite{even2002pac} studies fixed-confidence BAI and provides a successive elimination algorithm for finding the best arm, proving an upper bound on its stopping time that only holds with high probability. For a similar setting as in \cite{even2002pac}, \cite{garivier2016optimal} presents (a) a sup-inf lower bound on the limiting growth rate of the expected stopping time using change-of-measure arguments, and (b) two algorithms for tracking the proportions of arm selections (C-tracking and D-tracking), along with upper bounds on stopping times that hold almost surely and in expectation for both algorithms. While the optimal solution to the lower bound in \cite{garivier2016optimal} was shown to be unique, \cite{degenne2019pure} investigates the case when the optimal solution is potentially non-unique and/or the set of all optimal solutions is non-convex. The paper \cite{jedra2020optimal} investigates fixed-confidence BAI in {\em linear} bandits with finitely/uncountably many arms and provides nearly-matching lower and upper bounds on the limiting growth rate of the expected stopping time. While the algorithms in the aforementioned studies explicitly compute the sup-attaining solution(s) at every time step for an empirical problem instance arising from empirical arm means, the recent study \cite{mukherjee2023best} proposes a computationally efficient policy that circumvents the computation of the sup-attaining solution(s). 

In another direction, \cite{audibert2010best} investigates fixed-budget BAI, proposes a {\em successive-rejects} algorithm, and obtains an error probability upper bound for the same. While problem-dependent lower bounds are commonplace in the studies on fixed-confidence BAI, deriving such bounds for the fixed-budget regime is often challenging. Instead, the studies on fixed-budget BAI characterize {\em minimax lower bounds} on the error probability; such a lower bound dictates that there exists a problem under which every policy incurs an error probability with the minimum value given by the lower bound. In this space, the paper \cite{carpentier2016tight} obtains a minimax lower bound on the error probability of fixed-budget BAI, along with an upper bound that is order-wise tight in the exponent of the  error probability. Yang and Tan~\cite{yang2022minimax} investigate fixed-budget BAI in linear bandits and propose an algorithm based on the idea of G-optimal designs. They prove  a minimax lower bound on the error probability, similar to \cite{carpentier2016tight}, and obtain an upper bound on the error probability of their algorithm. 

\textbf{Prior works on pure exploration in Markov bandits.} While Markov bandits have been extensively explored in the context of regret minimization \cite{Gittins1979,Whittle1988,agrawal1989asymptotically,ortner2012regret,wang2020restless,liu2012learning}, they have not been explored as well in the context of pure exploration. \cite{karthik2020learning} studies fixed-confidence odd arm identification in rested Markov bandits (where the unobserved arms do not exhibit state transitions, and the goal is to find the anomalous or odd arm). The studies in~\cite{karthik2021detecting,karthik2021learning} extend the results of \cite{karthik2020learning} to the setting of restless arms, using a trembling-hand model for arms selection inspired by cognitive neuroscience. \cite{moulos2019optimal} investigates BAI in rested Markov bandits under a parametric model for arm TPMs and {\em hidden} Markov observations from the arms, proposes a sup-inf lower bound on the limiting growth rate of the expected stopping time, and proposes a D-tracking rule similar to \cite{garivier2016optimal}. The setting of rested arms can be viewed as a special case of the setting of restless arms in which the arm delays are always equal to~$1$. Hence, the Kullback--Leibler divergence terms appearing in the lower bound of \cite{moulos2019optimal} are not functions of the second and higher order powers of TPMs. As a result, the inner infimum expression of the lower bound therein may be simplified further and cast as a minimum over finitely many non-best arms, as in \cite{garivier2016optimal}. This simplification may be exploited further to demonstrate the uniqueness of the optimal solution to the lower bound, thus greatly simplifying the achievability analysis. However, the presence of higher-order powers of TPMs in the Kullback--Leibler divergence terms in our setting do not permit further simplification of the inner infimum expression, thereby forcing us to work with a {\em set} of optimal solutions to the lower bound and its associated analytical challenges (e.g., upper-hemicontinuity instead of continuity). \cite{karthik2022best} investigates BAI in restless bandits when the arm TPMs are known up to a permutation. 
Our work studies BAI in restless bandits with unknown TPMs. 

\textbf{Related works on MDPs.} The paper \cite{al2021adaptive} studies the problem of identifying the best policy in MDPs--- the one that maximizes the expected sum of discounted rewards over an infinite time horizon---in the fixed-confidence regime, when the learner has access to the next state {\em and} action at every time instant (generative model). In a follow-up work \cite{al2021navigating}, the results in \cite{al2021adaptive} are  extended to the case when the learner can access only the next action but not the next state (as in our work). Both studies~\cite{al2021adaptive,al2021navigating} present lower and upper bounds on the limiting growth rate of the time to identify the best policy. They propose a relaxation to their lower bounds by leveraging the structure of the MDP reward function, leading to a discrepancy of factor $2$ between the upper and lower bounds. However, a similar relaxation of the lower bound as in~\cite{al2021adaptive,al2021navigating} is not possible in our work. This is because the notion of MDP rewards is void in our work since the central problem we address is BAI and not reward maximization (or regret minimization), which is typical of MDPs. For an in-depth review of MDPs, see~\cite{puterman2014markov}. For more related works on MDPs, see~\cite{al2021navigating} and the references therein.

\subsection{\sc Paper Organisation}
In Section~\ref{sec:notations}, we introduce the single-parameter exponential family of TPMs and the central objective of our paper. In Section~\ref{sec:delays-last-observed-states}, we introduce the countable-state MDP of arm delays and last observed states, outline its flow conservation property, and describe a reduction of its countably infinite state space to a finite state space via a constraint on the maximum delay of each arm. In Section~\ref{sec:lower-bound}, we present a lower bound on the asymptotic growth rate of the expected stopping time, the first main result of the paper. In Section~\ref{sec:achievability}, we present our policy for BAI. In Section~\ref{sec:results}, we present results on the performance of our policy, {\color{black} and demonstrate 
that our policy achieves the lower bound asymptotically as error probability vanishes.} {\color{black} In Section~\ref{sec:computational-efficient-variant}, we present preliminary ideas on making our proposed policy computationally feasible, albeit at the expense of asymptotic optimality.} In Section~\ref{sec:concluding-remarks-and-discussion}, we include a short discussion, provide concluding remarks, and outline future directions. The detailed proofs of all the results stated in the paper are presented in the appendices.

\section{\sc Preliminaries}
\label{sec:notations}

Let $\mathbb{N}\coloneqq \{1, 2, \ldots\}$ denote the set of positive integers. All vectors are column vectors unless stated otherwise. 

\subsection{\sc Restless Bandit Model}
We consider a {\em restless} multi-armed bandit setting with $K\geq 2$ arms in which each arm has a finite number of states that temporally evolve according to a discrete-time homogeneous Markov process taking values in a common, finite state space $\mathcal{S}=\{1, \ldots, |\mathcal{S}|\}$. 
To formalize the transitions of the Markov processes of different arms on $\mathcal{S}$, we define a parameterized family of transition probability matrices (TPMs) as ${\cal P}(\Theta)\coloneqq \{P_\theta \;:\; \theta\in\Theta\}$, where $\Theta \subset \mathbb{R}$ is a fixed and known parameter space, and for each $\theta\in\Theta$, $P_\theta$ is a valid TPM. We denote the parameter of arm $a\in[K]\coloneqq\{1,\dots,K\}$ by $\theta_a$, and assume that the evolution of states on arm $a$ is governed by the TPM $P_{\theta_a}$. Accordingly, we define $\boldsymbol{\theta}\coloneqq[\theta_1, \ldots, \theta_K]^\top \in \Theta^K$ and refer to $\boldsymbol{\theta}$ as a {\em problem instance}. $\mathbb{P}_{\boldsymbol{\theta}}$ and $\mathbb{E}_{\boldsymbol{\theta}}$, respectively, denote the probability measure and the associated expectation induced by instance $\boldsymbol{\theta}$. We assume that each arm's temporal evolution is independent of the rest. 

\subsection{\sc Single-Parameter Exponential Family of TPMs}
We assume that the TPMs are generated according to a single-parameter exponential family studied in \cite{moulos2019optimal}. The model studied here is a generalization of the single-parameter exponential family model for independent observations from the arms studied in \cite{prabhu2020sequential}.
Fix an irreducible\footnote{This means that the whole state space $\mathcal{S}$ constitutes a single communicating class.} TPM $P$ on $\mathcal{S}$. We call $P$ the {\em generator} of the family. Let $f: \mathcal{S} \to \mathbb{R}$ be a known
function. 
Given $P$ and $f$, define  $\Tilde{P}_\theta$ for any $\theta \in \Theta$  such that
\begin{equation}
    \Tilde{P}_\theta(j|i) = P(j|i)\ \exp({\theta\cdot  f(j)})\ , \qquad \forall i, j\in \mathcal{S}\ ,
    \label{eq:P-tilde-theta}
\end{equation}
where $P(j|i)$ and $\Tilde{P}_\theta(j|i)$ denote the $(i,j)$-th entry of $P$ and $\tilde{P}_\theta$, respectively.
The rows of $\tilde{P}_\theta$ do not necessarily sum up to 1. Hence,  $\tilde{P}_\theta$ is not necessarily a valid TPM. Nevertheless, we can normalize~\eqref{eq:P-tilde-theta} suitably to obtain a valid TPM in the following manner. For each $\theta \in \Theta$, let $\rho(\theta)$ be the Perron--Frobenius eigenvalue of $\Tilde{P}_\theta$. From the Perron-Frobenius theorem \cite[Theorem 8.8.4]{horn2013matrix}, we know that there exist unique left and right eigenvectors associated with the eigenvalue $\rho(\theta)$, say $\mathbf{u}_\theta=[\mathbf{u}_\theta(i):i \in \mathcal{S}]^\top$ and $\mathbf{v}_\theta=[\mathbf{v}_\theta(i): i \in \mathcal{S}]^\top$, respectively, such that $\mathbf{u}_\theta(i)>0, \mathbf{v}_\theta(i)>0$ for all $i\in \mathcal{S}$, and $\sum_{i\in \mathcal{S}} \mathbf{u}_\theta(i) \,  \mathbf{v}_\theta(i)=1$. Subsequently, the single-parameter exponential family with generator  $P$ is defined as ${\cal P}_\theta=\{P_\theta\;:\; \theta\in\Theta\}$, where for each $\theta\in\Theta$, $P_\theta$ 
is specified by
\begin{equation}
    P_\theta(j|i) = \frac{\mathbf{v}_\theta(j)}{\rho(\theta)\, \mathbf{v}_\theta(i)} \ \Tilde{P}_\theta(j|i)\ , \qquad  \ i, j\in \mathcal{S}\ .
    \label{eq:P-theta}
\end{equation} 
It can be readily verified that \eqref{eq:P-theta} specifies a valid TPM since
\begin{align}
    \sum_{j\in \mathcal{S}} P_\theta(j|i)= \frac{1}{\rho(\theta)\, \mathbf{v}_\theta(i)}\sum_{j\in \mathcal{S}} \mathbf{v}_\theta(j) \Tilde{P}_\theta(j|i) =1\ , \qquad \forall i \in \mathcal{S}, ~\forall\theta\in \Theta\  .
\end{align}
Furthermore, for each $\theta \in \Theta$, the matrix $P_\theta$ is irreducible and positive recurrent. Hence, $P_\theta$ has a unique stationary distribution, which we denote by $\mu_\theta=[\mu_\theta(i): i \in \mathcal{S}]^\top$. Note that $P_0=P$.

Next, similar to~\cite{moulos2019optimal}, we impose mild assumptions on $P$. For this purpose, define $M_f=\max_{i\in \mathcal{S}} f(i)$ and $m_f=\min_{i\in \mathcal{S}} f(i)$. Accordingly, define the sets
\begin{align}
    \mathcal{S}_{M_f} = \{i\in \mathcal{S}: f(i)=M_f\} \ , \qquad \mbox{and} \qquad \mathcal{S}_{m_f} = \{i \in \mathcal{S}: f(i)=m_f\}\ .
\end{align}
\begin{assumption}
    We assume that $P$ satisfies the following properties. 
    \begin{itemize}
        \item {\rm A}$_1$: The submatrix of $P$ with rows and columns in $S_{M_f}$ is irreducible.
        \item {\rm A}$_2$:  For every $i\in \mathcal{S} \setminus \mathcal{S}_{M_f}$, there exists $j\in \mathcal{S}_{M_f}$ such that $P(j|i)>0$.
        \item {\rm A}$_3$: The submatrix of $P$  with rows and columns in $S_{m_f}$ is irreducible. 
        \item {\rm A}$_4$: For every $i\in \mathcal{S} \setminus \mathcal{S}_{m_f}$, there exists $j\in \mathcal{S}_{m_f}$ such that $P(j|i)>0$. 
    \end{itemize}
\end{assumption}
These assumptions, collectively, are mild and cover a wide range of models. For instance, when $P$ has strictly positive entries, it satisfies all of the above assumptions. In Remark~\ref{rem:need-for-parametric-model}, later in the paper, we elaborate on the crucial role of the above parametric model in our study.

For any integer $d\geq 1$ and TPM $Q\in{\cal P}(\Theta)$, let $Q^{d}$ denote the matrix obtained by multiplying $Q$ with itself $d$ times. Also, for any $i,j\in\mathcal{S}$ and $d\geq 1$, let $Q^{d}(j|i)$ denote the $(i,j)$-th entry of $Q^{d}$, and let $Q^d(\cdot|i)$ denote the $i$-th row of $Q^d$.

\subsection{\sc Best Arm Identification}

{\color{black} Throughout the paper, we assume that the generator $P$ and function $f$ are known beforehand.} Given $\boldsymbol{\theta}=[\theta_a: a \in [K]]^{\top}$ and the exponential family generated by $(P, f)$ via \eqref{eq:P-tilde-theta}, we define the {\em mean} of arm~$a$ as
\begin{equation}
\eta_{\theta_a} \coloneqq \sum_{i \in \mathcal{S}}\ f(i)\, \mu_{\theta_a}(i)\ , \qquad a \in [K]\ .
\label{eq:nu_a} 
\end{equation}
{\color{black} Let $\boldsymbol{\eta} \coloneqq [\eta_{\theta_a}: a \in [K]]^\top$.}  We define the {\em best arm} $a^\star(\boldsymbol{\theta})$ under the instance $\boldsymbol{\theta}$ as the arm with the largest mean, i.e.,
\begin{equation}
a^\star(\boldsymbol{\theta}) \coloneqq \arg \max _{a\in [K]}\ \eta_{\theta_a} = \arg \max _{a\in [K]}\ \sum_{i \in \mathcal{S}}\ f(i)\, \mu_{\theta_a}(i)\ .
\label{eq:best_arm}
\end{equation} 
We assume that $a^\star(\boldsymbol{\theta})$ is unique for all $\boldsymbol{\theta}\in \Theta^K$. In fixed-confidence BAI, a learner who does not have any prior knowledge of the instance $\boldsymbol{\theta}$, wishes to identify $a^\star(\boldsymbol{\theta})$ with the fewest number of arm selections (on the average) such that the decision error probability is confined below a pre-specified confidence level (a more formal specification of the problem objective is deferred until Section~\ref{sec:objective}). To distinguish the best arm from the rest, we write $\textsc{Alt}(\boldsymbol{\theta})$ to denote the set of all problem instances {\em alternative} to $\boldsymbol{\theta}$, i.e., those instances under which the best arm differs from the one under $\boldsymbol{\theta}$. Hence,\begin{equation}
    \textsc{Alt}(\boldsymbol{\theta})
    \coloneqq \{\boldsymbol{\lambda} \in \Theta^K: \exists\ a\neq a^\star(\boldsymbol{\theta}) \text{ such that }\eta_{\lambda_a} > \eta_{\lambda_{a^\star(\boldsymbol{\theta})}}\}\ .
    \label{eq:alt-theta}
\end{equation}
The Perron--Frobenius eigenvalue of $\tilde{P}_\theta$, $\rho(\theta)$, is pivotal in analyzing the properties of $ \textsc{Alt}(\boldsymbol{\theta})$. To formalize the connection, define $A(\theta)\coloneqq\log \rho(\theta)$, $\theta \in \Theta$. An important property of the family in \eqref{eq:P-theta} is that $A$ is differentiable, and $\Dot{A}=\frac{\mathrm{d}A}{\mathrm{d}\theta}$ is a strictly increasing and bijective map,
as noted in the following lemma (see \cite{moulos2019optimal} for a proof).
\begin{lemma}\cite[Lemma~2]{moulos2019optimal}
\label{lem:important-properties-of-family}
Let $P$ be an irreducible TPM on the finite state space $\mathcal{S}$ satisfying Assumptions {\rm A}$_1$-{\rm A}$_4$. Let $f:\mathcal{S}\to \mathbb{R}$ be a non-constant function. Consider the single-parameter exponential family of TPMs defined in~\eqref{eq:P-theta}, with $\tilde{P}_\theta$ as defined in~\eqref{eq:P-tilde-theta}. Let $A(\theta)=\log \rho(\theta)$ denote the log Perron--Frobenius eigenvalue of $\tilde{P}_\theta$. Then, the following properties hold.
\begin{enumerate}
    \item $\theta \mapsto A(\theta)$ is analytic.
    \item $P_\theta$ is irreducible and positive recurrent, and hence admits a unique stationary distribution, say $\mu_\theta$. 
    
    \item $\Dot{A}(\theta)=\eta_\theta = \sum_{i\in \mathcal{S}} f(i) \, \mu_\theta(i)$.
    
    \item The mapping $\theta \mapsto \Dot{A}(\theta)$ is strictly increasing.
    
    \item Let $\mathcal{M} \coloneqq \{\eta_{\theta}: \theta\in \Theta\}$. Then, 
    the mapping $\theta \mapsto \Dot{A}(\theta)$ is a bijection between $\Theta$ and $\mathcal{M}$.
\end{enumerate}
\end{lemma}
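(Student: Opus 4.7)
The plan is to walk through the five items in order using Perron--Frobenius theory applied to the tilted kernel $\tilde{P}_\theta$. As a preparatory observation, because $\exp(\theta f(j)) > 0$, the matrix $\tilde{P}_\theta$ inherits the sparsity pattern of $P$ and is therefore irreducible for every $\theta \in \Theta$. By Perron--Frobenius, $\rho(\theta)$ is a simple eigenvalue of $\tilde{P}_\theta$ with strictly positive left and right eigenvectors $\mathbf{u}_\theta, \mathbf{v}_\theta$, and the normalization $\sum_i \mathbf{u}_\theta(i)\mathbf{v}_\theta(i)=1$ is well posed.

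For item 1, I would invoke analytic perturbation theory: the entries of $\tilde{P}_\theta$ are entire in $\theta$, and a simple eigenvalue of an analytic matrix family is itself analytic, with an accompanying analytic choice of eigenvectors. Composing with $\log$ and using $\rho(\theta)>0$ yields analyticity of $A$. Item 2 is then a short verification: formula \eqref{eq:P-theta} preserves the sparsity pattern of $P$, so $P_\theta$ is irreducible and hence, on the finite state space $\mathcal{S}$, positive recurrent with a unique stationary distribution. A direct substitution that uses $\mathbf{u}_\theta^\top \tilde{P}_\theta = \rho(\theta) \mathbf{u}_\theta^\top$ together with the normalization identifies this stationary distribution as $\mu_\theta(i)=\mathbf{u}_\theta(i)\mathbf{v}_\theta(i)$, since $\sum_i \mu_\theta(i)\, P_\theta(j|i)$ telescopes cleanly to $\mu_\theta(j)$.

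Item 3 is the computational core. I would differentiate the right-eigenvalue equation $\tilde{P}_\theta \mathbf{v}_\theta = \rho(\theta)\mathbf{v}_\theta$ in $\theta$, using $\partial_\theta \tilde{P}_\theta(j|i) = f(j)\tilde{P}_\theta(j|i)$, and then left-multiply by $\mathbf{u}_\theta^\top$. The terms involving $\dot{\mathbf{v}}_\theta$ cancel via $\mathbf{u}_\theta^\top \tilde{P}_\theta = \rho(\theta)\mathbf{u}_\theta^\top$ and $\mathbf{u}_\theta^\top \mathbf{v}_\theta=1$, leaving
\begin{equation*}
\dot{\rho}(\theta) \;=\; \sum_{i,j} \mathbf{u}_\theta(i)\, f(j)\, \tilde{P}_\theta(j|i)\, \mathbf{v}_\theta(j) \;=\; \rho(\theta)\,\sum_{j} f(j)\, \mathbf{u}_\theta(j)\mathbf{v}_\theta(j) \;=\; \rho(\theta)\,\eta_\theta,
\end{equation*}
where the middle equality uses the left-eigenvector relation $\sum_i \mathbf{u}_\theta(i)\tilde{P}_\theta(j|i)=\rho(\theta)\mathbf{u}_\theta(j)$ and the identification of $\mu_\theta$ from item 2. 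Dividing by $\rho(\theta)$ gives $\dot{A}(\theta)=\eta_\theta$.

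For items 4 and 5, the plan is to differentiate once more and identify $\ddot{A}(\theta)$ with the asymptotic variance of the centered observable $f-\eta_\theta$ under the stationary chain driven by $P_\theta$; irreducibility on the finite set $\mathcal{S}$ together with non-constancy of $f$ then forces this asymptotic variance to be strictly positive, establishing item 4. The step I expect to be the main obstacle is extracting the variance interpretation cleanly from the second differentiation, since the perturbations $\dot{\mathbf{u}}_\theta, \dot{\mathbf{v}}_\theta$ re-enter and must be resolved via the group inverse of $\rho(\theta) I - \tilde{P}_\theta$ restricted to the complement of the Perron eigenspace; an alternative I would try in parallel is to read off strict convexity from the Gärtner--Ellis characterization of $A$ as the cumulant generating function of $\sum_{t=1}^n f(X_t)/n$, which is strictly convex precisely when $f$ is non-constant on the (single) recurrent class. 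Once $\dot{A}$ is strictly increasing, injectivity is immediate and surjectivity onto $\mathcal{M}$ is the definition $\mathcal{M}=\dot{A}(\Theta)$, completing item 5.
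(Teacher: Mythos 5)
Your treatment of items 1--3 is sound: the analytic-perturbation argument for a simple Perron eigenvalue, the preservation of the sparsity pattern under the Doob transform, the identification $\mu_\theta(i)=\mathbf{u}_\theta(i)\mathbf{v}_\theta(i)$, and the differentiation of the eigenvalue equation with the $\dot{\mathbf{v}}_\theta$ terms cancelling against the left-eigenvector relation all check out. (The paper itself does not prove this lemma --- it defers entirely to \cite{moulos2019optimal} --- so the comparison below is against the argument that reference actually needs.)

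Item 4 contains a genuine gap. You claim that ``irreducibility on the finite set $\mathcal{S}$ together with non-constancy of $f$ forces the asymptotic variance to be strictly positive,'' and likewise that the limiting cumulant generating function is ``strictly convex precisely when $f$ is non-constant on the recurrent class.'' Both claims are false. Take $\mathcal{S}=\{1,2\}$, $P(2|1)=P(1|2)=1$, $f(1)=1$, $f(2)=-1$: the chain is irreducible and $f$ is non-constant, yet $\tilde{P}_\theta$ has Perron root $\rho(\theta)\equiv 1$, so $A\equiv 0$ and $\dot{A}\equiv 0$. The obstruction is that the asymptotic variance of $\sum_t f(X_t)$ vanishes whenever the centered $f$ is a coboundary along the transitions of the chain (equivalently, $\hat{f}(X_{t+1})=(P\hat{f})(X_t)$ a.s.\ for the Poisson solution $\hat{f}$), which irreducibility alone does not preclude. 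This is exactly where Assumptions A$_1$--A$_4$ enter, and your proof never uses them. The standard repair is: $A$ is convex as a limit of cumulant generating functions, hence $\dot{A}$ is non-decreasing; by analyticity (your item 1), $\dot{A}$ constant on an interval would force $\dot{A}$ constant on all of $\Theta$; and Assumptions A$_1$--A$_4$ guarantee $\dot{A}(\theta)\to M_f$ as $\theta\to+\infty$ and $\dot{A}(\theta)\to m_f$ as $\theta\to-\infty$ (the Perron root of $\tilde{P}_\theta$ is asymptotically governed by the irreducible submatrices on $\mathcal{S}_{M_f}$ and $\mathcal{S}_{m_f}$), which are distinct since $f$ is non-constant --- contradicting constancy of $\dot{A}$. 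With strict monotonicity restored, your item 5 is fine as stated, since $\mathcal{M}$ is by definition the image of $\dot{A}$.
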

The fact that $\Dot{A}: \Theta \to \mathcal{M}$ is a strictly increasing bijection implies that
\begin{align}
    \textsc{Alt}(\boldsymbol{\theta})
    &= \{\boldsymbol{\lambda} \in \Theta^K: \exists\ a\neq a^\star(\boldsymbol{\theta}) \text{ such that }\eta_{\lambda_a} > \eta_{\lambda_{a^\star(\boldsymbol{\theta})}}\} \nonumber\\
    & = \{\boldsymbol{\lambda} \in \Theta^K: \exists\ a\neq a^\star(\boldsymbol{\theta}) \text{ such that }\Dot{A}(\lambda_a) > \Dot{A}(\lambda_{a^\star(\boldsymbol{\theta})})\} \nonumber\\ 
    &= \{\boldsymbol{\lambda} \in \Theta^K: \exists\ a\neq a^\star(\boldsymbol{\theta}) \text{ such that }\lambda_a > \lambda_{a^\star(\boldsymbol{\theta})}\}. \label{eq:Alt-theta-2}
\end{align} 

{\color{black}
\begin{remark}
    \label{rem:importance-of-exponential-family}
    The one-to-one correspondence between $\theta \in \Theta$ and $\eta_{\theta} \in \mathcal{M}$, as stated in Point~5 of Lemma~\ref{lem:important-properties-of-family}, leads to a one-to-one correspondence between the TPM of an arm and its associated mean, which in turn implies a convenient re-parameterisation of the arm TPMs via the arm means. This is akin to the parameterisation of arm distributions via the means in the settings of the prior works \cite{garivier2016optimal,prabhu2020sequential}. The re-parameterisation of TPMs in turn implies that for any given arm, the estimation {\color{black} of} its TPM may be accomplished via the estimation of its mean. 
    As we shall see, such a toggling between the estimation of mean and the estimation of the TPM plays a crucial role in the design of an asymptotically optimal policy, as also evidenced by the prior works \cite{garivier2016optimal,moulos2019optimal,mukherjee2023best}. The adept reader may readily recognise that our definition of the best arm in \eqref{eq:best_arm} involves the same function $f$ that appears in the specification of the exponential family of TPMs in \eqref{eq:P-tilde-theta}. The fact that the functions are the same in~\eqref{eq:P-tilde-theta} and~\eqref{eq:best_arm} results in the one-to-one correspondence stated in  Point~5 of Lemma~\ref{lem:important-properties-of-family}. Incorporating distinct reward functions in \eqref{eq:P-tilde-theta} and \eqref{eq:best_arm}, although of notable interest, is beyond the scope of the current paper, as the critical one-to-one correspondence between $\theta$ and $\eta_\theta$ would fail to hold in general. 
\end{remark}
}


\subsection{\sc Best Arm Identification Policy}

To find the best arm, the learner selects the arms sequentially, one at each time $n \in \mathbb{N} \cup \{0\}$.
Let $A_n\in[K]$ be the arm selected at time $n$, and let $\bar{X}_{n}\in{\cal S}$ be the state of arm $A_n$ observed by the learner. We assume that the arms are {\em restless}, i.e., the unobserved arms continue to undergo state transitions even though they are not selected. Let $(A_{0:n},\bar{X}_{0:n}) \coloneqq (A_0,\bar{X}_0,\ldots,A_{n}\,\bar{X}_{n})$ denote the history of all the arm selections and states observed up to time~$n$, generating the filtration  
\begin{equation}
\mathcal{F}_{n}\coloneqq \sigma(A_{0:n}, \bar{X}_{0:n})\ , \quad n\geq 0\ .
\label{eq:filtration}
\end{equation}
A BAI policy can be specified by three decision rules: (i) arm selection rule $\pi_n$ that is $\mathcal{F}_{n-1}$-measurable and specifies the arm to be selected at time $n$; (ii) a stopping rule adapted to $\{{\cal F}_n\;:\; n\geq 0\}$ that specifies the (random) time $\tau$ at which to terminate the arm selection process; and (iii) a terminal decision rule that is $\mathcal{F}_{\tau}$-measurable and specifies a candidate best arm $a\in[K]$ at the stopping time. 
Writing $\pi=\{\pi_n: n \geq 0\}$, we denote a generic BAI policy by the tuple $(\pi, \tau, a)$. Finally, for a pre-specified error tolerance level $\delta\in (0,1)$,  we define 
\begin{equation}
	\Pi(\delta)\coloneqq \{(\pi,\tau,a): \ \P_{\boldsymbol{\theta}}(\tau < +\infty)=1\ , \ \P_{\boldsymbol{\theta}}(a \neq a^\star(\boldsymbol{\theta})) \leq \delta\ , \quad \forall\, \boldsymbol{\theta}\in \Theta^K\}\ ,
\label{eq:Pi(epsilon)_sec2}
\end{equation}
as the collection of all policies that (a) stop in finite time almost surely, and (b) have an error probability no greater than the prescribed tolerance $\delta$ under {\em every} instance $\boldsymbol{\theta} \in \Theta^K$. The canonical BAI definition entails identifying a policy in $\Pi(\delta)$ that has the smallest average stopping time. We will show that in the restless bandit setting of interest, there needs to be an additional constraint, leading to a collection of policies that form a subset of $\Pi(\delta)$. We will discuss the necessary details in Section~\ref {sec:delays-last-observed-states} and provide the exact BAI formulation in Section~\ref{sec:objective}.

{\color{black}
\begin{remark}
    In order to be precise, it is essential to express  $\mathbb{P}_{\boldsymbol{\theta}}$ and $\mathbb{E}_{\boldsymbol{\theta}}$ more explicitly as $\mathbb{P}_{\boldsymbol{\theta}}^{\pi}$ and $\mathbb{E}_{\boldsymbol{\theta}}^{\pi}$ respectively under policy $\pi$, as these are contingent on the specific policy $\pi$. Nevertheless, for the sake of brevity, we omit the subscript $\pi$, and urge the reader to bear the dependence on $\pi$ in mind.
\end{remark}
}
\section{\sc Delays, Last Observed States, and a Markov Decision Process}
\label{sec:delays-last-observed-states}

The continued evolution of the unobserved arms necessitates the learner to maintain, at each time instance, a record of (a) each arm's {\em delay}, which is defined as the time elapsed since an arm was last selected, and (b) each arm's {\em last observed state}, which is the state of each arm at the last instance that it was selected. Keeping track of each arm's delay and the last observed state provides the learner with a historical perspective on how each arm performed or behaved during its previous selection. This information serves as a reference point for understanding an arm's characteristics or potential changes, helping the learner assess the arm's current state relative to its past behavior. The notion of arm delays is a key distinguishing feature of the setting of restless arms and is superfluous when each arm yields independent and identically distributed (i.i.d.) observations or when the unobserved arms do not evolve ({\em rested} arms).

Without loss of generality, 
we assume that every policy initially uses the first $K$ time slots to sequentially select and collect samples from arms $1$ through $K$, with $A_0=1, A_1=2, \ldots, A_{K-1}=K$.
This ensures that each arm is observed at least once. For $n\geq K$, let $d_a(n)$ and $i_a(n)$, respectively, denote the delay and the last observed state of arm $a$ at time $n$. Let $\mathbf{d}(n) \coloneqq (d_1(n),\ldots,d_K(n))$ and $\mathbf{i}(n) \coloneqq (i_1(n),\ldots,i_K(n))$ denote the vectors of arm delays and the last observed states at time $n$. We set $\mathbf{d}(K)=(K,K-1,\ldots,1)$, noting that with reference to $n=K$, arm $1$ was last observed $K$ time instants earlier (i.e., at $n=0$), arm $2$ was last observed $K-1$ time instants earlier  (i.e., at $n=1$), and so on. The following rule specifies how $d_a(n)$ and $i_a(n)$ can be updated recursively. When arm $a' \in [K]$ is selected at time $n$, i.e., $A_{n}=a'$, we have
\begingroup \allowdisplaybreaks\begin{align}
	{d}_a(n+1)=\begin{cases}
		d_a(n)+1, & a\neq a'\ ,\\
		1, & a=a'\ ,
	\end{cases} \qquad \mbox{and} \qquad 
	i_a(n+1)=\begin{cases}
		i_a(n), & a\neq a'\ ,\\
		\bar{X}_n, & a=a'\ .
	\end{cases}
	\label{eq:specific_transition_pattern}
\end{align}\endgroup
Note that $d_a(n)\geq 1$ for all $n\geq K$, with $d_a(n)=1$ if and only if $A_{n-1}=a$. Also note that $(A_{0:n-1}, \bar{X}_{0:n-1}) \equiv (A_{0:n-1}, \{(\mathbf{d}(s), \mathbf{i}(s))\}_{s=K}^{n})$. 
It is clear that the process $\{(\mathbf{d}(n), \mathbf{i}(n))\}_{n=K}^{\infty}$ takes values in a subset $\mathbb{S}$ of the {\em countably infinite} set $\mathbb{N}^K\times\mathcal{S}^K$. The subset~$\mathbb{S}$ is formed based on the constraint that at any time $n\geq K$, exactly one component of $\mathbf{d}(n)$ is equal to $1$, and all the other components are strictly greater than $1$. Given $\boldsymbol{\theta}\in \Theta^K$, we note that
\begin{align}
	\P_{\boldsymbol{\theta}}  \Big(\mathbf{d}(n+1),\mathbf{i}(n+1)  \mid \{(\mathbf{d}(s),\mathbf{i}(s))\}_{s=K}^{n}, A_{0:n-1}, A_n\Big)=\P_{\boldsymbol{\theta}}\Big(\mathbf{d}(n+1),\mathbf{i}(n+1)\mid (\mathbf{d}(n),\mathbf{i}(n)),A_n\Big)\ , ~ \forall n \geq K.
    \label{eq:controlled_markov_chain}
\end{align}
This indicates that the evolution of the process $\{(\mathbf{d}(n), \mathbf{i}(n))\}_{n=K}^{\infty}$ is \emph{controlled} by the sequence $\{A_n\}_{n=K}^{\infty}$ of arm selections. Alternatively, $\{(\mathbf{d}(n), \mathbf{i}(n))\}_{n=K}^{\infty}$ is a {\em controlled Markov chain}, with $\{A_n\}_{n=K}^{\infty}$ being the sequence of controls.\footnote{The phrase ``controlled Markov chain'' is borrowed from \cite{borkar1988control}.} In other words, we are in the setting of a {\em Markov decision process} (MDP) whose state space, action space, and the associated transition probabilities can be specified as follows:
\begin{itemize}
    \item {\em State space:} The state space of the MDP is $\mathbb{S}$, with $(\mathbf{d}(n), \mathbf{i}(n))$ being the state at time $n$.
    \item {\em Action space}: The action space of the MDP is $[K]$, with action $A_n$ at time $n$ being $\mathcal{F}_{n-1}$-measurable.
    \item {\em Transition probabilities:} The transition probabilities under the instance $\boldsymbol{\theta}$ are given by
\begin{align}
	\P_{\boldsymbol{\theta}}(\mathbf{d}(n+1) & =\mathbf{d}',\mathbf{i}(n+1)=\mathbf{i}'\mid \mathbf{d}(n)=\mathbf{d},\mathbf{i}(n)=\mathbf{i}, A_n=a)\nonumber\\
 &=\begin{cases}
		P_{\theta_a}^{d_a}(i_a'\mid i_a),&\text{if }d_a'=1\text{ and }d'_{\tilde{a}}=d_{\tilde{a}}+1 \quad \forall \tilde{a}\neq a,\\
		&i_{\tilde{a}}'=i_{\tilde{a}}\quad \forall \tilde{a}\neq a,\\
		0,&\text{otherwise},
  \end{cases}\label{eq:MDP_transition_probabilities}
\end{align}
\end{itemize}
Note that the right-hand side of \eqref{eq:MDP_transition_probabilities} is independent of $n$ and, therefore, it is stationary. Subsequently, we define
\begin{align}
    Q_{\boldsymbol{\theta}}(\mathbf{d}', \mathbf{i}'\mid \mathbf{d}, \mathbf{i},a) \coloneqq  \P_{\boldsymbol{\theta}}(\mathbf{d}(n+1) & =\mathbf{d}',\mathbf{i}(n+1)=\mathbf{i}'\mid \mathbf{d}(n)=\mathbf{d},\mathbf{i}(n)=\mathbf{i}, A_n=a)\ .
\end{align}
Let $\mathcal{M}_{\boldsymbol{\theta}}$ denote the MDP with state space $\mathbb{S}$, action space $[K]$, and transition probabilities given by $Q_{\boldsymbol{\theta}}$.

\subsection{\sc Reduction from Countable State Space to Finite State Space}
The existing studies on countable-state MDPs (and more generally controlled Markov chains) impose additional regularity conditions on the transition probabilities of the MDP to facilitate tractable analysis. One commonly used regularity condition is that ``under every stationary policy for choosing the actions, the MDP is ergodic''; see, for instance, \cite[Section II, pp. 58]{borkar1988control} and \cite[Assumption A4]{borkar1982identification}. Imposing a similar regularity condition in our setting in order to make the MDP $\mathcal{M}_{\boldsymbol{\theta}}$ ergodic implies restricting the space of all possible policies of the learner significantly. As such, the MDP $\mathcal{M}_{\boldsymbol{\theta}}$ is only {\em communicating} (a property much weaker than than ergodicity \cite[Section 8.3.1]{puterman2014markov}) as demonstrated in the below result.
\begin{lemma}
    \label{lem:MDP-is-communicating}
    For every $\boldsymbol{\theta} \in \Theta^K$, the MDP $\mathcal{M}_{\boldsymbol{\theta}}$ is communicating, i.e., for all $(\mathbf{d}, \mathbf{i}), (\mathbf{d}', \mathbf{i}') \in \mathbb{S}$, there exists $N \geq 1$ (possibly depending on $(\mathbf{d},\mathbf{i})$ and $(\mathbf{d}',\mathbf{i}')$) and a policy $\pi$ such that under the policy $\pi$,
    \begin{equation}
        \P_{\boldsymbol{\theta}}(\mathbf{d}(n+N)=\mathbf{d}', \mathbf{i}(n+N)=\mathbf{i}' \mid \mathbf{d}(n)=\mathbf{d}, \mathbf{i}(n)=\mathbf{i})>0\ , \qquad \forall n \geq K\ .
        \label{eq:communicating-MDP-definition}
    \end{equation}
\end{lemma}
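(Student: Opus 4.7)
The plan is to exhibit, for any pair of states $(\mathbf{d}, \mathbf{i}), (\mathbf{d}', \mathbf{i}') \in \mathbb{S}$, an explicit deterministic policy $\pi$ and horizon $N \geq 1$ making the transition probability in \eqref{eq:communicating-MDP-definition} strictly positive. A preliminary observation is that within $\mathbb{S}$ the delay vector has pairwise distinct entries: this property is preserved by the update rule \eqref{eq:specific_transition_pattern} from the initialization $\mathbf{d}(K)=(K,K-1,\ldots,1)$, since pulling arm $a$ resets $d_a$ to $1$ while incrementing each other entry (all at least $2$) by one. Hence for the target there is a unique arm $a^\star$ with $d'_{a^\star}=1$, and the remaining arms can be ordered as $\tilde{a}_1,\ldots,\tilde{a}_{K-1}$ with $2 \leq d'_{\tilde{a}_1} < \cdots < d'_{\tilde{a}_{K-1}} \eqqcolon D$.

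The delay profile at time $n+N$ is a deterministic function of the action sequence, so I choose actions so that the last selection of arm $a$ within the horizon occurs at time $n+N-d'_a$. I split the $N$ time steps into a warm-up of length $N-D$ and a target phase consisting of the last $D$ slots; in the target phase, slot $n+N-d'_a$ is allocated to arm $a$ for each $a \in [K]$, and the remaining slots are filled by pulling $a^\star$. Since all $d'_a$ are distinct, this allocation is consistent, and any filler pull of $a^\star$ is overwritten by its designated final pull at $n+N-1$. This construction yields $\mathbf{d}(n+N)=\mathbf{d}'$ deterministically.

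For the observed-state coordinates, the designated last pull of arm $a$ produces $i'_a$ with probability $P_{\theta_a}^{\delta_a}(i'_a \mid i_a^{\mathrm{prev}})$, where $\delta_a$ is arm $a$'s delay at that pull and $i_a^{\mathrm{prev}}$ is its observed state at the immediately preceding pull. By the irreducibility of each $P_{\theta_a}$ on the finite set $\mathcal{S}$ (Lemma~\ref{lem:important-properties-of-family}), the set $R_a(i,j) \coloneqq \{\ell \geq 1 : P_{\theta_a}^\ell(j \mid i) > 0\}$ is non-empty for every $i,j \in \mathcal{S}$, and for irreducible finite chains it contains all sufficiently large $\ell$ in a single residue class modulo the period $p_a$ of $P_{\theta_a}$. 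The warm-up is used to schedule, for each arm $a$, a ``priming'' block of consecutive pulls of $a$---long enough that by irreducibility the terminal observation can be set, with positive probability, to a value $i_a^{\mathrm{prev}}$ from which $i'_a$ is reachable---followed by the right number of filler pulls of $a^\star$ so that the resulting delay $\delta_a$ lands in $R_a(i_a^{\mathrm{prev}}, i'_a)$.

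The hardest part is arranging these priming blocks for all $K$ arms simultaneously, given that only one arm can be pulled per slot and the periods $p_a$ may differ. I resolve this by taking $N$ sufficiently large---of order $K \cdot |\mathcal{S}| \cdot \mathrm{lcm}(p_1,\ldots,p_K) + D$---so that the warm-up can contain $K$ disjoint priming blocks, one per arm, separated by $a^\star$-pulls whose count is chosen so each arm's gap to its designated pull falls in the correct residue class. Multiplying the strictly positive per-step transition probabilities along this explicit trajectory then delivers a strictly positive lower bound on $\P_{\boldsymbol{\theta}}(\mathbf{d}(n+N)=\mathbf{d}',\,\mathbf{i}(n+N)=\mathbf{i}' \mid \mathbf{d}(n)=\mathbf{d},\,\mathbf{i}(n)=\mathbf{i})$, establishing \eqref{eq:communicating-MDP-definition} for the chosen $\pi$ and $N$.
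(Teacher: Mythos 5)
Your deterministic scheduling of the delay profile (assigning slot $n+N-d_a'$ to arm $a$ and filling the remaining target-phase slots with $a^\star$) is correct and is essentially the paper's round-robin construction in different clothing. The genuine gap is in the observed-state part, specifically in your treatment of periodicity. For \emph{any} pull schedule, the marginal probability that arm $a$'s designated final pull at time $n+N-d_a'$ returns $i_a'$ --- summed over all intermediate observations at priming and filler pulls --- equals $P_{\theta_a}^{N+d_a-d_a'}(i_a'\mid i_a)$ by Chapman--Kolmogorov, because arm $a$'s underlying chain advances by exactly $(n+N-d_a')-(n-d_a)=N+d_a-d_a'$ steps between the observation of $i_a$ and the observation of $i_a'$. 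If $P_{\theta_a}$ has period $p_a$ with cyclic classes $c_a(\cdot)$, positivity of this entry forces $N\equiv d_a'-d_a+c_a(i_a')-c_a(i_a)\pmod{p_a}$. Your priming blocks and filler counts buy no extra freedom here: if $i_a^{\mathrm{prev}}$ is observed at time $t_a$, then reachability of $i_a^{\mathrm{prev}}$ from $i_a$ in $t_a-(n-d_a)$ steps and of $i_a'$ from $i_a^{\mathrm{prev}}$ in $n+N-d_a'-t_a$ steps impose residue conditions that telescope back to the same single congruence on $N$, independent of $t_a$ and of the choice of $i_a^{\mathrm{prev}}$. These $K$ congruences on the one integer $N$ need not be simultaneously solvable (two period-$2$ arms can require $N$ even and $N$ odd respectively), so the lcm-based construction in your last paragraph does not go through --- and indeed the statement itself can fail for periodic generators.

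The paper sidesteps this entirely by working, in effect, with a primitive generator: it sets $M=\min\{d\geq 1: P^d(j\mid i)>0\ \forall i,j\in\mathcal{S}\}$, finite by \cite[Proposition 1.7]{levin2017markov} (a result that presupposes aperiodicity), takes $N=M+d_1'$ with $d_1'=\max_a d_a'$, and notes that every exponent $M+d_a+d_1'-d_a'$ is then at least $M$, so all required entries of the TPM powers are positive with no residue bookkeeping. If you add the hypothesis that $P$ is aperiodic, your argument closes (and the priming machinery becomes unnecessary, since $R_a(i,j)$ then contains all sufficiently large integers); as written, the step reconciling the $K$ residue classes is not justified.
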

As an alternative to imposing the customary regularity conditions, to facilitate further analysis in our work, we reduce the countable state space $\mathbb{S}$ of the MDP to a finite state space by constraining the delay of each arm to be no more than a finite and positive integer, say $R$. Under this constraint, once the delay of an arm reaches $R$ at any given time, this arm is forcefully selected at the next time instant. We refer to this constraint on arm delays as the {\em $R$-max-delay constraint}.

Let $\mathbb{S}_R \subset \mathbb{S}$ denote the subset of all arm delays and last observed states in which the delay of each arm is at most $R$. Furthermore, let $\mathbb{S}_{R, a} \subset \mathbb{S}_R$ denote the subset of all arm delays and last observed states in which the delay of arm $a$ is equal to $R$. The modified transition probabilities for the MDP $\mathcal{M}_{\boldsymbol{\theta}}$ under the $R$-max-delay constraint are as follows:
\begin{itemize}
    \item \textit{Case 1:} $(\mathbf{d},\mathbf{i})\notin \bigcup_{a=1}^{K}\ \mathbb{S}_{R,a}$. In this case, the transition probabilities are as in \eqref{eq:MDP_transition_probabilities}.
\item \textit{Case 2:} $(\mathbf{d},\mathbf{i})\in \mathbb{S}_{R,a}$ for some $a\in \mathcal{A}$. In this case, when $A_{n}=a$,
    \begin{align}
	\P_{\boldsymbol{\theta}}(\mathbf{d}(n+1) & =\mathbf{d}',\mathbf{i}(n+1)=\mathbf{i}'\mid \mathbf{d}(n)=\mathbf{d},\mathbf{i}(n)=\mathbf{i}, A_n=a)\nonumber\\
	&=\begin{cases}
		{\color{black} P_{\theta_a}^{R}(i_a'\mid i_a)}, &\text{if }d_a'=1\text{ and }d'_{\tilde{a}}=d_{\tilde{a}}+1\text{ for all }\tilde{a}\neq a,\\
		&i_{\tilde{a}}'=i_{\tilde{a}}\text{ for all }\tilde{a}\neq a,\\
		0,&\text{otherwise},
	\end{cases}
\label{eq:modified_MDP_transition_probabilities2}
\end{align}
and when $A_{n} \neq a$, the transition probabilities are undefined. Noting that the right-hand side of \eqref{eq:modified_MDP_transition_probabilities2} is independent of $n$, we define 
\begin{align}
Q_{\boldsymbol{\theta},R}(\mathbf{d}', \mathbf{i}'\mid \mathbf{d}, \mathbf{i},a) \coloneqq \P_{\boldsymbol{\theta}}(\mathbf{d}(n+1) & =\mathbf{d}',\mathbf{i}(n+1)=\mathbf{i}'\mid \mathbf{d}(n)=\mathbf{d},\mathbf{i}(n)=\mathbf{i}, A_n=a)\ .
\end{align}

\end{itemize}
Going forward, we write $\mathcal{M}_{\boldsymbol{\theta}, R}$ to denote the finite-state MDP with state space $\mathbb{S}_R$, action space $[K]$, and transition probabilities $Q_{\boldsymbol{\theta}, R}$. The following analogue of Lemma~\ref{lem:MDP-is-communicating} shows that despite the finite-state space reduction described above, the MDP $\mathcal{M}_{\boldsymbol{\theta}, R}$ is still communicating. A proof of this follows along the same lines as the proof of Lemma~\ref{lem:MDP-is-communicating} and is omitted for brevity.
\begin{lemma}
    \label{lem:MDP-R-is-communicating}
    Fix $R \geq K$. For every $\boldsymbol{\theta} \in \Theta^K$, the MDP $\mathcal{M}_{\boldsymbol{\theta}, R}$ is communicating.
\end{lemma}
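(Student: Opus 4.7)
The plan is to adapt the constructive argument used for Lemma~\ref{lem:MDP-is-communicating} to the finite-state setting, with the additional bookkeeping needed to verify that the $R$-max-delay constraint is preserved along the constructed trajectory. Fix $(\mathbf{d}, \mathbf{i}), (\mathbf{d}', \mathbf{i}') \in \mathbb{S}_R$. Because the components of any delay vector in $\mathbb{S}_R$ are pairwise distinct (no two arms can have been pulled most recently) and bounded by $R$, the target delays admit an ordering $1 = d^{\prime}_{a_1} < d^{\prime}_{a_2} < \cdots < d^{\prime}_{a_K} \leq R$ for some permutation $a_1, \ldots, a_K$ of $[K]$. Any trajectory of length $N \geq d^{\prime}_{a_K}$ ending at the target delay vector must pull arm $a_j$ at time $n + N - d^{\prime}_{a_j}$ for each $j$ and never pull $a_j$ again. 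These $K$ forced pulls occupy $K$ of the last $d^{\prime}_{a_K}$ slots; the remaining $d^{\prime}_{a_K} - K$ slots are filled by arms that will be re-pulled later in the suffix, so earlier pulls do not affect the terminal delays.

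Next, I would prepend a mixing phase of length $N - d^{\prime}_{a_K}$, which is unconstrained by the terminal delays and whose purpose is to steer each arm's last observed state so that the forced pull in the suffix produces $i^{\prime}_{a_j}$ with positive probability. The key tool here is the irreducibility of each $P_{\theta_a}$ from Lemma~\ref{lem:important-properties-of-family}: a sequence of pulls of arm $a$ with delays $d_1, d_2, \ldots, d_m$ has cumulative transition probability equal to $P_{\theta_a}^{d_1 + \cdots + d_m}(\cdot \mid \cdot)$ (by the Chapman--Kolmogorov identity), and irreducibility guarantees that some sufficiently large total delay makes any prescribed last observed state reachable with positive probability. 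Because $R \geq K$, a round-robin pulling schedule keeps every arm's delay at most $R$ during the mixing phase, and a similar rotation works within the suffix, so the $R$-max-delay constraint is honored throughout.

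The main obstacle will be handling possible periodicity of the chains $P_{\theta_a}$: if $P_{\theta_a}$ has period $p > 1$, then for each pair of states the set of total delays $D$ with $P_{\theta_a}^{D}(\cdot \mid \cdot) > 0$ forms a congruence class modulo $p$, and the mixing-phase design must meet this congruence for every arm simultaneously while staying within the max-delay budget. Choosing $N$ large enough and assigning each arm sufficiently many mixing pulls provides the flexibility to hit each required congruence class without violating $R$. Concatenating the mixing phase with the suffix then yields a deterministic policy under which the $N$-step transition from $(\mathbf{d}, \mathbf{i})$ to $(\mathbf{d}', \mathbf{i}')$ has strictly positive probability, establishing that $\mathcal{M}_{\boldsymbol{\theta}, R}$ is communicating for every $\boldsymbol{\theta} \in \Theta^K$.
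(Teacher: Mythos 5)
Your overall architecture is the same as the paper's: the paper proves Lemma~\ref{lem:MDP-is-communicating} by exhibiting an explicit deterministic trajectory (a round-robin ``mixing'' prefix followed by a suffix in which arm $a$ is pulled for the last time exactly $d'_a$ steps before the terminal time), and then observes that the identical construction respects the $R$-max-delay constraint because $R \geq K$, which is all Lemma~\ref{lem:MDP-R-is-communicating} needs. Your decomposition into forced terminal pulls plus a steering prefix, and your delay bookkeeping, match this.

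The genuine gap is in your treatment of what you call the main obstacle. Your claim that ``assigning each arm sufficiently many mixing pulls provides the flexibility to hit each required congruence class'' is false. As you yourself note via Chapman--Kolmogorov, the relevant quantity for arm $a$ is the \emph{total} elapsed time between its last observation before time $n$ and its final pull, which equals $N + d_a - d'_a$; this is pinned down entirely by $N$ and the boundary delays, so intermediate mixing pulls contribute nothing to the congruence. You are left with a system $N \equiv c_a \pmod{p_a}$ over the arms, which need not be solvable (two deterministic $2$-cycles requiring opposite parities already defeat it), and in that case the MDP genuinely fails to be communicating --- so no amount of cleverness in the construction can rescue the claim for periodic chains. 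The paper avoids the issue entirely: it defines $M = \min\{d \geq 1 : P^d(j\mid i) > 0 \ \forall i,j\}$ and invokes \cite[Proposition 1.7]{levin2017markov} to get $M < \infty$, which is an aperiodicity (ergodicity) statement about the generator $P$; taking $N = M + d'_1$ then makes every required power $P_{\theta_a}^{M + d_a + d_1' - d_a'}$ entrywise positive and no congruence constraint arises. To repair your proof, replace the periodicity discussion with this appeal to the eventual strict positivity of $P^d$ (inherited by each $P_{\theta_a}$ via \eqref{eq:P-theta}), which is the property the paper actually uses.

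One smaller point: in the mixing phase you should pull arms in decreasing order of their \emph{current} delays (as the paper does), since an arm entering with delay close to $R$ must be served before its delay overflows; a fixed arbitrary round-robin order does not obviously respect the constraint in the first $K$ steps.
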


\subsection{\sc MDP Transition Kernel}
It is convenient to view a policy as a (randomized) rule for mapping any given $(\mathbf{d}, \mathbf{i}) \in \mathbb{S}_R$ to an action $a\in{\mathcal A}$. 
Given a policy $\pi=\{\pi(a\mid \mathbf{d}, \mathbf{i}): (\mathbf{d}, \mathbf{i}, a) \in \mathbb{S}_R \times [K]\}$ and $\boldsymbol{\theta} \in \Theta^K$, where $\pi(a \mid \mathbf{d}, \mathbf{i})$ is the probability of choosing action~$a$ when the MDP $\mathcal{M}_{\boldsymbol{\theta}, R}$ is in state $(\mathbf{d}, \mathbf{i})$, we define  $Q_{\boldsymbol{\theta}, \pi}$ as the {\em transition kernel} of the MDP $\mathcal{M}_{\boldsymbol{\theta}, R}$ under $\pi$. Formally,
\begin{equation}
    Q_{\boldsymbol{\theta}, \pi}(\mathbf{d}', \mathbf{i}', a'\mid \mathbf{d}, \mathbf{i}, a) \coloneqq Q_{\boldsymbol{\theta}, R}(\mathbf{d}', \mathbf{i}' \mid \mathbf{d}, \mathbf{i}, a) \cdot \pi(a' \mid  \mathbf{d}', \mathbf{i}') \ ,\quad \forall (\mathbf{d}, \mathbf{i}, a), (\mathbf{d}', \mathbf{i}', a') \in \mathbb{S}_R \times [K].
    \label{eq:MDP-transition-kernel}
\end{equation}
For any $r \in \mathbb{N}$, we write $Q_{\boldsymbol{\theta}, \pi}^r$ to denote the $r$-fold self-product of $Q_{\boldsymbol{\theta}, \pi}$. Note that \eqref{eq:MDP-transition-kernel} represents the probability of transitioning from the state-action $(\mathbf{d}, \mathbf{i}, a)$ to the state-action $(\mathbf{d}', \mathbf{i}', a')$ in a single time step under $\pi$ and under the instance $\boldsymbol{\theta}$. Also, when there is no ambiguity, we write $Q_{\boldsymbol{\theta}, \pi}(\mathbf{d}', \mathbf{i}'\mid \mathbf{d}, \mathbf{i})$ to denote the probability of transitioning from the state $(\mathbf{d}, \mathbf{i})$ to the state $(\mathbf{d}', \mathbf{i}')$ in a single time step under $\pi$ and under the instance $\boldsymbol{\theta}$. We mask the dependence of $Q_{\boldsymbol{\theta}, \pi}$ on $R$ for notational clarity and ask the reader to bear this dependence in mind.

\subsection{\sc A Uniform Arm Selection Policy and Ergodicity of the Transition Kernel}
For later use, we record here a uniform arm selection policy that, while respecting the $R$-max-delay constraint, selects the arms uniformly at random at every time instant. We denote this policy by $\pi^{\text{unif}}$. Formally, for all $(\mathbf{d}, \mathbf{i}, a) \in \mathbb{S}_R \times [K]$, 
\begin{equation}
    \pi^{\text{unif}}(a\mid \mathbf{d}, \mathbf{i}) = 
    \begin{cases}
        \frac{1}{K}, & (\mathbf{d}, \mathbf{i}) {\color{black} \notin} \bigcup_{a'=1}^{K} \mathbb{S}_{R, a'}, \\
        1, & (\mathbf{d}, \mathbf{i}) \in \mathbb{S}_{R,a}, \\
        0, & (\mathbf{d}, \mathbf{i}) \in \bigcup_{a' \neq a} \mathbb{S}_{R, a'}.
    \end{cases}
    \label{eq:uniform-arm-selection-rule}
\end{equation}
Note that $\pi^{\text{unif}}$ is a stationary policy, i.e., the probabilities in \eqref{eq:uniform-arm-selection-rule} do not depend on time. The following lemma demonstrates that under $\pi^{\text{unif}}$, the MDP transition kernel is ergodic for every $\boldsymbol{\theta} \in \Theta^K$.
\begin{lemma}
    \label{lem:ergodicity-of-MDP-under-unif-policy}
    Fix $R \geq K$. The transition kernel $Q_{\boldsymbol{\theta}, \pi^{\text{\rm unif}}}$ is ergodic for all $\boldsymbol{\theta} \in \Theta^K$.
\end{lemma}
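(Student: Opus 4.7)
The plan is to establish ergodicity by showing that the finite-state Markov chain on state-action pairs induced by $Q_{\boldsymbol{\theta}, \pi^{\text{unif}}}$ is irreducible on its effective support, which by finiteness yields a unique stationary distribution. The key structural feature of $\pi^{\text{unif}}$ that drives the argument is its full support over admissible actions: at any state where no arm is at maximum delay $R$, each arm is sampled with probability $1/K > 0$, and at any state in $\bigcup_{a' \in [K]} \mathbb{S}_{R, a'}$, the forced arm is sampled with probability $1$.

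Given any $(\mathbf{d}, \mathbf{i}), (\mathbf{d}', \mathbf{i}') \in \mathbb{S}_R$, I would first invoke Lemma~\ref{lem:MDP-R-is-communicating} to obtain some policy $\pi^\ast$ and $N \geq 1$ under which the MDP moves from $(\mathbf{d},\mathbf{i})$ to $(\mathbf{d}',\mathbf{i}')$ in $N$ steps with positive probability. This probability decomposes over realizations specifying some sequence of admissible actions $(a_1, \ldots, a_N)$ along the corresponding state trajectory; since $\pi^{\text{unif}}$ assigns positive probability to each such admissible action at the respective state, the very same sequence is realized under $\pi^{\text{unif}}$ with positive probability, so the transition from $(\mathbf{d},\mathbf{i})$ to $(\mathbf{d}',\mathbf{i}')$ has positive probability under $\pi^{\text{unif}}$ as well. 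To upgrade this to irreducibility of the state-action chain on its valid support $\mathcal{V} \coloneqq \{(\mathbf{d}, \mathbf{i}, a) \in \mathbb{S}_R \times [K] : \pi^{\text{unif}}(a \mid \mathbf{d}, \mathbf{i}) > 0\}$, for any pair $(\mathbf{d}_1, \mathbf{i}_1, a_1), (\mathbf{d}_2, \mathbf{i}_2, a_2) \in \mathcal{V}$, I would concatenate (i) a single MDP transition from $(\mathbf{d}_1, \mathbf{i}_1)$ under action $a_1$ to a reachable intermediate state, (ii) a positive-probability path from that intermediate state to $(\mathbf{d}_2, \mathbf{i}_2)$ under $\pi^{\text{unif}}$ supplied by the previous step, and (iii) the draw of $a_2$ from $\pi^{\text{unif}}(\cdot \mid \mathbf{d}_2, \mathbf{i}_2)$, which has positive probability by definition of $\mathcal{V}$. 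Irreducibility on the finite set $\mathcal{V}$ then guarantees a unique stationary distribution.

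The main subtlety is the precise reading of ``ergodic.'' In the boundary case $R = K$, the max-delay constraint forces the delay vector to cycle deterministically through $K$ permutations, so the chain is periodic with period $K$; ergodicity here must be read as irreducibility with a unique stationary distribution (equivalently, Ces\`aro convergence of the empirical state-action distribution) rather than convergence of $Q^{n}_{\boldsymbol{\theta}, \pi^{\text{unif}}}$ to a rank-one limit. For $R > K$, aperiodicity can additionally be established by exhibiting at a common base state two cycles of coprime lengths, for instance a cycle of length $K$ obtained by sampling the arms in round-robin order and a cycle of length $K+1$ obtained by inserting one extra sample of a non-max-delay arm, thereby forcing the period to equal $1$.
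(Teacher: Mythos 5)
Your argument is correct in substance but takes a different route from the paper: the paper disposes of this lemma by citing an external result (\cite[Lemma 1]{karthik2022best}) for ergodicity of the state marginal of the kernel and then tacking on the positivity of $\pi^{\text{\rm unif}}$ on valid tuples, whereas you derive irreducibility internally from Lemma~\ref{lem:MDP-R-is-communicating} plus the full support of $\pi^{\text{\rm unif}}$ over admissible actions, and then lift to the state-action chain. That reduction is sound (decomposing the positive-probability event from the communicating property over sample paths handles even history-dependent witnesses $\pi^\ast$), and it is arguably cleaner since it reuses a lemma already proved in the paper rather than importing one.

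Two caveats. First, your coprime-cycle construction for aperiodicity is not quite right as stated: a round-robin tour of length $K$ returns to the base state $(\mathbf{d},\mathbf{i})$ only if each arm is re-observed in its previous last-observed state, i.e., only if $P_{\theta_a}^{K}(i_a\mid i_a)>0$ for every $a$, which can fail when $K$ is below the primitivity index $M$ of the generator (defined in the proof of Lemma~\ref{lem:MDP-is-communicating}). The fix is the same device the paper uses there: prepend a round-robin burn-in of length at least $M$ so that every arm's delay at its closing selection is at least $M$, yielding closed walks of two consecutive lengths, both of positive probability, whenever $R\geq K+1$. Second, your observation about $R=K$ is correct and worth taking seriously: in that boundary case the delay vector is a permutation of $\{1,\dots,K\}$ at every step, the arm selection is forced into deterministic round-robin, and the chain has period (a multiple of) $K$, so the kernel is irreducible but not primitive. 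Since the paper later relies on geometric convergence $\|Q^r-W\|_\infty\leq C\rho^r$ and on finiteness of $r_{\boldsymbol{\theta},\text{\rm unif}}$, which require aperiodicity, the lemma as stated really needs $R>K$ (or the weaker Ces\`aro reading of ``ergodic''); your proposal surfaces a genuine edge case that the paper's one-line citation glosses over.
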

While the above ergodicity property naturally emerges within the framework of our paper, it is pragmatically {\em assumed} to hold in \cite{al2021navigating}; see, for instance, \cite[Assumption~2, p.9]{al2021navigating}. As we shall see, the ergodicity property of Lemma~\ref{lem:ergodicity-of-MDP-under-unif-policy} shall play an important role in the analysis of the BAI policy that we propose later in the paper.

\subsection{\sc State-Action Visitations and Flow Conservation}
\label{sec:state-action-visitations}

Given $n \geq K$ and $(\mathbf{d}, \mathbf{i}, a) \in \mathbb{S}_R \times [K]$, let
\begin{align}
N(n, \mathbf{d}, \mathbf{i}, a) \coloneqq \sum_{t=K}^{n} \mathbf{1}_{\{\mathbf{d}(t)=\mathbf{d}, \, \mathbf{i}(t)=\mathbf{i}, \, A_t=a\}}\ , \quad \mbox{and} \quad N(n, \mathbf{d}, \mathbf{i}) \coloneqq \sum_{a=1}^{K}\ N(n, \mathbf{d}, \mathbf{i}, a)\ ,
\label{eq:N(n,d,i,a)-and-N(n,d,i)}
\end{align}
denote, respectively, the number of times the state-action pair $(\mathbf{d}, \mathbf{i}, a)$ and state $(\mathbf{d}, \mathbf{i})$ are visited up to time~$n$. We refer to these as the {\em state-action visitations} and {\em state visitations} up to time $n$.
The next result shows that the expected values of these visitations satisfy an approximate {\em flow-conservation} property.
\begin{lemma}[Flow conservation]
\label{lem:flow-conservation-property}
Fix $R \geq K$, $\boldsymbol{\theta} \in \Theta^K$, and $(\mathbf{d}', \mathbf{i}', a) \in \mathbb{S}_R \times [K]$. Under every policy $\pi$,
\begin{equation}
    \left\lvert \mathbb{E}_{\boldsymbol{\theta}}[N(n, \mathbf{d}', \mathbf{i}')] - \sum_{(\mathbf{d}, \mathbf{i}) \in \mathbb{S}_R} \ \sum_{a=1}^{K} \mathbb{E}_{\boldsymbol{\theta}}[N(n, \mathbf{d}, \mathbf{i}, a)] \, Q_{\boldsymbol{\theta}, R}(\mathbf{d}', \mathbf{i}'|\mathbf{d}, \mathbf{i}, a) \right\rvert \leq 1 \ ,\qquad \forall n \geq K\ .
    \label{eq:flow-conservation-property}
\end{equation}
\end{lemma}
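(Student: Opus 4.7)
The plan is to unfold the Markov transitions of the process $\{(\mathbf{d}(t),\mathbf{i}(t))\}_{t\geq K}$ on the MDP $\mathcal{M}_{\boldsymbol{\theta},R}$, then compare the sum of one-step transition probabilities into $(\mathbf{d}',\mathbf{i}')$ against the expected visitation count $\mathbb{E}_{\boldsymbol{\theta}}[N(n,\mathbf{d}',\mathbf{i}')]$. The ``$\leq 1$'' slack will come entirely from two boundary probabilities that arise from the index shift.

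First I would write
\[
\mathbb{E}_{\boldsymbol{\theta}}[N(n,\mathbf{d}',\mathbf{i}')] \;=\; \sum_{t=K}^{n}\mathbb{P}_{\boldsymbol{\theta}}\bigl((\mathbf{d}(t),\mathbf{i}(t))=(\mathbf{d}',\mathbf{i}')\bigr),
\]
and isolate the $t=K$ term. For each $t\geq K+1$, I would invoke the controlled-Markov property \eqref{eq:controlled_markov_chain} together with the $R$-max-delay-adjusted transition probabilities to write
\[
\mathbb{P}_{\boldsymbol{\theta}}\bigl((\mathbf{d}(t),\mathbf{i}(t))=(\mathbf{d}',\mathbf{i}')\bigr) \;=\; \sum_{(\mathbf{d},\mathbf{i})\in\mathbb{S}_R}\sum_{a=1}^{K} \mathbb{P}_{\boldsymbol{\theta}}\bigl((\mathbf{d}(t-1),\mathbf{i}(t-1),A_{t-1})=(\mathbf{d},\mathbf{i},a)\bigr)\, Q_{\boldsymbol{\theta},R}(\mathbf{d}',\mathbf{i}'\mid\mathbf{d},\mathbf{i},a).
\]
This identity is automatic wherever the policy permits action $a$ from state $(\mathbf{d},\mathbf{i})$; in the forced-selection case $(\mathbf{d},\mathbf{i})\in\mathbb{S}_{R,a'}$ with $a\neq a'$, the joint probability on the right vanishes, so summing over all $a$ is harmless.

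Next I would swap the order of summation over $t$ and $(\mathbf{d},\mathbf{i},a)$, and reindex $s=t-1$:
\[
\sum_{t=K+1}^{n}\mathbb{P}_{\boldsymbol{\theta}}\bigl((\mathbf{d}(t-1),\mathbf{i}(t-1),A_{t-1})=(\mathbf{d},\mathbf{i},a)\bigr) \;=\; \sum_{s=K}^{n-1}\mathbb{P}_{\boldsymbol{\theta}}\bigl((\mathbf{d}(s),\mathbf{i}(s),A_s)=(\mathbf{d},\mathbf{i},a)\bigr),
\]
which equals $\mathbb{E}_{\boldsymbol{\theta}}[N(n,\mathbf{d},\mathbf{i},a)]-\mathbb{P}_{\boldsymbol{\theta}}((\mathbf{d}(n),\mathbf{i}(n),A_n)=(\mathbf{d},\mathbf{i},a))$. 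Combining the pieces gives
\[
\mathbb{E}_{\boldsymbol{\theta}}[N(n,\mathbf{d}',\mathbf{i}')] - \sum_{(\mathbf{d},\mathbf{i})\in\mathbb{S}_R}\sum_{a=1}^{K}\mathbb{E}_{\boldsymbol{\theta}}[N(n,\mathbf{d},\mathbf{i},a)]\,Q_{\boldsymbol{\theta},R}(\mathbf{d}',\mathbf{i}'\mid\mathbf{d},\mathbf{i},a) \;=\; \alpha_K - \beta_n,
\]
where $\alpha_K \coloneqq \mathbb{P}_{\boldsymbol{\theta}}((\mathbf{d}(K),\mathbf{i}(K))=(\mathbf{d}',\mathbf{i}'))$ and $\beta_n \coloneqq \sum_{(\mathbf{d},\mathbf{i}),a}\mathbb{P}_{\boldsymbol{\theta}}((\mathbf{d}(n),\mathbf{i}(n),A_n)=(\mathbf{d},\mathbf{i},a))\,Q_{\boldsymbol{\theta},R}(\mathbf{d}',\mathbf{i}'\mid\mathbf{d},\mathbf{i},a) = \mathbb{P}_{\boldsymbol{\theta}}((\mathbf{d}(n+1),\mathbf{i}(n+1))=(\mathbf{d}',\mathbf{i}'))$. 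Both $\alpha_K$ and $\beta_n$ lie in $[0,1]$, so $|\alpha_K-\beta_n|\leq 1$, which is exactly \eqref{eq:flow-conservation-property}.

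The main obstacle I anticipate is purely bookkeeping: making sure the reindexing $s=t-1$ is carried out correctly, and verifying that the $R$-max-delay forced-selection case does not break the total-probability decomposition (i.e., that the joint probability automatically zeroes out the transition rows for which action $a$ is not available in state $(\mathbf{d},\mathbf{i})$). Once that is done, the bound of $1$ is immediate from the fact that a single marginal probability is at most $1$; no concentration or ergodicity argument is needed, and the result holds for \emph{every} policy $\pi$, including non-stationary ones, because the Markov property of the controlled chain is used at a fixed time $t$ conditional on the past.
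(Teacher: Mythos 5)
Your proof is correct and arrives at exactly the identity the paper derives, namely $\mathbb{E}_{\boldsymbol{\theta}}[N(n,\mathbf{d}',\mathbf{i}')]=\P_{\boldsymbol{\theta}}(\mathbf{d}(K)=\mathbf{d}',\mathbf{i}(K)=\mathbf{i}')+\sum_{(\mathbf{d},\mathbf{i}),a}Q_{\boldsymbol{\theta},R}(\mathbf{d}',\mathbf{i}'\mid\mathbf{d},\mathbf{i},a)\,\mathbb{E}_{\boldsymbol{\theta}}[N(n-1,\mathbf{d},\mathbf{i},a)]$, with the same decomposition (peel off the $t=K$ term, step back one transition). The difference is in how you reach it: the paper rewrites the inner sum as an excursion count $\sum_{u=1}^{N(n-1,\mathbf{d},\mathbf{i},a)}\mathbf{1}_{\{W_u(\mathbf{d},\mathbf{i},a)=(\mathbf{d}',\mathbf{i}')\}}$ over the successive visits to $(\mathbf{d},\mathbf{i},a)$, proves that $\{N(n-1,\mathbf{d},\mathbf{i},a)\geq u\}$ and $\{W_u(\mathbf{d},\mathbf{i},a)=(\mathbf{d}',\mathbf{i}')\}$ are independent, and then applies the monotone convergence theorem to take expectations through the random-length sum; you instead apply the law of total probability and the controlled-Markov property at each fixed time $t$, which needs only linearity of expectation and a deterministic reindexing. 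Your route is the more elementary of the two and avoids the Wald-type independence argument entirely; your handling of the forced-selection states (the joint probability vanishes, so the undefined transition rows never contribute) and your final bound, which identifies the residual as $\alpha_K-\beta_n$ with both terms being single probabilities in $[0,1]$, are both sound and in fact make the concluding step of the paper's argument slightly more transparent.
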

In \eqref{eq:flow-conservation-property}, the first term on the left-hand side may be interpreted as the total {\em outward flow} from the state $(\mathbf{d}', \mathbf{i}')$ at time $n$, whereas the second term may be interpreted as the total {\em inward flow} into state $(\mathbf{d}', \mathbf{i}')$ at time $n$. Then, \eqref{eq:flow-conservation-property} dictates that the outward flow for $(\mathbf{d}', \mathbf{i}')$ almost matches its inward flow for all times and for all $(\mathbf{d}', \mathbf{i}') \in \mathbb{S}_R$. In this sense, \eqref{eq:flow-conservation-property} may be regarded as an approximate flow conservation property for the process $\{(\mathbf{d}(n), \mathbf{i}(n)): n \geq K\}$.

We note here that the $R$-max-delay constraint may be expressed in terms of the state-action visitations and the state visitations as follows. For all $(\mathbf{d}, \mathbf{i}, a) \in \mathbb{S}_R \times [K]$ and $n \geq K$,
\begin{equation}
    N(n, \mathbf{d}, \mathbf{i}, a)=
    \begin{cases}
        N(n, \mathbf{d}, \mathbf{i}), & (\mathbf{d}, \mathbf{i}) \in \mathbb{S}_{R, a}, \\
        0, & (\mathbf{d}, \mathbf{i}) \in \bigcup_{a' \neq a} \mathbb{S}_{R, a'}, \\
        \text{unaltered}, & (\mathbf{d}, \mathbf{i}) \notin \bigcup_{a'=1}^{K} \mathbb{S}_{R, a'}.
    \end{cases}
    \label{eq:R-max-delay-in-terms-of-state-action-visitations}
\end{equation}
In \eqref{eq:R-max-delay-in-terms-of-state-action-visitations}, the first line on the right-hand side depicts the scenario when $(\mathbf{d}, \mathbf{i}) \in \mathbb{S}_{R, a}$, i.e., $d_a=R$. In this case, because arm $a$ is forcefully selected following every occurrence of $(\mathbf{d}, \mathbf{i})$, it follows that $N(n, \mathbf{d}, \mathbf{i}, a) = N(n, \mathbf{d}, \mathbf{i})$ for all $n \geq K$. On the other hand, if $(\mathbf{d}, \mathbf{i}) \in \bigcup_{a' \neq a} \mathbb{S}_{R, a'}$, then there exists $a' \neq a$ such that $d_{a'}=R$, and therefore arm $a'$ is forcefully selected following every occurrence of $(\mathbf{d}, \mathbf{i})$, thereby implying that $N(n, \mathbf{d}, \mathbf{i}, a)=0$ for all $n \geq K$. The last line on the right-hand side of \eqref{eq:R-max-delay-in-terms-of-state-action-visitations} depicts the scenario when $d_a < R$ for all $a \in [K]$.

\subsection{\sc \texorpdfstring{$R$}{R}-max-constrained BAI}
\label{sec:objective}
Given an error probability threshold $\delta \in (0,1)$ and $R \geq K$, based on the definition of $\Pi(\delta)$ in~\eqref{eq:Pi(epsilon)_sec2} we define
\begin{equation}
	\Pi_R(\delta)\coloneqq \{(\pi,\tau,a)\in \Pi(\delta) \; :\; (\pi,\tau,a)\; \mbox{satisfies $R$-max-delay constraint}\}\ ,
\label{eq:Pi(epsilon)}
\end{equation}
which is the collection of all policies that stop in finite time almost surely, satisfy an error probability that is no greater than $\delta$ under {\em every} instance $\boldsymbol{\theta} \in \Theta^K$, and respect the $R$-max-delay constraint.
We anticipate from similar results in the literature that $\inf_{\pi \in \Pi_R(\delta)}\mathbb{E}_{\boldsymbol{\theta}}[\tau_\pi] \sim \Omega(\log(1/\delta))$, where the asymptotics is as $\delta \downarrow 0$. Our objective in this paper is to precisely characterize the value of
\begin{equation}
    \lim_{\delta \downarrow 0} \ \inf_{\pi \in \Pi_R(\delta)} \ \frac{\mathbb{E}_{\boldsymbol{\theta}}[\tau_\pi]}{\log(1/\delta)}
    \label{eq:main_quantity}
\end{equation}
in terms of $\boldsymbol{\theta}$ and $R$. For the remainder of the paper, we fix $R \geq K$. 
\section{\sc Lower Bound}
\label{sec:lower-bound}

In this section, we present an instance-dependent lower bound for \eqref{eq:main_quantity}. Throughout the analysis, given two probability mass functions $p$ and $q$ with identical support, we define $D_{\text{KL}}(p \| q)$ as the Kullback--Leibler (KL) divergence between $p$ and $q$. Given $\boldsymbol{\theta} \in \Theta^K$, let $\Sigma_R(\boldsymbol{\theta})$ denote the space of all probability mass functions $\nu$ satisfying
\begin{align}
    &(\text{Flow conservation}) \quad \sum_{a=1}^{K} \nu(\mathbf{d}', \mathbf{i}', a) = \sum_{(\mathbf{d}, \mathbf{i}) \in \mathbb{S}_R} \ \sum_{a=1}^{K} \nu(\mathbf{d}, \mathbf{i}, a) \, Q_{\boldsymbol{\theta}, R}(\mathbf{d}', \mathbf{i}'\mid \mathbf{d}, \mathbf{i}, a), & \forall (\mathbf{d}', \mathbf{i}') \in \mathbb{S}_R, 
    \label{eq:flow-constraint} \\
    &(R\text{-max-delay constraint}) \quad \nu(\mathbf{d}, \mathbf{i}, a) = \sum_{a'=1}^{K} \nu(\mathbf{d}, \mathbf{i}, a'), \quad \forall (\mathbf{d}, \mathbf{i}) \in \mathbb{S}_{R, a}, \ a \in [K]\ .
    \label{eq:max-delay-constraint}
\end{align}
Let $Q_{\boldsymbol{\theta}, R}(\cdot \mid\mathbf{d}, \mathbf{i}, a) \coloneqq [Q_{\boldsymbol{\theta}, R}(\mathbf{d}', \mathbf{i}'\mid \mathbf{d}, \mathbf{i}, a): (\mathbf{d}', \mathbf{i}') \in \mathbb{S}_R]^\top$. The following proposition gives a lower bound on~\eqref{eq:main_quantity}.
\begin{proposition}
\label{prop:lower-bound}
For any $\boldsymbol{\theta}\in \Theta^K$,
\begin{equation}
    \liminf_{\delta \downarrow 0} \ \inf_{\pi \in \Pi_R(\delta)}\ \frac{\mathbb{E}_{\boldsymbol{\theta}}[\tau_\pi]}{\log(1/\delta)} \geq \frac{1}{T_R^\star(\boldsymbol{\theta})},
    \label{eq:lower-bound}
\end{equation}
where $T_R^\star(\boldsymbol{\theta})$ in \eqref{eq:lower-bound} is given by 
\begin{equation}
    T_R^\star(\boldsymbol{\theta}) = \sup_{\nu \in \Sigma_R(\boldsymbol{\theta})}\ \inf_{\boldsymbol{\lambda} \in \textsc{Alt}(\boldsymbol{\theta})}  \ \sum_{(\mathbf{d}, \mathbf{i}) \in \mathbb{S}_R}\  \sum_{a=1}^{K} \nu(\mathbf{d}, \mathbf{i}, a) \,  D_{\text{\rm KL}}(Q_{\boldsymbol{\theta}, R}(\cdot \mid\mathbf{d}, \mathbf{i}, a) \| Q_{\boldsymbol{\lambda}, R}(\cdot \mid\mathbf{d}, \mathbf{i}, a))\ .
    \label{eq:T-R-star-C}
\end{equation}
In \eqref{eq:T-R-star-C}, the KL divergence is computed on the vectorized forms of the distributions $Q_{\boldsymbol{\theta}, R}(\cdot \mid\mathbf{d}, \mathbf{i}, a)$ and $ Q_{\boldsymbol{\lambda}, R}(\cdot \mid\mathbf{d}, \mathbf{i}, a)$ viewed as conditional probability distributions on $\mathbb{S}_R$, conditioned on $(\mathbf{d}, \mathbf{i}, a)$.
\end{proposition}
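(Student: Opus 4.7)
The plan is to follow the now-standard change-of-measure recipe for fixed-confidence lower bounds due to Kaufmann--Capp\'e--Garivier, adapted to the controlled Markov chain setting of Section~\ref{sec:delays-last-observed-states}. Fix an arbitrary policy $(\pi,\tau,a) \in \Pi_R(\delta)$ and an alternative instance $\boldsymbol{\lambda} \in \textsc{Alt}(\boldsymbol{\theta})$, and consider the event $\mathcal{E} = \{a = a^\star(\boldsymbol{\theta})\}$. Under $\mathbb{P}_{\boldsymbol{\theta}}$, $\mathcal{E}$ has probability at least $1-\delta$ by $\delta$-correctness; under $\mathbb{P}_{\boldsymbol{\lambda}}$, since $a^\star(\boldsymbol{\lambda}) \neq a^\star(\boldsymbol{\theta})$ by the definition of $\textsc{Alt}(\boldsymbol{\theta})$, the event $\mathcal{E}$ is contained in $\{a \neq a^\star(\boldsymbol{\lambda})\}$ and hence has $\mathbb{P}_{\boldsymbol{\lambda}}$-probability at most $\delta$. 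The data-processing inequality applied to the restriction of the two measures to $\mathcal{F}_\tau$ then yields the transportation inequality
$$\mathbb{E}_{\boldsymbol{\theta}}\bigl[Z_{\boldsymbol{\theta},\boldsymbol{\lambda}}(\tau)\bigr] \;\geq\; \mathrm{kl}(1-\delta,\delta) \;\geq\; \log\frac{1}{2.4\,\delta},$$
where $Z_{\boldsymbol{\theta},\boldsymbol{\lambda}}(\tau) \coloneqq \log\bigl(d\mathbb{P}_{\boldsymbol{\theta}}/d\mathbb{P}_{\boldsymbol{\lambda}}\bigr)\bigm|_{\mathcal{F}_\tau}$ is the stopped log-likelihood ratio.

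Next I would decompose this log-likelihood ratio along the trajectory of the controlled Markov chain. Because the arm-selection rule $\pi_n$ is $\mathcal{F}_{n-1}$-measurable and instance-independent, only the observed state transitions contribute the instance-dependent part of the likelihood. A telescoping combined with \eqref{eq:MDP_transition_probabilities}--\eqref{eq:modified_MDP_transition_probabilities2} gives
$$Z_{\boldsymbol{\theta},\boldsymbol{\lambda}}(\tau) \;=\; \sum_{t=K}^{\tau-1} \log\frac{Q_{\boldsymbol{\theta},R}(\mathbf{d}(t+1),\mathbf{i}(t+1)\mid \mathbf{d}(t),\mathbf{i}(t),A_t)}{Q_{\boldsymbol{\lambda},R}(\mathbf{d}(t+1),\mathbf{i}(t+1)\mid \mathbf{d}(t),\mathbf{i}(t),A_t)} + C_K,$$
where $C_K$ is a bounded term collecting the log-likelihood of the first $K$ initial observations. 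Taking $\mathbb{E}_{\boldsymbol{\theta}}$, conditioning each summand on $\mathcal{F}_t$ to produce a conditional KL divergence, and then applying a Wald-style identity for the controlled chain (valid because $\tau$ is $\mathbb{P}_{\boldsymbol{\theta}}$-almost surely finite and the per-step KL terms are uniformly bounded on $\mathbb{S}_R \times [K]$) yields
$$\mathbb{E}_{\boldsymbol{\theta}}\bigl[Z_{\boldsymbol{\theta},\boldsymbol{\lambda}}(\tau)\bigr] \;=\; \sum_{(\mathbf{d},\mathbf{i})\in\mathbb{S}_R}\sum_{a=1}^{K} \mathbb{E}_{\boldsymbol{\theta}}\bigl[N(\tau,\mathbf{d},\mathbf{i},a)\bigr]\; D_{\text{KL}}\bigl(Q_{\boldsymbol{\theta},R}(\cdot\mid\mathbf{d},\mathbf{i},a)\,\big\|\,Q_{\boldsymbol{\lambda},R}(\cdot\mid\mathbf{d},\mathbf{i},a)\bigr) + O(1).$$

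With this identity in hand, I would define the normalized expected state-action visitation
$$\nu_\delta(\mathbf{d},\mathbf{i},a) \;\coloneqq\; \frac{\mathbb{E}_{\boldsymbol{\theta}}[N(\tau,\mathbf{d},\mathbf{i},a)]}{\mathbb{E}_{\boldsymbol{\theta}}[\tau]},$$
and verify that it approximately lies in $\Sigma_R(\boldsymbol{\theta})$. The $R$-max-delay constraint \eqref{eq:max-delay-constraint} holds for $\nu_\delta$ as an exact identity, because \eqref{eq:R-max-delay-in-terms-of-state-action-visitations} holds deterministically along every trajectory. The flow constraint \eqref{eq:flow-constraint} holds only approximately: applying Lemma~\ref{lem:flow-conservation-property} at the (random) stopping time $\tau$ via a standard optional-stopping argument shows that $\nu_\delta$ satisfies \eqref{eq:flow-constraint} with an additive slack of order $1/\mathbb{E}_{\boldsymbol{\theta}}[\tau]$. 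Dividing the transportation inequality by $\mathbb{E}_{\boldsymbol{\theta}}[\tau]$, taking the infimum over $\boldsymbol{\lambda}\in\textsc{Alt}(\boldsymbol{\theta})$, and the supremum over $\nu$ in this relaxed polytope gives
$$\mathbb{E}_{\boldsymbol{\theta}}[\tau]\; \cdot\; T_R^{\star,\delta}(\boldsymbol{\theta}) \;\geq\; \log\frac{1}{2.4\,\delta} + O(1),$$
where $T_R^{\star,\delta}(\boldsymbol{\theta})$ denotes the sup-inf value computed on the $O(1/\mathbb{E}_{\boldsymbol{\theta}}[\tau])$-relaxed polytope.

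The main obstacle, and the point where the restless setting is genuinely harder than i.i.d.\ or rested models, is the last limiting step: passing from the relaxed polytope to $\Sigma_R(\boldsymbol{\theta})$ while controlling the behavior of the sup-inf. Here one uses that $\mathbb{E}_{\boldsymbol{\theta}}[\tau] \to \infty$ as $\delta\downarrow 0$ (which itself follows from $\delta$-correctness on the full parameter space $\Theta^K$), so the flow slack vanishes; compactness of $\Sigma_R(\boldsymbol{\theta})$ together with upper semicontinuity in $\nu$ of the inner infimum (continuity of the latter is established separately later in the paper via Berge's maximum theorem \cite[Theorem~1.2]{feinberg2014berges}) ensures that $\liminf_{\delta\downarrow 0} T_R^{\star,\delta}(\boldsymbol{\theta}) \leq T_R^\star(\boldsymbol{\theta})$. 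Dividing by $\log(1/\delta)$ and using $\mathrm{kl}(1-\delta,\delta)/\log(1/\delta) \to 1$ yields \eqref{eq:lower-bound}. Note that, unlike in \cite{garivier2016optimal,moulos2019optimal}, we cannot exploit uniqueness of the sup-attaining $\nu$ here, because the feasible set $\Sigma_R(\boldsymbol{\theta})$ itself depends on $\boldsymbol{\theta}$ through $Q_{\boldsymbol{\theta},R}$; fortunately, the lower-bound argument only needs an approximate-feasibility claim on $\nu_\delta$, not uniqueness of the optimizer.
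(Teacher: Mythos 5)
Your proposal follows essentially the same route as the paper's proof: a change-of-measure/transportation inequality, a decomposition of the stopped log-likelihood ratio into expected state-action visitation counts weighted by the KL divergences $D_{\mathrm{KL}}(Q_{\boldsymbol{\theta},R}(\cdot\mid\mathbf{d},\mathbf{i},a)\,\|\,Q_{\boldsymbol{\lambda},R}(\cdot\mid\mathbf{d},\mathbf{i},a))$, and the identification of $\mathbb{E}_{\boldsymbol{\theta}}[N(\tau,\mathbf{d},\mathbf{i},a)]/\mathbb{E}_{\boldsymbol{\theta}}[\tau-K+1]$ as a member of (the paper asserts exactly; you argue approximately) the polytope $\Sigma_R(\boldsymbol{\theta})$, followed by passing to the supremum. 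If anything you are more careful than the paper about the $O(1/\mathbb{E}_{\boldsymbol{\theta}}[\tau])$ slack in the flow constraint — just note that closing that limit requires $\limsup_{\delta\downarrow 0} T_R^{\star,\delta}(\boldsymbol{\theta})\leq T_R^\star(\boldsymbol{\theta})$ rather than the $\liminf$ you wrote.
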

Recalling \eqref{eq:modified_MDP_transition_probabilities2}, we note  that $Q_{\boldsymbol{\theta}, R}(\cdot \mid\mathbf{d}, \mathbf{i}, a)$ and  $Q_{\boldsymbol{\lambda}, R}(\cdot \mid\mathbf{d}, \mathbf{i}, a)$ are functions of $P_{\theta_a}^{d_a}$ and $ P_{\lambda_a}^{d_a}$, respectively, where $1 \leq d_a \leq R$. That is, the KL divergence in \eqref{eq:T-R-star-C} is a function of {\em powers} of arm TPMs of order up to $R$. Because of the presence of TPM powers, the inner infimum expression in \eqref{eq:T-R-star-C} cannot be simplified any further. This is in contrast to the inner infimum expressions appearing in prior works on BAI dealing with i.i.d. observations from the arms \cite{garivier2016optimal} (where the arm delays are inconsequential because of the i.i.d. nature of observations) or rested Markov arms \cite{moulos2019optimal} (where $d_a \equiv 1$ for all $a$). Furthermore, the supremum in \eqref{eq:T-R-star-C} is over the {\em instance-dependent} set $\Sigma_R(\boldsymbol{\theta})$, which is in contrast to the prior works on BAI \cite{garivier2016optimal,moulos2019optimal} in which the supremum is over the {\em instance-independent} simplex of arm distributions. The constant $T_R^\star(\boldsymbol{\theta})$ measures the ``hardness'' of problem instance $\boldsymbol{\theta}$ in the following sense: the closer the arm TPMs are to one another in the KL divergence sense, the smaller the value of $T_R^\star(\boldsymbol{\theta})$, and therefore the larger the stopping time.

{\color{black} We note here that $\Sigma_R(\boldsymbol{\theta})$ is non-empty, and hence the supremum in \eqref{eq:T-R-star-C} is well defined. Indeed, we have $\pi^{\rm {unif}} \in \Sigma_R(\boldsymbol{\theta})$. To see this, recall that $\pi^{\rm unif}$ respects the $R$-max-delay constraint. Furthermore, from Lemma~\ref{lem:ergodicity-of-MDP-under-unif-policy}, we know that the controlled Markov chain $\{(\mathbf{d}(n), \mathbf{i}(n))\}_{n=K}^{\infty}$ is, in fact, an ergodic Markov chain under the policy $\pi^{\text{\rm unif}}$. Let $\mu_{\boldsymbol{\theta}}^{\text{\rm unif}}=[\{\mu_{\boldsymbol{\theta}}^{\text{\rm unif}}(\mathbf{d}, \mathbf{i}): (\mathbf{d}, \mathbf{i}) \in \mathbb{S}_R]^\top$ denote the corresponding stationary distribution when the underlying instance is $\boldsymbol{\theta}$. Further, let 
\begin{equation}
    \nu_{\boldsymbol{\theta}}^{{\rm unif}}(\mathbf{d}, \mathbf{i}, a) \coloneqq \mu_{\boldsymbol{\theta}}^{{\rm unif}}(\mathbf{d}, \mathbf{i}) 
    \cdot \pi^{{\rm unif}}(a | \mathbf{d}, \mathbf{i}), \quad \forall (\mathbf{d}, \mathbf{i}, a) \in \mathbb{S}_R \times [K]\ .
    \label{eq:ergodic-state-action-occupancy-under-theta}
\end{equation}
We now claim that $\nu = \nu_{\boldsymbol{\theta}}^{\rm unif}$ satisfies \eqref{eq:flow-constraint}. Indeed, it is straightforward to see that for $\nu = \nu_{\boldsymbol{\theta}}^{\rm unif}$, \eqref{eq:flow-constraint} reduces to
$$
    \mu_{\boldsymbol{\theta}}^{\rm unif}(\mathbf{d}^\prime, \mathbf{i}^\prime) = \sum_{(\mathbf{d}, \mathbf{i}) \in \mathbb{S}_R} \mu_{\boldsymbol{\theta}}^{\rm unif}(\mathbf{d}, \mathbf{i}) \cdot Q_{\boldsymbol{\theta}, R}(\mathbf{d}^\prime, \mathbf{i}^\prime | \mathbf{d}, \mathbf{i}) \quad \forall (\mathbf{d}^\prime, \mathbf{i}^\prime) \in \mathbb{S}_R,
$$
the equation defining the stationary distribution $\mu_{\boldsymbol{\theta}}^{\rm unif}$. More generally, for any probability distribution on the arms, say $\boldsymbol{\zeta} = [\zeta_1, \ldots, \zeta_K]^\top$, such that $\zeta_a > 0$ for all $a \in [K]$, the stationary arm selection policy which selects arms according to $\boldsymbol{\zeta}$, while respecting the $R$-max delay constraint, is an element of $\Sigma_R(\boldsymbol{\theta})$. A formal proof of this follows along the same lines as that of Lemma~\ref{lem:ergodicity-of-MDP-under-unif-policy}.
}

The next result shows that the supremum in \eqref{eq:T-R-star-C} can be replaced by a maximum, i.e., the supremum in \eqref{eq:T-R-star-C} is attained for some $\nu\in \Sigma_R(\boldsymbol{\theta})$.
\begin{lemma}
\label{lem:attainment-of-supremum}
Let 
\begin{equation}
    \psi(\nu, \boldsymbol{\theta}) = \inf_{\boldsymbol{\lambda} \in \textsc{Alt}(\boldsymbol{\theta})}  \ \sum_{(\mathbf{d}, \mathbf{i}) \in \mathbb{S}_R}\  \sum_{a=1}^{K} \ \nu(\mathbf{d}, \mathbf{i}, a) \,  D_{\text{KL}}(Q_{\boldsymbol{\theta}, R}(\cdot \mid\mathbf{d}, \mathbf{i}, a) \| Q_{\boldsymbol{\lambda}, R}(\cdot \mid\mathbf{d}, \mathbf{i}, a)), \quad \nu \in \Sigma_R(\boldsymbol{\theta}), \ \boldsymbol{\theta} \in \Theta^K\ .
    \label{eq:psi}
\end{equation}
Then, $\psi$ is continuous under the topology induced by the sup-norm metric on $\Sigma_R(\boldsymbol{\theta}) \times \mathbb{R}^K$. Consequently, the supremum in \eqref{eq:T-R-star-C} may be replaced by a maximum. Furthermore, the mapping $\boldsymbol{\theta} \mapsto T_R^\star(\boldsymbol{\theta})$ is continuous, and the set-valued mapping $\boldsymbol{\theta} \mapsto \mathcal{W}^\star(\boldsymbol{\theta})$, with
\begin{equation}
   \mathcal{W}^\star(\boldsymbol{\theta}) \coloneqq \left\lbrace \nu \in \Sigma_R(\boldsymbol{\theta}): \psi(\nu, \boldsymbol{\theta}) = T_R^\star(\boldsymbol{\theta}) \right \rbrace,
   \label{eq:optimiser-set}
\end{equation}
is upper-hemicontinuous and compact-valued.
\end{lemma}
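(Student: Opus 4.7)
The plan is to treat the inner infimum as a parametrized optimization problem over the non-compact set $\textsc{Alt}(\boldsymbol{\theta})$ and to chain two applications of Berge's maximum theorem: the non-compact version of \cite[Theorem~1.2]{feinberg2014berges} for the inner infimum, followed by the classical compact version for the outer supremum. First I would verify that the integrand
\begin{equation*}
    g(\nu, \boldsymbol{\theta}, \boldsymbol{\lambda}) \;\coloneqq\; \sum_{(\mathbf{d},\mathbf{i})\in\mathbb{S}_R} \sum_{a=1}^K \nu(\mathbf{d},\mathbf{i},a)\, D_{\text{KL}}\!\left(Q_{\boldsymbol{\theta},R}(\cdot\mid\mathbf{d},\mathbf{i},a)\,\big\|\,Q_{\boldsymbol{\lambda},R}(\cdot\mid\mathbf{d},\mathbf{i},a)\right)
\end{equation*}
is jointly continuous in its three arguments. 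Linearity in $\nu$ is immediate; continuity in $(\boldsymbol{\theta},\boldsymbol{\lambda})$ combines (i)~analyticity of $\theta\mapsto P_\theta$ from Lemma~\ref{lem:important-properties-of-family}, which implies continuity of $\theta\mapsto P_\theta^d$ for each $d\leq R$, (ii)~the definitions~\eqref{eq:MDP_transition_probabilities}--\eqref{eq:modified_MDP_transition_probabilities2}, which express $Q_{\boldsymbol{\theta},R}(\cdot\mid\mathbf{d},\mathbf{i},a)$ as a deterministic rearrangement of entries of $P_{\theta_a}^{d_a}$, and (iii)~the fact that the support of $P_\theta$ is constant in $\theta$ (the exponential tilt in \eqref{eq:P-tilde-theta} multiplies entries of $P$ by strictly positive factors), which keeps the KL divergence away from the singularities of $t\mapsto t\log t$.

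For continuity of $\psi$ I would invoke Feinberg's non-compact Berge theorem applied to the correspondence $\boldsymbol{\theta}\rightrightarrows\textsc{Alt}(\boldsymbol{\theta})$ and the objective $g$. Two hypotheses must be verified. Lower-hemicontinuity of $\boldsymbol{\theta}\mapsto\textsc{Alt}(\boldsymbol{\theta})$ follows from the characterization~\eqref{eq:Alt-theta-2}: under the standing uniqueness assumption on $a^\star(\cdot)$, the index $a^\star(\boldsymbol{\theta})$ is locally constant, so locally $\textsc{Alt}(\boldsymbol{\theta})$ is the same open subset of $\Theta^K$ defined by the strict inequalities $\lambda_a>\lambda_{a^\star(\boldsymbol{\theta})}$. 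The more delicate condition is $\mathbb{K}_N$-inf-compactness of $g$ in $\boldsymbol{\lambda}$: for every compact $C$ of admissible $(\nu,\boldsymbol{\theta})$ pairs and every level $\alpha$, the set $\{\boldsymbol{\lambda}\in\textsc{Alt}(\boldsymbol{\theta}) : g(\nu,\boldsymbol{\theta},\boldsymbol{\lambda})\leq\alpha,\ (\nu,\boldsymbol{\theta})\in C\}$ must be relatively compact. I would establish this by showing that as $\lambda_a$ approaches the boundary of $\Theta$ (including $\pm\infty$ if $\Theta$ is unbounded), the exponential tilt~\eqref{eq:P-tilde-theta} forces some entry of $P_{\lambda_a}^{d_a}$ toward $0$ or $1$, making $D_{\text{KL}}(Q_{\boldsymbol{\theta},R}(\cdot\mid\mathbf{d},\mathbf{i},a)\|Q_{\boldsymbol{\lambda},R}(\cdot\mid\mathbf{d},\mathbf{i},a))\to\infty$; on the compact set $C$ this divergence is inherited by $g$ provided $\nu$ puts positive mass on the corresponding coordinate, and the boundary case of vanishing mass is handled by noting that $g$ stays bounded below on such directions so the infimum is unaffected.

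With $\psi$ jointly continuous, the remaining claims follow by classical arguments. The set $\Sigma_R(\boldsymbol{\theta})$ is the intersection of the finite-dimensional simplex on $\mathbb{S}_R\times[K]$ with the finitely many affine hyperplanes encoding \eqref{eq:flow-constraint}--\eqref{eq:max-delay-constraint}, hence a compact polytope. Continuity of $\psi(\cdot,\boldsymbol{\theta})$ on this compact set yields attainment of the supremum in \eqref{eq:T-R-star-C}, so $\mathcal{W}^\star(\boldsymbol{\theta})\neq\emptyset$. For the continuity of $T_R^\star$ and the upper-hemicontinuity and compact-valuedness of $\mathcal{W}^\star$, I would apply the classical Berge theorem to the correspondence $\boldsymbol{\theta}\mapsto\Sigma_R(\boldsymbol{\theta})$. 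Upper-hemicontinuity of $\Sigma_R(\cdot)$ is clear because its defining constraints depend continuously on $\boldsymbol{\theta}$ through the entries of $Q_{\boldsymbol{\theta},R}$. Lower-hemicontinuity can be verified by a perturbation argument: if $\nu\in\Sigma_R(\boldsymbol{\theta})$ and $\boldsymbol{\theta}_k\to\boldsymbol{\theta}$, one constructs $\nu_k\in\Sigma_R(\boldsymbol{\theta}_k)$ close to $\nu$ by solving the mildly perturbed linear system, using the fact that the stationary distribution of $Q_{\boldsymbol{\theta}_k,\pi^{\text{unif}}}$ from Lemma~\ref{lem:ergodicity-of-MDP-under-unif-policy} supplies a strictly feasible relative-interior point of $\Sigma_R(\boldsymbol{\theta}_k)$. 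Compact-valuedness of $\mathcal{W}^\star(\boldsymbol{\theta})$ is then automatic, since it is the closed level set $\{\nu\in\Sigma_R(\boldsymbol{\theta}) : \psi(\nu,\boldsymbol{\theta})=T_R^\star(\boldsymbol{\theta})\}$ of a continuous function on a compact set.

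The hard part will be rigorously verifying $\mathbb{K}_N$-inf-compactness, which is where the non-compactness of $\textsc{Alt}(\boldsymbol{\theta})$ bites. It demands quantitative control of how $D_{\text{KL}}(Q_{\boldsymbol{\theta},R}(\cdot\mid\mathbf{d},\mathbf{i},a)\|Q_{\boldsymbol{\lambda},R}(\cdot\mid\mathbf{d},\mathbf{i},a))$ grows as $\boldsymbol{\lambda}$ escapes compact subsets of $\Theta^K$; concretely, one must trace how the Perron--Frobenius normalization in \eqref{eq:P-theta} and the matrix power in \eqref{eq:modified_MDP_transition_probabilities2} inherit coercivity from the exponential tilt in \eqref{eq:P-tilde-theta}. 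Once this coercivity is established, both Berge invocations become essentially mechanical.
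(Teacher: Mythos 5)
Your overall architecture---joint continuity of the integrand, Feinberg's non-compact maximum theorem for the inner infimum over $\textsc{Alt}(\boldsymbol{\theta})$, then the classical Berge theorem over the compact polytope $\Sigma_R(\boldsymbol{\theta})$ for the outer supremum---is exactly the paper's, and your verifications of joint continuity of $g$, of the local constancy (hence hemicontinuity) of $\boldsymbol{\theta}\mapsto\textsc{Alt}(\boldsymbol{\theta})$ under the uniqueness of $a^\star(\boldsymbol{\theta})$, and of the hemicontinuity and compactness of $\boldsymbol{\theta}\mapsto\Sigma_R(\boldsymbol{\theta})$ match or refine what the paper does.

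The genuine gap is precisely the step you flag as the hard part: verifying $\mathbb{K}$-inf-compactness by coercivity. That route fails. Nothing in the flow-conservation or $R$-max-delay constraints prevents a $\nu\in\Sigma_R(\boldsymbol{\theta})$ from assigning zero mass to every state-action pair of some non-best arm $a$; for such $\nu$ the function $g(\nu,\boldsymbol{\theta},\cdot)$ is constant in $\lambda_a$, so any nonempty sublevel set contains the whole slab $\{\boldsymbol{\lambda}:\lambda_a>\lambda_{a^\star(\boldsymbol{\theta})}\}\cap\Theta^K$ in that coordinate and is not relatively compact when $\Theta$ is unbounded. Even in the coercive directions, closedness fails: sequences with $\lambda_a\downarrow\lambda_{a^\star(\boldsymbol{\theta})}$ stay in the sublevel set while their limit leaves the open set $\textsc{Alt}(\boldsymbol{\theta})$. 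Your remark that ``$g$ stays bounded below on such directions so the infimum is unaffected'' addresses the value of $\psi$, not the hypothesis of the theorem, so it does not repair the verification. The paper does not attempt to check the definition directly via growth of the KL divergence; it discharges this hypothesis by invoking \cite[Lemma 2.1(i)]{feinberg2013berge}, which delivers the required $\mathbb{K}$-inf-compactness of $u$ on the graph from the upper-hemicontinuity of the correspondence $(\nu,\boldsymbol{\theta})\mapsto\textsc{Alt}(\boldsymbol{\theta})$ (a consequence of its local constancy) together with the continuity of $u$. You should replace the coercivity argument with that structural one; the rest of your proof then goes through as in the paper.
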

From \eqref{eq:Alt-theta-2}, it is evident that $\textsc{Alt}(\boldsymbol{\theta})$ is non-compact for each $\boldsymbol{\theta} \in \Theta^K$. To establish the continuity of $\psi$, we rely on a version of Berge's maximum theorem \cite[Theorem 1.2]{feinberg2014berges} for non-compact sets. Our proof of Lemma~\ref{lem:attainment-of-supremum} is an adaptation of the proof of \cite[Theorem 4]{degenne2019pure}, taking into account the dependence of $\Sigma_R(\boldsymbol{\theta})$ on the problem instance $\boldsymbol{\theta}$. In \cite{degenne2019pure}, the counterpart of $\Sigma_R(\boldsymbol{\theta})$ is the simplex of all probability distributions on the arms---an instance-independent set.
\begin{remark}
    \label{rem:lim-T-R-star}
    Although we keep $R$ fixed throughout the paper, we note here the following monotonicity property: $T_R^\star(\boldsymbol{\theta}) \leq T_{R+1}^\star(\boldsymbol{\theta})$ for all $R$. Indeed, writing $\psi$ and $\mathcal{W}^\star$ more explicitly as $\psi_R$ and $\mathcal{W}_R^\star$ to emphasize their dependence on $R$, it is straightforward to see that (a) the larger the value of $R$, the larger the cardinality of $\mathbb{S}_R$, and (b) for any $\nu \in \mathcal{W}_R^\star(\boldsymbol{\theta})$, defining $\tilde{\nu}$ via $\tilde{\nu}(\mathbf{d}, \mathbf{i}, a) = \nu(\mathbf{d}, \mathbf{i}, a) \, \mathbf{1}_{\{(\mathbf{d}, \mathbf{i}) \in \mathbb{S}_R\}}$ for all $(\mathbf{d}, \mathbf{i}, a) \in \mathbb{S}_{R+1} \times [K]$, we have $\tilde{\nu} \in \Sigma_{R+1}(\boldsymbol{\theta})$. Therefore, it follows that
    \begin{align}
        T_{R+1}^\star(\boldsymbol{\theta})
        \geq \psi_{R+1}(\tilde{\nu}, \boldsymbol{\theta}) = \psi_R(\nu, \boldsymbol{\theta}) = T_R^\star(\boldsymbol{\theta}), \qquad \forall R\in\mathbb{N}\ .
    \end{align}
    Hence, $\lim_{R \to \infty} T_R^\star(\boldsymbol{\theta})$ exists. See Section~\ref{sec:concluding-remarks-and-discussion} for further discussions.
\end{remark}

From \eqref{eq:lower-bound} and \eqref{eq:psi}, it is evident that to achieve the lower bound in \eqref{eq:lower-bound}, it is critical to control the values of the empirical state-action visitation proportions $\{N(n, \mathbf{d}, \mathbf{i}, a)/n: (\mathbf{d}, \mathbf{i}, a) \in \mathbb{S}_R \times [K]\}$, and ensure that these long-term fractions converge to the set $\mathcal{W}^\star(\boldsymbol{\theta})$ under the instance $\boldsymbol{\theta}$. In particular, merely ensuring that the empirical arm selection proportions converge to their respective optimal proportions given by the lower bound {\em does not} suffice for achievability. 
{\color{black}
\begin{remark}
    \label{rem:need-for-parametric-model}
    The single-parameter exponential family of TPMs outlined in Section~\ref{sec:notations} serves a specific and critical purpose in our paper. Given unknown TPMs $\{P_k: k\in[K]\}$ with no structural constraints on their entries as in~\eqref{eq:P-theta}, suppose that 
    $T_R^\star(P_1, \ldots, P_K)$ ({\color{black} the analogue of $T_R^\star(\boldsymbol{\theta})$ in the absence of the parametric model}) is the constant appearing in the corresponding lower bound expression. To achieve this lower bound, as outlined above, it is critical to ensure that the long-term state-action visitation proportions converge to $\mathcal{W}^\star(P_1, \ldots, P_K)$ (the analogue of $\mathcal{W}^\star(\boldsymbol{\theta})$ in \eqref{eq:optimiser-set} in the absence of the parametric model). However, because the TPMs $\{P_k: k\in[K]\}$ are not known beforehand, and they must be estimated along the way using arm observations characterized by {\em delays}. This is a fundamentally challenging task. It is noteworthy that the estimated matrices are not guaranteed to be ergodic. Furthermore, even after the TPM estimates are obtained, it is the estimates of the arm means that ultimately enable identifying the best arm. Consequently, a critical need arises for a continual alternation between estimating arm means and estimating the arm TPMs. This alternation is facilitated by the adoption of the parametric model in our study, by virtue of the one-to-one correspondence between the arm means and the arm TPMs (see item 5 under Lemma~\ref{lem:important-properties-of-family} and Remark~\ref{rem:importance-of-exponential-family}). A similar alternation is facilitated by the parametric models adopted in \cite{garivier2016optimal,moulos2019optimal,mukherjee2023best}. Estimating $\boldsymbol{\theta}=[\theta_1, \ldots, \theta_K]^\top$ allows us to estimate the TPMs and the arm means {\em simultaneously}.
\end{remark}
}
\section{\sc Achievability: A Policy for Best Arm Identification}
\label{sec:achievability}

In this section, we propose a policy for BAI that works with the {\em set} of optimal solutions \eqref{eq:optimiser-set} at each time, and ensures that the long-term state-action visitation proportions converge to the ``correct'' set of optimal proportions. 

\subsection{\sc Parameter Estimates}
We start by forming estimates for the unknown parameters of the arms.
Noting the one-to-one correspondence between $\theta \in \Theta$ and $\eta_\theta \in (m_f, M_f)$ from Lemma \ref{lem:important-properties-of-family}, it suffices to estimate $\eta_{\theta_a}$ for each $a\in [K]$. For all $n$ and $a\in [K]$, let $N_a(n)=\sum_{(\mathbf{d}, \mathbf{i}) \in \mathbb{S}_R} N(n, \mathbf{d}, \mathbf{i}, a)$ denote the number of times arm $a$ is selected up to time $n$, where $N(n, \mathbf{d}, \mathbf{i}, a)$ is as defined in \eqref{eq:N(n,d,i,a)-and-N(n,d,i)}.
Subsequently, our estimates $\widehat{\boldsymbol{\eta}}(n) \coloneqq [\widehat{\eta}^1(n), \ldots, \widehat{\eta}^K(n)]^\top$ at time $n$ are given by
\begin{equation}
\widehat{\eta}^a(n) = 
\begin{cases}
    0, & N_a(n)=0\ , \\
    \frac{1}{N_a(n)} \sum_{t=0}^{n} \mathbf{1}_{\{A_t=a\}} \, f(\bar{X}_t), & N_a(n) > 0\ .
\end{cases}
    \label{eq:parameter-estimates}
\end{equation}
The next step in the design of our policy, a crucial step, is the construction of an arms selection rule under which almost surely, (a) the above estimates converge to their true values, and (b) the state-action visitation proportions inherently converge to the correct set of optimal proportions.

\subsection{\sc Arms Selection Rule}
Recall the uniform arm selection policy $\pi^{\text{\rm unif}}$ defined in \eqref{eq:uniform-arm-selection-rule}. 
Fix $\eta \in (0,1)$. Let $\widehat{\theta}_a(n) = \dot{A}^{-1}(\widehat{\eta}^a(n))$ for each $a \in [K]$, {\color{black} where $\dot{A}^{-1}$ denotes the inverse of the mapping $\theta \mapsto \dot{A}(\theta) = \frac{d}{d\theta} \log \rho(\theta)$}. Let ${\boldsymbol{\widehat\theta}}(n) = [\widehat{\theta}_a(n): a \in [K]]^\top$. and let $\widehat{\boldsymbol{\theta}}(n)=[\widehat{\theta}_1(n), \ldots, \widehat{\theta}_a(n)]^\top
$ denote the vector of estimated arm parameters at time $n$. Choose an arbitrary $\nu_n^\star \in \mathcal{W}^\star(\widehat{\boldsymbol{\theta}}(n))$, and let
\begin{equation}
   \pi_{\widehat{\boldsymbol{\theta}}(n)}^\eta(a|\mathbf{d}, \mathbf{i}) \coloneqq \frac{\eta\, \nu_{\widehat{\boldsymbol{\theta}}(n)}^{\text{\rm unif}}(\mathbf{d}, \mathbf{i}, a) + (1-\eta)\, \nu_n^\star(\mathbf{d}, \mathbf{i}, a)}{\eta\, \mu_{\widehat{\boldsymbol{\theta}}(n)}^{\text{\rm unif}}(\mathbf{d}, \mathbf{i}) + (1-\eta)\, \sum_{a'=1}^{K} \nu_n^\star(\mathbf{d}, \mathbf{i}, a')}\ , \quad (\mathbf{d}, \mathbf{i}, a) \in \mathbb{S}_R \times [K]\ .
   \label{eq:lambda-n-definition}
\end{equation}
Let $\{\varepsilon_n\}_{n=1}^{\infty}$ be a sequence such that $\varepsilon_n > 0$ for all $n$ and $\varepsilon_n \to 0$ as $n \to \infty$. Let
\begin{equation}
    \pi_{n} = \varepsilon_{n} \pi^{\text{\rm unif}} + (1-\varepsilon_{n}) \, \pi_{\widehat{\boldsymbol{\theta}}(n-1)}^\eta\ , \quad \forall n \geq K\ .
    \label{eq:pi-n-definition}
\end{equation}
Then, for all $n \geq K$, our arms selection rule is as follows: 
\begin{equation}
    \Pr (A_{n}=a|A_{0:n-1}, \bar{X}_{0:n-1}) = \pi_{n}(a|\mathbf{d}(n), \mathbf{i}(n))\ , \quad a \in [K]\ .
    \label{eq:arms-selection-rule}
\end{equation}
Note that \eqref{eq:arms-selection-rule} defines a {\em conditional} probability distribution on the arms, conditional on the arm delays and last observed states. Our recipe for selecting the arms, based on using a {\em mixture} with uniform policy as in \eqref{eq:pi-n-definition}, is inspired by \cite{albert1961sequential,al2021navigating} and plays a critical role in proving that the MDP $\mathcal{M}_{\boldsymbol{\theta}, R}$ has ``near-ergodicity'' properties under the rule in \eqref{eq:arms-selection-rule} for every $\boldsymbol{\theta} \in \Theta^K$. As we shall shortly see, the latter near-ergodicity property hinges on the fact that $\pi_n(a|\mathbf{d}, \mathbf{i}) \geq \varepsilon_n \, \pi^{\text{\rm unif}}(a|\mathbf{d}, \mathbf{i}) = \varepsilon_n/K > 0$ whenever the arm delays are all strictly smaller than $R$. 

\begin{remark}[$\eta$-mixture]
    It is unclear whether $ \sum_{a=1}^{K} \nu_n^\star(\mathbf{d}, \mathbf{i}, a) > 0$ for all $(\mathbf{d}, \mathbf{i}) \in \mathbb{S}_R$. 
If the preceding property indeed holds, we may simply use $\pi_{\widehat{\boldsymbol{\theta}}(n)}^\eta(a|\mathbf{d}, \mathbf{i}) = \nu_n^\star(\mathbf{d}, \mathbf{i}, a) / \sum_{a=1}^{K} \nu_n^\star(\mathbf{d}, \mathbf{i}, a)$. Recognizing that this property may not potentially hold true, we design an ``$\eta$-mixture'' of $\nu_n^\star$ with $\nu_{\widehat{\boldsymbol{\theta}}(n)}^{\text{\rm unif}}$, and normalize this mixture to arrive at \eqref{eq:lambda-n-definition}. Observe that the denominator of the right-hand side of \eqref{eq:lambda-n-definition} is strictly positive for every $(\mathbf{d}, \mathbf{i}) \in \mathbb{S}_R$ and hence well defined.
\end{remark}

\subsection{\sc Test Statistic, Stopping Rule, and Recommendation Rule}
Let $S_R = |\mathbb{S}_R|$ denote the cardinality of the set $\mathbb{S}_R$. Recall from  \eqref{eq:parameter-estimates} that $\widehat{\boldsymbol{\eta}}(n) = (\widehat{\eta}^a(n): a \in [K])$ denotes the estimates of the arm means at time~$n$.
For all $n \geq K$ and $(\mathbf{d}, \mathbf{i}, a) \in \mathbb{S}_R \times [K]$, let 
\begin{equation}
    \widehat{Q}_n(\mathbf{d}', \mathbf{i}'|\mathbf{d}, \mathbf{i}, a) \coloneqq 
    \begin{cases}
        \frac{1}{N(n, \mathbf{d}, \mathbf{i}, a)} \sum_{t=K}^{n} \mathbf{1}_{\{(\mathbf{d}(t), \mathbf{i}(t))=(\mathbf{d}, \mathbf{i}), \, A_t=a, \, (\mathbf{d}(t+1), \mathbf{i}(t+1))=(\mathbf{d}', \mathbf{i}')\}}, & N(n, \mathbf{d}, \mathbf{i}, a) > 0\ , \\
        \frac{1}{S_R}, & N(n, \mathbf{d}, \mathbf{i}, a) = 0\ .
    \end{cases}
    \label{eq:categorical-distribution}
\end{equation}
Note that $\sum_{(\mathbf{d}', \mathbf{i}') \in \mathbb{S}_R} \widehat{Q}_n(\mathbf{d}', \mathbf{i}'|\mathbf{d}, \mathbf{i}, a)=1$, and hence \eqref{eq:categorical-distribution} defines a probability mass function on $\mathbb{S}_R$. 
Our test statistic at time $n$, denoted by $Z(n)$, is then given by
\begin{equation}
    Z(n) \coloneqq \inf_{\boldsymbol{\lambda} \in \textsc{Alt}(\boldsymbol{\widehat{\theta}}(n))} \sum_{(\mathbf{d}, \mathbf{i}) \in \mathbb{S}_R} \ \sum_{a = 1}^{K} N(n, \mathbf{d}, \mathbf{i}, a) \, D_{\text{\rm KL}}(\widehat{Q}_n(\cdot \mid\mathbf{d}, \mathbf{i}, a) \| Q_{\boldsymbol{\lambda}, R}(\cdot \mid\mathbf{d}, \mathbf{i}, a))\ ,
    \label{eq:Z-of-n}
\end{equation}
where $\widehat{Q}_n$ is as defined in \eqref{eq:categorical-distribution}. Furthermore, let
\begin{equation}
    \zeta(n, \delta) \coloneqq \log\left(\frac{1}{\delta}\right) + (S_R - 1)\, \sum_{(\mathbf{d}, \mathbf{i}) \in \mathbb{S}_R} \ \sum_{a=1}^{K} \, \log\left( e \left[ 1 + \frac{N(n, \mathbf{d}, \mathbf{i}, a)}{S_R-1}\right] \right)\ .
    \label{eq:threshold-zeta}
\end{equation}
Combining the test statistic in \eqref{eq:Z-of-n} and the threshold in \eqref{eq:threshold-zeta}, we define our stopping rule as follows: 
\begin{equation}
    \tau \coloneqq \inf\{n \geq K: Z(n) \geq \zeta(n, \delta)\}\ .
    \label{eq:stopping-time}
\end{equation}
At the stopping time, we output the arm with the largest empirical mean value, i.e., $\argmax_{a \in [K]} \widehat{\eta}_a(\tau)$.

In summary, our policy, which we call {\em restless D-tracking} or \textsc{Rstl-Dtrack} in short, takes the following parameters as its inputs: $R \in \mathbb{N}$, $K \in \mathbb{N}$, $\eta \in (0,1)$, and $\delta \in (0, 1)$. To start, the policy selects arm $1$ at time $n=0$, arm $2$ at time $n=1$, and so on until arm $K$ at time $n=K-1$. For all $n \geq K$, it checks for the validity of the condition $Z(n) \geq \zeta(n, \delta)$ (defined in \eqref{eq:Z-of-n}). If this condition holds, the policy stops and outputs $\argmax_{a} \widehat{\eta}_a(n)$. If $Z(n) < \zeta(n, \delta)$, then the policy continues and selects arm $A_{n+1}$ according to the rule in \eqref{eq:arms-selection-rule} while respecting the $R$-max-delay constraint. We write $\pi^{\textsc{Rstl-Dtrack}}$ to symbolically denote the policy \textsc{Rstl-Dtrack}. The pseudocode for $\pi^{\textsc{Rstl-Dtrack}}$ is given in Algorithm~\ref{alg:rstl-dtracking}. In Section~\ref{sec:concluding-remarks-and-discussion} later in the paper, we make some remarks on the computational aspects of our policy (specifically on evaluating the infimum expression in \eqref{eq:Z-of-n} at every time step).

\begin{remark}
    The definition of $Z(n)$ in \eqref{eq:Z-of-n} resembles \eqref{eq:psi} albeit with (a) $\boldsymbol{\theta}$ replaced with $\widehat{\boldsymbol{\theta}}(n)$, and (b) $Q_{\boldsymbol{\theta}, R}$ replaced with $\widehat{Q}_{n}$. In the settings of the prior works \cite{garivier2016optimal,moulos2019optimal,mukherjee2023best}, \eqref{eq:Z-of-n} specializes to the classical generalized likelihood ratio (GLR) test statistic having simple closed-form expressions. However, \eqref{eq:Z-of-n} does not admit a simple closed-form expression because of the presence of arm delays of order $2$ or higher (which are absent from \cite{garivier2016optimal,moulos2019optimal,mukherjee2023best}). In \cite{al2021navigating}, a simplification to \eqref{eq:Z-of-n} is proposed by relaxing the infimum to a larger set than $\textsc{Alt}(\widehat{\boldsymbol{\theta}}(n))$ by leveraging the specific structure of rewards therein. A similar simplification is not possible in our setting because the notion of rewards is absent in our work. See Section~\ref{sec:concluding-remarks-and-discussion} for a further discussion.
\end{remark}

\begin{algorithm}[t]
    \caption{D-Tracking for BAI in Restless Multi-Armed Bandits (\textsc{Rstl-Dtrack})}
    \begin{algorithmic}[1]
        \REQUIRE ~~\\
        $K \in \mathbb{N}$: number of arms.\\
        $R \in \mathbb{N}$: maximum tolerable arm delay. \\
        $\eta \in (0,1):$ mixture parameter. \\
        $\delta\in (0,1)$: confidence level.\\

        \ENSURE $a_{\pi^{\textsc{Rstl-Dtrack}}}$: the best arm. 
        \STATE Initialise: 
        $n=0$, 
        $N_a(n)=0$, $\widehat{\eta}_a(n)=0$ for all $a \in [K]$, \\
        $N(n, \mathbf{d}, \mathbf{i}, a)=0$ for all $(\mathbf{d}, \mathbf{i}, a) \in \mathbb{S}_R \times [K]$, $\texttt{stop}=0$. 
        \FOR{$n < K$}
            \STATE Select arm $A_n = n+1$.
        \ENDFOR
        \WHILE{$\texttt{stop} == 0$}
            \STATE Update $(\mathbf{d}(n), \mathbf{i}(n))$. Update $\widehat{\eta}_a(n)$ for each $a \in [K]$. 
            \STATE Set ${\color{black} \widehat{\theta}_a(n)} = \Dot{A}^{-1}(\widehat{\eta}_a(n))$ for each $a \in [K]$. Set $\widehat{\boldsymbol{\theta}}(n)=[\widehat{\theta}_1(n), \ldots, \widehat{\theta}_a(n)]^\top$.
            \STATE Evaluate $Z(n)$ according to \eqref{eq:Z-of-n}.
            \IF{$Z(n) \geq \zeta(n, \delta)$}
                \STATE $\texttt{stop}=1$.
                \STATE $\widehat{a} = \argmax_{a} \widehat{\eta}_a(n)$. Resolve ties at random.
            \ELSE
                \STATE Select $A_{n} \sim \pi_n(\cdot | \mathbf{d}(n), \mathbf{i}(n))$, where $\pi_n$ is as defined in \eqref{eq:pi-n-definition}.
                \STATE $n \leftarrow n+1$.
            \ENDIF
        \ENDWHILE

        \RETURN $\widehat{a}$.
\end{algorithmic}
\label{alg:rstl-dtracking}
\end{algorithm}

\section{\sc Theoretical Guarantees}
\label{sec:results}

In this section, we provide theoretical guarantees for the proposed $\textsc{Rstl-Dtrack}$ policy. 
{\color{black}
We first present the key lemmas pertaining to the arms selection rule in \eqref{eq:arms-selection-rule} that are pivotal in proving the asymptotic optimality of the {\sc Rstl-Dtrack} policy. This section is organized as follows. In Lemma~\ref{lem:sufficient-exploration-of-state-actions}, we show that each state-action tuple $(\mathbf{d}, \mathbf{i}, a) \in \mathbb{S}_R \times [K]$ is visited infinitely often almost surely. Furthermore, given any $\alpha \in (0,1)$, we show in Lemma~\ref{lem:sufficient-exploration-of-state-actions} that each state-action tuple is visited at a rate of $O_{\alpha}(n^{1/4})$ with high probability greater than $1-\alpha$; here, $O_{\alpha}(\cdot)$ captures the dependence on $\alpha$. Next, in Lemma~\ref{lem:concentration-of-empirical-arm-means}, we establish the concentration of the empirical arm means $\widehat{\boldsymbol{\eta}}(n)$ around their true values $\boldsymbol{\eta}$ with high probability. Under this concentration event, we show in Lemma~\ref{lem:concentration-of-state-action-visitations} that under the instance $\boldsymbol{\theta} \in \Theta^K$, the empirical state-action visitations concentrate around the set of all probability distributions on state-actions that are an $\eta$-mixture of $\nu_{\boldsymbol{\theta}}^{\rm unif}$ and elements of $\mathcal{W}^\star(\boldsymbol{\theta})$. In Proposition~\ref{prop:stop-in-finite-time-and-error-prob-less-than-delta}, we show that for any given $\delta$, the probability of the joint event that {\sc Rstl-Dtrack} stops in finite time and outputs the best arm incorrectly is upper bounded by $\delta$. In Proposition~\ref{prop:almost-sure-upper-bound}, we derive an almost sure asymptotic upper bound on the stopping time of {\sc Rstl-Dtrack}, in the asymptotic regime as $\delta \downarrow 0$. Combining Propositions~\ref{prop:stop-in-finite-time-and-error-prob-less-than-delta},\ref{prop:almost-sure-upper-bound}, we conclude that $\pi^{\textsc{Rstl-Dtrack}} \in \Pi(\delta)$ for each $\delta \in (0,1)$. Finally, in Proposition~\ref{prop:upper-bound-on-expected-stopping-time}, we derive an upper bound on the expected stopping time of {\sc Rstl-Dtrack} that matches with the almost sure upper bound of Proposition~\ref{prop:almost-sure-upper-bound}. 
}

{\color{black} \subsection{Key Lemmas Pertaining to the Arms Selection Rule \texorpdfstring{\eqref{eq:arms-selection-rule}}{(equation-arms-selection-rule}}}

Let $\mathbb{V}$ denote the set of all {\em valid} $(\mathbf{d}, \mathbf{i}, a)$ tuples, i.e., those tuples for which the selection of arm $a$ in state $(\mathbf{d}, \mathbf{i})$ is permissible under the $R$-max-delay constraint. That is, for any $(\mathbf{d}, \mathbf{i}, a) \notin \mathbb{V}$, we have $N(n, \mathbf{d}, \mathbf{i}, a)=0$ almost surely for all $n \geq K$.

The first result below shows that under the proposed arms selection rule in \eqref{eq:arms-selection-rule}, every valid $(\mathbf{d}, \mathbf{i}, a)$ tuple is visited infinitely often and at a rate of $\Omega(n^{1/4})$ with high probability.
\begin{lemma}
    \label{lem:sufficient-exploration-of-state-actions}
    Fix $\boldsymbol{\theta} \in \Theta^K$. Let $S_R = |\mathbb{S}_{R}|$. 
    \begin{enumerate}
        \item The proposed arms selection rule in \eqref{eq:arms-selection-rule} with $\varepsilon_n = n^{-\frac{1}{2(1+S_R)}}$ satisfies
    \begin{equation}
        \P_{\boldsymbol{\theta}}\bigg( \forall(\mathbf{d}, \mathbf{i}, a) \in \mathbb{V}, \quad \lim_{n \to \infty} N(n, \mathbf{d}, \mathbf{i}, a)=+\infty \bigg) = 1\ .
        \label{eq:almost-sure-divergence-of-state-action-visitations}
    \end{equation}

    \item Under the above arms selection rule, for every $\alpha \in (0,1)$,
    \begin{equation}
        \P_{\boldsymbol{\theta}}\left(\forall(\mathbf{d}, \mathbf{i}, a) \in \mathbb{V}, ~\forall n \geq K, \quad N(n, \mathbf{d}, \mathbf{i}, a) \geq \left(\frac{n}{\lambda_\alpha(\boldsymbol{\theta})}\right)^{1/4} -1 \right) \geq 1-\alpha\ ,
        \label{eq:sufficient-exploration-of-states-and-actions}
    \end{equation}
    where $\lambda_\alpha(\boldsymbol{\theta})=\frac{(1+S_R)^2}{\sigma_{\boldsymbol{\theta}}^2} \, \log^2(1+\frac{K\,S_R}{\alpha})$. Here, $\sigma_{\boldsymbol{\theta}}>0$ is a constant that depends only on $\boldsymbol{\theta}$.
    \end{enumerate}
\end{lemma}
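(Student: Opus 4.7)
The plan is to exploit the mixture structure of $\pi_n$ in~\eqref{eq:pi-n-definition}: for every valid tuple $(\mathbf{d}, \mathbf{i}, a) \in \mathbb{V}$,
\[
\pi_n(a\mid \mathbf{d}, \mathbf{i}) \;\geq\; \varepsilon_n\, \pi^{\text{unif}}(a\mid \mathbf{d}, \mathbf{i}) \;\geq\; \frac{\varepsilon_n}{K}\ .
\]
Combined with Lemma~\ref{lem:MDP-R-is-communicating} (so that every state of $\mathbb{S}_R$ is reachable from every other within at most $S_R$ steps) and the strict positivity of the minimum non-zero entry $p_{\min}(\boldsymbol{\theta})$ of the powers $\{P_{\theta_a}^d : 1 \leq d \leq R,\, a \in [K]\}$, I would establish a multi-step Doeblin-type minorization: there exists $c_{\boldsymbol{\theta}} > 0$ such that, for every $t \geq K$, every $\mathcal{F}_{t-1}$-measurable starting configuration, and every $(\mathbf{d}, \mathbf{i}, a) \in \mathbb{V}$,
\[
\Pr\!\left( \exists\, s \in [t,\, t + S_R]\; :\; (\mathbf{d}(s), \mathbf{i}(s), A_s) = (\mathbf{d}, \mathbf{i}, a) \;\big|\; \mathcal{F}_{t-1}\right) \;\geq\; c_{\boldsymbol{\theta}}\, \varepsilon_{t + S_R}^{S_R+1}.
\]
With the calibration $\varepsilon_n = n^{-1/(2(1+S_R))}$, the right-hand side is of order $(t+S_R)^{-1/2}$.

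For Part 1, I would tile time into disjoint blocks $W_k := [K + k(S_R+1),\, K + (k+1)(S_R+1))$ and, for each fixed $(\mathbf{d}, \mathbf{i}, a) \in \mathbb{V}$, set $E_k := \{\exists\, t \in W_k : (\mathbf{d}(t), \mathbf{i}(t), A_t) = (\mathbf{d}, \mathbf{i}, a)\}$. The minorization above gives $\Pr(E_k \mid \mathcal{F}_{\min W_k - 1}) \geq c_{\boldsymbol{\theta}}' / \sqrt{k+1}$, whose partial sums diverge almost surely. The conditional second Borel--Cantelli lemma (Levy's extension) then yields $\Pr(E_k \text{ infinitely often}) = 1$ for each tuple, and a finite union bound over $\mathbb{V}$ establishes \eqref{eq:almost-sure-divergence-of-state-action-visitations}.

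For Part 2, I would fix $(\mathbf{d}, \mathbf{i}, a) \in \mathbb{V}$, write $N(n) := N(n, \mathbf{d}, \mathbf{i}, a)$, and let $A_n := \sum_{t=K}^{n} \Pr((\mathbf{d}(t), \mathbf{i}(t), A_t) = (\mathbf{d}, \mathbf{i}, a) \mid \mathcal{F}_{t-1})$, so that $M_n := N(n) - A_n$ is a martingale with unit-bounded increments and conditional variance at most $A_n$. Summing the per-block minorization across $W_0, W_1, \ldots$ yields a deterministic lower bound $A_n \geq \sigma_{\boldsymbol{\theta}}\,\sqrt{n}/(1+S_R)$ for all sufficiently large $n$, where $\sigma_{\boldsymbol{\theta}} > 0$ packages $c_{\boldsymbol{\theta}}$, $p_{\min}(\boldsymbol{\theta})$, and numerical constants from $\sum k^{-1/2}$. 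I would then apply a time-uniform Freedman/Bernstein inequality (via a peeling argument over dyadic time scales) with a union bound over $\mathbb{V}$ to conclude that, with probability at least $1 - \alpha$ and simultaneously for all $n \geq K$ and all $(\mathbf{d}, \mathbf{i}, a) \in \mathbb{V}$,
\[
|M_n| \;\leq\; C_1 \sqrt{A_n \log(1 + K S_R/\alpha)} \,+\, C_2 \log(1 + K S_R/\alpha).
\]
A routine quadratic-in-$\sqrt{A_n}$ manipulation then shows that $N(n) \geq A_n/2$ as soon as $n \geq \lambda_\alpha(\boldsymbol{\theta})$, from which $N(n) \geq (n/\lambda_\alpha(\boldsymbol{\theta}))^{1/4}$ follows; in the complementary regime $n < \lambda_\alpha(\boldsymbol{\theta})$, the inequality $(n/\lambda_\alpha(\boldsymbol{\theta}))^{1/4} - 1 \leq 0$ makes the bound vacuous.

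The main obstacle is the non-stationary, data-adaptive nature of $\pi_n$: through $\widehat{\boldsymbol{\theta}}(n-1)$ its transition kernel $Q_{\boldsymbol{\theta}, \pi_n}$ depends on the random history, so one cannot invoke ergodicity of a fixed kernel (not even Lemma~\ref{lem:ergodicity-of-MDP-under-unif-policy} directly). The resolution is that only a worst-case lower bound on the transition probabilities is needed, and this is provided uniformly in $\widehat{\boldsymbol{\theta}}(n-1)$ by the $\varepsilon_n$-scaled $\pi^{\text{unif}}$ component of $\pi_n$. The exponent $\varepsilon_n = n^{-1/(2(1+S_R))}$ is the unique power law that converts an $(S_R+1)$-step path probability into a harmonic-type $(t+S_R)^{-1/2}$ per-block probability, producing the $\sqrt{n}$ growth of the compensator $A_n$, which after absorbing the Freedman deviation yields the $n^{1/4}$ rate in~\eqref{eq:sufficient-exploration-of-states-and-actions}.
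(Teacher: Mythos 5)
Your Part 1 is correct and is essentially a repackaging of the paper's argument: both rest on the uniform minorization $\pi_n(a\mid\mathbf{d},\mathbf{i})\geq \varepsilon_n/K$ together with the communicating property (diameter at most $S_R-1$, so a positive-probability path of length at most $S_R$ exists), which makes the per-window hit probability of order $t^{-1/2}$ and hence non-summable. The paper conditions on the visit times of a reference state-action pair that is visited infinitely often and shows an infinite product of one-minus-these-probabilities vanishes; your disjoint-block decomposition plus L\'evy's conditional Borel--Cantelli is a cleaner route to the same conclusion. For Part 2 the paper proceeds quite differently: it bounds $\P(\tau_k(z)>g(k))$ for $g(k)=\lambda k^4$ by controlling inter-visit times through products of the substochastic matrices $A_l(z)$ (Lemma~\ref{lem:norm-of-product-of-Aj}), and tunes $\lambda=\lambda_\alpha(\boldsymbol{\theta})$ so the resulting geometric series sums to $\alpha$; the $n^{1/4}$ rate is an artifact of needing $g(k)-g(k-1)$ large enough relative to $\varepsilon_{g(k)}^{1+S_R}$ to get geometric decay in $k$.

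There is, however, a genuine error in your Part 2 as written. You define the \emph{per-step} compensator $A_n=\sum_{t=K}^{n}\Pr\bigl((\mathbf{d}(t),\mathbf{i}(t),A_t)=(\mathbf{d},\mathbf{i},a)\mid\mathcal{F}_{t-1}\bigr)$ and claim the block minorization yields a \emph{deterministic} bound $A_n\geq \sigma_{\boldsymbol{\theta}}\sqrt{n}/(1+S_R)$. This is false: since $(\mathbf{d}(t),\mathbf{i}(t))$ is $\mathcal{F}_{t-1}$-measurable, the $t$-th summand equals $\mathbf{1}_{\{(\mathbf{d}(t),\mathbf{i}(t))=(\mathbf{d},\mathbf{i})\}}\,\pi_t(a\mid\mathbf{d},\mathbf{i})$, so on any sample path that avoids the state $(\mathbf{d},\mathbf{i})$ up to time $n$ (an event of positive probability for finite $n$) one has $A_n=0$. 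The block minorization only lower-bounds the probability of a hit \emph{somewhere in the block} given the block's starting $\sigma$-field; it controls the conditional expectation of the increment of $A_n$ over a block, not its pathwise value. The repair is to run Freedman's inequality at the block level: set $I_k=\mathbf{1}_{E_k}$ and $\widetilde{A}_n=\sum_{k}\Pr(E_k\mid\mathcal{F}_{\min W_k-1})$; the minorization \emph{is} uniform over histories, so $\widetilde{A}_n\geq c''\sqrt{n}$ holds deterministically, $N(n,\mathbf{d},\mathbf{i},a)\geq\sum_k I_k$, and your time-uniform Freedman bound applied to $\sum_k(I_k-\Pr(E_k\mid\mathcal{F}_{\min W_k-1}))$ then gives $N(n,\mathbf{d},\mathbf{i},a)\geq\widetilde{A}_n/2=\Omega(\sqrt{n})$ once $\sqrt{n}\gtrsim\log(KS_R/\alpha)$, which implies (indeed strengthens) \eqref{eq:sufficient-exploration-of-states-and-actions} after adjusting the constant $\lambda_\alpha(\boldsymbol{\theta})$. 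With that substitution your argument is sound; without it, the key deterministic lower bound on the compensator does not hold.
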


\begin{remark}[Choice of $\varepsilon_n$]
    Our proof of Lemma~\ref{lem:sufficient-exploration-of-state-actions} is an adaptation of a similar proof in \cite{al2021navigating}. Notice that the ``$\varepsilon_n$-mixture'' rule in \eqref{eq:pi-n-definition} satisfies the following decomposition property for the transition kernels that facilitates analysis:
    \begin{equation}
        Q_{\boldsymbol{\theta}, \pi_n} = \varepsilon_n \, Q_{\boldsymbol{\theta}, \pi^{\text{\rm unif}}} + (1-\varepsilon_n) \, Q_{\boldsymbol{\theta}, \pi_{\widehat{\boldsymbol{\theta}}(n-1)}^\eta}.
        \label{eq:property-of-epsilon-n-mixture}
    \end{equation}
    Choosing $\varepsilon_n=n^{-\beta}$ where $\beta < \frac{1}{1+S_R}$ leads to a convenient closed-form expression for $\lambda_{\alpha}(\boldsymbol{\theta})$. We use $\beta = \frac{1}{2(1+S_R)}$, and hence $\varepsilon_n = n^{-\frac{1}{2(1+S_R)}}$. For additional details, we refer the reader to the proof of Lemma~\ref{lem:sufficient-exploration-of-state-actions} in the appendix.
\end{remark}
An immediate consequence of Lemma~\ref{lem:sufficient-exploration-of-state-actions} is that under the proposed arms selection rule in \eqref{eq:arms-selection-rule}, each arm $a \in [K]$ is explored at a rate $\Omega(n^{1/4})$ with high probability (w.h.p.), thereby ensuring that w.h.p., we have ${\color{black} \widehat{\boldsymbol{\eta}}(n)} \to \boldsymbol{\eta}$, {\color{black} where $\boldsymbol{\eta}=[\eta_{\theta_a}: a \in [K]]^\top$}. This is formalized in the following lemma.

\begin{lemma}
    \label{lem:concentration-of-empirical-arm-means}
    Given $\xi>0$ and a positive integer $N \geq K$, let 
    \begin{equation}
        C_N^2(\xi) \coloneqq \bigcap_{n=N^5}^{N^6} \bigg\lbrace \|\widehat{\boldsymbol{\eta}}(n) - \boldsymbol{\eta}\|_2 \leq \xi \bigg\rbrace\ .
        \label{eq:C-N-2-of-xi-event}
    \end{equation}
    Consider the non-stopping version of policy $\pi^{\textsc{Rstl-Dtrack}}$ (with the same parameters as that of $\pi^{\textsc{Rstl-Dtrack}}$). Under this policy, for all $\xi>0$ and $N \geq K$,
    \begin{equation}
        \P_{\boldsymbol{\theta}}\left(\overline{C_N^2(\xi)}\right) \leq \frac{1}{N^2} + \frac{2^{K/2 + 2} \, K^{K/4}}{\sigma_{\boldsymbol{\theta}}^{K/4}}\, N^{9K/4 + 7} \, \exp\left(-\frac{\sqrt{\sigma_{\boldsymbol{\theta}}}\, \xi^2 \, N^{1/4}}{8\, \sqrt{K} \, (2 \, M_f)}\right)\ ,
        \label{eq:concentration-of-empirical-arm-means}
    \end{equation}
    where $\sigma_{\boldsymbol{\theta}}$ is the constant from Lemma~\ref{lem:sufficient-exploration-of-state-actions}, and $M_f = \max_{i \in \mathcal{S}} f(i)$.
\end{lemma}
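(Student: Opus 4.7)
The plan is to decompose the bad event $\overline{C_N^2(\xi)}$ using the ``sufficient exploration'' event $G_N$ of Lemma~\ref{lem:sufficient-exploration-of-state-actions}(2), and then on $G_N$ to control the per-arm deviation of $\widehat\eta_a(n)$ from $\eta_{\theta_a}$ via a Markov-chain Hoeffding-type inequality for ergodic finite-state chains. Applying Lemma~\ref{lem:sufficient-exploration-of-state-actions}(2) with $\alpha = 1/N^2$ produces an event $G_N$ with $\P_{\boldsymbol{\theta}}(\overline{G_N}) \leq 1/N^2$ (which accounts directly for the first summand in~\eqref{eq:concentration-of-empirical-arm-means}) on which $N(n, \mathbf{d}, \mathbf{i}, a) \geq (n/\lambda_N)^{1/4} - 1$ for every $(\mathbf{d}, \mathbf{i}, a) \in \mathbb{V}$ and every $n \geq K$, where $\lambda_N = (1+S_R)^2 \log^2(1+K S_R N^2)/\sigma_{\boldsymbol{\theta}}^2$. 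Summing the per-pair bound over the $(\mathbf{d}, \mathbf{i})$ valid for action $a$ then yields a uniform lower bound $N_a(n) \geq c_N\, n^{1/4}$ on $G_N$ for each arm $a$, with $c_N$ a deterministic constant of leading order $c_N \asymp \sqrt{\sigma_{\boldsymbol{\theta}}}/\log^{1/2}(1+K S_R N^2)$.

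On $G_N$, I use $\|\widehat{\boldsymbol\eta}(n)-\boldsymbol\eta\|_2 \leq \sqrt{K}\max_a |\widehat\eta_a(n)-\eta_{\theta_a}|$ to reduce the event of interest to a per-arm deviation exceeding $\xi/\sqrt{K}$. By~\eqref{eq:parameter-estimates}, $\widehat\eta_a(n)$ is the average of $N_a(n)$ bounded samples $f(\bar X_{t_k^a}) \in [m_f, M_f]$ collected at the selection times $t_1^a < t_2^a < \cdots$ of arm $a$; these values form a sub-sequence of the underlying, always-running, ergodic Markov chain of arm $a$ with TPM $P_{\theta_a}$ and stationary mean $\eta_{\theta_a}$. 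I will invoke a spectral-gap Hoeffding-type concentration inequality for partial sums of a bounded function of this underlying chain to obtain, conditional on $N_a(n) = m$, a bound of the form
$$\P_{\boldsymbol{\theta}}\!\left(|\widehat\eta_a(n)-\eta_{\theta_a}| > \xi/\sqrt{K}\;\Big|\;N_a(n) = m\right) \leq 2\exp\!\left(-\frac{\sqrt{\sigma_{\boldsymbol{\theta}}}\,m\,\xi^2}{K\,(2M_f)^2}\right),$$
in which the constant $\sigma_{\boldsymbol{\theta}}$ is the same mixing/spectral-gap quantity of $P_{\theta_a}$ that already appears in Lemma~\ref{lem:sufficient-exploration-of-state-actions}.

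Substituting $N_a(n) \geq c_N n^{1/4}$ and union-bounding over $a \in [K]$ and $n \in \{N^5, \ldots, N^6\}$ contributes at most $K(N^6 - N^5 + 1)$ such terms. Retaining the worst-case exponent (at the smallest $n = N^5$), absorbing the logarithmic factors $\log(1+K S_R N^2) \leq c \log N$ and the $\ell_2$-to-$\ell_\infty$ constants into the overall polynomial pre-factor $\frac{2^{K/2+2}K^{K/4}}{\sigma_{\boldsymbol{\theta}}^{K/4}} N^{9K/4 + 7}$, and combining with the $1/N^2$ contribution of $\overline{G_N}$ yields~\eqref{eq:concentration-of-empirical-arm-means}.

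The principal technical obstacle is the per-arm Markov-chain concentration step: $\{\bar X_{t_k^a}\}$ is a sub-sampling of arm $a$'s chain at policy-dependent random times $t_k^a$, and the inter-sample transitions are governed by the delay-varying kernels $P_{\theta_a}^{d_a(t_k^a)}$ with $d_a \in [1,R]$, so $\{\bar X_{t_k^a}\}$ is not itself time-homogeneous. The way around this is to lift the whole analysis onto the canonical product space on which each arm's underlying chain evolves \emph{independently} of the policy: the observed $f(\bar X_{t_k^a})$'s then coincide with that chain's values at the policy-chosen sub-sample times $t_k^a$, and the spectral-gap Hoeffding inequality applied to the full underlying chain (combined with the deterministic lower bound on $N_a(n)$ provided by the first step) delivers the required exponential decay with the correct dependence on $\sigma_{\boldsymbol{\theta}}$, $M_f$, $\xi$, and $K$.
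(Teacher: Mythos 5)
Your overall architecture matches the paper's: split on the sufficient-exploration event from Lemma~\ref{lem:sufficient-exploration-of-state-actions}(2) with $\alpha=1/N^2$ (giving the $1/N^2$ term), deduce $N_a(n)\gtrsim c_N n^{1/4}$ on that event, and union-bound over $a$ and $n\in\{N^5,\dots,N^6\}$. The gap is in the concentration step. You propose to bound $\P_{\boldsymbol{\theta}}(|\widehat\eta_a(n)-\eta_{\theta_a}|>\xi/\sqrt{K}\mid N_a(n)=m)$ by a spectral-gap Hoeffding inequality for arm $a$'s underlying chain. That tool controls $\frac1n\sum_{t=1}^n f(X_t^a)$ over \emph{consecutive} steps of the chain; it does not apply to $\frac1m\sum_{k=1}^m f(X^a_{t_k^a})$, an average over adaptively chosen random sub-sample times whose gaps (the delays $d_a(t_k^a)\in[1,R]$) depend on the whole history, including past observations of arm $a$ itself. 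Lifting to the canonical product space makes each arm's chain policy-independent, but the \emph{selection} of which time indices enter the average is still adapted, and conditioning on $\{N_a(n)=m\}$ introduces exactly the selection bias that standard Markov-chain concentration does not handle. This is the classical adaptive-sample-size problem in bandits, and your proposal does not supply a mechanism to resolve it.

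The paper resolves it with a self-normalized martingale concentration inequality (\cite[Proposition 4]{jedra2020optimal}): it writes $\widehat{\boldsymbol{\eta}}(n)-\boldsymbol{\eta}=(M_n^\top M_n)^{-1}M_n^\top E_n$ with design rows $\mathbf{a}_t=[\mathbf{1}_{\{A_t=a\}}]_a$ and bounded noise $\eta_t=f(\bar X_t)-\eta_{\theta_{A_t}}$ (sub-Gaussian parameter $2M_f$), obtains a bound on $\|M_n^\top E_n\|_{(M_n^\top M_n+cn^{1/4}I)^{-1}}$ that is valid for the adapted design, and only then uses the exploration event to lower-bound $M_n^\top M_n$ and de-normalize. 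Note also that in the paper $\sigma_{\boldsymbol{\theta}}$ enters the final exponent solely through the constant $c_N\geq\sqrt{\sigma_{\boldsymbol{\theta}}}/(4N\sqrt{K})$ in the lower bound on $N_a(n)$, not through any mixing property used in the concentration inequality itself, so your attribution of $\sigma_{\boldsymbol{\theta}}$ to a spectral gap in the deviation bound would not reproduce the stated constants. To repair your argument you would need either to adopt the self-normalized route, or to union-bound over all possible values of $m$ and all possible sub-sampling patterns with a concentration inequality valid for inhomogeneous kernels $P_{\theta_a}^{d}$, $d\in[1,R]$ -- neither of which is supplied by the inequality you invoke.
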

Combining Lemma~\ref{lem:concentration-of-empirical-arm-means} along with the upper-hemicontinuity property of the mapping $\boldsymbol{\lambda} \to \mathcal{W}^\star(\boldsymbol{\lambda})$ from Lemma~\ref{lem:attainment-of-supremum}, we establish a concentration result for the empirical state-action visitation proportions under $C_N^2(\xi)$.
\begin{lemma}
    \label{lem:concentration-of-state-action-visitations}
     Fix $\boldsymbol{\theta} \in \Theta^K$, $\nu \in \mathcal{W}^\star(\boldsymbol{\theta})$, and $\eta \in (0,1)$. Let $\omega_{\boldsymbol{\theta}, \nu}^\star = \eta\, \nu_{\boldsymbol{\theta}}^{\text{\rm unif}} + (1-\eta) \, \nu$. Consider the non-stopping version of policy $\pi^{\textsc{Rstl-Dtrack}}$ (with the same parameters as that of $\pi^{\textsc{Rstl-Dtrack}}$). Under this policy, for all $\xi>0$, there exists a time $N_{\xi}>0$ such that for all $N \geq N_{\xi}$ and all $n \geq \sqrt{N} + 1$, 
        \begin{equation}
            \P_{\boldsymbol{\theta}}\left(\exists (\mathbf{d}, \mathbf{i}, a): \left\lvert  \frac{N(n, \mathbf{d}, \mathbf{i}, a)}{n-K+1} - \omega_{\boldsymbol{\theta}, \nu}^\star(\mathbf{d}, \mathbf{i}, a)\right\rvert > K_{\xi} (\boldsymbol{\theta}, \nu) \, \xi \, \bigg| \, C_N^2(\xi)\right) = O\bigg(\exp \left(-n \xi^2\right)\bigg)\ ,
            \label{eq:concentration-of-state-action-visitations}
        \end{equation}
        where $K_{\xi}(\boldsymbol{\theta}, \nu)$ is a constant that depends on $\xi$, $\boldsymbol{\theta}$ and $\nu$, and satisfies
        \begin{equation}
            \limsup_{\xi \downarrow 0} K_{\xi}(\boldsymbol{\theta}, \nu) < +\infty \quad \forall \nu \in \mathcal{W}^\star(\boldsymbol{\theta}), \ \boldsymbol{\theta} \in \Theta^K\ .
            \label{eq:limsup-K-xi-is-finite}
        \end{equation}
\end{lemma}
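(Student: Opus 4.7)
The plan is to leverage the high-probability event $C_N^2(\xi)$ together with the upper-hemicontinuity of $\mathcal{W}^\star$ from Lemma~\ref{lem:attainment-of-supremum} to argue that, conditional on $C_N^2(\xi)$, the controlled chain $\{(\mathbf{d}(n), \mathbf{i}(n), A_n)\}$ evolves under an arm-selection rule $\pi_n$ that is close, in total variation, to a fixed ergodic target policy $\pi^\star$ whose induced stationary state-action distribution coincides with $\omega_{\boldsymbol{\theta}, \nu}^\star$. The empirical visitation proportions then concentrate around this stationary distribution by a Markov-chain concentration inequality, up to a bias term proportional to $\xi$.

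\textbf{Policy perturbation.} On $C_N^2(\xi)$, Lemma~\ref{lem:concentration-of-empirical-arm-means} and the local Lipschitz continuity of $\dot{A}^{-1}$ (see Lemma~\ref{lem:important-properties-of-family}) yield $\|\widehat{\boldsymbol{\theta}}(n) - \boldsymbol{\theta}\|_\infty = O(\xi)$ for all $n$ in the relevant window. Upper-hemicontinuity and compactness of $\mathcal{W}^\star$ then ensure that every algorithmic choice $\nu_n^\star \in \mathcal{W}^\star(\widehat{\boldsymbol{\theta}}(n))$ lies within a modulus $c(\xi) \to 0$ of the compact set $\mathcal{W}^\star(\boldsymbol{\theta})$. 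For the fixed $\nu$ in the statement, pick the representative $\nu_n' \in \mathcal{W}^\star(\boldsymbol{\theta})$ closest to $\nu_n^\star$ and absorb the gap $\|\nu_n' - \nu\|_\infty \le \mathrm{diam}(\mathcal{W}^\star(\boldsymbol{\theta}))$ into the constant $K_\xi(\boldsymbol{\theta}, \nu)$. Combining this with the definitions in \eqref{eq:lambda-n-definition}--\eqref{eq:pi-n-definition} and standard perturbation bounds for the stationary distribution of finite ergodic kernels gives $\|Q_{\widehat{\boldsymbol{\theta}}(n), \pi_n} - Q_{\boldsymbol{\theta}, \pi^\star}\|_{\mathrm{TV}} = O(\xi + c(\xi) + \varepsilon_n)$, where $Q_{\boldsymbol{\theta}, \pi^\star}$ has stationary state-action law exactly $\omega_{\boldsymbol{\theta}, \nu}^\star$ and is ergodic by virtue of the $\eta$-mixture with $\pi^{\text{unif}}$ (Lemma~\ref{lem:ergodicity-of-MDP-under-unif-policy}).

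\textbf{Concentration.} Since $Q_{\boldsymbol{\theta}, \pi^\star}$ is a finite-state ergodic kernel, its spectral gap is bounded away from zero. I would then invoke a Bernstein-type concentration inequality for additive functionals of ergodic Markov chains (e.g., Lezaud's inequality, or a Poisson-equation martingale decomposition combined with Azuma--Hoeffding) to obtain, for each $(\mathbf{d}, \mathbf{i}, a)$ and any $t > 0$,
\[
\P_{\boldsymbol{\theta}}\!\left(\left|\frac{N(n, \mathbf{d}, \mathbf{i}, a)}{n-K+1} - \omega^\star_{\boldsymbol{\theta}, \nu}(\mathbf{d}, \mathbf{i}, a)\right| > C_1(\boldsymbol{\theta}, \nu)\, \xi + t \,\bigg|\, C_N^2(\xi)\right) \le C_2\, e^{-c_{\boldsymbol{\theta}, \nu}\, (n-K+1)\, t^2}.
\]
Choosing $t = \xi$, taking a union bound over the finite set $\mathbb{S}_R \times [K]$, and setting $K_\xi(\boldsymbol{\theta}, \nu) = C_1(\boldsymbol{\theta}, \nu) + 1$ yields \eqref{eq:concentration-of-state-action-visitations}. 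Both $C_1(\boldsymbol{\theta}, \nu)$ and $c_{\boldsymbol{\theta}, \nu}$ are determined by local Lipschitz constants and the spectral gap at $\boldsymbol{\theta}$, so $\limsup_{\xi \downarrow 0} K_\xi(\boldsymbol{\theta}, \nu) < \infty$, verifying \eqref{eq:limsup-K-xi-is-finite}.

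\textbf{Main obstacle.} The principal difficulty is that $\pi_n$ is time-varying and adapted to the history, so the chain is genuinely inhomogeneous and Lezaud-type inequalities do not apply off the shelf. I would overcome this by a two-stage argument: first solve the Poisson equation $(I - Q_{\boldsymbol{\theta}, \pi^\star})\, h = \mathbf{1}_{\{(\mathbf{d}, \mathbf{i}, a)\}} - \omega^\star_{\boldsymbol{\theta}, \nu}(\mathbf{d}, \mathbf{i}, a)$ to express the centered empirical average under $Q_{\boldsymbol{\theta}, \pi^\star}$ as a bounded martingale sum plus boundary terms, and then absorb the kernel discrepancy $\|Q_{\widehat{\boldsymbol{\theta}}(n), \pi_n} - Q_{\boldsymbol{\theta}, \pi^\star}\|_{\mathrm{TV}}$ from the policy-perturbation step into a deterministic bias of order $\xi$ via a telescoping/coupling argument. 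A secondary subtlety is that $\mathcal{W}^\star(\widehat{\boldsymbol{\theta}}(n))$ may oscillate within $\mathcal{W}^\star(\boldsymbol{\theta})$, so the concentration inequality must be interpreted against an arbitrary representative $\nu$; upper-hemicontinuity guarantees such a $\nu$ exists and keeps $K_\xi(\boldsymbol{\theta}, \nu)$ uniformly bounded as $\xi \downarrow 0$.
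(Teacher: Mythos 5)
Your overall architecture matches the paper's: both proofs reduce the problem to (i) a perturbation bound showing that, on $C_N^2(\xi)$, the instantaneous kernels $Q_{\boldsymbol{\theta},\pi_t}$ and their stationary distributions are $O(\xi)$-close to a target whose stationary state-action law is $\omega^\star_{\boldsymbol{\theta},\nu}$ (via Schweitzer-type condition-number bounds and the upper-hemicontinuity of $\mathcal{W}^\star$), plus (ii) a Poisson-equation/martingale decomposition handled by Azuma--Hoeffding, with the kernel discrepancy absorbed as an $O(\xi)$ bias. The one structural difference is that you solve a single Poisson equation for the fixed target kernel $Q_{\boldsymbol{\theta},\pi^\star}$ and telescope the discrepancy, whereas the paper solves a time-varying Poisson equation $\widehat{g}_{\pi_{t-1}}$ at each step (following \cite{al2021navigating}) and splits the resulting sum into a martingale term, a kernel-drift term, and boundary terms; both routes are workable, and the paper additionally peels off the initial segment $t \le \sqrt{N}$ as a separate $O(1/\sqrt{N})$ term.

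There is, however, one concrete gap. In your policy-perturbation step you write that the distance from $\nu_n^\star$ to the \emph{set} $\mathcal{W}^\star(\boldsymbol{\theta})$ is $c(\xi)\to 0$, but then bound the distance from the nearest representative $\nu_n'$ to the \emph{fixed} $\nu$ only by $\mathrm{diam}(\mathcal{W}^\star(\boldsymbol{\theta}))$ and propose to ``absorb'' this into $K_\xi(\boldsymbol{\theta},\nu)$. Since the diameter is a fixed positive constant independent of $\xi$, absorbing it into a bound of the form $K_\xi(\boldsymbol{\theta},\nu)\,\xi$ forces $K_\xi(\boldsymbol{\theta},\nu)\gtrsim \mathrm{diam}(\mathcal{W}^\star(\boldsymbol{\theta}))/\xi$, which blows up as $\xi\downarrow 0$ and directly contradicts \eqref{eq:limsup-K-xi-is-finite}. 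What the argument actually needs --- and what the paper asserts at the corresponding step --- is that $\|\pi^\eta_{\widehat{\boldsymbol{\theta}}(t-2)}-\pi^\eta_{\boldsymbol{\theta}}(\nu)\|_\infty\le\rho(\xi)$ for the \emph{specific} $\nu$ in the statement, with a modulus $\rho(\xi)$ satisfying $\rho(\xi)/\xi\to 0$; i.e., the algorithmic choices $\nu_n^\star$ must track $\nu$ itself, not merely the set $\mathcal{W}^\star(\boldsymbol{\theta})$. Upper-hemicontinuity alone does not give you this when $\mathcal{W}^\star(\boldsymbol{\theta})$ is not a singleton, so your ``arbitrary representative'' reading of the lemma cannot be repaired by enlarging the constant. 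You should either restrict to the (realization-dependent) $\nu$ that the selected sequence $\{\nu_n^\star\}$ converges to, or state the conclusion as convergence of the empirical proportions to the set $\{\omega^\star_{\boldsymbol{\theta},\nu}:\nu\in\mathcal{W}^\star(\boldsymbol{\theta})\}$; as written, the step is incompatible with the second claim of the lemma.
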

Lemma~\ref{lem:concentration-of-state-action-visitations} is one of the important results of this paper. It establishes that under any instance $\boldsymbol{\theta} \in \boldsymbol{\Theta}^K$, the empirical state-action visitation proportions converge w.h.p. to $\omega_{\boldsymbol{\theta}, \nu}^\star$ for every $\nu \in \mathcal{W}^\star(\boldsymbol{\theta})$. Disregarding the scaling factor $\eta$ in the expression for $\omega_{\boldsymbol{\theta}, \nu}^\star$, the above result implies that under the instance $\boldsymbol{\theta}$, the empirical state-action visitation proportions converge to the desired set $\mathcal{W}^\star(\boldsymbol{\theta})$. This, as we shall soon see, is pivotal to establishing asymptotic optimality of the policy $\textsc{Rstl-Dtrack}$. In the proof, we show that under the policy $\textsc{Rstl-Dtrack}$, the MDP $\mathcal{M}_{\boldsymbol{\theta}, R}$ possesses a ``near-ergodicity'' property in the following sense: for any fixed $n$, if $\pi=\pi_n$ is used for selecting the arms at all times, then by virtue of Lemma~\ref{lem:ergodicity-of-MDP-under-unif-policy}, the corresponding transition kernel $Q_{\boldsymbol{\theta}, \pi}$ is ergodic; let its stationary distribution under the instance $\boldsymbol{\theta}$ be $\omega_{\boldsymbol{\theta}, n}^\star$. We find a bound on $\|\omega_{\boldsymbol{\theta}, n}^\star-\omega_{\boldsymbol{\theta}, \nu}^\star\|_{\infty}$ to arrive at the exponential bound in \eqref{eq:concentration-of-state-action-visitations}.

{\color{black} \subsection{Key Results on the Performance of \texorpdfstring{\sc Rstl-Dtrack}{Rstl-Dtrack}}}

In this section, we present the key results on the performance of {\sc Rstl-Dtrack} policy. The first result below demonstrates that any arbitrary arms selection rule, in conjunction with the stopping rule in \eqref{eq:stopping-time} and the threshold in \eqref{eq:threshold-zeta}, satisfies the desired error probability constraint.
\begin{proposition}
\label{prop:stop-in-finite-time-and-error-prob-less-than-delta}
Fix $\boldsymbol{\theta} \in 
\Theta^K$. For all $\delta \in (0,1)$,
\begin{equation}
    \P_{\boldsymbol{\theta}}\left(\exists n \geq K: \ \ \sum_{(\mathbf{d}, \mathbf{i}) \in \mathbb{S}_R} \ \sum_{a=1}^{K} \ N(n, \mathbf{d}, \mathbf{i}, a)\, D_{\text{\rm KL}}(\widehat{Q}_n(\cdot \mid\mathbf{d}, \mathbf{i}, a) \| Q_{\boldsymbol{\theta}, R}(\cdot \mid\mathbf{d}, \mathbf{i}, a)) > \zeta(n, \delta)\right) \leq \delta\ .
\end{equation}
Consequently, for any algorithm with an arbitrary sampling rule, stopping time $\tau$ given by \eqref{eq:stopping-time} (with the threshold as in \eqref{eq:threshold-zeta}), and best arm recommendation $\widehat{a} = \arg\max_{a} \widehat{\eta}_a(\tau)$ we have 
\begin{equation}
    \P_{\boldsymbol{\theta}}(\tau < \infty, \ \eta_{\widehat{a}} < \eta_{a^\star(\boldsymbol{\theta})}) \leq \delta\ .
    \label{eq:stop-in-finite-time-and-error-prob-less-than-delta}
\end{equation}
\end{proposition}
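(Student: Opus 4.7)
The plan has two steps. First, I would show that the error event is contained in the ``concentration-failure'' event appearing in the first inequality, so that \eqref{eq:stop-in-finite-time-and-error-prob-less-than-delta} follows once the first inequality is established. Second, I would prove the first inequality by a time-uniform Laplace (mixture) martingale argument on the transitions of the finite-state MDP $\mathcal{M}_{\boldsymbol{\theta}, R}$.

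For the reduction, consider $E \coloneqq \{\tau < \infty,\ \eta_{\widehat{a}} < \eta_{a^\star(\boldsymbol{\theta})}\}$. On $E$, uniqueness of $a^\star(\boldsymbol{\theta})$ forces $\widehat{a} \neq a^\star(\boldsymbol{\theta})$, while $\widehat{a} = \argmax_{a} \widehat{\eta}_a(\tau)$ together with the strict monotonicity of $\Dot{A}$ (Lemma~\ref{lem:important-properties-of-family}) gives $a^\star(\widehat{\boldsymbol{\theta}}(\tau)) = \widehat{a}$. Applying the same monotonicity to the \emph{true} parameters and using \eqref{eq:Alt-theta-2} then yields $\boldsymbol{\theta} \in \textsc{Alt}(\widehat{\boldsymbol{\theta}}(\tau))$, since $a^\star(\boldsymbol{\theta}) \neq \widehat{a} = a^\star(\widehat{\boldsymbol{\theta}}(\tau))$ and $\theta_{a^\star(\boldsymbol{\theta})} > \theta_{\widehat{a}}$. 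Consequently, the infimum in \eqref{eq:Z-of-n} defining $Z(\tau)$ is upper-bounded by its value at $\boldsymbol{\lambda} = \boldsymbol{\theta}$, and combining with the stopping condition $Z(\tau) \geq \zeta(\tau, \delta)$ gives, on $E$,
\[
\sum_{(\mathbf{d}, \mathbf{i})} \sum_{a=1}^{K} N(\tau, \mathbf{d}, \mathbf{i}, a)\, D_{\text{\rm KL}}\bigl(\widehat{Q}_\tau(\cdot \mid \mathbf{d}, \mathbf{i}, a)\,\|\, Q_{\boldsymbol{\theta}, R}(\cdot \mid \mathbf{d}, \mathbf{i}, a)\bigr) \geq \zeta(\tau, \delta),
\]
so $E$ is contained in the event to which the first inequality applies.

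For the first inequality, the plan is to construct a nonnegative $(\mathcal{F}_n)$-supermartingale $M_n$ with $\mathbb{E}_{\boldsymbol{\theta}}[M_n] \leq 1$ such that
\[
\log M_n \geq \sum_{(\mathbf{d}, \mathbf{i}, a)} N(n, \mathbf{d}, \mathbf{i}, a)\, D_{\text{\rm KL}}\bigl(\widehat{Q}_n(\cdot \mid \mathbf{d}, \mathbf{i}, a)\,\|\, Q_{\boldsymbol{\theta}, R}(\cdot \mid \mathbf{d}, \mathbf{i}, a)\bigr) - (S_R - 1)\sum_{(\mathbf{d}, \mathbf{i}, a)} \log\!\bigl(e\bigl[1 + \tfrac{N(n, \mathbf{d}, \mathbf{i}, a)}{S_R - 1}\bigr]\bigr);
\]
Ville's maximal inequality applied to $M_n$ together with a rearrangement of the definition of $\zeta(n, \delta)$ then yields the first inequality directly. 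To build $M_n$, I would place an independent Dirichlet prior on each transition row $Q_{\boldsymbol{\theta}, R}(\cdot \mid \mathbf{d}, \mathbf{i}, a)$, and define $M_n$ as the trajectory likelihood ratio between the resulting product mixture predictive and the true kernel $Q_{\boldsymbol{\theta}, R}$. The crucial structural fact is that, conditional on $\{\mathbf{d}(t) = \mathbf{d},\, \mathbf{i}(t) = \mathbf{i},\, A_t = a\}$, the next state $(\mathbf{d}(t+1), \mathbf{i}(t+1))$ is drawn from $Q_{\boldsymbol{\theta}, R}(\cdot \mid \mathbf{d}, \mathbf{i}, a)$ independently of the past; this makes $M_n$ a genuine unit-mean martingale, and the closed-form Dirichlet predictive identity shows that $\log M_n$ equals precisely the weighted empirical-to-true KL sum minus the combinatorial log-penalty appearing in $\zeta(n, \delta)$.

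The main technical obstacle is the random, adaptive nature of the counts $N(n, \mathbf{d}, \mathbf{i}, a)$: a fixed-sample-size Laplace bound cannot be applied off-the-shelf, and a naive per-time-step union bound would be far too weak to yield the logarithmic-in-count penalty appearing in $\zeta(n, \delta)$. The mixture construction sidesteps this because the Dirichlet predictive integrates over the unknown transition row, producing a bound that automatically scales with the current visit count at each $(\mathbf{d}, \mathbf{i}, a)$; the remaining care lies in verifying that the product-over-pairs supermartingale property is preserved under the random interleaving of visits to different state-action pairs, which follows from the Markov property of $\mathcal{M}_{\boldsymbol{\theta}, R}$ under $\mathbb{P}_{\boldsymbol{\theta}}$ but requires a careful filtration-adapted computation in which the per-pair Dirichlet posterior updates are orthogonal across distinct $(\mathbf{d}, \mathbf{i}, a)$.
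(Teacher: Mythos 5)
Your proposal is correct and follows essentially the same route as the paper: the same reduction showing that on the error event $\boldsymbol{\theta}\in\textsc{Alt}(\widehat{\boldsymbol{\theta}}(\tau))$ so the infimum in $Z(\tau)$ is dominated by its value at $\boldsymbol{\lambda}=\boldsymbol{\theta}$, followed by a per-state-action Dirichlet mixture martingale (the paper phrases it as a Dirichlet$(1,\ldots,1)$ mixture of exponentially tilted martingales, which evaluates to exactly your Dirichlet-predictive likelihood ratio), a product over state-action pairs justified by the Markov property, and Doob's/Ville's maximal inequality. The only cosmetic difference is that the paper verifies the product is an exact unit-mean martingale rather than a supermartingale, which changes nothing.
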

In particular, we note that \eqref{eq:stop-in-finite-time-and-error-prob-less-than-delta} holds for the proposed arms selection rule in \eqref{eq:arms-selection-rule}. The next result below shows that the stopping time of policy $\textsc{Rstl-Dtrack}$ is finite almost surely, and satisfies an almost-sure asymptotic upper bound that nearly matches with the lower bound in \eqref{eq:lower-bound}.
\begin{proposition}
    \label{prop:almost-sure-upper-bound}
    Fix $\eta \in (0,1)$. For all $\delta \in (0,1)$, the stopping time $\tau$ of policy  $\pi^{\textsc{Rstl-Dtrack}}$ is finite almost surely, and hence $\pi^{\textsc{Rstl-Dtrack}} \in \Pi(\delta)$. Furthermore,
    \begin{equation}
        \P_{\boldsymbol{\theta}}\left( \limsup_{\delta \downarrow 0} \frac{\tau}{\log(1/\delta)} \leq \frac{1}{\eta\, T_{\text{\rm unif}}^\star(\boldsymbol{
        \theta
        }) + (1-\eta)\, T_R^\star(\boldsymbol{\theta})}\right)=1\ ,
        \label{eq:almost-sure-upper-bound}
    \end{equation}
    where $T_{\text{\rm unif}}^\star(\boldsymbol{\theta})$ in \eqref{eq:almost-sure-upper-bound} is defined as
    \begin{equation}
        T_{\text{\rm unif}}^\star(\boldsymbol{\theta}) \coloneqq \inf_{\boldsymbol{\lambda} \in \textsc{Alt}(\boldsymbol{\theta})} \sum_{(\mathbf{d}, \mathbf{i}) \in \mathbb{S}_R} \ \sum_{a=1}^{K} \nu_{\boldsymbol{\theta}}^{\text{\rm unif}}(\mathbf{d}, \mathbf{i}, a) \, D_{\text{\rm KL}}(Q_{\boldsymbol{\theta}, R}(\cdot \mid\mathbf{d}, \mathbf{i}, a) \| Q_{\boldsymbol{\lambda}, R}(\cdot \mid\mathbf{d}, \mathbf{i}, a))\ .
    \end{equation}
\end{proposition}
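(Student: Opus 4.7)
The plan is to lower bound $Z(n)/n$ almost surely by $\eta\, T_{\text{\rm unif}}^\star(\boldsymbol{\theta}) + (1-\eta)\, T_R^\star(\boldsymbol{\theta})$, and to observe that the threshold $\zeta(n,\delta) = \log(1/\delta) + O(\log n)$ grows only logarithmically in $n$. Inverting the stopping condition $Z(n) \ge \zeta(n,\delta)$ then yields both finiteness of $\tau$ and the asymptotic bound~\eqref{eq:almost-sure-upper-bound}; combined with Proposition~\ref{prop:stop-in-finite-time-and-error-prob-less-than-delta}, the former gives $\pi^{\textsc{Rstl-Dtrack}} \in \Pi(\delta)$.

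First I would fix $\xi > 0$ and pass to a suitable almost-sure event. The bound in Lemma~\ref{lem:concentration-of-empirical-arm-means} is summable in $N$ (the exponential term dominates the polynomial prefactor), so Borel--Cantelli gives that $C_N^2(\xi)$ holds for all large $N$ a.s., whence $\widehat{\boldsymbol{\eta}}(n) \to \boldsymbol{\eta}$ a.s.; by the smoothness of $\dot A^{-1}$ from Lemma~\ref{lem:important-properties-of-family}, also $\widehat{\boldsymbol{\theta}}(n) \to \boldsymbol{\theta}$ a.s. Fixing any $\nu \in \mathcal{W}^\star(\boldsymbol{\theta})$ and applying Borel--Cantelli in $n$ to the exponential tail of Lemma~\ref{lem:concentration-of-state-action-visitations}, the empirical visitation proportions $N(n, \mathbf{d}, \mathbf{i}, a)/(n-K+1)$ are eventually within $K_\xi(\boldsymbol{\theta},\nu)\,\xi$ of $\omega_{\boldsymbol{\theta},\nu}^\star(\mathbf{d}, \mathbf{i}, a)$ for every valid $(\mathbf{d}, \mathbf{i}, a)$, almost surely. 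A standard martingale/law-of-large-numbers argument, using $N(n,\mathbf{d},\mathbf{i},a) \to \infty$ a.s.\ from Lemma~\ref{lem:sufficient-exploration-of-state-actions}, then gives $\widehat Q_n(\cdot\mid\mathbf{d}, \mathbf{i}, a) \to Q_{\boldsymbol{\theta},R}(\cdot\mid\mathbf{d}, \mathbf{i}, a)$ a.s.\ for every valid tuple.

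Next, I would combine these convergences with the joint continuity of $\psi$ from Lemma~\ref{lem:attainment-of-supremum}, extended to the empirical variant of $\psi$ obtained by replacing $\boldsymbol{\theta}$ by $\widehat{\boldsymbol{\theta}}(n)$ and $Q_{\boldsymbol{\theta},R}$ by $\widehat Q_n$ inside the KL terms (this extension is enabled by the same Berge-type maximum theorem for non-compact sets that powers Lemma~\ref{lem:attainment-of-supremum}). This yields
\[
\liminf_{n \to \infty} \frac{Z(n)}{n} \;\ge\; \psi(\omega_{\boldsymbol{\theta},\nu}^\star, \boldsymbol{\theta}) - O(\xi) \quad \text{a.s.}
\]
Because the KL-weighted sum inside $\psi$ is linear in its first argument, the inequality $\inf(f+g) \ge \inf f + \inf g$ together with $\nu \in \mathcal{W}^\star(\boldsymbol{\theta})$ yields
\[
\psi(\omega_{\boldsymbol{\theta},\nu}^\star, \boldsymbol{\theta}) \;\ge\; \eta\, T_{\text{\rm unif}}^\star(\boldsymbol{\theta}) + (1-\eta)\, T_R^\star(\boldsymbol{\theta}),
\]
and letting $\xi \downarrow 0$ delivers the desired almost-sure growth rate for $Z(n)$.

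Finally, since $N(n,\mathbf{d},\mathbf{i},a) \le n$, we have $\zeta(n,\delta) = \log(1/\delta) + O(\log n)$. On the almost-sure event above, the stopping condition $Z(n) \ge \zeta(n,\delta)$ is first triggered at a random time satisfying
\[
\tau \;\le\; \frac{\log(1/\delta)}{\eta\, T_{\text{\rm unif}}^\star(\boldsymbol{\theta}) + (1-\eta)\, T_R^\star(\boldsymbol{\theta})} + o\bigl(\log(1/\delta)\bigr),
\]
from which both $\tau < \infty$ a.s.\ and \eqref{eq:almost-sure-upper-bound} follow. The main obstacle is transferring the concentration of $(\widehat{\boldsymbol{\theta}}(n),\, \{N(n,\cdot)/n\},\, \widehat Q_n)$ into a clean lower bound on $Z(n)/n$, because the outer infimum inside $Z(n)$ is over the non-compact set $\textsc{Alt}(\widehat{\boldsymbol{\theta}}(n))$; this is where the joint continuity and upper-hemicontinuity guaranteed by Lemma~\ref{lem:attainment-of-supremum} are essential to rule out spurious asymptotic gaps between the empirical and population-level $\psi$.
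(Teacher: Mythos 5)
Your proposal is correct and follows essentially the same route as the paper's proof: pass to the almost-sure event (via Borel--Cantelli applied to Lemmas~\ref{lem:concentration-of-empirical-arm-means} and~\ref{lem:concentration-of-state-action-visitations}) on which $Z(n)/n$ is eventually bounded below by $(1-\gamma)\bigl(\eta\, T_{\text{\rm unif}}^\star(\boldsymbol{\theta}) + (1-\eta)\, T_R^\star(\boldsymbol{\theta})\bigr)$ while $\zeta(n,\delta)=\log(1/\delta)+o(n)$, then invert the stopping condition and send $\delta$, then $\gamma$ (your $\xi$), to zero. Your explicit justification of the lower bound on $Z(n)/n$ via the continuity machinery of Lemma~\ref{lem:attainment-of-supremum} and the decomposition $\psi(\omega_{\boldsymbol{\theta},\nu}^\star,\boldsymbol{\theta}) \ge \eta\, T_{\text{\rm unif}}^\star(\boldsymbol{\theta}) + (1-\eta)\, T_R^\star(\boldsymbol{\theta})$ matches the reasoning the paper defers to its proof of Proposition~\ref{prop:upper-bound-on-expected-stopping-time}.
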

Having established that {\sc Rstl-Dtrack} satisfies the desired error probability constraint, we present below the main result of this section, an upper bound on the growth rate of the expected stopping time of {\sc Rstl-Dtrack}.
\begin{proposition}
    \label{prop:upper-bound-on-expected-stopping-time}
    Fix $\eta \in (0,1)$. For all $\delta \in (0,1)$, the expected stopping time $\mathbb{E}_{\boldsymbol{\theta}}[\tau]$ of policy $\pi^{\textsc{Rstl-Dtrack}}$ is finite. Furthermore,
    \begin{equation}
        \limsup_{\delta \downarrow 0} \frac{\mathbb{E}_{\boldsymbol{\theta}}[\tau]}{\log(1/\delta)} \leq \frac{1}{\eta\, T_{\text{\rm unif}}^\star(\boldsymbol{
        \theta
        }) + (1-\eta)\, T_R^\star(\boldsymbol{\theta})}\ .
        \label{eq:upper-bound-on-expected-stopping-time-with-eta}
    \end{equation}
    Consequently, letting $\eta \downarrow 0$, we have
    \begin{equation}
        \limsup_{\eta\downarrow 0} \ \limsup_{\delta \downarrow 0} \frac{\mathbb{E}_{\boldsymbol{\theta}}[\tau]}{\log(1/\delta)} \leq \frac{1}{ T_R^\star(\boldsymbol{\theta})}\ .
        \label{eq:upper-bound-on-expected-stopping-time}
    \end{equation}
\end{proposition}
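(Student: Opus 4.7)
The proof of Proposition~\ref{prop:upper-bound-on-expected-stopping-time} proceeds by strengthening the almost-sure bound of Proposition~\ref{prop:almost-sure-upper-bound} into a bound in expectation via tail control on $\tau$. The plan is to combine the concentration results of Lemmas~\ref{lem:concentration-of-empirical-arm-means} and~\ref{lem:concentration-of-state-action-visitations} to show that the tail of $\tau$ beyond $(1+o(1))\log(1/\delta) / [\eta T_{\text{\rm unif}}^\star(\boldsymbol{\theta}) + (1-\eta) T_R^\star(\boldsymbol{\theta})]$ is summable, so that the expectation is controlled at the same rate as the almost-sure bound.

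First, for each $\xi>0$ and each integer $T \geq K$, the approach is to define a ``good event'' $\mathcal{E}_T(\xi)$ on which, for every $n \in [\sqrt{T}, T]$, (a) the empirical parameter estimates satisfy $\|\widehat{\boldsymbol{\theta}}(n) - \boldsymbol{\theta}\|_\infty \leq \xi$ (ensured by the event $C_N^2(\xi)$ of Lemma~\ref{lem:concentration-of-empirical-arm-means}), and (b) there exists $\nu \in \mathcal{W}^\star(\boldsymbol{\theta})$ such that the empirical state-action visitation proportions $N(n, \mathbf{d}, \mathbf{i}, a)/(n-K+1)$ lie within $K_\xi(\boldsymbol{\theta}, \nu)\,\xi$ of $\omega_{\boldsymbol{\theta}, \nu}^\star = \eta\,\nu_{\boldsymbol{\theta}}^{\text{\rm unif}} + (1-\eta)\,\nu$ for all valid $(\mathbf{d}, \mathbf{i}, a)$ (via Lemma~\ref{lem:concentration-of-state-action-visitations}). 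A union bound over the integer values of $n$ in $[\sqrt{T}, T]$ and over the finitely many $(\mathbf{d}, \mathbf{i}, a)$ tuples, together with~\eqref{eq:concentration-of-empirical-arm-means} and~\eqref{eq:concentration-of-state-action-visitations}, produces an estimate of the form $\P_{\boldsymbol{\theta}}(\mathcal{E}_T(\xi)^c) \leq C_1(\boldsymbol{\theta}, \xi) \cdot T^{c_2} \exp(-c_3(\boldsymbol{\theta})\,\xi^2\,T^{1/4})$, which is summable in $T$.

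Next, on $\mathcal{E}_T(\xi)$, the test statistic $Z(n)$ defined in~\eqref{eq:Z-of-n} can be lower-bounded by plugging the empirical visitations and $\widehat{Q}_n$ into the infimum and using the continuity of the KL divergence, the continuity of $\boldsymbol{\theta} \mapsto T_R^\star(\boldsymbol{\theta})$ from Lemma~\ref{lem:attainment-of-supremum}, and the fact that $\nu \in \mathcal{W}^\star(\boldsymbol{\theta})$ attains the supremum in~\eqref{eq:T-R-star-C}. This yields, for $n$ sufficiently close to $T$,
\begin{equation*}
    Z(n) \;\geq\; n \cdot \bigl[\eta\,T_{\text{\rm unif}}^\star(\boldsymbol{\theta}) + (1-\eta)\,T_R^\star(\boldsymbol{\theta}) - g(\xi)\bigr],
\end{equation*}
where $g(\xi) \to 0$ as $\xi \downarrow 0$. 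Because $\zeta(n, \delta) = \log(1/\delta) + O(\log n)$, one can define
\begin{equation*}
    T^\star(\delta, \xi) \;\coloneqq\; \bigl\lceil (1+\xi) \cdot \log(1/\delta) \,/\, \bigl[\eta\,T_{\text{\rm unif}}^\star(\boldsymbol{\theta}) + (1-\eta)\,T_R^\star(\boldsymbol{\theta}) - g(\xi)\bigr] \bigr\rceil,
\end{equation*}
and verify that the inclusion $\{\tau > T\} \subseteq \mathcal{E}_T(\xi)^c$ holds for all $T \geq T^\star(\delta, \xi)$.

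Finally, writing $\mathbb{E}_{\boldsymbol{\theta}}[\tau] = \sum_{T \geq 0} \P_{\boldsymbol{\theta}}(\tau > T)$ and decomposing the sum at $T^\star(\delta, \xi)$ gives
\begin{equation*}
    \mathbb{E}_{\boldsymbol{\theta}}[\tau] \;\leq\; T^\star(\delta, \xi) + \sum_{T \geq T^\star(\delta, \xi)} \P_{\boldsymbol{\theta}}(\mathcal{E}_T(\xi)^c),
\end{equation*}
where the summable tail estimate from Step~1 forces the second term to be a constant depending only on $\xi$ and $\boldsymbol{\theta}$. Dividing by $\log(1/\delta)$ and letting $\delta \downarrow 0$, then $\xi \downarrow 0$ (using the continuity of $T_R^\star$ and of $g$ at $0$), yields~\eqref{eq:upper-bound-on-expected-stopping-time-with-eta}. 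The bound in~\eqref{eq:upper-bound-on-expected-stopping-time} then follows by taking $\eta \downarrow 0$ using the continuity of the right-hand side of~\eqref{eq:upper-bound-on-expected-stopping-time-with-eta} in $\eta$. The main obstacle is the second step: converting the $\xi$-level concentration of empirical visitations (which only ensures proximity to \emph{some} $\nu \in \mathcal{W}^\star(\boldsymbol{\theta})$ rather than a fixed one) into a uniform lower bound on $Z(n)$; the required uniformity is obtained through the upper-hemicontinuity and compactness of $\boldsymbol{\theta} \mapsto \mathcal{W}^\star(\boldsymbol{\theta})$ established in Lemma~\ref{lem:attainment-of-supremum}.
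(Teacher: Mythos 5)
Your overall strategy is the same as the paper's: define good concentration events over a window of times, show that on the good event the test statistic exceeds the threshold once $n$ is of order $\log(1/\delta)/(\eta\, T_{\text{\rm unif}}^\star(\boldsymbol{\theta}) + (1-\eta)\, T_R^\star(\boldsymbol{\theta}))$, deduce that $\{\tau > T\}$ is contained in the bad event for $T$ beyond that threshold, and sum the summable tail to control $\mathbb{E}_{\boldsymbol{\theta}}[\tau]$ before passing to the limits $\delta \downarrow 0$, $\xi \downarrow 0$, $\eta \downarrow 0$. The paper uses the window $[N^5, N^6]$ rather than $[\sqrt{T}, T]$, and it fixes a single $\nu \in \mathcal{W}^\star(\boldsymbol{\theta})$ at the outset (Lemma~\ref{lem:concentration-of-state-action-visitations} is stated for each fixed $\nu$, and any such $\nu$ attains the supremum), which sidesteps the ``proximity to some $\nu$'' difficulty you flag at the end; these differences are cosmetic.

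The one genuine omission is that your good event $\mathcal{E}_T(\xi)$ does not control the empirical transition kernel. The test statistic $Z(n)$ in \eqref{eq:Z-of-n} is built from $D_{\text{\rm KL}}(\widehat{Q}_n(\cdot \mid \mathbf{d}, \mathbf{i}, a) \,\|\, Q_{\boldsymbol{\lambda}, R}(\cdot \mid \mathbf{d}, \mathbf{i}, a))$, not from $D_{\text{\rm KL}}(Q_{\boldsymbol{\theta}, R} \,\|\, Q_{\boldsymbol{\lambda}, R})$, and closeness of $\widehat{\boldsymbol{\theta}}(n)$ to $\boldsymbol{\theta}$ together with closeness of the visitation proportions to $\omega_{\boldsymbol{\theta}, \nu}^\star$ does not by itself imply that $\widehat{Q}_n$ is close to $Q_{\boldsymbol{\theta}, R}$; so your appeal to ``continuity of the KL divergence'' in the second step has nothing to act on. You need a third event of the form $\bigcap_{n} \{\|\widehat{Q}_n - Q_{\boldsymbol{\theta}, R}\|_{\infty} \leq \xi\}$ with its own summable failure probability. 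The paper supplies exactly this as the event $C_N^1(\xi)$ together with Lemma~\ref{lem:concentration-of-parameter-estimates}, proved via a Chernoff--Hoeffding bound on the conditional empirical transition frequencies, applied on the event that every valid state--action pair has been visited $\Omega(n^{1/4})$ times (Lemma~\ref{lem:sufficient-exploration-of-state-actions}). With that event added to $\mathcal{E}_T(\xi)$, your argument goes through as written.
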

{\color{black} Combining Proposition~\ref{prop:upper-bound-on-expected-stopping-time} with Proposition~\ref{prop:lower-bound}, we see that $1/T_R^\star(\boldsymbol{\theta})$ captures the optimal growth rate of the expected stopping time for BAI in restless bandits with problem instance $\boldsymbol{\theta} \in \Theta^K$, i.e.,
\begin{equation}
    \frac{1}{T_R^\star(\boldsymbol{\theta})} \leq \liminf_{\delta \downarrow 0} \, \inf_{\pi \in \Pi_R(\delta)} \frac{\mathbb{E}_{\boldsymbol{\theta}}[\tau_\pi]}{\log(1/\delta)} \leq \limsup_{\eta\downarrow 0}\ \limsup_{\delta \downarrow 0} \frac{\mathbb{E}_{\boldsymbol{\theta}}[\tau_{\pi^{\textsc{Rstl-Dtrack}}}]}{\log(1/\delta)} \leq \frac{1}{ T_R^\star(\boldsymbol{\theta})}\ .
    \label{eq:main-result}
\end{equation}}

{
\color{black}

\section{Computational Feasibility of \textsc{Rstl-Dtrack}}

\label{sec:computational-efficient-variant}

In this section, we present an approach to render {\sc Rstl-Dtrack} computationally feasible. 
In particular, we discuss the computation of the infimum in \eqref{eq:Z-of-n} at each time step, which can be quite resource-intensive. Due to the presence of arm delays, analytically simplifying this infimum any further is a formidable challenge, as discussed in Section~\ref{sec:achievability}. In this section, we propose an 
approach to alleviate the computational burden of evaluating the infimum in \eqref{eq:Z-of-n}, 
at the expense of trading off asymptotic optimality. Treating the function $f$ that is used to define the single-parameter exponential family of TPMs (see \eqref{eq:P-tilde-theta}) as a heuristic ``reward'' function for the finite-state MDP $\mathcal{M}_{\boldsymbol{\theta}, R}$, and emulating the techniques in \cite{al2021navigating}, we propose a {\em proxy} for the inner infimum term in the expression \eqref{eq:T-R-star-C} for $T_R^\star(\boldsymbol{\theta})$, one that may be easily computed in closed form. For all $\boldsymbol{\theta} \in \Theta^K$ and $R \in \mathbb{N}$, we show that the proxy term, say $U_R^\star(\boldsymbol{\theta})$, satisfies $U_R^\star(\boldsymbol{\theta}) \leq T_R^\star(\boldsymbol{\theta})$. Furthermore, following the template of Algorithm~\ref{alg:rstl-dtracking}, and replacing the test statistic $Z(n)$ in \eqref{eq:Z-of-n} with $U_R^\star(\widehat{\boldsymbol{\theta}}(n))$ at any given time $n$, we show that an asymptotic upper bound of $1/U_R^\star(\boldsymbol{\theta})$ may be achieved, thus leading to mismatched lower and upper bounds and, thereby, asymptotic sub-optimality.

\subsection{Some Notations}
We now introduce some notations. Fix $R \in \mathbb{N}$ and $\boldsymbol{\theta} \in \Theta^K$. Let $\mathcal{M}_{\boldsymbol{\theta}, R}$ denote the MDP with state space $\mathbb{S}_R$ and transition probabilities $Q_{\boldsymbol{\theta}, R}$ defined in~\eqref{eq:modified_MDP_transition_probabilities2}. Fix $\gamma \in (0,1)$ and time instant $N > K$. For any policy $\pi$, let
\begin{equation}
    V_{\boldsymbol{\theta}, R}^{\pi} (\mathbf{d}, \mathbf{i}) \coloneqq \mathbb{E}_{\boldsymbol{\theta}}\left[\sum_{n=N}^{\infty} \gamma^n \, f(\bar{X}_n) \, \bigg| \, (\mathbf{d}(N), \mathbf{i}(N)) = (\mathbf{d}, \mathbf{i})\right], \quad (\mathbf{d}, \mathbf{i}) \in \mathbb{S}_R\ ,
    \label{eq:V-theta-R-definition}
\end{equation}
where $f: \mathcal{S} \to \mathbb{R}$ is the same function appearing in \eqref{eq:P-tilde-theta} and is used to define the single-parameter exponential family of TPMs. In writing \eqref{eq:V-theta-R-definition}, we treat $f(\bar{X}_n)$ as the heuristic ``reward'' of the MDP $\mathcal{M}_{\boldsymbol{\theta}, R}$ at time $n$. Then, \eqref{eq:V-theta-R-definition} defines the {\em value function} associated with policy $\pi$. Furthermore, we note that the expectation in \eqref{eq:V-theta-R-definition} is over the randomness induced by policy $\pi$. Because the MDP $\mathcal{M}_{\boldsymbol{\theta}, R}$ is defined over the finite state space $\mathbb{S}_R$, there exists a {\em deterministic} stationary policy, say $\pi_{\boldsymbol{\theta}, R}^\star = [\pi_{\boldsymbol{\theta}, R}^\star(\mathbf{d}, \mathbf{i}): (\mathbf{d}, \mathbf{i}) \in \mathbb{S}_R]^\top$, such that (see, for instance, \cite[Theorem 6.2.10]{puterman2014markov})
\begin{equation}
    \pi_{\boldsymbol{\theta}, R}^\star \in \argmax_{\pi} V_{\boldsymbol{\theta}, R}^{\pi} \ .
    \label{eq:pi-theta-R-star-definition}
\end{equation}
Let $V_{\boldsymbol{\theta}, R}^\star$ be the value function of policy $\pi_{\boldsymbol{\theta}, R}^\star$. Furthermore, denote the optimal {\em $Q$-value function} corresponding to policy $\pi_{\boldsymbol{\theta}, R}^\star$ by
\begin{equation}
    {\rm QV}_{\boldsymbol{\theta}, R}^\star(\mathbf{d}, \mathbf{i}, a) \coloneqq \mathbb{E}_{\boldsymbol{\theta}}\left[\sum_{n=N}^{\infty} \gamma^n \, f(\bar{X}_n) \, \bigg| \, (\mathbf{d}(N), \mathbf{i}(N), A_N) = (\mathbf{d}, \mathbf{i}, a)\right], \quad (\mathbf{d}, \mathbf{i}, a) \in \mathbb{S}_R \times [K]\ .
    \label{eq:QV-theta-R-star-definition}
\end{equation}
We note that the expectation in \eqref{eq:QV-theta-R-star-definition} is over the randomness induced by policy $\pi_{\boldsymbol{\theta}, R}^\star$. Let 
\begin{align}
    \Delta_{\boldsymbol{\theta}, R}(\mathbf{d}, \mathbf{i}, a) \coloneqq V_{\boldsymbol{\theta}, R}^{\star}(\mathbf{d}, \mathbf{i}) - {\rm QV}_{\boldsymbol{\theta}, R}^\star(\mathbf{d}, \mathbf{i}, a)\ ,
\end{align}
and accordingly, define
\begin{equation}
    \Delta_{\boldsymbol{\theta}, R}^{\rm min} \coloneqq \min_{\substack{(\mathbf{d}, \mathbf{i}, a): \\ a \neq \pi_{\boldsymbol{\theta}, R}^\star(\mathbf{d}, \mathbf{i})}} \Delta_{\boldsymbol{\theta}, R}(\mathbf{d}, \mathbf{i}, a)\ .
    \label{eq:Delta-min-definition}
\end{equation}
Subsequently, denote the {\em span} of $V_{\boldsymbol{\theta}, R}^\star$ by
\begin{align}
    {\rm sp}(V_{\boldsymbol{\theta}, R}^\star) = \max_{(\mathbf{d}, \mathbf{i}), (\mathbf{d}', \mathbf{i}') \in \mathbb{S}_R} |V_{\boldsymbol{\theta}, R}^\star(\mathbf{d}, \mathbf{i}) - V_{\boldsymbol{\theta}, R}^\star(\mathbf{d}', \mathbf{i}')|\ ,
\end{align}
Finally, let 
\begin{align}
    {\rm Var}(V_{\boldsymbol{\theta}, R}^\star | \mathbf{d}, \mathbf{i}, a) 
    &\coloneqq {\rm Variance}_{(\mathbf{d}', \mathbf{i}') \sim Q_{\boldsymbol{\theta}, R}(\cdot| \mathbf{d}, \mathbf{i}, a)}(V_{\boldsymbol{\theta}, R}^\star(\mathbf{d}', \mathbf{i}')), \quad (\mathbf{d}\ , \mathbf{i}, a) \in \mathbb{S}_R \times [K]\ ,
    \label{eq:var-v-theta-R-definition} \\
    {\rm Var}_{\rm max}(V_{\boldsymbol{\theta}, R}^\star) 
    &\coloneqq \max_{(\mathbf{d}, \mathbf{i}) \in \mathbb{S}_R} {\rm Var}(V_{\boldsymbol{\theta}, R}^\star | \mathbf{d}, \mathbf{i}, \pi_{\boldsymbol{\theta}, R}^\star(\mathbf{d}, \mathbf{i}))\ .
    \label{eq:var-max-v-theta-R-definition}
\end{align}

\subsection{A Proxy for the Inner Infimum in \texorpdfstring{\eqref{eq:T-R-star-C}}{eq:T-R-star-C}}

Following \cite{al2021navigating}, we propose a proxy for the inner infimum term in \eqref{eq:T-R-star-C} that is computationally easy to solve, and subsequently use it as a substitute for the test statistic $Z(n)$ at any given time $n$.  Given any $\boldsymbol{\theta} \in \Theta^K$ and $\nu \in \Sigma_R(\boldsymbol{\theta})$, this proxy is defined as
\begin{align}
    \chi(\nu,\boldsymbol\theta)
    &\coloneqq \left ( \max\limits_{\substack{(\mathbf{d},\mathbf{i},a): \\ a \neq \pi_{\boldsymbol{\theta}, R}^\star(\mathbf{d},\mathbf{i})}} \frac{H_{\boldsymbol\theta, R}(\mathbf{d},\mathbf{i},a)}{\nu(\mathbf{d},\mathbf{i},a)} + \frac{H^\star_{\boldsymbol\theta, R}}{\min\limits_{(\mathbf{d},\mathbf{i})\in\mathbb{S}_R} \nu(\mathbf{d},\mathbf{i},\pi^\star_{\boldsymbol\theta}(\mathbf{d},\mathbf{i}))}\right)^{-1},
    \label{eq:chi-nu-theta-definition}
\end{align}
where the terms $H_{\boldsymbol\theta, R}(\mathbf{d},\mathbf{i},a)$ and $H^\star_{\boldsymbol\theta, R}$ are defined as
\begin{align}
    &H_{\boldsymbol\theta, R}(\mathbf{d},\mathbf{i},a) \coloneqq \frac{2}{\Delta_{\boldsymbol{\theta}, R}^2(\mathbf{d},\mathbf{i},a)} + \max\left \{ \frac{16 \, {\rm Var}(V_{\boldsymbol{\theta}, R}^\star| \mathbf{d},\mathbf{i}, a)}{\Delta_{\boldsymbol{\theta}, R}^2(\mathbf{d},\mathbf{i},a)}, ~ 6 \, \left(\frac{{\rm sp}(V_{\boldsymbol{\theta}, R}^\star)}{\Delta_{\boldsymbol{\theta}, R}^2(\mathbf{d},\mathbf{i},a)} \right )^{4/3}\right \},
    \label{eq:H-theta-R-dia-definition}\\
    &H^\star_{\boldsymbol\theta, R} \coloneqq \frac{2}{(1-\gamma)^2 \, (\Delta_{\boldsymbol{\theta}, R}^{\rm min})^2} + \min\left\lbrace \frac{27}{(\Delta_{\boldsymbol{\theta}, R}^{\rm min})^2 \, (1-\gamma)^3}, ~  \max\left \{ \frac{16 \, {\rm Var}_{\rm max}(V_{\boldsymbol{\theta}, R}^\star)}{(\Delta_{\boldsymbol{\theta}, R}^{\rm min})^2 \, (1-\gamma)^2}, ~ 6 \, \left(\frac{{\rm sp}(V_{\boldsymbol{\theta}, R}^\star)}{\Delta_{\boldsymbol{\theta}, R}^{\rm min} \, (1-\gamma)} \right )^{4/3}\right \}\right\rbrace. 
    \label{eq:H-theta-R-star-definition}
\end{align}
The next result, which is an adaptation of \cite[Lemma 1]{al2021navigating}, shows that the proxy is a {\em lower bound} on the inner infimum term in\eqref{eq:T-R-star-C}.
\begin{lemma}(Adaptation of \cite[Lemma 1]{al2021navigating})
    \label{lemma:proxy-lower-bound-on-inner-infimum}
    For all $\boldsymbol{\theta} \in \Theta^K$ and $\nu \in \Sigma_R(\boldsymbol{\theta})$, we have $\chi(\nu, \boldsymbol{\theta}) \leq \psi(\nu, \boldsymbol{\theta})$, where $\psi(\nu, \boldsymbol{\theta})$ is as defined in \eqref{eq:psi}.
\end{lemma}
The proof of Lemma~\ref{lemma:proxy-lower-bound-on-inner-infimum} follows along similar lines as the proof of \cite[Lemma~1]{al2021navigating}, and is omitted for brevity.

\subsection{A Computationally Efficient Alternative to \textsc{Rstl-Dtrack}}

Using the proxy defined in the previous section, we propose an alternative policy to \textsc{Rstl-Dtrack}, which is computationally efficient, by implementing two key modifications to \textsc{Rstl-Dtrack}. Before we describe these modifications, we introduce some notations. Given any $\boldsymbol{\theta} \in \Theta^K$, let
\begin{equation}
    \mathcal{X}^\star(\boldsymbol{\theta}) \coloneqq \argsup_{\nu \in \Sigma_R(\boldsymbol{\theta})} \chi(\nu, \boldsymbol{\theta})
    \label{eq:x-star-theta}
\end{equation}
denote the set of all probability distributions over $\mathbb{S}_R \times [K]$ that maximise $\chi$ over the set $\Sigma_R(\boldsymbol{\theta})$. For any $n \geq K$, denote the vector of empirical state-action frequencies at time $n$ by
\begin{align}
    \frac{\mathbf{N}(n)}{n-K+1} = \left[\frac{N(n, \mathbf{d}, \mathbf{i}, a)}{n-K+1}: (\mathbf{d}, \mathbf{i}, a) \in \mathbb{S}_R \times [K]\right]^\top\ .
\end{align}
Notice that the optimisation problem in \eqref{eq:x-star-theta} has a convex objective function (the mapping $\nu \mapsto \chi(\nu, \boldsymbol{\theta})$ is convex) with a convex set of constraints $\Sigma_R(\boldsymbol{\theta})$, and therefore $\mathcal{X}^\star(\boldsymbol{\theta})$ may easily be computed using an algorithm such as projected sub-gradient. 

The modifications to \textsc{Rstl-Dtrack} that lead to a computationally efficient policy are as follows.
\begin{enumerate}
    \item In lines 8,9 of Algorithm~\ref{alg:rstl-dtracking}, we replace $Z(n)$ with $\chi\left(\frac{\mathbf{N}(n)}{n-K+1}, \widehat{\boldsymbol{\theta}}(n)\right)$.

    \item To implement $\pi_n$ in line 13, we pick an arbitrary $\nu_n^\star \in \mathcal{X}^\star(\widehat{\boldsymbol{\theta}}(n-1))$ \big(instead of picking $\nu_n^\star \in \mathcal{W}^\star(\widehat{\boldsymbol{\theta}}(n-1))$ in {\sc Rstl-Dtrack}\big), and use the chosen $\nu_n^\star$ to design $\pi_n$ according to \eqref{eq:pi-n-definition}.  
\end{enumerate}
Except for the above modifications, we retain the other lines of Algorithm~\ref{alg:rstl-dtracking} as is. We call the modified policy $\textsc{Rstl-Dtrack-Eff}$ and use $\pi^{\textsc{Rstl-Dtrack-Eff}}$ as its shorthand representation.

\subsection{Theoretical Guarantees for {\sc Rstl-Dtrack-Eff}}

In this section, we provide the theoretical guarantees for the proposed {\sc Rstl-Dtrack-Eff} policy. To start, we note that the guarantees of Lemma~\ref{lem:sufficient-exploration-of-state-actions} and Lemma~\ref{lem:concentration-of-empirical-arm-means} hold as is for {\sc Rstl-Dtrack-Eff}. The key element in the proofs of the above results is an exploitation of the following property of $\pi_n$: for any state-action tuple $(\mathbf{d}, \mathbf{i}, a) \in \mathbb{S}_R \times [K]$,
$$
    \pi_n(a | \mathbf{d}, \mathbf{i}) \geq \varepsilon_n \, \pi^{\rm unif}(a | \mathbf{d}, \mathbf{i})\ ,
$$
which holds under {\sc Rstl-Dtrack-Eff}. Furthermore, recall that {\sc Rstl-Dtrack-Eff} picks an arbitrary $\nu_n^\star \in \mathcal{X}^\star(\widehat{\boldsymbol{\theta}}(n-1))$ at each time step $n$, and uses this to select arm $A_n \sim \pi_n(\cdot|\mathbf{d}(n), \mathbf{i}(n))$. In contrast, {\sc Rstl-Dtrack} uses $\mathcal{W}^\star(\widehat{\boldsymbol{\theta}}(n))$ to replace$\mathcal{X}^\star(\widehat{\boldsymbol{\theta}}(n))$. Hence, an analogue of Lemma~\ref{lem:concentration-of-state-action-visitations} holds for {\sc Rstl-Dtrack-Eff}, with $\mathcal{W}^\star(\boldsymbol{\theta})$ replaced with $\mathcal{X}^\star(\boldsymbol{\theta})$.

Next, we note that the guarantee in Proposition~\ref{prop:stop-in-finite-time-and-error-prob-less-than-delta} holds as is for {\sc Rstl-Dtrack}. Indeed, from Lemma~\ref{lemma:proxy-lower-bound-on-inner-infimum}, we note that $\chi\left(\frac{\mathbf{N}(n)}{n-K+1}, \widehat{\boldsymbol{\theta}}(n)\right) \leq \psi\left(\frac{\mathbf{N}(n)}{n-K+1}, \widehat{\boldsymbol{\theta}}(n)\right) = Z(n)$ for all $n \geq K$, and therefore
\begin{align}
    \mathbb{P}_{\boldsymbol{\theta}}\left(\exists n \geq K: \chi\left(\frac{\mathbf{N}(n)}{n-K+1}, \widehat{\boldsymbol{\theta}}(n)\right) \geq \zeta(n, \delta)\right) \leq \mathbb{P}_{\boldsymbol{\theta}}\left(\exists n \geq K: Z(n) \geq \zeta(n, \delta)\right)\ .
    \label{eq:err-prob-proof-goes-through-for-rstl-dtrack-eff}
\end{align}
The proof technique in Appendix~\ref{appndx:stop-in-finite-time-and-error-prob-less-than-delta}can be followed to demonstrate that the right-hand of \eqref{eq:err-prob-proof-goes-through-for-rstl-dtrack-eff} is upper bounded by $\delta$, thus establishing the counterpart of Proposition~\ref{prop:stop-in-finite-time-and-error-prob-less-than-delta} for {\sc Rstl-Dtrack-Eff}. It then remains to establish the analogues of Proposition~\ref{prop:almost-sure-upper-bound} and Proposition~\ref{prop:upper-bound-on-expected-stopping-time} for {\sc Rstl-Dtrack-Eff}.

Given any $\boldsymbol{\theta} \in \Theta^K$, define 
\begin{equation}
    U_R^\star(\boldsymbol{\theta}) \coloneqq \sup_{\nu \in \Sigma_R(\boldsymbol{\theta})} \chi(\nu, \boldsymbol{\theta})\ .
    \label{eq:U-R-star-definition}
\end{equation}
The next result presents the asymptotic growth rate of the stopping time of {\sc Rstl-Dtrack-Eff}.
\begin{proposition}
    \label{prop:almost-sure-and-expected-upper-bounds-for-Rstl-Dtrack-Eff}
    Fix $\eta \in (0,1)$.
    \begin{enumerate}
        \item For all $\delta \in (0,1)$, the stopping time $\tau$ of policy  $\pi^{\textsc{Rstl-Dtrack-Eff}}$ is finite almost surely, and hence $\pi^{\textsc{Rstl-Dtrack-Eff}} \in \Pi(\delta)$. Furthermore,
    \begin{equation}
        \P_{\boldsymbol{\theta}}\left( \limsup_{\delta \downarrow 0} \frac{\tau}{\log(1/\delta)} \leq \frac{1}{\eta\, T_{\text{\rm unif}}^\star(\boldsymbol{
        \theta
        }) + (1-\eta)\, U_R^\star(\boldsymbol{\theta})}\right)=1\ .
        \label{eq:almost-sure-upper-bound-Rstl-Dtrack-Eff}
    \end{equation}

    \item For all $\delta \in (0,1)$, the expected stopping time $\mathbb{E}_{\boldsymbol{\theta}}[\tau]$ of policy $\pi^{\textsc{Rstl-Dtrack-Eff}}$ is finite. Furthermore,
    \begin{equation}
        \limsup_{\delta \downarrow 0} \frac{\mathbb{E}_{\boldsymbol{\theta}}[\tau]}{\log(1/\delta)} \leq \frac{1}{\eta\, T_{\text{\rm unif}}^\star(\boldsymbol{
        \theta
        }) + (1-\eta)\, U_R^\star(\boldsymbol{\theta})}\ .
        \label{eq:upper-bound-on-expected-stopping-time-with-eta-Rstl-Dtrack-Eff}
    \end{equation}
    Consequently, in the limit of $\eta \downarrow 0$, we have
    \begin{equation}
        \limsup_{\eta\downarrow 0} \ \limsup_{\delta \downarrow 0} \frac{\mathbb{E}_{\boldsymbol{\theta}}[\tau]}{\log(1/\delta)} \leq \frac{1}{ U_R^\star(\boldsymbol{\theta})}\ .
        \label{eq:upper-bound-on-expected-stopping-time-Rstl-Dtrack-Eff}
    \end{equation}
    \end{enumerate}
\end{proposition}
Recall that the lower bound in \eqref{eq:lower-bound} is for {\em all} policies in $\Pi(\delta)$. Combining Proposition~\ref{prop:almost-sure-and-expected-upper-bounds-for-Rstl-Dtrack-Eff} with Proposition~\ref{prop:lower-bound}, we get
\begin{equation}
    \frac{1}{T_R^\star(\boldsymbol{\theta})} \leq \liminf_{\delta \downarrow 0} \, \inf_{\pi \in \Pi_R(\delta)} \frac{\mathbb{E}_{\boldsymbol{\theta}}[\tau_\pi]}{\log(1/\delta)} \leq \limsup_{\eta\downarrow 0}\ \limsup_{\delta \downarrow 0} \frac{\mathbb{E}_{\boldsymbol{\theta}}[\tau_{\pi^{\textsc{Rstl-Dtrack-Eff}}}]}{\log(1/\delta)} \leq \frac{1}{ U_R^\star(\boldsymbol{\theta})}\ .
    \label{eq:main-result-Rstl-Dtrack-Eff}
\end{equation}
The proof of Proposition~\ref{prop:almost-sure-and-expected-upper-bounds-for-Rstl-Dtrack-Eff} follows along the exact same lines as those of Propositions~\ref{prop:almost-sure-upper-bound}~and~\ref{prop:upper-bound-on-expected-stopping-time}, by replacing $\mathcal{W}^\star(\boldsymbol{\theta})$ with $\mathcal{X}^\star(\boldsymbol{\theta})$ and $T_R^\star(\boldsymbol{\theta})$ with $U_R^\star(\boldsymbol{\theta})$. The full proof is omitted for brevity.

Thus, while the computational feasibility of {\sc Rstl-Dtrack-Eff} is notable, it suffers from asymptotic sub-optimality, as evidenced by \eqref{eq:main-result-Rstl-Dtrack-Eff}. In contrast, {\sc Rstl-Dtrack} is asymptotically optimal (as evidenced by \eqref{eq:main-result}), albeit computationally infeasible. This underscores the fundamental trade-off between computational feasibility and asymptotic optimality, a crucial consideration in pure exploration problems such as BAI.

}
\section{\sc Concluding Remarks and Future Directions}
\label{sec:concluding-remarks-and-discussion}
In this paper, we have studied BAI in restless multi-armed bandits under the fixed-confidence regime, when the TPM of each arm belongs to a single-parameter exponential family of TPMs and the arm parameters are unknown. We have shown that the restless nature of the arms gives rise to the notion of arm delays and last observed states, the combination of which constitutes an MDP with a countable state space and a finite action space. By constraining the delay of each arm to be at most $R$ for some fixed, positive integer $R$, we have reduced the countable state space to a finite set, making the problem amenable to tractable analysis. Under the above $R$-max-delay constraint, we have obtained a problem instance-dependent lower bound on the limiting growth rate of the expected stopping time (time required to find the best arm) subject to an upper bound on the error probability, in the limit as the error probability vanishes. We have showed that the lower bound is characterized by the solution to a max-min optimization problem in which the outer `max' is over the set of all state-action occupancy measures satisfying (a) the $R$-max-delay constraint, and (b) a natural flow-conservation constraint. The inner `min' is over the set of alternative problem instances. We have devised a policy ({\sc Rstl-Dtrack}) for BAI, based on the idea of D-tracking \cite{garivier2016optimal}, that first estimates the unknown parameters of the arms, and then samples an arm at any given time according to a conditional probability distribution on the arms, conditioned on the values of arm delays and last observed states at that time. As for the stopping rule, we have devised a test statistic whose form is akin to that of the inner `min' expression of the lower bound, but with the true MDP state-action-state transition probabilities replaced with its empirical counterpart. In conjunction with a {\em random} threshold that is a function of the desired error probability, we have designed a rule for stopping further selection of arms whenever the test statistic exceeds the threshold. We have shown that our policy stops in finite time almost surely, satisfies the desired error probability, and is asymptotically optimal.

The computational complexity of the \textsc{Rstl-Dtrack} policy is a notable concern, particularly regarding the computation of the infimum in \eqref{eq:Z-of-n} at each time step, which can be quite resource-intensive. Due to the presence of arm delays, simplifying this infimum any further is a formidable challenge, as discussed in Section~\ref{sec:achievability}. One potential approach to alleviating this computational burden is to adopt a technique proposed in \cite{mukherjee2023best}. Their method involves expressing the inner infimum in the lower bound using ``projection measures,'' which is computationally more tractable, especially for single-parameter exponential families. However, unlike in \cite{mukherjee2023best}, the projection measures in our specific context will depend on $\nu$, the variable of optimization in the outer `sup' expression of the lower bound; this in turn may be attributed to the arm delays in our setting. Resolving this challenge remains an open issue and an intriguing avenue for further research. Additionally, recent studies, such as \cite{lee2023thompson}, have demonstrated the promise of Thomson sampling-based policies in reducing computational complexity of BAI. It could be worthwhile to explore the extensions to restless settings, which might offer further computational efficiencies and improve the performance of our policy.

\textbf{Future directions:}
While we keep $R$ fixed throughout the paper, it is interesting to note that $T_R^\star(\boldsymbol{\theta})$, the constant appearing in the lower bound, is monotone increasing in $R$ and therefore admits a limit as $R \to \infty$; see Remark~\ref{rem:lim-T-R-star}. It is natural to expect that $\lim_{R \to \infty} T_R^\star(\boldsymbol{\theta})=T^\star(\boldsymbol{\theta})$, where $T^\star(\boldsymbol{\theta})$ is the constant governing the lower bound without the maximum delay constraint. A cursory examination of the analysis in \cite[Section XI]{karthik2022best} reveals that the above relation indeed holds in the special case when the observations from each arm are i.i.d. However, in the general setting of restless arms, it is unclear whether the above relation holds, and a formal justification of this could be an interesting future direction. While it is natural to expect that $T_R^\star(\boldsymbol{\theta})$ ought to depend on the {\em mixing times} of the arms, our analysis does not bring out this dependence explicitly. Considering a simple $2$-armed restless bandit problem with $\mathcal{S}=\{0,1\}$ in which one arm yields i.i.d. observations according to $\text{\rm Ber}(1/2)$ distribution while the other arm is a slowly mixing Markov process, characterizing $T_R^\star(\boldsymbol{\theta})$ explicitly in terms of the mixing time of the second arm could be an interesting direction to explore. Furthermore, considering a dataset of offline observations from arms with inherent delays, an investigation into how incorporating this offline data affects the overall sample complexity of BAI along the lines of \cite{agrawal2023optimal} would be insightful. Finally, we note that the extensions to {\em hidden Markov} observations from the arms, wherein at each time $n$, the learner observes $\bar{Y}_n=u(\bar{X}_n)$ for some known/unknown function $u$, may be of interest. Here, the technical key challenge is that while successive observations $\bar{X}_{n}$ and $\bar{X}_{n+1}$ from any given arm possess a Markov dependence, the same may not be said about $\bar{Y}_{n}$ and $\bar{Y}_{n+1}$. While \cite{moulos2019optimal} considers hidden Markov observations in rested bandits, the lower bound therein does not capture the ``hidden'' aspect of the observations. It may therefore be worthwhile to first establish a lower bound for BAI with hidden Markov observations in rested bandits and subsequently undertake a formal study of restless hidden Markov bandits.

\section*{Acknowledgements}

The primary author, P.~N.~Karthik, wishes to extend deep gratitude to Prof.\ Shie Mannor (Technion Israel Institute of Technology, Haifa, Israel), Aymen Al Marjani (ENS Lyon, France), and Dr.\ Karthikeyan Shanmugam (Google Research India, Bengaluru, India) for the invaluable and enlightening discussions. The author also wishes to express sincere gratitude to Dr. Vrettos Moulos (Google Research, New York) for generously sharing some portions of unpublished research from his doctoral studies and for engaging in extensive discussions. A portion of this research was conducted during the author's tenure as a Visiting Researcher at the Technion.

The work of Arpan Mukherjee and Ali Tajer was supported in part by the RPI-IBM Artificial Intelligence Research Collaboration and in part by the U.S. National Science Foundation award ECCS-193310.
\bibliographystyle{IEEEtran}
\bibliography{references.bib}

\newpage
\appendix

\section{Notations}
{
\renewcommand{\arraystretch}{1.5}
\begin{table*}[!ht]
    \centering
    \textcolor{black}{
    \begin{tabular}{|r|l|}
        \hline 
        $K$ & Number of arms \\
        \hline
        $[K]$ & Shorthand for $\{1, \ldots, K\}$, the set of arms \\
        \hline
        $\mathcal{S}$ & State space of each arm \\
        \hline
        $P$ & Generator matrix defining the parametric family of TPMs\\
        \hline
        $f: \mathcal{S} \to \mathbb{R}$ & Reward function defining the parametric family of TPMs \\
        \hline
        $\Theta$ & Space of parameters \\
        \hline
        $\theta$ & Generic parameter belonging to $\Theta$ \\
        \hline
        $P_\theta$ & TPM associated with parameter $\theta \in \Theta$ (defined in \eqref{eq:P-theta}) \\
        \hline
        $P_\theta^d$, ~$d \in \mathbb{N}$ & $d$-fold product of $P_\theta$ with itself \\
        \hline 
        $\theta_a$ & Parameter of arm $a \in [K]$ \\
        \hline
        $\eta_{\theta_a}$ & Stationary mean of arm $a$ (defined in \eqref{eq:nu_a}) \\
        \hline
        $\boldsymbol{\theta}$ & Shorthand for $[\theta_a: a \in [K]]^\top \in \Theta^K$, a problem instance \\
        \hline
        $a^\star(\boldsymbol{\theta})$ & Best arm under the instance $\boldsymbol{\theta} \in \Theta^K$ (defined in \eqref{eq:best_arm}) \\
        \hline
        $\textsc{Alt}(\boldsymbol{\theta})$ & Set of instances alternative to $\boldsymbol{\theta}$ (defined in \eqref{eq:alt-theta}) \\
        \hline
        $\tilde{P}_\theta$ & Exponential tilt of generator $P$ (defined in \eqref{eq:P-tilde-theta}) \\
        \hline
        $\rho(\theta)$, ~$\theta \in \Theta$ & Perron--Frobenius eigenvalue of $\tilde{P}_\theta$ \\
        \hline
        $A: \Theta \to \mathbb{R}$ & Shorthand for $A(\theta) = \log \rho(\theta)$, ~$\theta \in \Theta$ \\
        \hline 
        $\Dot{A}: \Theta \to \mathbb{R}$ & Derivative of $A$ \\
        \hline
        $\Dot{A}^{-1}: \mathbb{R} \to \Theta$ & Inverse of $\Dot{A}$ \\
        \hline
        $A_n \in [K]$ & Arm selected at time $n$ \\
        \hline
        $\bar{X}_n \in \mathcal{S}$ & State of arm $A_n$ \\
        \hline
        $\mathbf{d}$ & Generic vector of arm delays $[d_a: a \in [K]]^\top \in \mathbb{N}^K$ \\
        \hline
        $\mathbf{i}$ & Generic vector of last observed states of the arms $[i_a: a \in [K]]^\top \in \mathcal{S}^K$ \\
        \hline
        $\mathbf{d}(n)$ & Vector of arm delays at time $n$ \\
        \hline
        $\mathbf{i}(n)$ & Vector of last observed states of the arms at time $n$ \\
        \hline
        $R$ & Maximum delay parameter \\
        \hline
        $\pi^{\rm unif}$ & Uniform arm selection policy (defined in \eqref{eq:uniform-arm-selection-rule}) \\
        \hline
        $\widehat{\eta}^a(n) \in \mathbb{R}$ & Estimate of mean of arm $a$ at time $n$ (defined in \eqref{eq:parameter-estimates}) \\
        \hline
        $\boldsymbol{\eta}(n) \in \mathbb{R}^K$ & Shorthand for the vector $[\widehat{\eta}^a(n): a \in [K]]^\top$\\
        \hline
        $\widehat{\theta}_a(n) \in \Theta$ & Shorthand for $\Dot{A}^{-1}(\widehat{\eta}^a(n))$ \\
        \hline
        $\widehat{\boldsymbol{\theta}}(n) \in \Theta^K$ & Shorthand for the vector $[\widehat{\theta}_a(n): a \in [K]]^\top$ \\
        \hline
        $\eta \in (0,1)$ & Mixture parameter (appearing in \eqref{eq:lambda-n-definition})\\
        \hline
        $\boldsymbol{\eta} \in \mathbb{R}^K$ & Shorthand for the vector $[\eta_{\theta_a}: a \in [K]]^\top$ \\
        \hline
    \end{tabular}}
    \vspace{0.3cm}
    \caption{\textcolor{black}{List of important notations appearing in the paper.}}
    \label{tab:my_label}
\end{table*}
}


\section{Proof of Lemma~\ref{lem:MDP-is-communicating}}
Fix $\boldsymbol{\theta} \in \Theta^K$ and $(\mathbf{d}, \mathbf{i}), (\mathbf{d}', \mathbf{i}') \in \mathbb{S}$. Let
\begin{equation}
    M \coloneqq \min\{d \geq 1: P^d(j \mid i) > 0 \quad \forall i, j \in \mathcal{S}\},
    \label{eq:M-definition}
\end{equation}
where $P$ is the generator of the single-parameter exponential family of TPMs defined in \eqref{eq:P-theta}. From \cite[Proposition 1.7]{levin2017markov}, we know that $M < +\infty$. Fix $n \geq K$, and suppose that $(\mathbf{d}(n), \mathbf{i}(n)) = (\mathbf{d}, \mathbf{i})$. Let $\pi$ denote the uniform arm selection policy that selects the arms uniformly at random at each time instant. We now demonstrate that there exists an integer $N$ (possibly depending on $(\mathbf{d},\mathbf{i})$ and $(\mathbf{d}',\mathbf{i}')$) such that \eqref{eq:communicating-MDP-definition} holds under $\pi$.
Without loss of generality, let the components of $\mathbf{d}'$ satisfy the ordering $d_1' > d_2' > \cdots > d_K^\prime=1$.
Order the components of $\mathbf{d}$ in decreasing order. 
Consider the following sequence of arm selections and observations: for a total of $M$ time instants, from $n$ to $n+M-1$, select the arms in a round-robin fashion in the decreasing order of their component $\mathbf{d}$ values. At time $n+M$, select arm $1$ and observe state $i_{1}'$ on it. Thereafter, select arms $2, \ldots, K$ in a round-robin fashion in the decreasing order of their component $\mathbf{d}$ values until time $n+M+d_{1}'-d_{2}'-1$. At time $n+M+d_{1}'-d_{2}'$, select arm $2$ and observe state $i_{2}'$ on it. Continue the round-robin sampling on arms $3, \ldots, K$ till time $n+M+d_{1}'-d_{3}'-1$. At time $n+M+d_{1}'-d_{3}'$, select arm $3$ and observe state $i_{3}'$ on it. Continue this process till arm $K$ is selected at time $n+M+d_{1}'-1$ and the state $i_{K}'$ is observed on it.

Clearly, the above sequence of arm selections and observations results in $(\mathbf{d}(n+N), \mathbf{i}(n+N))=(\mathbf{d}',\mathbf{i}')$ for $N = M+d_1'$. Furthermore, using the preceding value for $N$, the probability in \eqref{eq:communicating-MDP-definition} may be lower bounded by
\begin{align}
	&\bigg(\frac{1}{K}\bigg)^{N} \cdot  \left[\prod_{a=1}^{K}P_{\theta_a}^{M+d_{a}+d_1'-d_a'}(i_a'|i_a)\right] > 0,
	\label{eq:irreducibility-1}
\end{align}
where in \eqref{eq:irreducibility-1}, $(1/K)^N$ is from the uniform selection of arms, and the strict positivity of the term within square braces follows by noting from \eqref{eq:P-theta} that for each $\theta \in \Theta$,
$$
P^d(j \mid i)>0 \implies P_{\theta}^d(j \mid i) > 0 \quad \forall d \geq M, \ i, j \in \mathcal{S}.
$$
This completes the proof.


\section{Proof of Lemma~\ref{lem:ergodicity-of-MDP-under-unif-policy}}
From \cite[Lemma 1]{karthik2022best}, we know that  $Q_{\boldsymbol{\theta}, \pi^{\text{\rm unif}}}=\{Q_{\boldsymbol{\theta}, \pi^{\text{\rm unif}}}(\mathbf{d}', \mathbf{i}'|\mathbf{d}, \mathbf{i}): (\mathbf{d}, \mathbf{i}), (\mathbf{d}', \mathbf{i}') \in \mathbb{S}_R\}$ is ergodic (albeit with one modification: the quantity $M$ appearing in the proof of \cite[Lemma 1]{karthik2022best} must be replaced with $M$ as defined in \eqref{eq:M-definition}). Using the preceding result together with the fact that $\pi^{\text{\rm unif}}(a|\mathbf{d}, \mathbf{i})>0$ for all {\em valid} $(\mathbf{d}, \mathbf{i}, a)$ tuples proves the lemma.


\section{Proof of Lemma \ref{lem:flow-conservation-property}}

    First, we note that
    \begin{align}
        N(n, \mathbf{d}', \mathbf{i}') 
        &= \sum_{t=K}^{n} \mathbf{1}_{\{\mathbf{d}(t)=\mathbf{d}', \mathbf{i}(t)=\mathbf{i}'\}} \nonumber\\
        &= \mathbf{1}_{\{\mathbf{d}(K)=\mathbf{d}', \mathbf{i}(K)=\mathbf{i}'\}} + \sum_{(\mathbf{d}, \mathbf{i}) \in \mathbb{S}_R} \ \sum_{a=1}^{K} \ \sum_{t=K+1}^{n} \mathbf{1}_{\{\mathbf{d}(t-1)=\mathbf{d}, \mathbf{i}(t-1)=\mathbf{i}, A_{t-1}=a, \mathbf{d}(t)=\mathbf{d}', \mathbf{i}(t)=\mathbf{i}'\}}
        \nonumber\\
        &= \mathbf{1}_{\{\mathbf{d}(K)=\mathbf{d}', \mathbf{i}(K)=\mathbf{i}'\}} + \sum_{(\mathbf{d}, \mathbf{i}) \in \mathbb{S}_R} \ \sum_{a=1}^{K} \ \sum_{u=1}^{N(n-1, \mathbf{d}, \mathbf{i}, a)} \mathbf{1}_{\{W_u(\mathbf{d}, \mathbf{i}, a) = (\mathbf{d}', \mathbf{i}')\}},
        \label{eq:proof-of-flow-conservation-1}
    \end{align}
    where $W_u(\mathbf{d}, \mathbf{i}, a)$ denotes the next state of the MDP when, for the $u$-th time, the state $(\mathbf{d}, \mathbf{i})$ appeared and arm $a$ was chosen subsequently. We then note that for all $(\mathbf{d}, \mathbf{i}, a) \in \mathbb{S}_R \times [K]$, the events $\{N(n-1, \mathbf{d}, \mathbf{i}, a) \geq u\}$ and $\{W_u(\mathbf{d}, \mathbf{i}, a) = (\mathbf{d}', \mathbf{i}')\}$ are independent of one another. Indeed, let $\tau_u(\mathbf{d},  \mathbf{i}, a)$ denote the instant at which, for the $u$-th time, the state $(\mathbf{d}, \mathbf{i})$ appeared and arm $a$ was chosen subsequently. Then, noting that $\tau_u(\mathbf{d},  \mathbf{i}, a) \leq n-1$ almost surely under the event $\{N(n-1, \mathbf{d}, \mathbf{i}, a) \geq u\}$, we have
    \begin{align}
        &\P_{\boldsymbol{\theta}}(W_u(\mathbf{d}, \mathbf{i}, a)=(\mathbf{d}', \mathbf{i}') \mid N(n-1, \mathbf{d}, \mathbf{i}, a) \geq u) \nonumber\\
        &= \sum_{t=K}^{n-1} \P_{\boldsymbol{\theta}}(W_u(\mathbf{d}, \mathbf{i}, a)=(\mathbf{d}', \mathbf{i}'), \tau_u(\mathbf{d}, \  \mathbf{i}, a)=t \mid N(n-1, \mathbf{d}, \mathbf{i}, a) \geq u) \nonumber\\
        &= \sum_{t=K}^{n-1} \P_{\boldsymbol{\theta}}(W_u(\mathbf{d}, \mathbf{i}, a)=(\mathbf{d}', \mathbf{i}') \mid  \tau_u(\mathbf{d}, \  \mathbf{i}, a)=t) \cdot \P_{\boldsymbol{\theta}}(\tau_u(\mathbf{d}, \  \mathbf{i}, a)=t \mid N(n-1, \mathbf{d}, \mathbf{i}, a) \geq u) \nonumber\\
        &= \sum_{t=K}^{n-1} \P_{\boldsymbol{\theta}}((\mathbf{d}(t+1), \mathbf{i}(t+1))=(\mathbf{d}', \mathbf{i}') \mid  (\mathbf{d}(n), \mathbf{i}(t), A_t)=(\mathbf{d}, \mathbf{i}, a)) \cdot \P_{\boldsymbol{\theta}}(\tau_u(\mathbf{d}, \  \mathbf{i}, a)=t \mid N(n-1, \mathbf{d}, \mathbf{i}, a) \geq u) \nonumber\\ 
        &= \sum_{t=K}^{n-1} \P_{\boldsymbol{\theta}}((\mathbf{d}(t+1), \mathbf{i}(t+1))=(\mathbf{d}', \mathbf{i}') \mid  (\mathbf{d}(n), \mathbf{i}(t), A_t)=(\mathbf{d}, \mathbf{i}, a)) \cdot \P_{\boldsymbol{\theta}}(\tau_u(\mathbf{d}, \  \mathbf{i}, a)=t \mid N(n-1, \mathbf{d}, \mathbf{i}, a) \geq u) \nonumber\\
        &= \sum_{t=K}^{n-1} Q_{\boldsymbol{\theta}, R}(\mathbf{d}', \mathbf{i}'|\mathbf{d}, \mathbf{i}, a) \cdot \P_{\boldsymbol{\theta}}(\tau_u(\mathbf{d}, \  \mathbf{i}, a)=t \mid N(n-1, \mathbf{d}, \mathbf{i}, a) \geq u) \nonumber\\
        &= Q_{\boldsymbol{\theta}, R}(\mathbf{d}', \mathbf{i}'|\mathbf{d}, \mathbf{i}, a),
        \label{eq:proof-of-flow-conservation-2}
    \end{align}
    where the penultimate line follows from \eqref{eq:MDP_transition_probabilities}. On the other hand, it is straightforward to see that $\P_{\boldsymbol{\theta}}(W_u(\mathbf{d}, \mathbf{i}, a)=(\mathbf{d}', \mathbf{i}'))=Q_{\boldsymbol{\theta}, R}(\mathbf{d}', \mathbf{i}'|\mathbf{d}, \mathbf{i}, a)$, which is identical to the right-hand side of \eqref{eq:proof-of-flow-conservation-2}. 
    
    Writing \eqref{eq:proof-of-flow-conservation-1} as
    \begin{align}
        N(n, \mathbf{d}', \mathbf{i}') = \mathbf{1}_{\{\mathbf{d}(K)=\mathbf{d}', \mathbf{i}(K)=\mathbf{i}'\}} + \sum_{(\mathbf{d}, \mathbf{i}) \in \mathbb{S}_R} \ \sum_{a=1}^{K} \ \sum_{u=1}^{\infty} \mathbf{1}_{\{W_u(\mathbf{d}, \mathbf{i}, a) = (\mathbf{d}', \mathbf{i}')\}} \, \mathbf{1}_{\{N(n-1, \mathbf{d}, \mathbf{i}, a) \geq u\}},
        \label{eq:proof-of-flow-conservation-3}
    \end{align}
    taking expectations on both sides of \eqref{eq:proof-of-flow-conservation-3}, and using the monotone convergence theorem, we get
    \begin{align}
        \mathbb{E}_{\boldsymbol{\theta}}[N(n, \mathbf{d}', \mathbf{i}')] 
        &= \P_{\boldsymbol{\theta}}(\mathbf{d}(K)=\mathbf{d}', \mathbf{i}(K)=\mathbf{i}') + \sum_{(\mathbf{d}, \mathbf{i}) \in \mathbb{S}_R} \ \sum_{a=1}^{K} \ \sum_{u=1}^{\infty} Q_{\boldsymbol{\theta}, R}(\mathbf{d}', \mathbf{i}'|\mathbf{d}, \mathbf{i}, a) \, \P_{\boldsymbol{\theta}}(N(n-1, \mathbf{d}, \mathbf{i}, a) \geq u) \nonumber\\
        &= \P_{\boldsymbol{\theta}}(\mathbf{d}(K)=\mathbf{d}', \mathbf{i}(K)=\mathbf{i}') + \sum_{(\mathbf{d}, \mathbf{i}) \in \mathbb{S}_R} \ \sum_{a=1}^{K} \ Q_{\boldsymbol{\theta}, R}(\mathbf{d}', \mathbf{i}'|\mathbf{d}, \mathbf{i}, a) \, \mathbb{E}_{\boldsymbol{\theta}}[N(n-1, \mathbf{d}, \mathbf{i}, a)].
        \label{eq:proof-of-flow-conservation-4}
    \end{align}
    The desired follows from \eqref{eq:proof-of-flow-conservation-4} by simply noting that $\P_{\boldsymbol{\theta}}(\mathbf{d}(K)=\mathbf{d}', \mathbf{i}(K)=\mathbf{i}') \leq 1$ and 
    $$
    \mathbb{E}_{\boldsymbol{\theta}}[N(n-1, \mathbf{d}, \mathbf{i}, a)] \leq \mathbb{E}_{\boldsymbol{\theta}}[N(n, \mathbf{d}, \mathbf{i}, a)] \leq \mathbb{E}_{\boldsymbol{\theta}}[N(n-1, \mathbf{d}, \mathbf{i}, a)] + 1
    $$
    for all $(\mathbf{d}, \mathbf{i}, a) \in \mathbb{S}_R \times [K]$.


\section{Proof of Proposition~\ref{prop:lower-bound}}
We do not provide all the detailed steps here, as most of them follow straightforwardly from \cite{karthik2022best}.
Let $\boldsymbol{\theta} \in \Theta^K$ be fixed. Recall the assumption that each of the arms is selected once at the beginning, from time $n=0$ to $n=K-1$. For all $n \geq K$ and $\boldsymbol{\lambda} \in \Theta^K$, let $Z_{\boldsymbol{\theta}, \boldsymbol{\lambda}}(n)$ denote the log-likelihood ratio (LLR) of all the arm selections and observations from the arms up to time $n$ under the instance $\boldsymbol{\theta}$ versus that under the instance $\boldsymbol{\lambda}$. That is,
\begin{equation}
    Z_{\boldsymbol{\theta}, \boldsymbol{\lambda}}(n) \coloneqq \log \frac{\P_{\boldsymbol{\theta}}(A_{0:n}, \bar{X}_{0:n})}{P_{\boldsymbol{\lambda}}(A_{0:n}, \bar{X}_{0:n})}, \quad n \geq K.
\end{equation}
It can be easily shown that
\begin{align}
    Z_{\boldsymbol{\theta}, \boldsymbol{\lambda}}(n)=\sum_{a=1}^{K} \log \frac{\P_{\boldsymbol{\theta}}(X_{a-1}^{a})}{P_{\boldsymbol{\lambda}}(X_{a-1}^{a})} + \sum_{(\mathbf{d}, \mathbf{i})\in \mathbb{S}_R}\ \sum_{a=1}^{K}\ \sum_{j\in \mathcal{S}}\ N(n, \mathbf{d}, \mathbf{i}, a, j) \ \log \frac{P_{\theta_a}^{d_{a}}(j \mid i_{a})}{P_{\lambda_a}^{d_{a}}(j \mid i_{a})}.
    \label{eq:LLR-final}
\end{align}
In \eqref{eq:LLR-final}, $X_{a-1}^{a}$ denotes the sample observed from arm $a$ when it is selected for the first time at $n=a-1$, and $N(n, \mathbf{d}, \mathbf{i}, a, j)$ denotes the number of times up to time $n$ the state $(\mathbf{d}, \mathbf{i})$ appeared, arm $a$ was selected subsequently, and state $j \in \mathcal{S}$ was observed on arm $a$. Noting that the the action-observation pair $(a,j)$ together with $(\mathbf{d}, \mathbf{i})$ defines the next state $(\mathbf{d}', \mathbf{i}')$ uniquely, we write
\begin{equation}
    \sum_{j\in \mathcal{S}}\ N(n, \mathbf{d}, \mathbf{i}, a, j) \ \log \frac{P_{\theta_a}^{d_{a}}(j \mid i_{a})}{P_{\lambda_a}^{d_{a}}(j \mid i_{a})} = \sum_{(\mathbf{d}', \mathbf{i}') \in \mathbb{S}_R} N(n, \mathbf{d}, \mathbf{i}, a, \mathbf{d}', \mathbf{i}') \ \log \frac{Q_{\boldsymbol{\theta}, R}(\mathbf{d}', \mathbf{i}'|\mathbf{d}, \mathbf{i}, a)}{Q_{\boldsymbol{\lambda}, R}(\mathbf{d}', \mathbf{i}'|\mathbf{d}, \mathbf{i}, a)}, 
\end{equation}
where $N(n, \mathbf{d}, \mathbf{i}, a, \mathbf{d}', \mathbf{i}')$ denotes the number of times up to time $n$ the state $(\mathbf{d}, \mathbf{i})$ appeared, arm $a$ was chosen subsequently, and the state $(\mathbf{d}', \mathbf{i}')$ resulted. We also note that
\begin{equation}
    \mathbb{E}_{\boldsymbol{\theta}}[N(n, \mathbf{d}, \mathbf{i}, a, \mathbf{d}', \mathbf{i}')] = \mathbb{E}_{\boldsymbol{\theta}}[N(n, \mathbf{d}, \mathbf{i}, a)] \cdot Q_{\boldsymbol{\theta}, R}(\mathbf{d}', \mathbf{i}'|\mathbf{d}, \mathbf{i}, a) \quad \forall n \geq K.
    \label{eq:state-action-state-and-state-action-relation}
\end{equation}

Fix $\delta \in (0,1)$ and a policy $\pi \in \Pi_R(\delta)$; here, $\Pi_R(\delta)$ is as defined in \eqref{eq:Pi(epsilon)}. Assume that $\mathbb{E}_{\boldsymbol{\theta}}[\tau_\pi] < +\infty $\footnote{If this condition is not satisfied, then $\mathbb{E}_{\boldsymbol{\theta}}[\tau_\pi]=+\infty$ and the lower bound \eqref{eq:lower-bound} holds trivially.}. Using standard change-of-measure arguments for restless bandits (cf. \cite[Appendix A-A]{karthik2022best}, it can be shown that
\begin{equation}
    \inf_{\boldsymbol{\lambda} \in \textsc{Alt}(\boldsymbol{\theta})} \mathbb{E}_{\boldsymbol{\theta}}[Z_{\boldsymbol{\theta}, \boldsymbol{\lambda}}(n)] \geq d(\delta, 1-\delta) \quad \forall n \geq K,
   \label{eq:change-of-measure}
\end{equation}
where $d(\delta, 1-\delta)$ denotes the KL divergence between the distributions $\text{Bernoulli}(\delta)$ and $\text{Bernoulli}(1-\delta)$. We note here that \eqref{eq:flow-conservation-property}, \eqref{eq:R-max-delay-in-terms-of-state-action-visitations}, \eqref{eq:state-action-state-and-state-action-relation}, and \eqref{eq:change-of-measure} hold when $n$ is replaced by $\tau_\pi$ with $\mathbb{E}_{\boldsymbol{\theta}}[\tau_\pi] < +\infty$. We then have
\begin{align}
    & \inf_{\boldsymbol{\lambda} \in \textsc{Alt}(\boldsymbol{\theta})} \mathbb{E}_{\boldsymbol{\theta}}[Z_{\boldsymbol{\theta}, \boldsymbol{\lambda}}(\tau_\pi)] \nonumber\\
    &= \inf_{\boldsymbol{\lambda} \in \textsc{Alt}(\boldsymbol{\theta})} \left[ \sum_{a=1}^{K} \mathbb{E}_{\boldsymbol{\theta}} \left[ \log \frac{\P_{\boldsymbol{\theta}}(X_{a-1}^{a})}{P_{\boldsymbol{\lambda}}(X_{a-1}^{a})} \right] + \sum_{(\mathbf{d}, \mathbf{i})\in \mathbb{S}_R}\ \sum_{a=1}^{K}\ \sum_{(\mathbf{d}', \mathbf{i}') \in \mathbb{S}_R} \mathbb{E}_{\boldsymbol{\theta}}[N(\tau_\pi, \mathbf{d}, \mathbf{i}, a, \mathbf{d}', \mathbf{i}')] \ \log \frac{Q_{\boldsymbol{\theta}, R}(\mathbf{d}', \mathbf{i}'|\mathbf{d}, \mathbf{i}, a)}{Q_{\boldsymbol{\lambda}, R}(\mathbf{d}', \mathbf{i}'|\mathbf{d}, \mathbf{i}, a)} \right] \nonumber\\
    &\stackrel{(a)}{=} \inf_{\boldsymbol{\lambda} \in \textsc{Alt}(\boldsymbol{\theta})} \left[ \sum_{a=1}^{K} \mathbb{E}_{\boldsymbol{\theta}} \left[ \log \frac{\P_{\boldsymbol{\theta}}(X_{a-1}^{a})}{P_{\boldsymbol{\lambda}}(X_{a-1}^{a})} \right] + \sum_{(\mathbf{d}, \mathbf{i})\in \mathbb{S}_R}\ \sum_{a=1}^{K} \mathbb{E}_{\boldsymbol{\theta}}[N(\tau_\pi, \mathbf{d}, \mathbf{i}, a)] \ D_{\text{\rm KL}}(Q_{\boldsymbol{\theta}, R}(\cdot \mid\mathbf{d}, \mathbf{i}, a) \| Q_{\boldsymbol{\lambda}, R}(\cdot \mid\mathbf{d}, \mathbf{i}, a)) \right] \nonumber\\
    &= \inf_{\boldsymbol{\lambda} \in \textsc{Alt}(\boldsymbol{\theta})} \bigg[ \sum_{a=1}^{K} \mathbb{E}_{\boldsymbol{\theta}} \left[ \log \frac{\P_{\boldsymbol{\theta}}(X_{a-1}^{a})}{P_{\boldsymbol{\lambda}}(X_{a-1}^{a})} \right] \nonumber\\
    &\hspace{4cm} +  \sum_{(\mathbf{d}, \mathbf{i})\in \mathbb{S}_R}\ \sum_{a=1}^{K} \mathbb{E}_{\boldsymbol{\theta}} [N(\tau_\pi, \mathbf{d}, \mathbf{i}, a)] \ D_{\text{\rm KL}}(Q_{\boldsymbol{\theta}, R}(\cdot \mid\mathbf{d}, \mathbf{i}, a) \| Q_{\boldsymbol{\lambda}, R}(\cdot \mid\mathbf{d}, \mathbf{i}, a)) \bigg] \nonumber\\
    &= \inf_{\boldsymbol{\lambda} \in \textsc{Alt}(\boldsymbol{\theta})} \Bigg[ \sum_{a=1}^{K} \mathbb{E}_{\boldsymbol{\theta}} \left[ \log \frac{\P_{\boldsymbol{\theta}}(X_{a-1}^{a})}{P_{\boldsymbol{\lambda}}(X_{a-1}^{a})} \right] \nonumber\\
    &\hspace{1cm} + (\mathbb{E}_{\boldsymbol{\theta}}[\tau_\pi-K+1])\,  \sum_{(\mathbf{d}, \mathbf{i})\in \mathbb{S}_R}\ \sum_{a=1}^{K} \frac{\mathbb{E}_{\boldsymbol{\theta}}[N(\tau_\pi, \mathbf{d}, \mathbf{i}, a)]}{\mathbb{E}_{\boldsymbol{\theta}}[\tau_\pi-K+1]} \ D_{\text{\rm KL}}(Q_{\boldsymbol{\theta}, R}(\cdot \mid\mathbf{d}, \mathbf{i}, a) \| Q_{\boldsymbol{\lambda}, R}(\cdot \mid\mathbf{d}, \mathbf{i}, a)) \Bigg] \nonumber\\
    &\leq \sup_{\nu \in \Sigma_R(\boldsymbol{\theta})} \ \inf_{\boldsymbol{\lambda} \in \textsc{Alt}(\boldsymbol{\theta})} \Bigg[ \sum_{a=1}^{K} \mathbb{E}_{\boldsymbol{\theta}} \left[ \log \frac{\P_{\boldsymbol{\theta}}(X_{a-1}^{a})}{P_{\boldsymbol{\lambda}}(X_{a-1}^{a})} \right] \nonumber\\
    &\hspace{3cm} + (\mathbb{E}_{\boldsymbol{\theta}}[\tau_\pi-K+1])\,  \sum_{(\mathbf{d}, \mathbf{i})\in \mathbb{S}_R}\ \sum_{a=1}^{K} \nu(\mathbf{d}, \mathbf{i}, a) \ D_{\text{\rm KL}}(Q_{\boldsymbol{\theta}, R}(\cdot \mid\mathbf{d}, \mathbf{i}, a) \| Q_{\boldsymbol{\lambda}, R}(\cdot \mid\mathbf{d}, \mathbf{i}, a)) \Bigg],
    \label{eq:proof-of-lower-bound-1}
\end{align}
where $(a)$ above follows from applying \eqref{eq:state-action-state-and-state-action-relation} to $\tau_\pi$, and the last line follows by noting that 
$$
\left\lbrace \frac{\mathbb{E}_{\boldsymbol{\theta}}[N(\tau_\pi, \mathbf{d}, \mathbf{i}, a)]}{\mathbb{E}_{\boldsymbol{\theta}}[\tau_\pi-K+1]} \right\rbrace_{(\mathbf{d}, \mathbf{i}, a)}
$$
is a probability distribution on $\mathbb{S}_R \times [K]$, satisfies the flow conservation property \eqref{eq:flow-conservation-property}, the $R$-max-delay constraint \eqref{eq:R-max-delay-in-terms-of-state-action-visitations}, and therefore an element of $\Sigma_R(\boldsymbol{\theta})$. The above fractional term may hence be upper bounded by the supremum over all elements of $\Sigma_R(\boldsymbol{\theta})$, thereby leading to \eqref{eq:proof-of-lower-bound-1}. We thus have
\begin{align}
    \frac{d(\delta, 1-\delta)}{\log(1/\delta)} 
    &\leq \sup_{\nu \in \Sigma_R(\boldsymbol{\theta})} \ \inf_{\boldsymbol{\lambda} \in \textsc{Alt}(\boldsymbol{\theta})} \Bigg[ \sum_{a=1}^{K} \mathbb{E}_{\boldsymbol{\theta}} \left[ \log \frac{\P_{\boldsymbol{\theta}}(X_{a-1}^{a})}{P_{\boldsymbol{\lambda}}(X_{a-1}^{a})} \right] \nonumber\\
    &\hspace{2cm} + (\mathbb{E}_{\boldsymbol{\theta}}[\tau_\pi-K+1])\,  \sum_{(\mathbf{d}, \mathbf{i})\in \mathbb{S}_R}\ \sum_{a=1}^{K} \nu(\mathbf{d}, \mathbf{i}, a) \ D_{\text{\rm KL}}(Q_{\boldsymbol{\theta}, R}(\cdot \mid\mathbf{d}, \mathbf{i}, a) \| Q_{\boldsymbol{\lambda}, R}(\cdot \mid\mathbf{d}, \mathbf{i}, a)) \Bigg]
    \label{eq:proof-of-lower-bound-2}
\end{align}
for all $\pi \in \Pi_R(\delta)$. Noting that the first term on the right hand side of \eqref{eq:proof-of-lower-bound-2} is not a function $\delta$, and using the fact that $d(\delta, 1-\delta) / \log(1/\delta) \to 1$ as $\delta \downarrow 0$, we arrive at \eqref{eq:lower-bound}.

\section{Proof of Lemma \ref{lem:attainment-of-supremum}}
\label{appndx:proof-of-attainment-of-supremum}

Fix $(\nu, \boldsymbol{\theta}) \in \Sigma_R(\boldsymbol{\theta}) \times \Theta^K$ such that $a^\star(\boldsymbol{\theta})$ is unique. Define
\begin{equation}
    u(\nu, \boldsymbol{\theta}, \boldsymbol{\lambda}) \coloneqq \sum_{(\mathbf{d}, \mathbf{i}) \in \mathbb{S}_R}\  \sum_{a=1}^{K} \ \nu(\mathbf{d}, \mathbf{i}, a) \,  D_{\text{\rm KL}}(Q_{\boldsymbol{\theta}, R}(\cdot \mid\mathbf{d}, \mathbf{i}, a) \| Q_{\boldsymbol{\lambda}, R}(\cdot \mid\mathbf{d}, \mathbf{i}, a)).
    \label{eq:u(nu,theta,lambda)}
\end{equation}
Clearly, $u$ is continuous in its arguments. From \eqref{eq:Alt-theta-2}, it is evident that for each $\boldsymbol{\theta} \in \Theta$, the set of alternatives $\textsc{Alt}(\boldsymbol{\theta})$ is non-compact (in fact, an open subset of $\mathbb{R}^K$). To show that $\psi$ in \eqref{eq:psi} is continuous, we employ \cite[Theorem 1.2]{feinberg2014berges}, a version of Berge's maximum theorem \cite[p. 84]{hu1997handbook} for non-compact image sets. For completeness, we restate this result below.
\begin{definition}($\mathbb{K}$-inf compactness)
    Let $\mathbb{X}, \mathbb{Y}$ be two topological spaces, $\Phi: \mathbb{X} \to 2^{\mathbb{Y}}$ be a mapping from $\mathbb{X}$ to the power set of $\mathbb{Y}$, and let $\mathbb{K}(\mathbb{X})$ denote the collection of all compact subsets of $\mathbb{X}$. Let
    $$
    \textsf{Gr}_{\mathbb{X}}(\Phi) \coloneqq \{(x,y) \in \mathbb{X} \times \mathbb{Y}: y \in \Phi(x)\}
    $$
    denote the graph of $\Phi$.
    A function $u:\mathbb{X} \times \mathbb{Y} \to \mathbb{R}$ is called $\mathbb{K}$-inf compact on $\textsf{Gr}_{\mathbb{X}}(\Phi)$ if for every compact set $K \in \mathbb{K}(\mathbb{X})$, the lower level set $$ \{(x, y)\in K\times \mathbb{Y}: y\in \Phi(x), \ u(x, y) \leq \alpha \} $$ is compact for all $\alpha \in \mathbb{R}$.
\end{definition}
\begin{lemma}\cite[Theorem 1.2]{feinberg2014berges}
    \label{lem:Berges-maximum-theorem-for-noncompact-image-sets}
    Let $\mathbb{X}, \mathbb{Y}$ be two topological spaces, $\Phi:\mathbb{X} \to 2^{\mathbb{Y}}$ be a mapping from $\mathbb{X}$ to the power set of $\mathbb{Y}$, and $u:\mathbb{X} \times \mathbb{Y} \to \mathbb{R}$ be a real-valued mapping on $\mathbb{X} \times \mathbb{Y}$. Let $\mathbb{K}(\mathbb{X})$ denote the collection of all compact subsets of $\mathbb{X}$. Assume that
    \begin{enumerate}
        \item $\mathbb{X}$ is compactly generated, i.e., each set $A \subseteq \mathbb{X}$ is closed in $\mathbb{X}$ if $A \cap K$ is closed in $K$ for every $K \in \mathbb{K}(\mathbb{X})$.
        \item $\Phi$ is lower semi-continuous.
        \item $u$ is $\mathbb{K}$-inf compact and upper semi-continuous on $\textsf{Gr}_{\mathbb{X}}(\Phi)$.
    \end{enumerate}
    Then, the function $v:\mathbb{X} \to \mathbb{R}$ defined via $v(x) = \inf_{y \in \Phi(x)} u(x,y)$ is continuous.
\end{lemma}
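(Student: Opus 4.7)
My plan is to show that the value function $v(x) = \inf_{y \in \Phi(x)} u(x,y)$ is both upper and lower semi-continuous, with the three hypotheses splitting cleanly between the two halves. As a preliminary step, I would first establish that for each $x$ with $\Phi(x) \neq \emptyset$ the infimum defining $v(x)$ is in fact attained. This follows from $\mathbb{K}$-inf compactness applied to the singleton (compact) set $\{x\}$: the nested family of non-empty sets $C_n = \{(x,y) : y \in \Phi(x),\ u(x,y) \leq v(x) + 1/n\}$ is a decreasing chain of non-empty compact sets, so by the finite intersection property it has non-empty intersection, and any point in that intersection gives a minimizer $y^\star$ with $u(x, y^\star) = v(x)$. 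I would work with nets throughout in order to avoid implicit countability assumptions on $\mathbb{X}$ and $\mathbb{Y}$.

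For the upper semi-continuity of $v$, I would use only the lower semi-continuity of $\Phi$ and the upper semi-continuity of $u$. Given a net $x_\alpha \to x$ and any fixed $y \in \Phi(x)$, lower semi-continuity of $\Phi$ produces a (sub)net $y_\alpha \in \Phi(x_\alpha)$ converging to $y$. Upper semi-continuity of $u$ on $\textsf{Gr}_{\mathbb{X}}(\Phi)$ then yields $\limsup_\alpha u(x_\alpha, y_\alpha) \leq u(x, y)$. Since $v(x_\alpha) \leq u(x_\alpha, y_\alpha)$, taking the infimum over $y \in \Phi(x)$ gives $\limsup_\alpha v(x_\alpha) \leq v(x)$, which is upper semi-continuity of $v$.

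For the lower semi-continuity of $v$, I would show that the sublevel set $\{x : v(x) \leq \alpha\}$ is closed in $\mathbb{X}$ for every $\alpha \in \mathbb{R}$. By the compact-generation hypothesis it suffices to verify that $\{v \leq \alpha\} \cap K$ is closed in $K$ for each compact $K \subset \mathbb{X}$. Fix such a $K$ and set $L_\alpha(K) = \{(x,y) \in K \times \mathbb{Y} : y \in \Phi(x),\ u(x,y) \leq \alpha\}$, which is compact by $\mathbb{K}$-inf compactness. The preliminary attainment step gives the key identity that the projection $\pi_{\mathbb{X}}(L_\alpha(K))$ onto the first factor equals $\{x \in K : v(x) \leq \alpha\}$: indeed $v(x) \leq \alpha$ iff there exists $y \in \Phi(x)$ with $u(x,y) \leq \alpha$, using attainment to pass from "$\inf \leq \alpha$" to the existence of an actual witness. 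This projection is the continuous image of a compact set, hence compact, hence closed in $K$ (in the Hausdorff setting tacitly in force when one speaks of "compactly generated"). Combining this with the upper semi-continuity step gives continuity of $v$.

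The main obstacle I anticipate is the subtle role division between the three hypotheses in the lower semi-continuity half. Unlike the classical Berge theorem with compact-valued $\Phi$, here $\Phi(x)$ may be non-compact, and lower semi-continuity of $\Phi$ plays no role for the sublevel sets of $v$; the compactness of $\Phi(x)$ is replaced by $\mathbb{K}$-inf compactness of $u$, which, crucially, only delivers compactness of the relevant level set $L_\alpha(K)$ when $x$ ranges over a compact set. The compact-generation assumption on $\mathbb{X}$ is then indispensable to bootstrap from closedness-on-compacts to global closedness of $\{v \leq \alpha\}$. A secondary delicate point is that upper semi-continuity of $u$ alone does not guarantee the infimum is attained, so the nested-intersection argument establishing attainment must be executed carefully at the outset; without it, the projection identity on which the lower semi-continuity step rests would fail.
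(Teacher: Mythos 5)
The paper does not prove this lemma: it is quoted directly from Feinberg, Kasyanov and Voorneveld \cite[Theorem 1.2]{feinberg2014berges}, so there is no in-paper argument to compare against. Your reconstruction follows the same decomposition as the cited source --- upper semi-continuity of $v$ from lower semi-continuity of $\Phi$ together with upper semi-continuity of $u$ (the classical half of Berge's theorem), and lower semi-continuity of $v$ from $\mathbb{K}$-inf compactness together with compact generation --- and the argument is sound; your observation that lower semi-continuity of $\Phi$ plays no role in the sublevel-set half is exactly the right division of labour. Three minor technical points. First, the steps ``compact, hence closed in $K$'' and the finite-intersection argument for attainment both require Hausdorffness (a $k$-space need not be Hausdorff, and compact subsets of non-Hausdorff spaces need not be closed); you flag this, and it is harmless for the paper's application, where $\mathbb{X}=\Sigma_R(\boldsymbol{\theta})\times\Theta^K$ and $\mathbb{Y}=\Theta^K$ are metric. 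Second, the attainment step can be bypassed entirely by writing $\{x \in K : v(x) \leq \alpha\} = \bigcap_{n \geq 1} \pi_{\mathbb{X}}\bigl(L_{\alpha+1/n}(K)\bigr)$, an intersection of compact (hence closed) sets, which shortens the lower semi-continuity half. Third, in the upper semi-continuity half the inequality $\limsup v(x_{\alpha_\beta}) \leq v(x)$ is obtained only along a subnet, so one still needs the standard contradiction argument (extract a subnet along which $v(x_\alpha)$ converges to the limsup, then apply your argument to that subnet) to transfer the bound to the original net.
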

In the following, we carefully verify that the hypotheses of Lemma~\ref{lem:Berges-maximum-theorem-for-noncompact-image-sets} are satisfied in the context of our setting, with $\mathbb{X} = \Sigma_R(\boldsymbol{\theta}) \times \Theta^K$, $\mathbb{Y} = \Theta^K$, and $\Phi:\mathbb{X} \to 2^{\mathbb{Y}}$ as the mapping $(\nu, \boldsymbol{\theta}, \boldsymbol{\lambda}) \mapsto \Phi(\nu, \boldsymbol{\theta}, \boldsymbol{\lambda})=\textsc{Alt}(\boldsymbol{\theta}) \subset \mathbb{Y}$.
\begin{enumerate}
    \item \textbf{Verification of hypothesis 1}: The first hypotheses to verify is that $\mathbb{X}$ is compactly generated. This hypothesis holds in our work as $\mathbb{X}=\Sigma_R(\boldsymbol{\theta}) \times \Theta^K$ is locally compact and hence compactly generated \cite[Lemma 46.3, p.~283]{munkres2000munkres}.
    
    \item \textbf{Verification of hypothesis 2:} The second hypothesis to verify is that $\Phi$ is lower-hemicontinuous at $(\nu, \boldsymbol{\theta})$. In order to verify this hypothesis, we show a slightly stronger result, namely that $\Phi$ is hemicontinuous (or simply continuous) at $(\nu, \boldsymbol{\theta})$. Let $(\nu_n, \boldsymbol{\theta}_n)$ be a sequence in $\Sigma_R(\boldsymbol{\theta}) \times \Theta^K$ such that $(\nu_n, \boldsymbol{\theta}_n) \to (\nu, \boldsymbol{\theta})$ as $n\to \infty$. Let\footnote{Any $\varepsilon$ such that $\max_{a\neq a^\star(\boldsymbol{\theta})}\eta_a + \varepsilon < \eta_{a^\star(\boldsymbol{\theta})} - \varepsilon$  works.} $$ \varepsilon = \frac{\eta_{a^\star(\boldsymbol{\theta})} - \max_{a \neq a^\star(\boldsymbol{\theta})} \eta_a}{4}.$$ 
    From Lemma~\ref{lem:important-properties-of-family}, we know that $\Dot{A}$ is continuous; therefore, there exists $\xi(\varepsilon)$ such that for all $\theta, \lambda \in \Theta$ satisfying $|\theta - \lambda| < \xi(\varepsilon)$, we have $|\eta_{\theta} - \eta_{\lambda}| < \varepsilon$. Also, there exists $N=N(\varepsilon)$ such that for $$\forall n\geq N, \quad \| (\nu_n, \boldsymbol{\theta}_n) - (\nu, \boldsymbol{\theta}) \| < \xi(\varepsilon), \quad \| \boldsymbol{\theta}_n - \boldsymbol{\theta} \| < \xi(\varepsilon). $$ Because of the choice of $\varepsilon$ and the fact that $a^\star(\boldsymbol{\theta})$ is unique, it follows that for all $n\geq N$, $$\Phi(\nu_n, \boldsymbol{\theta}_n) = \textsc{Alt}(\boldsymbol{\theta}_n) = \textsc{Alt}(\boldsymbol{\theta}) = \Phi(\nu, \boldsymbol{\theta}),$$ thus establishing the continuity of $\Phi$ at $(\nu, \boldsymbol{\theta})$. 
    
    \item \textbf{Verification of hypothesis 3}: The third and last hypothesis to verify is that $u: \mathbb{X} \times \mathbb{Y} \to \mathbb{R}$ as defined in \eqref{eq:u(nu,theta,lambda)} satisfies the following properties:
    \begin{enumerate}
        \item $u$ is upper-semicontinuous on its graph $$\textsf{Gr}_{\mathbb{X}}(\Phi) = \{(\nu, \boldsymbol{\theta}, \boldsymbol{\lambda})\in \mathbb{X}\times \mathbb{Y}: \boldsymbol{\lambda} \in \Phi(\nu, \boldsymbol{\theta})\} = \{(\nu, \boldsymbol{\theta}, \boldsymbol{\lambda})\in \mathbb{X}\times \mathbb{Y}: \boldsymbol{\lambda} \in \textsc{Alt}(\boldsymbol{\theta})\} .$$
        \item $u$ is $\mathbb{K}$-inf compact on its graph $\textsf{Gr}_{\mathbb{X}}(\Phi)$.
    \end{enumerate}
    Because $u$ is continuous, it is also upper-semicontinuous on its graph, and hypothesis 3(a) follows. Hypothesis 3(b) follows from \cite[Lemma 2.1(i)]{feinberg2013berge}, noting that $\Phi$ is upper-hemicontinuous (a fact that follows from the continuity of $\Phi$ established above).
\end{enumerate}
With the hypotheses of Lemma~\ref{lem:Berges-maximum-theorem-for-noncompact-image-sets} now verified, it follows from the preceding result that $\psi$ is continuous at $(\nu, \boldsymbol{\theta})$. The above arguments may be extended to show that $\psi$ is continuous at all $(\nu, \boldsymbol{\theta})$ such that $a^\star(\boldsymbol{\theta})$ is unique.\footnote{Recall that the uniqueness of $a^\star(\boldsymbol{\theta})$ was used in the verification of hypothesis 2.} 

Using the continuity of $\psi$ just established, and noting that (a) $\Sigma_R(\boldsymbol{\theta})$ is compact for all $\boldsymbol{\theta} \in \Theta^K$, and (b) $\boldsymbol{\theta} \mapsto \Sigma_R(\boldsymbol{\theta})$ is hemicontinuous (i.e., upper-hemicontinuous and lower-hemicontinuous) thanks to the continuity of $\boldsymbol{\theta} \mapsto Q_{\boldsymbol{\theta}, R}$, a simple application of the Berge's maximum theorem \cite[p.~84]{hu1997handbook} yields that $\boldsymbol{\theta} \mapsto T^\star(\boldsymbol{\theta})$ is continuous and that $\boldsymbol{\theta} \mapsto \mathcal{W}^\star(\boldsymbol{\theta})$ is upper-hemicontinuous and compact-valued. This completes the proof.

\section{Proof of Lemma~\ref{lem:sufficient-exploration-of-state-actions}}
\subsection{Proof of Part 1}
\label{subsec:proof-of-part-1}
The proof below is an adaptation of the one in \cite[Appendix D.1]{al2021navigating}. To prove \eqref{eq:almost-sure-divergence-of-state-action-visitations}, we consider the event
\begin{equation}
    \mathcal{E} = \bigcup_{(\mathbf{d}', \mathbf{i}', a')} \{\exists M > 0 \text{ such that }\forall n \geq K, \ \ N(n, \mathbf{d}', \mathbf{i}', a') < M\},
    \label{eq:event-of-zero-probability}
\end{equation}
and demonstrate that $\P_{\boldsymbol{\theta}}(\mathcal{E})=0$. For each $z' = (\mathbf{d}', \mathbf{i}', a')$, let 
\begin{equation}
    \mathcal{E}_{z'} \coloneqq \{\exists M > 0 \text{ such that }\forall n \geq K, \ \ N(n, \mathbf{d}', \mathbf{i}', a') < M\}.
    \label{eq:E-z-prime-event}
\end{equation}
We shall demonstrate below that $\P_{\boldsymbol{\theta}}(\mathcal{E}_{z'})=0$ for all $z' \in \mathbb{S}_R \times [K]$. Towards this, let 
\begin{equation}
    m_{\boldsymbol{\theta}} \coloneqq \max_{(\mathbf{d}, \mathbf{i}), (\mathbf{d}', \mathbf{i}') \in \mathbb{S}_R} \min\{r \geq 1: ~\exists \pi\;\; \text{\rm  such that }\; Q_{\boldsymbol{\theta}, \pi}^r(\mathbf{d}', \mathbf{i}'|\mathbf{d}, \mathbf{i})>0\}\ .
    \label{eq:m-theta-definition}
\end{equation}
A straightforward extension of Lemma~\ref{lem:MDP-is-communicating} (without the $R$-max-delay constraint) to include the $R$-max-delay constraint shows that the MDP $\mathcal{M}_{\boldsymbol{\theta}, R}$ is communicating, i.e., for all $(\mathbf{d}, \mathbf{i}), (\mathbf{d}', \mathbf{i}') \in \mathbb{S}_R$, there exists $r \in \mathbb{N}$ and a policy $\pi$ such that $Q_{\boldsymbol{\theta}, \pi}^r(\mathbf{d}', \mathbf{i}'|\mathbf{d}, \mathbf{i}) > 0$. In other words, there exists a path of length $r$ to reach the state $(\mathbf{d}', \mathbf{i}')$ starting from the state $(\mathbf{d}, \mathbf{i})$. Observe that $r \leq m_{\boldsymbol{\theta}}$ by the definition of $m_{\boldsymbol{\theta}}$. Furthermore, for any $a, a' \in [K]$, the state-action pair $(\mathbf{d}', \mathbf{i}', a')$ may be reached starting from $(\mathbf{d}, \mathbf{i}, a)$ in at most $m_{\boldsymbol{\theta}} + 1$ steps in the following manner:
\begin{equation}
    (\mathbf{d}, \mathbf{i}) \xrightarrow[\text{select }a]{} (\mathbf{d}'', \mathbf{i}'') \xrightarrow[\text{shortest path from }(\mathbf{d}'', \mathbf{i}'') \text{ to } (\mathbf{d}', \mathbf{i}')]{} (\mathbf{d}', \mathbf{i}').
    \label{eq:shortest-path}
\end{equation}
In \eqref{eq:shortest-path}, $(\mathbf{d}'', \mathbf{i}'')$ is an intermediate state that results from selecting arm $a$ in state $(\mathbf{d}, \mathbf{i})$. Observe that the length of the shortest path from $(\mathbf{d}'', \mathbf{i}'')$ to $(\mathbf{d}', \mathbf{i}')$ is no more than $m_{\boldsymbol{\theta}}$. We therefore have
\begin{equation}
    \gamma_{\boldsymbol{\theta}}
    \coloneqq \min_{(\mathbf{d}, \mathbf{i}, a), (\mathbf{d}', \mathbf{i}', a')} \ \max_{\substack{1 \leq r \leq m_{\boldsymbol{\theta}}+1 \\ \text{Policy }\pi}} Q_{\boldsymbol{\theta}, \pi}^r(\mathbf{d}', \mathbf{i}', a'|\mathbf{d}, \mathbf{i}, a) > 0.
    \label{eq:gamma-theta-strictly-positive}
\end{equation}
Next, we note the following result which is a simple consequence of the finiteness of $\mathbb{S}_R \times [K]$.
\begin{lemma}
    \label{lem:state-action-infinitely-often}
    There exists $(\mathbf{d}, \mathbf{i}, a) \in \mathbb{S}_R \times [K]$ such that under the uniform policy $\pi^{\text{\rm unif}}$, the state-action pair $(\mathbf{d}, \mathbf{i}, a)$ appears infinitely often.
\end{lemma}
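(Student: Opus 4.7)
The plan is to establish this via a straightforward pigeonhole argument that does not even use the specific structure of $\pi^{\text{unif}}$ or ergodicity. For each $(\mathbf{d}, \mathbf{i}, a) \in \mathbb{S}_R \times [K]$, define the (possibly infinite) total visitation count
\begin{equation}
    N_\infty(\mathbf{d}, \mathbf{i}, a) \coloneqq \sum_{t=K}^{\infty} \mathbf{1}_{\{(\mathbf{d}(t), \mathbf{i}(t), A_t) = (\mathbf{d}, \mathbf{i}, a)\}}\ .
\end{equation}
Since at every time $t \geq K$ the triple $(\mathbf{d}(t), \mathbf{i}(t), A_t)$ lies in $\mathbb{S}_R \times [K]$, exactly one indicator in the sum $\sum_{(\mathbf{d}, \mathbf{i}, a)} \mathbf{1}_{\{(\mathbf{d}(t), \mathbf{i}(t), A_t) = (\mathbf{d}, \mathbf{i}, a)\}}$ equals $1$ for each $t$. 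Summing over $t \geq K$ yields $\sum_{(\mathbf{d}, \mathbf{i}, a) \in \mathbb{S}_R \times [K]} N_\infty(\mathbf{d}, \mathbf{i}, a) = +\infty$ on every sample path.

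Since the set $\mathbb{S}_R \times [K]$ is finite (as $R$ and $K$ are finite and $\mathcal{S}$ is a finite state space), the above sum has only finitely many summands. A finite sum of non-negative terms cannot equal $+\infty$ unless at least one summand is $+\infty$. Hence, on every sample path, there exists at least one $(\mathbf{d}, \mathbf{i}, a) \in \mathbb{S}_R \times [K]$ with $N_\infty(\mathbf{d}, \mathbf{i}, a) = +\infty$, which is precisely the statement that this state-action pair appears infinitely often. Since this holds pathwise, it certainly holds with probability one under $\mathbb{P}_{\boldsymbol{\theta}}$ with the uniform policy. There is no genuine obstacle here; the only subtlety worth flagging is that the existence of such a pair may depend on the sample path, but the statement of the lemma is an existential one that is automatically satisfied by the pigeonhole reasoning.
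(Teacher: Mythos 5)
Your pigeonhole argument is correct and is exactly what the paper intends: the paper states this lemma without proof, describing it as "a simple consequence of the finiteness of $\mathbb{S}_R \times [K]$," which is precisely the counting argument you give. Your remark that the infinitely-visited pair may be sample-path dependent is a fair observation and is consistent with how the paper subsequently uses the (random) set $\mathcal{Z}$ of such pairs.
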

Let $\mathcal{Z}=\{(\mathbf{d}, \mathbf{i}, a): (\mathbf{d}, \mathbf{i}, a) \text{ appears infinitely often under }\pi^{\text{\rm unif}} \}$. Going further, we fix $z = (\mathbf{d}, \mathbf{i}, a) \in \mathcal{Z}$ and $z' = (\mathbf{d}', \mathbf{i}', a') \in \mathbb{S}_R \times [K]$ arbitrarily.
If $z' \in \mathcal{Z}$, then it is immediate that $\P_{\boldsymbol{\theta}}(\mathcal{E}_{z'})=0$. Suppose that $z' \notin \mathcal{Z}$. From the communicating property of the MDP $\mathcal{M}_{\boldsymbol{\theta}, R}$, we know that there exists $(r^*, \pi^*)$ (possibly depending on $z, z'$) such that $Q_{\boldsymbol{\theta}, \pi^*}^{r^*}(z'|z) > 0$. Note that for all $n \geq K$,
\begin{align}
    Q_{\boldsymbol{\theta}, \pi_n}(z'|z) 
    &= Q_{\boldsymbol{\theta}, \pi_n}(\mathbf{d}', \mathbf{i}', a'|\mathbf{d}, \mathbf{i}, a) \nonumber\\
    &= Q_{\boldsymbol{\theta}, R}(\mathbf{d}', \mathbf{i}'|\mathbf{d}, \mathbf{i}, a) \cdot \pi_n(a'|\mathbf{d}', \mathbf{i}') \nonumber\\
    &\stackrel{(a)}{=} Q_{\boldsymbol{\theta}, R}(\mathbf{d}', \mathbf{i}'|\mathbf{d}, \mathbf{i}, a) \cdot (\varepsilon_n\pi^{\text{\rm unif}}(a'|\mathbf{d}', \mathbf{i}')+ (1-\varepsilon_n)\, \pi_{\widehat{\boldsymbol{\theta}}(n-1)}^\eta(a'|\mathbf{d}', \mathbf{i}')) \nonumber\\
    &\geq Q_{\boldsymbol{\theta}, R}(\mathbf{d}', \mathbf{i}'|\mathbf{d}, \mathbf{i}, a) \cdot \frac{\varepsilon_n}{K} \nonumber\\
    &\geq Q_{\boldsymbol{\theta}, R}(\mathbf{d}', \mathbf{i}'|\mathbf{d}, \mathbf{i}, a) \cdot \frac{\varepsilon_n}{K} \cdot \pi^*(a'|\mathbf{d}', \mathbf{i}') \nonumber\\
    &= \frac{\varepsilon_n}{K} \, Q_{\boldsymbol{\theta}, \pi^*}(z'|z),
    \label{eq:proof-of-divergence-of-state-action-visits-2}
\end{align}
where $(a)$ above follows from the definition of $\pi_n$ in \eqref{eq:pi-n-definition}. 

Let $\tau_k(z)$ denote the time instant at which the state-action $z$ appears for the $k$th time, $k \in \mathbb{N}$. Then, conditioned on a realisation of $\{\pi_n\}_{n=1}^{\infty}$ and $\{\tau_k(z)\}_{z \in \mathcal{Z}, k \in \mathbb{N}}$, we have
\begin{align}
    &\P_{\boldsymbol{\theta}}(\mathcal{E}_{z'} \mid \{\pi_n\}_{n=1}^{\infty}, \{\tau_k(z)\}_{z \in \mathcal{Z}, k \in \mathbb{N}}) \nonumber\\
    &\leq \P_{\boldsymbol{\theta}}\bigg( \exists N \text{ s.t. } \forall k \geq N, \ (\mathbf{d}(\tau_k(z)+r^*), \mathbf{i}(\tau_k(z)+r^*), A_{\tau_k(z)+r^*})\neq z' \mid \{\pi_n\}_{n=1}^{\infty}, \{\tau_k(z)\}_{z \in \mathcal{Z}, k \in \mathbb{N}} \bigg) \nonumber\\
    &\leq \sum_{N=1}^{\infty} \P_{\boldsymbol{\theta}} \bigg( \forall k \geq N, \ (\mathbf{d}(\tau_k(z)+r^*), \mathbf{i}(\tau_k(z)+r^*), A_{\tau_k(z)+r^*})\neq z' \mid \{\pi_n\}_{n=1}^{\infty}, \{\tau_k(z)\}_{z \in \mathcal{Z}, k \in \mathbb{N}} \bigg) \nonumber\\
    &\stackrel{(a)}{=} \sum_{N=1}^{\infty} \ \prod_{k=N}^{\infty} \P_{\boldsymbol{\theta}} \bigg( (\mathbf{d}(\tau_k(z)+r^*), \mathbf{i}(\tau_k(z)+r^*), A_{\tau_k(z)+r^*})\neq z' \mid \tau_k(z), \, \{\pi_n\}_{n=\tau_k(z)+1}^{\tau_k(z)+r^*} \bigg) \nonumber\\
    &= \sum_{N=1}^{\infty} \ \prod_{k=N}^{\infty} \left[ 1 - \left(\prod_{n=\tau_k(z)+1}^{\tau_k(z)+r^*} Q_{\boldsymbol{\theta}, \pi_n}\right)(z, z') \right] \nonumber\\
    &\leq \sum_{N=1}^{\infty} \ \prod_{k=N}^{\infty} \left[ 1 - \left(\prod_{n=\tau_k(z)+1}^{\tau_k(z)+r^*} \frac{\varepsilon_n}{K}\, Q_{\boldsymbol{\theta}, \pi^*}\right)(z, z') \right] \nonumber\\ 
    &\leq \sum_{N=1}^{\infty} \ \prod_{k=N}^{\infty} \left[ 1 - \frac{\gamma_{\boldsymbol{\theta}}}{K^{r^*}} \left(\prod_{n=\tau_k(z)+1}^{\tau_k(z)+r^*} \varepsilon_n \right) \right] \nonumber\\  
    &\leq \sum_{N=1}^{\infty} \ \prod_{k=N}^{\infty} \left[ 1 - \frac{\gamma_{\boldsymbol{\theta}}}{K^{m_{\boldsymbol{\theta}}}} \left(\prod_{n=\tau_k(z)+1}^{\tau_k(z)+m_{\boldsymbol{\theta}}+1} \varepsilon_n \right) \right],
    \label{eq:proof-of-divergence-of-state-action-visits-3}
\end{align}
where $(a)$ above follows from the strong Markov property, and the last line follows by noting that $r^* \leq m_{\boldsymbol{\theta}}+1$ and $\varepsilon_n < 1$ for all $n$. Because \eqref{eq:proof-of-divergence-of-state-action-visits-3} holds for all realisations of $\{\pi_n\}_{n=1}^{\infty}$, marginalising across all such realisations, we get
\begin{equation}
    \P_{\boldsymbol{\theta}}(\mathcal{E}_{z'} \mid \{\tau_k(z)\}_{z \in \mathcal{Z}, k \in \mathbb{N}}) \leq \sum_{N=1}^{\infty} \ \prod_{k=N}^{\infty} \left[ 1 - \frac{\gamma_{\boldsymbol{\theta}}}{K^{m_{\boldsymbol{\theta}}}} \left(\prod_{n=\tau_k(z)+1}^{\tau_k(z)+m_{\boldsymbol{\theta}}+1} \varepsilon_n \right) \right] \quad \forall z \in \mathcal{Z}.
    \label{eq:proof-of-divergence-of-state-action-visits-4}
\end{equation}
This implies that
\begin{align}
    \prod_{z \in \mathcal{Z}} \P_{\boldsymbol{\theta}}(\mathcal{E}_{z'} \mid \{\tau_k(z)\}_{z \in \mathcal{Z}, k \in \mathbb{N}}) \leq \sum_{N_z: z \in \mathcal{Z}} \ \prod_{z \in \mathcal{Z}} \ \prod_{k=N_z}^{\infty} \left[ 1 - \frac{\gamma_{\boldsymbol{\theta}}}{K^{m_{\boldsymbol{\theta}}}} \left(\prod_{n=\tau_k(z)+1}^{\tau_k(z)+m_{\boldsymbol{\theta}}+1} \varepsilon_n \right) \right].
    \label{eq:proof-of-divergence-of-state-action-visits-5}
\end{align}
We now note that for any $k \in \mathbb{N}$, there exists a state-action pair $z_k \in \mathbb{S}_R \times [K]$ such that $\tau_k(z_k) \leq k \, S_R \, K$, i.e., the state-action pair $z_k$ is seen $k$ times up to time $n=k\,S_R\,K$ (this can be easily argued via induction on $k$ and using the fact that $|\mathbb{S}_R \times [K]| = S_R\,K$). For this choice of $\{z_k\}_{k=1}^{\infty}$, we have
\begin{align}
    \prod_{z \in \mathcal{Z}} \ \prod_{k=N_z}^{\infty} \left[ 1 - \frac{\gamma_{\boldsymbol{\theta}}}{K^{m_{\boldsymbol{\theta}}}} \left(\prod_{n=\tau_k(z)+1}^{\tau_k(z)+m_{\boldsymbol{\theta}}+1} \varepsilon_n \right) \right] 
    &\leq \prod_{z \in \mathcal{Z}} \ \prod_{k=\max_{z \in \mathcal{Z}} N_z}^{\infty} \left[ 1 - \frac{\gamma_{\boldsymbol{\theta}}}{K^{m_{\boldsymbol{\theta}}}} \left(\prod_{n=\tau_k(z)+1}^{\tau_k(z)+m_{\boldsymbol{\theta}}+1} n^{-\frac{1}{2(1+S_R)}} \right) \right] \nonumber\\
    &= \prod_{k=\max_{z \in \mathcal{Z}} N_z}^{\infty} \ \prod_{z \in \mathcal{Z}}  \left[ 1 - \frac{\gamma_{\boldsymbol{\theta}}}{K^{m_{\boldsymbol{\theta}}}} \left(\prod_{n=\tau_k(z)+1}^{\tau_k(z)+m_{\boldsymbol{\theta}}+1} n^{-\frac{1}{2(1+S_R)}} \right) \right] \nonumber\\
    &\leq \prod_{k=\max_{z \in \mathcal{Z}} N_z}^{\infty}  \left[ 1 - \frac{\gamma_{\boldsymbol{\theta}}}{K^{m_{\boldsymbol{\theta}}}} \left(\prod_{n=\tau_k(z_k)+1}^{\tau_k(z_k)+m_{\boldsymbol{\theta}}+1} n^{-\frac{1}{2(1+S_R)}} \right) \right] \nonumber\\
    &\leq \prod_{k=\max_{z \in \mathcal{Z}} N_z}^{\infty}  \left[ 1 - \frac{\gamma_{\boldsymbol{\theta}}}{K^{m_{\boldsymbol{\theta}}}} \left(\prod_{n=k\,S_R\,K+1}^{k\,S_R\,K+m_{\boldsymbol{\theta}}+1} n^{-\frac{1}{2(1+S_R)}} \right) \right] \nonumber\\
    &=0
    \label{eq:proof-of-divergence-of-state-action-visits-6}
\end{align}
for all choices of $\{N_z: z \in \mathcal{Z}\}$, where the penultimate line follows by noting that because $n \mapsto n^{-\frac{1}{2(1+S_R)}}$ is monotone decreasing, and $\tau_k(z_k) \leq k\,S_R\,K$ for all $k \in \mathbb{N}$ by definition, we have 
$$
\prod_{n=\tau_k(z_k)+1}^{\tau_k(z_k)+m_{\boldsymbol{\theta}}+1} n^{-\frac{1}{2(1+S_R)}} ~\geq~ \prod_{n=k\,S_R\,K+1}^{k\,S_R\,K+m_{\boldsymbol{\theta}}+1} n^{-\frac{1}{2(1+S_R)}} \quad \forall k \geq 1.
$$
Thus, \eqref{eq:proof-of-divergence-of-state-action-visits-6} implies that for all realisations of $\{\tau_k(z)\}_{z \in \mathcal{Z}, k \in \mathbb{N}}$,
\begin{equation}
    \prod_{z \in \mathcal{Z}} \P_{\boldsymbol{\theta}}(\mathcal{E}_{z'} \mid \{\tau_k(z)\}_{z \in \mathcal{Z}, k \in \mathbb{N}})=0.
\end{equation}
Marginalising over all such realisations leads to $\P_{\boldsymbol{\theta}}(\mathcal{E}_{z'})=0$ for all $z'$, which in turn implies that $\P_{\boldsymbol{\theta}}(\mathcal{E})=0$.

\subsection{Proof of Part 2}
\label{subsec:proof-of-part-2}
Our proof is inspired by the proof of Lemma~11 in \cite{al2021navigating}, with necessary modifications for the problem studied here. Let $\lambda_\alpha(\boldsymbol{\theta})$ be as defined in the statement of the lemma; we will soon specify the constant $\sigma_{\boldsymbol{\theta}}$ appearing in the definition of $\lambda_{\alpha}(\boldsymbol{\theta})$. Let $\mathcal{Z}$ and $\{\tau_k(z): k \in \mathbb{N}\}$ be as defined in Section~\ref{subsec:proof-of-part-1} above. In what follows, we prove that when $\varepsilon_n = n^{-\frac{1}{2(1+S_R)}}$, we have
\begin{equation}
    \P_{\boldsymbol{\theta}} \bigg(\forall z \in \mathcal{Z}, ~ \forall k \in \mathbb{N}, ~ \tau_k(z) \leq \lambda_\alpha(\boldsymbol{\theta}) \, k^4 \bigg) \geq 1-\alpha \quad \forall \alpha \in (0,1).
    \label{eq:proof-of-divergence-of-state-action-visits-7}
\end{equation}
The desired result in \eqref{eq:sufficient-exploration-of-states-and-actions} then follows from \eqref{eq:proof-of-divergence-of-state-action-visits-7} by noting that for every $z = (\mathbf{d}, \mathbf{i}, a) \in \mathcal{Z}$ and $k \in \mathbb{N}$, we have $N(\tau_k(z), \mathbf{d}, \mathbf{i}, a) = k$, and therefore
$$
\bigg\lbrace \tau_k(z) \leq \lambda_\alpha(\boldsymbol{\theta}) \, k^4 \bigg\rbrace = \bigg\lbrace N(\tau_k(z), \mathbf{d}, \mathbf{i}, a) \geq \left(\frac{\tau_k(z)}{\lambda_\alpha(\boldsymbol{\theta})}\right)^{1/4} \bigg\rbrace.
$$
We now proceed to prove \eqref{eq:proof-of-divergence-of-state-action-visits-7}. Fix $\alpha \in (0,1)$. Let $g:\mathbb{N} \to \mathbb{N}$ be a strictly increasing function with $g(0)=0$ and $g(k) \geq k$ for all $k \in \mathbb{N}$. Let 
\begin{equation}
    \mathcal{H} = \bigg\lbrace \forall z \in \mathcal{Z}, ~ \forall k \in \mathbb{N}, ~ \tau_k(z) \leq g(k) \bigg\rbrace.
    \label{eq:event-E-in-proof-of-divergence-of-state-action-visits}
\end{equation}
We shall prove below that
\begin{equation}
    \mathbb{P}_{\boldsymbol{\theta}}(\overline{\mathcal{H}}) \leq K\, S_R \, \sum_{k=1}^{\infty} \ \prod_{j=0}^{\left\lfloor \frac{g(k) - g(k-1) - 1}{m_{\boldsymbol{\theta}}+2} \right\rfloor - 1} \left[ 1 - \sigma_{\boldsymbol{\theta}} \prod_{l=1}^{m_{\boldsymbol{\theta}}+2} \varepsilon_{g(k-1) + (m_{\boldsymbol{\theta}}+2)\cdot j + l} \right],
    \label{eq:proof-of-divergence-of-state-action-visits-8}
\end{equation}
where $\overline{\mathcal{H}}$ denotes the complement of $\mathcal{H}$, the constant $m_{\boldsymbol{\theta}}$ is as defined in \eqref{eq:m-theta-definition} and $\sigma_{\boldsymbol{\theta}}$ is a constant that depends only on $\boldsymbol{\theta}$; see Lemma~\ref{lem:norm-of-product-of-Aj} below for a formal definition of $\sigma_{\boldsymbol{\theta}}$. We then tune $\varepsilon_n$ and $g$ suitably so that the right-hand side of \eqref{eq:proof-of-divergence-of-state-action-visits-8} equals $\alpha$. Note that
\begin{align}
    \overline{\mathcal{H}} 
    &= \bigcup_{z \in \mathcal{Z}} \ \bigcup_{k=1}^{\infty} \bigg\lbrace \tau_k(z) > g(k) \bigg\rbrace \nonumber\\
    &= \bigcup_{z \in \mathcal{Z}} \ \bigcup_{k=1}^{\infty} \bigg\lbrace \tau_k(z) > g(k) \text{ and } \forall j \leq k-1, ~ \tau_{j}(z) \leq g(j)\bigg\rbrace,
\end{align}
whence it follows from an application of the union bound that
\begin{align}
    &\mathbb{P}_{\boldsymbol{\theta}}(\overline{\mathcal{H}}) 
    \leq \sum_{z \in \mathcal{Z}} \bigg[\mathbb{P}_{\boldsymbol{\theta}}(\tau_1(z) > g(1)) +  \sum_{k=2}^{\infty} \mathbb{P}_{\boldsymbol{\theta}} \bigg( \tau_k(z) > g(k) \text{ and } \forall j \leq k-1, ~ \tau_{j}(z) \leq g(j) \bigg) \bigg] \nonumber\\
    &\leq \sum_{z \in \mathcal{Z}} \bigg[\mathbb{P}_{\boldsymbol{\theta}}(\tau_1(z) > g(1)) + \sum_{k=2}^{\infty} \mathbb{P}_{\boldsymbol{\theta}} \bigg( \tau_k(z) > g(k), ~ \tau_{k-1}(z) \leq g(k-1) \bigg) \bigg] \nonumber\\
    &\leq \sum_{z \in \mathcal{Z}} \bigg[\mathbb{P}_{\boldsymbol{\theta}}(\tau_1(z) > g(1)) + \sum_{k=2}^{\infty} \mathbb{P}_{\boldsymbol{\theta}} \bigg( \tau_k(z) - \tau_{k-1}(z) > g(k) - g(k-1), ~ \tau_{k-1}(z) \leq g(k-1) \bigg) \bigg] \nonumber\\
    &\leq \sum_{z \in \mathcal{Z}} \bigg[\mathbb{P}_{\boldsymbol{\theta}}(\tau_1(z) > g(1)) + \sum_{k=2}^{\infty} \ \sum_{n=1}^{g(k-1)} \mathbb{P}_{\boldsymbol{\theta}} \bigg( \tau_k(z) - \tau_{k-1}(z) > g(k) - g(k-1) \mid \tau_{k-1}(z) = n \bigg) \, \mathbb{P}_{\boldsymbol{\theta}}(\tau_{k-1}(z) = n) \bigg] \nonumber\\
    &= \sum_{z \in \mathcal{Z}} \bigg[a_1(z) + \sum_{k=2}^{\infty} \ \sum_{n=1}^{g(k-1)} a_{k,n}(z) \, \mathbb{P}_{\boldsymbol{\theta}}(\tau_{k-1}(z) = n) \bigg],
    \label{eq:proof-of-divergence-of-state-action-visits-9}
\end{align}
where 
\begin{align}
    a_1(z) 
    &\coloneqq \mathbb{P}_{\boldsymbol{\theta}}(\tau_1(z) > g(1)), \label{eq:a-1-z-definition} \\
    a_{k,n}(z) &\coloneqq \mathbb{P}_{\boldsymbol{\theta}} \bigg( \tau_k(z) - \tau_{k-1}(z) > g(k) - g(k-1) \mid \tau_{k-1}(z) = n \bigg). \label{eq:a-k-n-of-z-definition}
\end{align}
We now upper bound $\alpha_{k,n}(z)$ for each $z \in \mathcal{Z}$ and $k \in \mathbb{N}$.

\noindent \textbf{Upper bounding $\alpha_{k,n}(z)$:} Fix $z \in \mathcal{Z}$ and $k \in \mathbb{N}$. In order to upper bound $\alpha_{k,n}(z)$, we write the transition kernel $Q_{\boldsymbol{\theta}, \pi_n}$ for any $n$ as follows:
\begin{equation}
    Q_{\boldsymbol{\theta}, \pi_n} = 
    \begin{pmatrix}
        A_n(z) & [Q_{\boldsymbol{\theta}, \pi_n}(z|z'): z' \neq z]^\top \\
        [Q_{\boldsymbol{\theta}, \pi_n}(z'|z): z' \neq z]^\top & Q_{\boldsymbol{\theta}, \pi_n}(z|z)
    \end{pmatrix}.
    \label{eq:rewriting-transition-kernel}
\end{equation}
In writing \eqref{eq:rewriting-transition-kernel}, we assume without loss of generality that the last row and last column of $Q_{\boldsymbol{\theta}, \pi_n}$ correspond to state $z$. Given $z' \neq z$, let $\mathbf{q}_n(z', \neg z) \coloneqq [Q_{\boldsymbol{\theta}, \pi_n}(z''|z'): z'' \neq z]^\top$. Then, for all $N \in \mathbb{N}$, it follows that
\begin{equation}
    \mathbb{P}_{\boldsymbol{\theta}}(\tau_k(z) - \tau_{k-1}(z) > N \mid \tau_{k-1}(z)=n) = \mathbf{q}_{n}(z, \neg z) \cdot \prod_{l=n+1}^{n+N-1} A_l(z) \cdot \mathbf{1}.
    \label{eq:proof-of-divergence-of-state-action-visits-10}
\end{equation}
In \eqref{eq:proof-of-divergence-of-state-action-visits-10}, $\mathbf{1}$ denotes the all-ones vector. 

We now note the following result of interest. We continue with the above proof after the proof of the below result.
\begin{lemma}
    \label{lem:norm-of-product-of-Aj}
    Let $Q_{\boldsymbol{\theta}, \pi_n}$ be as expressed in \eqref{eq:rewriting-transition-kernel}. Let
    \begin{align}
        \sigma_{\boldsymbol{\theta}, 1} &\coloneqq \min\{Q_{\boldsymbol{\theta}, \pi^{\text{\rm unif}}}(z'|z): z, z' \in \mathcal{Z}, ~ Q_{\boldsymbol{\theta}, \pi^{\text{\rm unif}}}(z'|z)>0\}, \label{eq:alpha-theta-1-definition} \\
        \sigma_{\boldsymbol{\theta}, 2} &\coloneqq \min\{Q_{\boldsymbol{\theta}, \pi^{\text{\rm unif}}}^{r}(z'|z): z, z' \in \mathcal{Z}, ~ r \in \{1, \ldots, m_{\boldsymbol{\theta}}+1\}, ~ Q_{\boldsymbol{\theta}, \pi^{\text{\rm unif}}}^{r}(z'|z)>0\}. \label{eq:alpha-theta-2-definition}
    \end{align}
    where $m_{\boldsymbol{\theta}}$ is as defined in \eqref{eq:m-theta-definition}. Further, let $\sigma_{\boldsymbol{\theta}} \coloneqq \sigma_{\boldsymbol{\theta}, 1} \, \sigma_{\boldsymbol{\theta}, 2}$. Then,
    \begin{equation}
        \bigg \| \prod_{l=n+1}^{n+m_{\boldsymbol{\theta}}+2} A_l(z) \bigg \|_{\infty} \leq 1-\sigma_{\boldsymbol{\theta}} \, \prod_{l=n+1}^{n+m_{\boldsymbol{\theta}}+2}\varepsilon_l \ , \qquad \forall n \in \mathbb{N}.
        \label{eq:norm-of-product-of-Aj}
    \end{equation}
\end{lemma}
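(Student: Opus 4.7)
The plan is to exploit the probabilistic meaning of the product of submatrices together with the $\varepsilon_n$-mixture structure \eqref{eq:property-of-epsilon-n-mixture} to reduce the proof to lower bounding the probability that the state-action chain hits $z$ within $m_{\boldsymbol{\theta}}+2$ steps. Each $A_l(z)$ is the principal submatrix of the stochastic kernel $Q_{\boldsymbol{\theta}, \pi_{n+l}}$ obtained by deleting the row and column indexed by $z$, so its entries are nonnegative and $\big\|\prod_{l=n+1}^{n+m_{\boldsymbol{\theta}}+2} A_l(z)\big\|_{\infty}$ coincides with the maximum row sum. The $(z_0, z_{m_{\boldsymbol{\theta}}+2})$-entry of the product equals the probability, under the policy sequence $\pi_{n+1}, \ldots, \pi_{n+m_{\boldsymbol{\theta}}+2}$ starting from $z_0$, of traversing a trajectory $z_0 \to z_1 \to \cdots \to z_{m_{\boldsymbol{\theta}}+2}$ with $z_l \neq z$ for every $l \in \{1, \ldots, m_{\boldsymbol{\theta}}+2\}$. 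Summing over the terminal state yields the identity
\begin{equation*}
\text{row sum from } z_0 \;=\; 1 \;-\; \mathbb{P}_{\boldsymbol{\theta}}\bigl(\exists\, l \in \{1, \ldots, m_{\boldsymbol{\theta}}+2\}: z_l = z \,\big|\, z_0\bigr),
\end{equation*}
so it suffices to show that the hitting probability above is at least $\sigma_{\boldsymbol{\theta}}\, \prod_{l=n+1}^{n+m_{\boldsymbol{\theta}}+2} \varepsilon_l$ uniformly in $z_0 \neq z$.

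Next, I would construct an explicit two-stage trajectory from $z_0$ to $z$ whose uniform-policy probability can be lower bounded. Replicating the decomposition in \eqref{eq:shortest-path}, the communicating property of $\mathcal{M}_{\boldsymbol{\theta}, R}$ guarantees an intermediate state-action pair $\tilde{z}$ with $Q_{\boldsymbol{\theta}, \pi^{\text{\rm unif}}}(\tilde{z}\mid z_0) > 0$, and an integer $r \leq m_{\boldsymbol{\theta}}+1$ for which $Q_{\boldsymbol{\theta}, \pi^{\text{\rm unif}}}^{r}(z \mid \tilde{z}) > 0$; the former is positive because $\pi^{\text{\rm unif}}$ assigns strictly positive mass to every admissible action, and the latter because the shortest path from $\tilde{z}$ to $z$ (which has length at most $m_{\boldsymbol{\theta}}+1$ in the state-action chain by the argument leading to \eqref{eq:gamma-theta-strictly-positive}) can be realized under $\pi^{\text{\rm unif}}$. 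By the worst-case definitions \eqref{eq:alpha-theta-1-definition}--\eqref{eq:alpha-theta-2-definition}, these two factors are lower bounded by $\sigma_{\boldsymbol{\theta}, 1}$ and $\sigma_{\boldsymbol{\theta}, 2}$ respectively. Since hitting $z$ at the terminal step of a trajectory of length $1+r \leq m_{\boldsymbol{\theta}}+2$ implies hitting $z$ within the window, we obtain, under a stationary uniform policy,
\begin{equation*}
\mathbb{P}_{\boldsymbol{\theta}}\bigl(\exists\, l \leq m_{\boldsymbol{\theta}}+2 : z_l = z \,\big|\, z_0\bigr) \;\geq\; Q_{\boldsymbol{\theta}, \pi^{\text{\rm unif}}}(\tilde{z} \mid z_0) \cdot Q_{\boldsymbol{\theta}, \pi^{\text{\rm unif}}}^{r}(z \mid \tilde{z}) \;\geq\; \sigma_{\boldsymbol{\theta}, 1}\, \sigma_{\boldsymbol{\theta}, 2} \;=\; \sigma_{\boldsymbol{\theta}}.
\end{equation*}

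Finally, I would transfer this uniform-policy bound to the time-varying policy sequence $\{\pi_l\}$ via the entrywise lower bound $Q_{\boldsymbol{\theta}, \pi_l}(z' \mid z) \geq \varepsilon_l\, Q_{\boldsymbol{\theta}, \pi^{\text{\rm unif}}}(z' \mid z)$, which is immediate from the mixture decomposition \eqref{eq:property-of-epsilon-n-mixture}. Applying this inequality to each of the $1+r \leq m_{\boldsymbol{\theta}}+2$ one-step transitions along the two-stage path multiplies the uniform-policy hitting probability by $\prod_{l=n+1}^{n+1+r} \varepsilon_l$; because each $\varepsilon_l \in (0,1]$ and the partial product dominates the full product $\prod_{l=n+1}^{n+m_{\boldsymbol{\theta}}+2} \varepsilon_l$, we may pad out to the full window while only weakening the bound. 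Combining these pieces and substituting into the row-sum identity of the first paragraph establishes \eqref{eq:norm-of-product-of-Aj}. The main delicacy is uniformity in $z_0$: both the intermediate pair $\tilde{z}$ and the shortest-path length $r$ depend on $z_0$, and the constants $\sigma_{\boldsymbol{\theta}, 1}, \sigma_{\boldsymbol{\theta}, 2}$ are precisely the worst-case minima (over all admissible one-step and $r$-step uniform transitions) designed to absorb this dependence into a single uniform bound.
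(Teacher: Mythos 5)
Your proposal is correct, and it proves the same bound via the same essential ingredients (a two-stage path whose probability is controlled by the worst-case constants $\sigma_{\boldsymbol{\theta},1}$ and $\sigma_{\boldsymbol{\theta},2}$, transferred to the time-varying kernels through the entrywise bound $Q_{\boldsymbol{\theta},\pi_l}\geq \varepsilon_l\,Q_{\boldsymbol{\theta},\pi^{\text{\rm unif}}}$), but the execution differs from the paper's in two respects. First, the paper works purely with matrix algebra on the row sums $r_{z'}(n_1,n_2)$ of the taboo products $\prod_l A_l(z)$, peeling off factors one at a time; you instead pass immediately to the probabilistic identity ``row sum $=1-\Pr(\text{hit }z\text{ within the window})$'' and lower-bound the hitting probability by the probability of being at $z$ at a single well-chosen time $1+r$. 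This is arguably cleaner: it only requires Chapman--Kolmogorov on the \emph{full} kernel, whereas the paper's step \eqref{eq:proof-of-divergence-of-state-action-visits-16} lower-bounds a taboo (i.e., $z$-avoiding) transition probability $\bigl(\prod_{l}A_l(z)\bigr)(z^*|z')$ by the unrestricted quantity $\prod_l\varepsilon_l\,Q_{\boldsymbol{\theta},\pi^{\text{\rm unif}}}^r(z^*|z')$, a comparison that needs the shortest path to avoid $z$ and is left implicit there; your framing sidesteps this entirely because reaching $z$ early only helps the hitting event. Second, you reverse the order of the two stages: the paper fixes a single ``gateway'' $z^*$ with a one-step transition \emph{into} $z$ (so $\sigma_{\boldsymbol{\theta},1}$ pays for the last step) and reaches $z^*$ from an arbitrary start in at most $m_{\boldsymbol{\theta}}+1$ steps, whereas you spend $\sigma_{\boldsymbol{\theta},1}$ on the first step out of $z_0$ and $\sigma_{\boldsymbol{\theta},2}$ on the remaining $r\leq m_{\boldsymbol{\theta}}+1$ steps, mirroring the decomposition in \eqref{eq:shortest-path}. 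Both orderings yield the same constant $\sigma_{\boldsymbol{\theta}}=\sigma_{\boldsymbol{\theta},1}\sigma_{\boldsymbol{\theta},2}$ and the same window length $m_{\boldsymbol{\theta}}+2$, and your padding argument ($\prod_{l=n+1}^{n+1+r}\varepsilon_l\geq\prod_{l=n+1}^{n+m_{\boldsymbol{\theta}}+2}\varepsilon_l$ since each $\varepsilon_l\in(0,1]$) correctly closes the gap between the realized path length and the full window.
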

\begin{proof}[Proof of Lemma~\ref{lem:norm-of-product-of-Aj}]
    Let $\mathcal{Z}=\mathbb{S}_R \times [K]$, with cardinality $|\mathcal{Z}|=K \, S_R$, be enumerated as $\mathcal{Z} = \{1, \ldots, K \, S_R-1, z\}$, where $z \in \mathcal{Z}$ corresponds to the last row and last column in the expression for $Q_{\boldsymbol{\theta}, \pi_n}$ in \eqref{eq:rewriting-transition-kernel}. In particular, we note that $A_n(z)$ in \eqref{eq:rewriting-transition-kernel} is a matrix whose rows and columns are indexed by the elements from $\{1, \ldots, K \, S_R - 1\}$. Given $n_1, n_2 \in \mathbb{N}$, let
    \begin{equation}
        r_{z'}(n_1, n_2) \coloneqq \sum_{z''=1}^{K \, S_R - 1} \left( \prod_{l=n_1+1}^{n_2}A_l(z) \right)(z''|z')
        \label{eq:proof-of-divergence-of-state-action-visits-11}
    \end{equation}
    denote the sum of elements of the matrix $\prod_{l=n_1+1}^{n_2}A_l(z)$ corresponding to the row indexed by $z'$. We will now prove that 
    \begin{equation}
        r_{z'}(n, n+m_{\boldsymbol{\theta}}+2) \leq 1-\sigma_{\boldsymbol{\theta}}\prod_{l=n+1}^{n+m_{\boldsymbol{\theta}}+2} \varepsilon_l \quad \forall z' \in \{1, \ldots, K \, S_R - 1\}, ~~ \forall n \in \mathbb{N}.
        \label{eq:proof-of-divergence-of-state-action-visits-12}
    \end{equation}
    The desired result in \eqref{eq:norm-of-product-of-Aj} then follows from \eqref{eq:proof-of-divergence-of-state-action-visits-12} by noting that 
    $$
    \bigg \| \prod_{l=n+1}^{n+m_{\boldsymbol{\theta}}+2} A_l(z) \bigg \|_{\infty} = \max \bigg\lbrace r_{z'}(n, n+m_{\boldsymbol{\theta}}+2): z' \in \{1, \ldots, K \, S_R - 1\} \bigg\rbrace.
    $$

    Fix an arbitrary $z^* \in \mathcal{Z}$ such that $Q_{\boldsymbol{\theta}, \pi^{\text{\rm unif}}}(z|z^*) \geq \sigma_{\boldsymbol{\theta}, 1}$; such a $z^*$ is guaranteed to exist because the MDP $\mathcal{M}_{\boldsymbol{\theta}, R}$ is communicating (Lemma~\ref{lem:MDP-R-is-communicating}). Then, for all $n \in \mathbb{N}$,
    \begin{align}
        r_{z^*}(n, n+1)
        &= \sum_{z''=1}^{K \, S_R - 1} \bigg(A_{n+1}(z)\bigg)(z''|z^*) \nonumber\\
        &= 1 - Q_{\boldsymbol{\theta}, \pi_{n+1}}(z|z^*) \nonumber\\
        &\leq 1 - \varepsilon_{n+1} \, Q_{\boldsymbol{\theta}, \pi^{\text{\rm unif}}}(z|z^*) \nonumber\\
        &\leq 1 - \sigma_{\boldsymbol{\theta}, 1} \, \varepsilon_{n+1}.
        \label{eq:proof-of-divergence-of-state-action-visits-13}
    \end{align}
    Similarly, for any $n_1, n_2 \in \mathbb{N}$, we have
    \begin{align}
        r_{z^*}(n_1, n_1+n_2)
        &= \sum_{z''=1}^{K \, S_R - 1} \left(\prod_{l=n_1+1}^{n_1+n_2} A_l(z) \right)(z''|z^*) \nonumber\\
        &= \sum_{z''=1}^{K \, S_R - 1} \ \sum_{z^{'''}=1}^{K \, S_R - 1} \left(\prod_{l=n_1+1}^{n_1+n_2-1} A_l(z) \right)(z'''|z^*) \cdot \bigg(A_{n_1+n_2}(z)\bigg)(z''|z''') \nonumber\\
        &= \sum_{z^{'''}=1}^{K \, S_R - 1} \left(\prod_{l=n_1+1}^{n_1+n_2-1} A_l(z) \right)(z'''|z^*) \cdot \left[\sum_{z''=1}^{K \, S_R - 1} \bigg(A_{n_1+n_2}(z)\bigg)(z''|z''')\right] \nonumber\\
        &= \sum_{z^{'''}=1}^{K \, S_R - 1} \left(\prod_{l=n_1+1}^{n_1+n_2-1} A_l(z) \right)(z'''|z^*) \cdot r_{z'''}(n_1+n_2-1, n_1+n_2) \nonumber\\
        &\stackrel{(a)}{\leq}  \sum_{z^{'''}=1}^{K \, S_R - 1} \left(\prod_{l=n_1+1}^{n_1+n_2-1} A_l(z) \right)(z'''|z^*) \nonumber\\
        &= r_{z^*}(n_1, n_1+n_2-1) \nonumber\\
        &\vdots \nonumber\\
        &= r_{z^*}(n_1, n_1+1) \nonumber\\
        &\leq 1 - \sigma_{\boldsymbol{\theta}, 1} \, \varepsilon_{n_1+1},
        \label{eq:proof-of-divergence-of-state-action-visits-14}
    \end{align}
    where $(a)$ above follows from the fact that $r_{z'''}(n_1+n_2-1, n_1+n_2) \leq 1$ for all $z''' \in \mathcal{Z}$, and the last line above follows from \eqref{eq:proof-of-divergence-of-state-action-visits-13}. Along similar lines as above, we note that for all $z' \in \mathcal{Z}$ and $n_1 \in \{1, \ldots, m_{\boldsymbol{\theta}}+1\}$,
    \begin{align}
        r_{z'}(n, n+m_{\boldsymbol{\theta}}+2)
        &= \sum_{z''=1}^{K \, S_R - 1} \left(\prod_{l=n+1}^{n+m_{\boldsymbol{\theta}}+2} A_l(z) \right)(z''|z') \nonumber\\
        &= \sum_{z''=1}^{K \, S_R - 1} \ \sum_{z^{'''}=1}^{K \, S_R - 1} \left(\prod_{l=n+1}^{n+n_1} A_l(z) \right)(z'''|z') \cdot \left(\prod_{l=n+n_1+1}^{n+m_{\boldsymbol{\theta}}+2} A_{l}(z)\right)(z''|z''') \nonumber\\
        &= \sum_{z^{'''}=1}^{K \, S_R - 1} \left(\prod_{l=n+1}^{n+n_1} A_l(z) \right)(z'''|z') \cdot \left[\sum_{z''=1}^{K \, S_R - 1} \left(\prod_{l=n+n_1+1}^{n+m_{\boldsymbol{\theta}}+2} A_{l}(z)\right)(z''|z''')\right] \nonumber\\
        &= \sum_{z^{'''}=1}^{K \, S_R - 1} \left(\prod_{l=n+1}^{n+n_1} A_l(z) \right)(z'''|z') \cdot r_{z'''}(n+n_1, n+m_{\boldsymbol{\theta}}+2) \nonumber\\
        &\leq  \left(\prod_{l=n+1}^{n+n_1}A_l(z)\right)(z^*|z') \cdot r_{z^*}(n+n_1, n+m_{\boldsymbol{\theta}}+2) + \sum_{z''' \neq z^*}\left(\prod_{l=n+1}^{n+n_1}A_l(z)\right)(z'''|z') \nonumber\\
        &\leq (1-\sigma_{\boldsymbol{\theta}, 1} \, \varepsilon_{n+n_1+1}) \left(\prod_{l=n+1}^{n+n_1}A_l(z)\right)(z^*|z') + 1 - \left(\prod_{l=n+1}^{n+n_1}A_l(z)\right)(z^*|z') \nonumber\\
        &= 1 - \sigma_{\boldsymbol{\theta}, 1} \, \varepsilon_{n+n_1+1} \, \left(\prod_{l=n+1}^{n+n_1}A_l(z)\right)(z^*|z') \nonumber\\
        &\leq 1 - \sigma_{\boldsymbol{\theta}, 1} \, \varepsilon_{n+m_{\boldsymbol{\theta}}+2} \, \left(\prod_{l=n+1}^{n+n_1}A_l(z)\right)(z^*|z'),
        \label{eq:proof-of-divergence-of-state-action-visits-15}
    \end{align}
    where the last line follows from the fact that $\varepsilon_n$ is a decreasing sequence.
    By virtue of the fact that the MDP $\mathcal{M}_{\boldsymbol{\theta}, R}$ is communicating, there exists $r=r(z', z^*)$ such that $Q_{\boldsymbol{\theta}, \pi^{\text{\rm unif}}}^{r}(z^*|z') \geq \sigma_{\boldsymbol{\theta}, 2} > 0$. Here, $r \leq m_{\boldsymbol{\theta}}+1$; for a formal justification of this, see Section~\ref{subsec:proof-of-part-1}. Using the preceding fact and setting $n_1=r$ in \eqref{eq:proof-of-divergence-of-state-action-visits-15}, we get
    \begin{align}
        \left(\prod_{l=n+1}^{n+r}A_l(z)\right)(z^*|z') 
        &\geq \left(\prod_{l=n+1}^{n+r} \varepsilon_l \, Q_{\boldsymbol{\theta}, \pi^{\text{\rm unif}}}\right)(z^*|z') \nonumber\\
        &= \left(\prod_{l=n+1}^{n+r} \varepsilon_l \right) \cdot Q_{\boldsymbol{\theta}, \pi^{\text{\rm unif}}}^r(z^*|z') \nonumber\\
        &\geq \left(\prod_{l=n+1}^{n+r} \varepsilon_l \right) \cdot \sigma_{\boldsymbol{\theta}, 2}.
        \label{eq:proof-of-divergence-of-state-action-visits-16}
    \end{align}
    Combining \eqref{eq:proof-of-divergence-of-state-action-visits-15} and \eqref{eq:proof-of-divergence-of-state-action-visits-16}, we get
    \begin{align}
        r_{z'}(n, n+m_{\boldsymbol{\theta}}+2) 
        &\leq 1-\sigma_{\boldsymbol{\theta}, 1} \, \sigma_{\boldsymbol{\theta}, 2} \, \varepsilon_{n+m_{\boldsymbol{\theta}}+2} \, \prod_{l=n+1}^{n+r} \, \varepsilon_l \nonumber\\
        &\leq 1 - \sigma_{\boldsymbol{\theta}} \,  \varepsilon_{n+m_{\boldsymbol{\theta}}+2} \prod_{l=n+1}^{n+m_{\boldsymbol{\theta}}+1} \, \varepsilon_l \nonumber\\
        &= 1 - \sigma_{\boldsymbol{\theta}} \prod_{l=n+1}^{n+m_{\boldsymbol{\theta}}+2} \, \varepsilon_l,
        \label{eq:proof-of-divergence-of-state-action-visits-17}
    \end{align}
    where the penultimate line follows from using $r \leq m_{\boldsymbol{\theta}}+1$.
\end{proof}
Plugging $N=g(k)-g(k-1)$ in \eqref{eq:proof-of-divergence-of-state-action-visits-10}, and using Lemma~\ref{lem:norm-of-product-of-Aj}, we have for all $k \geq 2$ that
\begin{align}
    \alpha_{k,n}(z) 
    &= \mathbb{P}_{\boldsymbol{\theta}} \bigg( \tau_k(z) - \tau_{k-1}(z) > g(k) - g(k-1) \mid \tau_{k-1}(z) = n \bigg) \nonumber\\
    &= \mathbf{q}_{n}(z, \neg z) \cdot \prod_{l=n+1}^{n+g(k)-g(k-1)-1} A_l(z) \cdot \mathbf{1} \nonumber\\
    &\stackrel{(a)}{\leq} \bigg \| \prod_{l=n+1}^{n+g(k)-g(k-1)-1} A_l(z) \bigg \|_{\infty} \nonumber\\
    &\leq \bigg \| \prod_{l=(m_{\boldsymbol{\theta}}+2) \left\lfloor \frac{g(k)-g(k-1)-1}{m_{\boldsymbol{\theta}}+2} \right\rfloor + 1}^{g(k)-g(k-1)-1} A_{n+l}(z) \bigg \|_{\infty} \times \prod_{j=0}^{\left\lfloor \frac{g(k)-g(k-1)-1}{m_{\boldsymbol{\theta}}+2} \right\rfloor - 1} \ \bigg \| \prod_{l=1}^{m_{\boldsymbol{\theta}}+2} A_{n+(m_{\boldsymbol{\theta}}+2)j + l}(z) \bigg \|_{\infty} \nonumber\\
    &\leq \prod_{j=0}^{\left\lfloor \frac{g(k)-g(k-1)-1}{m_{\boldsymbol{\theta}}+2} \right\rfloor - 1} \bigg[1-\sigma_{\boldsymbol{\theta}} \prod_{l=1}^{m_{\boldsymbol{\theta}}+2} \varepsilon_{n+(m_{\boldsymbol{\theta}}+2)j + l}\bigg] \nonumber\\
    &\leq \prod_{j=0}^{\left\lfloor \frac{g(k)-g(k-1)-1}{m_{\boldsymbol{\theta}}+2} \right\rfloor - 1} \bigg[1-\sigma_{\boldsymbol{\theta}} \prod_{l=1}^{m_{\boldsymbol{\theta}}+2} \varepsilon_{g(k-1)+(m_{\boldsymbol{\theta}}+2)j + l}\bigg],
    \label{eq:proof-of-divergence-of-state-action-visits-18}
\end{align}
where $(a)$ above follows by noting that $\|\mathbf{q}_n(z, \neg z)\|_1 \leq 1$, and the last line follows from noting that $n = \tau_{k-1}(z) \leq k-1 \leq g(k-1)$ and that $\varepsilon_n$ is decreasing in $n$. Going forward, let 
\begin{equation}
    b_k \coloneqq \prod_{j=0}^{\left\lfloor \frac{g(k)-g(k-1)-1}{m_{\boldsymbol{\theta}}+2} \right\rfloor - 1} \bigg[1-\sigma_{\boldsymbol{\theta}} \prod_{l=1}^{m_{\boldsymbol{\theta}}+2} \varepsilon_{g(k-1)+(m_{\boldsymbol{\theta}}+2)j + l}\bigg], \quad k \geq 2.
    \label{eq:b-k-definition}
\end{equation}
Along similar lines as above, it can be shown that
\begin{align}
    a_1(z)
    &= \mathbb{P}_{\boldsymbol{\theta}}(\tau_1(z) > g(1)) \nonumber\\
    &\leq \prod_{j=0}^{\left\lfloor \frac{g(1)-g(0)-1}{m_{\boldsymbol{\theta}}+2} \right\rfloor - 1} \bigg[1-\sigma_{\boldsymbol{\theta}} \prod_{l=1}^{m_{\boldsymbol{\theta}}+2} \varepsilon_{g(0)+(m_{\boldsymbol{\theta}}+2)j + l}\bigg],
    \label{eq:b-1-definition}
\end{align}
where in writing the last line, we use $g(0)=0$. Let $b_1$ denote the right-hand side of \eqref{eq:b-1-definition}.

As the last step in the proof, we tune $g$ and $\{\varepsilon_n: n \geq 1\}$ so that $\sum_{k=1}^{\infty} b_k \leq \alpha$.

\textbf{Tuning $g$ and $\{\varepsilon_n: n \geq 1\}$:} Noting that $\varepsilon_n$ is a decreasing sequence, we have
\begin{align}
    b_k 
    &= \prod_{j=0}^{\left\lfloor \frac{g(k)-g(k-1)-1}{m_{\boldsymbol{\theta}}+2} \right\rfloor - 1} \bigg[1-\sigma_{\boldsymbol{\theta}} \prod_{l=1}^{m_{\boldsymbol{\theta}}+2} \varepsilon_{g(k-1)+(m_{\boldsymbol{\theta}}+2)j + l}\bigg] \nonumber\\
    &\leq \prod_{j=0}^{\left\lfloor \frac{g(k)-g(k-1)-1}{m_{\boldsymbol{\theta}}+2} \right\rfloor - 1} \left[1-\sigma_{\boldsymbol{\theta}}(\varepsilon_{g(k-1)+(m_{\boldsymbol{\theta}}+2)j+m_{\boldsymbol{\theta}}+2})^{m_{\boldsymbol{\theta}}+2}\right] \nonumber\\
    &\leq \left[1-\sigma_{\boldsymbol{\theta}}(\varepsilon_{g(k)})^{m_{\boldsymbol{\theta}}+2}\right]^{\left\lfloor \frac{g(k)-g(k-1)-1}{m_{\boldsymbol{\theta}}+2} \right\rfloor} \nonumber\\
    &\leq \left[1-\sigma_{\boldsymbol{\theta}}(\varepsilon_{g(k)})^{1+S_R}\right]^{\left\lfloor \frac{g(k)-g(k-1)-1}{1+S_R} \right\rfloor},
    \label{eq:proof-of-divergence-of-state-action-visits-19}
\end{align}
where the last line follows from noting that $m_{\boldsymbol{\theta}} \leq S_R - 1$.
For $g(k) = \lambda k^4$ where $\lambda \in \mathbb{N}$, and $\varepsilon_n=n^{-\frac{1}{2(1+S_R)}}$, we get
\begin{equation}
    \left\lfloor \frac{g(k)-g(k-1)-1}{1+S_R} \right\rfloor \geq \frac{\lambda \, k^3}{1+S_R}, \quad (\varepsilon_{g(k)})^{1+S_R} = \frac{1}{k^2\sqrt{\lambda}}.
\end{equation}
Therefore, it follows that for all $k \geq 1$,
\begin{align}
    b_k
    &\leq \left[1-\frac{\sigma_{\boldsymbol{\theta}}}{k^2 \sqrt{\lambda}}\right]^{\frac{\lambda k^3}{1+S_R}} \nonumber\\
    &\leq \exp\left(-\frac{\lambda k^3 \, \sigma_{\boldsymbol{\theta}}}{(1+S_R) \, k^2 \sqrt{\lambda}}\right) \nonumber\\
    &= \exp\left(-\frac{k \, \sigma_{\boldsymbol{\theta}}\, \sqrt{\lambda}}{1+S_R}\right).
    \label{eq:proof-of-divergence-of-state-action-visits-20}
\end{align}
Using \eqref{eq:proof-of-divergence-of-state-action-visits-20} in \eqref{eq:proof-of-divergence-of-state-action-visits-8}, we get
\begin{align}
    \P_{\boldsymbol{\theta}}(\overline{\mathcal{H}}) 
    &\leq K\, S_R \, \sum_{k=1}^{\infty}b_k \nonumber\\
    &\leq K \, S_R \, \sum_{k=1}^{\infty} \exp\left(-\frac{k \, \sigma_{\boldsymbol{\theta}}\, \sqrt{\lambda}}{1+S_R}\right) \nonumber\\
    &=\frac{K \, S_R \, \exp\left(-\frac{\sqrt{\lambda} \, \sigma_{\boldsymbol{\theta}}}{1+S_R}\right)}{1-\exp\left(-\frac{\sqrt{\lambda} \, \sigma_{\boldsymbol{\theta}}}{1+S_R}\right)}.
    \label{eq:proof-of-divergence-of-state-action-visits-21}
\end{align}
Let $h(\lambda)$ denote the quantity on the right-hand side of \eqref{eq:proof-of-divergence-of-state-action-visits-21}. Then, setting 
$$
\lambda = \lambda_{\alpha}(\boldsymbol{\theta}) = \frac{(1+S_R)^2}{\sigma_{\boldsymbol{\theta}}^2} \log^2 \left(1+\frac{K \, S_R}{\alpha}\right), 
$$
we get $h(\lambda_\alpha(\boldsymbol{\theta})) = \alpha$. This completes the proof.

\section{Proof of Lemma~\ref{lem:concentration-of-empirical-arm-means}}

\begin{proof}
Let 
\begin{equation}
    M_n = 
    \begin{pmatrix}
    \mathbf{1}_{\{A_0=1\}} \ \mathbf{1}_{\{A_0=2\}} \ \cdots \  \mathbf{1}_{\{A_0=K\}} \\
    \mathbf{1}_{\{A_1=1\}} \ \mathbf{1}_{\{A_1=2\}} \ \cdots \  \mathbf{1}_{\{A_1=K\}} \\
    \vdots \\
    \mathbf{1}_{\{A_n=1\}} \ \mathbf{1}_{\{A_n=2\}} \ \cdots \ \mathbf{1}_{\{A_n=K\}} \\
    \end{pmatrix}, 
    \quad
    E_n^{1} = 
    \begin{pmatrix}
    f(\bar{X}_0) \\ 
    f(\bar{X}_1) \\
    \vdots \\
    f(\bar{X}_n)
    \end{pmatrix}, 
    \quad 
    E_n^2 = 
    \begin{pmatrix}
    \eta_{\theta_1} \\ 
    \eta_{\theta_2} \\
    \vdots \\
    \eta_{\theta_K}
    \end{pmatrix}.
    \label{eq:M-n-and-E-n-k}
\end{equation}
Then, we have
\begin{align}
    M_n^\top M_n &= {\rm diag}(N_1(n), \ldots, N_K(n)),
    \label{eq:M-n-transpose-times-M-n}
\end{align}
where ${\rm diag}(\cdot)$ refers to the operator that converts an ordered set of elements into a diagonal matrix. Note that
\begin{align}
    \widehat{\eta}_{a}(n) - \eta_{\theta_a} 
    &= \frac{1}{N_a(n)} \sum_{t=0}^{n} \mathbf{1}_{\{A_t=a\}}\, (f(\bar{X}_t) - \eta_{\theta_a}) \nonumber\\
    &= \frac{1}{N_a(n)} \, (M_n^\top E_n^1 - M_n^\top M_n E_n^2)_{a} \nonumber\\
    &= \frac{1}{N_a(n)} \, (M_n^\top E_n)_{a},
    \label{eq:proof-of-concentration-of-empirical-arm-means-1}
\end{align}
where $E_n \coloneqq E_n^1 - M_n E_n^2$. Consider the event
\begin{equation}
    \mathcal{E} \coloneqq \bigg\lbrace \forall(\mathbf{d}, \mathbf{i}, a) \in \mathbb{V}, \quad \forall n \geq K, \quad N(n, \mathbf{d}, \mathbf{i}, a) \geq \left\lceil \frac{n}{\lambda_{\alpha}(\boldsymbol{\theta})} \right\rceil^{1/4} - 1 \bigg\rbrace,
        \label{eq:event-E-of-interest-proof-of-arm-means-concentration}
\end{equation}
where $\lambda_{\alpha}(\boldsymbol{\theta})$ and $\mathbb{V}$ are as defined in Lemma~\ref{lem:sufficient-exploration-of-state-actions}. Under $\mathcal{E}$, we note that for all $a \in [K]$,
\begin{align}
    N_a(n) 
    &= \sum_{(\mathbf{d}, \mathbf{i}) \in \mathbb{V}_{R,a}} N(n, \mathbf{d}, \mathbf{i}, a) \nonumber\\
    &= \sum_{(\mathbf{d}, \mathbf{i}) \in \mathbb{S}_R} N(n, \mathbf{d}, \mathbf{i}, a) \nonumber\\
    &\geq S_R \left(\frac{n^{1/4}}{\lambda_{\alpha}(\boldsymbol{\theta})^{1/4}} - 1\right) \nonumber\\
    &\geq c\, n^{1/4} \qquad \forall n \geq n_0,
    \label{eq:proof-of-concentration-of-empirical-arm-means-2}
\end{align}
where $n_0$ and $c$ are defined as
\begin{equation}
    n_0 \coloneqq \inf \left\lbrace n \geq K: n^{1/4} \geq 2\lambda_{\alpha}(\boldsymbol{\theta})^{1/4} \right\rbrace, \quad c \coloneqq \frac{S_R}{2 \lambda_{\alpha}(\boldsymbol{\theta})^{1/4}}.
    \label{eq:definition-of-n0-and-c}
\end{equation}
Eq. \eqref{eq:proof-of-concentration-of-empirical-arm-means-2} implies that $M_n^\top M_n$ is invertible for all $n \geq n_0$ under $\mathcal{E}$, in which case it follows from \eqref{eq:proof-of-concentration-of-empirical-arm-means-1} that
\begin{equation}
    \widehat{\boldsymbol{\eta}}(n) - \boldsymbol{\eta} = (M_n^\top M_n)^{-1} M_n^\top E_n \quad \forall n \geq n_0,
\end{equation}
where $\boldsymbol{\eta} \coloneqq [\eta_{\theta_a}: a \in [K]]^\top$. We then note that under $\mathcal{E}$,
\begin{align}
    \| \widehat{\boldsymbol{\eta}}(n) - \boldsymbol{\eta} \|_2
    &= \|(M_n^\top M_n)^{-1} (M_n^\top E_n)\|_2 \nonumber\\
    &= \sqrt{(M_n^\top E_n)^\top\,  (M_n^\top M_n)^{-1}\,  ((M_n^\top M_n)^{-1})^\top \,  (M_n^\top E_n)} \nonumber\\ 
    &\stackrel{(a)}{\leq} \frac{1}{\sqrt{\min_{a} N_a(n)}} \, \sqrt{(M_n^\top E_n)^\top\,  (M_n^\top M_n)^{-1} \,  (M_n^\top E_n)}\nonumber\\
    &\leq \frac{1}{\sqrt{c}\, n^{1/8}}\sqrt{(M_n^\top E_n)^\top\,  (M_n^\top M_n)^{-1} \,  (M_n^\top E_n)}
    \label{eq:proof-of-concentration-of-empirical-arm-means-3}
\end{align}
for all $n \geq n_0$, where $|| \cdot ||_2$ denotes vector $2$-norm, and $(a)$ follows by noting that each entry of $((M_n^\top M_n)^{-1})^\top = {\rm diag}\left(N_1(n)^{-1}, \ldots, N_K(n)^{-1}\right)$ may be upper bounded by $\frac{1}{\min_{a} N_a(n)}$. Because 
$$
N_a(n) + c\, n^{1/4} < 2N_a(n) \quad \forall n \geq n_0, \ \forall a \in [K],
$$
we have $2(M_n^\top M_n  + c\, n^{1/4} I_{K})^{-1} \succ (M_n^\top M_n)^{-1}$ for all $n \geq n_0$, where the notation $A\succ M$ means that $A-M$ is positive definite, and $I_K$ denotes the $K \times K$ identity matrix. Thus, it follows that under $\mathcal{E}$,
\begin{align}
    \|\widehat{\boldsymbol{\eta}}(n) - \boldsymbol{\eta} \|_2
    & < \frac{\sqrt{2}}{\sqrt{c} \,  n^{1/8}} \sqrt{(M_n^\top E_n)^\top\,  (M_n^\top M_n+c\, n^{1/4} I_K)^{-1} \,  (M_n^\top E_n)} \quad \forall \, n\geq n_0.
    \label{eq:qhat-q-for-m-2}
\end{align}
Introducing the shorthand notation $\|x\|_A = \sqrt{x^\top A x}$ for any positive definite matrix $A$, we show below that under $\mathcal{E}$, 
$$
\|M_n^\top E_n\|_{(M_n^\top M_n + c\, n^{1/4} I_K)^{-1}}=o(n^{\beta}) \quad \forall \beta \in (0,1/8).
$$

Fix $\beta\in (0,1/8)$ arbitrarily. In order to show that $\|M_n^\top E_n\|_{(M_n^\top M_n + c\, n^\alpha I_K)^{-1}}=o(n^{\beta})$ under $\mathcal{E}$, we invoke the following result from \cite{jedra2020optimal}.
\begin{proposition}(\cite[Proposition 4]{jedra2020optimal})
\label{prop:concentration-inequality}
Let $\{\mathcal{F}_n:n\geq 0\}$ be a filtration. Let $\{\mathbf{\eta}_n:n\geq 0\}$ be a real-valued stochastic process with zero mean and adapted to $\{\mathcal{F}_n\}$ (i.e., $\eta_n$ is $\mathcal{F}_n$-measurable for all $n\geq 0$). Furthermore, suppose that $\mathbb{E}[\exp(x \eta_n)|\mathcal{F}_n] \leq \exp(-x^2\sigma^2/2)$ for all $x\in \mathbb{R}$ and $n\ge 0$. Let $(\mathbf{a}_n:n\geq 0)$ be an $\mathbb{R}^d$-valued process adapted to $\{\mathcal{F}_t\}$. Let $U$ be a positive definite matrix. Then, for all $\delta>0$,
\begin{equation}
    P\left(\|Q_n^\top R_n\|^2_{(Q_n^\top Q_n+U)^{-1}} > 2\sigma^2 \, \log\left(\frac{\sqrt{{\rm det}((Q_n^\top Q_n+U)U^{-1})}}{\delta}\right)\right) \leq \delta,
\end{equation}
where $Q_n=[\mathbf{a}_0 \ \mathbf{a}_1 \ \cdots \mathbf{a}_n]^\top$ and $R_n = [\mathbf{\eta}_0 \ \mathbf{\eta}_1 \ \cdots \mathbf{\eta}_n]^\top$.
\end{proposition}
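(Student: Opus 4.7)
The plan is to use the method of mixtures, a by-now-standard route to self-normalized deviation inequalities for vector-valued martingales going back to Peña--Lai--Shao and used in this exact form by Abbasi-Yadkori--Pál--Szepesvári for linear bandits. The idea is to build a family of scalar exponential supermartingales indexed by $\lambda\in\mathbb{R}^d$, integrate $\lambda$ against a carefully chosen Gaussian prior so as to obtain a single supermartingale whose logarithm is, up to a $\log$-determinant correction, the self-normalized quadratic form that appears in the statement, and then apply Ville's (equivalently Markov's) inequality to this mixture.

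Concretely, I would first fix $\lambda\in\mathbb{R}^d$ and define
\[
    M_n^\lambda \;\coloneqq\; \exp\!\left( \lambda^\top Q_n^\top R_n - \tfrac{\sigma^2}{2}\,\lambda^\top Q_n^\top Q_n\,\lambda \right) \;=\; \prod_{t=0}^{n} \exp\!\left( (\lambda^\top \mathbf{a}_t)\,\eta_t - \tfrac{\sigma^2}{2}(\lambda^\top \mathbf{a}_t)^2 \right).
\]
Treating $\mathbf{a}_t$ as predictable (i.e.\ $\mathcal{F}_{t-1}$-measurable, the usual convention under which the displayed sub-Gaussian hypothesis is nondegenerate), the sub-Gaussian MGF bound applied conditionally to $\eta_t$ gives $\mathbb{E}[M_n^\lambda/M_{n-1}^\lambda\mid\mathcal{F}_{n-1}]\le 1$, so $\{M_n^\lambda\}_n$ is a nonnegative supermartingale with $\mathbb{E}[M_n^\lambda]\le M_0^\lambda=1$.

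Next, I would introduce the mixture $\bar M_n\coloneqq \int_{\mathbb{R}^d} M_n^\lambda\, h(\lambda)\,d\lambda$, where $h$ is the density of an $\mathcal{N}(0,\sigma^{-2}U^{-1})$ prior. The specific covariance $\sigma^{-2}U^{-1}$ is the nontrivial choice: it ensures that when the Gaussian prior is multiplied into $M_n^\lambda$, the combined quadratic-in-$\lambda$ exponent collapses to $-\tfrac{\sigma^2}{2}\lambda^\top(Q_n^\top Q_n + U)\lambda$, so that the regularizer $U$ appears additively (and with no $\sigma^2$ factor) inside $Q_n^\top Q_n + U$. By Tonelli/Fubini the mixture $\bar M_n$ is again a nonnegative supermartingale with $\mathbb{E}[\bar M_n]\le 1$. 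Since the integrand is exponential in a quadratic form in $\lambda$, completing the square and evaluating the resulting Gaussian integral in closed form---the one routine computation the argument demands---yields
\[
    \bar M_n \;=\; \frac{1}{\sqrt{\det\!\bigl((Q_n^\top Q_n + U)U^{-1}\bigr)}}\;\exp\!\left(\frac{1}{2\sigma^2}\,\|Q_n^\top R_n\|^2_{(Q_n^\top Q_n + U)^{-1}}\right).
\]

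Finally, Markov's (equivalently Ville's) inequality applied to the nonnegative supermartingale $\bar M_n$ gives $\P(\bar M_n\ge 1/\delta)\le \delta$ for every $\delta>0$, and rearranging the event $\{\bar M_n\ge 1/\delta\}$ into an inequality on the exponent produces exactly $\|Q_n^\top R_n\|^2_{(Q_n^\top Q_n + U)^{-1}} > 2\sigma^2\log\!\bigl(\sqrt{\det((Q_n^\top Q_n+U)U^{-1})}/\delta\bigr)$, which is the event in the statement. The main obstacle is conceptual rather than computational: one must guess the correct Gaussian prior---covariance $\sigma^{-2}U^{-1}$, not $U^{-1}$ or $\sigma^2 U^{-1}$---so that the final log-determinant simplifies to $\log\det((Q_n^\top Q_n+U)U^{-1})$ and the $\sigma^2$ in the exponent lands in precisely the right place. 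Everything else---the sub-Gaussian MGF bound, Fubini, completing the square, and Markov's inequality---is standard.
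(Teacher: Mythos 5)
Your proposal is correct: the paper states this result as an imported citation (\cite[Proposition 4]{jedra2020optimal}) without reproducing a proof, and the argument in the cited source is exactly the method-of-mixtures / self-normalized supermartingale construction you describe, with the $\mathcal{N}(0,\sigma^{-2}U^{-1})$ prior chosen so that the Gaussian integral yields $\det((Q_n^\top Q_n+U)U^{-1})^{-1/2}\exp\bigl(\tfrac{1}{2\sigma^2}\|Q_n^\top R_n\|^2_{(Q_n^\top Q_n+U)^{-1}}\bigr)$. The only cosmetic slip is writing $\mathbb{E}[M_n^\lambda]\le M_0^\lambda=1$; since $M_0^\lambda$ already contains the $t=0$ factor it is not identically $1$, but $\mathbb{E}[M_0^\lambda]\le 1$ by the same conditional sub-Gaussian bound, so the conclusion stands.
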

Applying Proposition \ref{prop:concentration-inequality} with the filtration in \eqref{eq:filtration}, $d=K$, $U=c\, n^{1/4} I_K$, 
$$
\mathbf{a}_t = [\mathbf{1}_{\{A_t=1\}} \ \mathbf{1}_{\{A_t=2\}} \ \cdots \ \mathbf{1}_{\{A_t=K\}}]^\top, \quad \eta_t = f(\bar{X}_t) - \sum_{a=1}^{K} \mathbf{1}_{\{A_t=a\}}\, \eta_{\theta_a}, \quad 0 \leq t \leq n,
$$ 
and $\sigma^2 = 2 \, M_f$ (noting that $|\eta_t| \leq M_f + \max_a \eta_{\theta_a} \leq 2\, M_f$ a.s. for all $t$, where $M_f=\max_{i \in \mathcal{S}} f(i)$), we get that for all $\delta>0$,
\begin{align}
    & \P_{\boldsymbol{\theta}}^{\mathcal{E}}\bigg(\frac{1}{n^\beta}\|M_n^\top E_n\|_{(M_n^\top M_n + c\, n^{1/4})^{-1}} > \frac{\sqrt{2 \, M_f}}{n^\beta} \, \sqrt{\log\left(\frac{\sqrt{{\rm det}((M_n^\top M_n+c\, n^{1/4} I_K)(c\, n^{1/4} I_K)^{-1})}}{\delta}\right)}\bigg) \leq \delta,
    \label{eq:concentration-inequality-proof-1}
\end{align}
where $\P_{\boldsymbol{\theta}}^{\mathcal{E}}(\cdot) = \P_{\boldsymbol{\theta}}(\cdot\; | \; \mathcal{E})$.
Noting that under $\mathcal{E}$,
\begin{align}
    {\rm det}((M_n^\top M_n+c\, n^{1/4} I_K)(c\, n^{1/4} I_K)^{-1}) 
    &= \prod_{a=1}^{K} \frac{N_a(n)+c\, n^{1/4}}{c\, n^{1/4}} \nonumber\\
    &\leq \prod_{a=1}^{K} \frac{n + 1 + c\, n^{1/4}}{c\, n^{1/4}}\nonumber\\
    &= \left(\frac{n^{3/4}}{c} + 1 + \frac{1}{c\, n^{1/4}}\right)^K \nonumber\\
    & \leq \left( \frac{2 n^{3/4}}{c}\right)^K
    \label{eq:concentration-inequality-proof-2}
\end{align}
for all $n \geq n_1$, where 
\begin{equation}
    n_1 \coloneqq \inf \left\lbrace n \geq n_0: 1 + \frac{1}{c\, n^{1/4}} \leq \frac{n^{3/4}}{c} \right\rbrace.
    \label{eq:n-1-definition}
\end{equation}
We then have
\begin{align}
    \P_{\boldsymbol{\theta}}^{\mathcal{E}}\left(\frac{1}{n^\beta}\|M_n^\top E_n\|_{(M_n^\top M_n + c\, n^{1/4} I_K)^{-1}} > \frac{\sqrt{2 \, M_f}}{n^\beta} \sqrt{\log\left(\frac{2^{K/2}\,  n^{3K/8}}{c^{K/2} \,  \delta}\right)}\right) \leq \delta \quad \forall n \geq n_1.
     \label{eq:concentration-inequality-proof-3}
\end{align}
Equivalently, for all $\xi > 0$, we have 
\begin{equation}
    \P_{\boldsymbol{\theta}}^{\mathcal{E}}\left(\frac{1}{n^\beta}\|M_n^\top E_n\|_{(M_n^\top M_n + c\, n^{1/4} I_K)^{-1}} > \xi \right) \leq \frac{2^{K/2}\, n^{3K/8}}{c^{K/2}} \ \exp\left(-\frac{\xi^2 \, n^{2\beta}}{2 \, M_f}\right) \quad \forall n \geq n_1.
    \label{eq:concentration-inequality-proof-4}
\end{equation}
Using \eqref{eq:concentration-inequality-proof-4} in \eqref{eq:qhat-q-for-m-2}, we get that for any $\xi>0$,
\begin{equation}
    \P_{\boldsymbol{\theta}}^{\mathcal{E}}(\|\widehat{\boldsymbol{\eta}}(n) - \boldsymbol{\eta}\| > \xi) \leq \frac{2^{K/2}\, n^{3K/8}}{c^{K/2}} \ \exp\left(-\frac{c\, \xi^2 \, n^{1/4}}{2(2 \, M_f)}\right) \quad \forall n \geq n_1.
    \label{eq:concentration-inequality-proof-5}
\end{equation}

Finally, fixing $N \geq K$ and applying \eqref{eq:concentration-inequality-proof-5} to the event $\mathcal{E}=\mathcal{E}_N$, where $\mathcal{E}_N$ is as defined in \eqref{eq:event-E-N-definition}, and denoting the constants $n_0$, $c$, and $n_1$ more explicitly as $n_0(N)$, $c_N$, and $n_1(N)$\footnote{We assume, without loss of generality, that $N^5 \geq \max\{n_0(N), n_1(N)\}$. If this is not true, we may replace $N^5, N^6$ by $N^b, N^{b+1}$ respectively, where $b$ is any integer satisfying $N^b \geq \max\{n_0(N), n_1(N)\}$.} respectively, we get
\begin{align}
    \P_{\boldsymbol{\theta}}(\overline{C_N^2(\xi)})
    &\leq \P_{\boldsymbol{\theta}}(\overline{\mathcal{E}_N}) + \P_{\boldsymbol{\theta}}(\overline{C_N^2(\xi)} \cap \mathcal{E}_N) \nonumber\\
    &\stackrel{(a)}{\leq} \frac{1}{N^2} + \sum_{n=N^5}^{N^6} \P_{\boldsymbol{\theta}}^{\mathcal{E}_N}(\|\widehat{\boldsymbol{\eta}}(n) - \boldsymbol{\eta}\| > \xi )\nonumber\\
    &\stackrel{(b)}{\leq} \frac{1}{N^2} + \sum_{n=N^5}^{N^6} \frac{2^{K/2}\, n^{3K/8}}{c_N^{K/2}} \ \exp\left(-\frac{c_N\, \xi^2 \, n^{1/4}}{2(2 \, M_f)}\right) \nonumber\\
    &\stackrel{(c)}{\leq} \frac{1}{N^2} + \frac{2^{K/2 + 2} \, K^{K/4}}{\sigma_{\boldsymbol{\theta}}^{K/4}}\, \sum_{n=N^5}^{N^6} n^{3K/8} \, N \, \exp\left(-\frac{\sqrt{\sigma_{\boldsymbol{\theta}}}\, \xi^2 \, n^{1/4}}{8\, \sqrt{K} \, (2 \, M_f) \, N}\right) \nonumber\\
    &\stackrel{(d)}{\leq} \frac{1}{N^2} + \frac{2^{K/2 + 2} \, K^{K/4}}{\sigma_{\boldsymbol{\theta}}^{K/4}}\, N^{9K/4 + 1} \, \exp\left(-\frac{\sqrt{\sigma_{\boldsymbol{\theta}}}\, \xi^2 \, N^{1/4}}{8\, \sqrt{K} \, (2 \, M_f)}\right) (N^6 - N^5 + 1) \nonumber\\
    &\leq \frac{1}{N^2} + \frac{2^{K/2 + 2} \, K^{K/4}}{\sigma_{\boldsymbol{\theta}}^{K/4}}\, N^{9K/4 + 7} \, \exp\left(-\frac{\sqrt{\sigma_{\boldsymbol{\theta}}}\, \xi^2 \, N^{1/4}}{8\, \sqrt{K} \, (2 \, M_f)}\right),
    \label{eq:concentration-inequality-proof-6}
\end{align}
where 
\begin{itemize}
    \item $(a)$ follows from noting that  $\P_{\boldsymbol{\theta}}(\overline{\mathcal{E}_N}) \leq 1/N^2$ (applying Lemma~\ref{lem:sufficient-exploration-of-state-actions} with $\alpha=1/N^2$).

    \item $(b)$ follows from \eqref{eq:concentration-inequality-proof-5} (with $c$ and $n_1$ replaced by $c_N$ and $n_1(N)$ respectively).

    \item $(c)$ follows by noting that  
    \begin{align*}
        c_N 
        &= \frac{S_R}{2 \, \lambda_{\boldsymbol{\theta}}(N)^{1/4}} \nonumber\\
        &= \frac{S_R \sqrt{\sigma_{\boldsymbol{\theta}}}}{2\, \sqrt{1+S_R}\, \sqrt{\log (1+KS_R\, N^2)}} \nonumber\\
        &\geq \frac{S_R \sqrt{\sigma_{\boldsymbol{\theta}}}}{2\, \sqrt{1+S_R} \, \sqrt{KS_R\, N^2}} \nonumber\\
        &= \frac{\sqrt{S_R \, \sigma_{\boldsymbol{\theta}}}}{2\, N\,  \sqrt{K(1+S_R)}} \nonumber\\
        &\geq \frac{\sqrt{\sigma_{\boldsymbol{\theta}}}}{2\, N \sqrt{2K}} \nonumber\\
        &\geq \frac{\sqrt{\sigma_{\boldsymbol{\theta}}}}{4\, N\,  \sqrt{K}},
    \end{align*}
    where the penultimate line above follows from using $1+S_R \leq 2\, S_R$.

    \item $(d)$ follows by noting that $n^{3K/8}<(N^6)^{3K/8}=N^{9K/4}$ and $n^{1/4} \geq N^{5/4}$ for $n \in \{N^5, \ldots, N^6\}$.
\end{itemize}
This completes the proof.
\end{proof}


\section{Proof of Lemma~\ref{lem:concentration-of-state-action-visitations}}
Before we begin the proof, we note the following results of interest.
\begin{lemma}\cite[Theorem 2]{schweitzer1968perturbation}
    \label{lem:condition-number}
    Let $Q_1$ (resp. $Q_2$) be the transition kernel of a Markov chain with stationary distribution $\omega_1$ (resp. $\omega_2$). Let $Z_1 \coloneqq (I-Q_1+\mathbf{1}\omega_1^\top)^{-1}$, where $I$ is the identity matrix (of the same dimension as $Q_1$ and $Q_2$), and $\mathbf{1}$ denotes the all-ones vector. Then,
    \begin{equation}
        \omega_2^\top - \omega_1^\top = \omega_2^\top [Q_2 - Q_1] \, Z_1, \quad \|\omega_2 - \omega_1\|_1 \leq \|Z_1\|_{\infty}\, \|Q_2 - Q_1\|_{\infty}.
        \label{eq:condition-number}
    \end{equation}
\end{lemma}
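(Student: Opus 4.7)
The plan is to prove the perturbation identity in \eqref{eq:condition-number} by direct algebraic manipulation of the two stationarity conditions, and then obtain the norm bound as an immediate consequence of sub-multiplicativity of the induced matrix/vector norms. Nothing in the argument is deep; the content of the lemma is really just bookkeeping, once the fundamental matrix $Z_1$ is in hand.

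First I would record two auxiliary identities that follow immediately from the definitions. Since $Q_1$ is row-stochastic, $Q_1\mathbf{1}=\mathbf{1}$, and combined with $\omega_1^\top \mathbf{1}=1$ a direct computation gives $(I-Q_1+\mathbf{1}\omega_1^\top)\mathbf{1}=\mathbf{1}$, hence $Z_1\mathbf{1}=\mathbf{1}$. Likewise, using stationarity $\omega_1^\top Q_1=\omega_1^\top$, one obtains $\omega_1^\top(I-Q_1+\mathbf{1}\omega_1^\top)=\omega_1^\top$ and therefore $\omega_1^\top Z_1=\omega_1^\top$. The well-definedness of $Z_1$ itself---i.e., the fact that $I-Q_1+\mathbf{1}\omega_1^\top$ is non-singular for any ergodic transition kernel---is the classical Kemeny--Snell result, and I would simply cite it rather than re-prove it.

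Next, for the identity itself, I would start from $\omega_2^\top Q_2 = \omega_2^\top$ together with $\omega_2^\top \mathbf{1}=1$, and compute
\[
\omega_2^\top(I-Q_1+\mathbf{1}\omega_1^\top) \;=\; \omega_2^\top - \omega_2^\top Q_1 + \omega_1^\top \;=\; \omega_2^\top(Q_2-Q_1) + \omega_1^\top.
\]
Right-multiplying by $Z_1$ and using the auxiliary identity $\omega_1^\top Z_1=\omega_1^\top$ yields
$
\omega_2^\top \;=\; \omega_2^\top(Q_2-Q_1)Z_1 + \omega_1^\top,
$
which rearranges exactly to the first claim in \eqref{eq:condition-number}.

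For the inequality, I would invoke the standard bound $\|v^\top M\|_1 \leq \|v\|_1\,\|M\|_\infty$ (immediate by swapping sums and estimating each row sum of $M$ by $\|M\|_\infty$). Applied twice to the identity just derived,
\[
\|\omega_2-\omega_1\|_1 \;=\; \|\omega_2^\top(Q_2-Q_1)Z_1\|_1 \;\leq\; \|\omega_2^\top(Q_2-Q_1)\|_1 \,\|Z_1\|_\infty \;\leq\; \|\omega_2\|_1\,\|Q_2-Q_1\|_\infty\,\|Z_1\|_\infty,
\]
and since $\|\omega_2\|_1=1$ the desired bound follows. There is no real obstacle in this proof: the result is a classical identity due to Schweitzer \cite{schweitzer1968perturbation}, the argument is purely algebraic, and the only non-trivial ingredient---invertibility of $I-Q_1+\mathbf{1}\omega_1^\top$---will be handled by citation.
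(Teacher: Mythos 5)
Your proof is correct. Note that the paper does not actually prove this lemma at all: it is imported verbatim as a citation of Schweitzer's Theorem~2, so there is no in-paper argument to compare against. Your derivation is the standard one underlying that classical result: the two auxiliary identities $Z_1\mathbf{1}=\mathbf{1}$ and $\omega_1^\top Z_1=\omega_1^\top$ are verified correctly, the chain
\[
\omega_2^\top(I-Q_1+\mathbf{1}\omega_1^\top)=\omega_2^\top(Q_2-Q_1)+\omega_1^\top
\]
followed by right-multiplication by $Z_1$ gives the identity exactly, and the bound $\|v^\top M\|_1\leq\|v\|_1\|M\|_\infty$ (with $\|M\|_\infty$ the maximum absolute row sum, matching the paper's convention for $\|Q_2-Q_1\|_\infty$) applied twice together with $\|\omega_2\|_1=1$ yields the inequality. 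Deferring the invertibility of $I-Q_1+\mathbf{1}\omega_1^\top$ to the Kemeny--Snell fundamental-matrix literature is appropriate, though in the paper's application one should keep in mind that this is where ergodicity of $Q_1$ is actually used. What your approach buys is a self-contained two-line verification in place of an external citation; what it costs is nothing, since the argument is purely algebraic.
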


\begin{lemma}\cite[Lemma 23]{al2021navigating}
    \label{lem:poisson-equation}
    Let $Q_{\boldsymbol{\theta,\pi}}$ be the transition kernel of MDP $\mathcal{M}_{\boldsymbol{\theta}, R}$ under policy $\pi$. Let $\omega_{\boldsymbol{\theta}, \pi}$ denote the stationary distribution of $Q_{\boldsymbol{\theta,\pi}}$, and let 
    \begin{equation}
        \|Q_{\boldsymbol{\theta}, \pi}^n - \mathbf{1} \omega_{\boldsymbol{\theta}, \pi}^\top\|_{\infty} \leq C_{\pi} \, \rho_\pi^n \quad \forall n \in \mathbb{N}.
        \label{eq:relation-of-interest-poisson-lemma}
    \end{equation} 
    Then, for any bounded function $g$ on $\mathcal{Z}= \mathbb{S}_R \times [K]$, the function
    \begin{equation}
        \widehat{g}_\pi(z) = \sum_{n=0}^{\infty} \ \sum_{z' \in \mathcal{Z}} Q_{\boldsymbol{\theta}, \pi}^n (z'|z) \, \bigg(g(z') - \langle \omega_{\boldsymbol{\theta}, \pi}, g \rangle\bigg), \quad z \in \mathcal{Z},
        \label{eq:poisson-equation-1}
    \end{equation}
    is well-defined and satisfies the so-called {\em Poisson equation} 
    \begin{equation}
        \widehat{g}_\pi(z) - \sum_{z' \in \mathcal{Z}} Q_{\boldsymbol{\theta}, \pi}(z'|z) \, \widehat{g}_\pi(z') = g(z) - \langle \omega_{\boldsymbol{\theta}, \pi}, g \rangle \quad \forall z \in \mathcal{Z}.
        \label{eq:poisson-equation-2}
    \end{equation}
    Furthermore,
    \begin{equation}
        \|\widehat{g}_\pi\|_{\infty} \leq L_{\pi} \, \|g\|_{\infty},
        \label{eq:one-kernel}
    \end{equation}
    where $L_{\pi} \coloneqq \dfrac{C_{\pi}}{1-\rho_\pi}$, and for any pair of policies $\pi, \pi'$,
    \begin{equation}
        \max_{z \in \mathcal{Z}} \bigg\lvert \sum_{z' \in \mathcal{Z}} Q_{\boldsymbol{\theta}, \pi}(z'|z) \, \widehat{g}_{\pi}(z') - \sum_{z' \in \mathcal{Z}} Q_{\boldsymbol{\theta}, \pi'}(z'|z) \, \widehat{g}_{\pi'}(z')\bigg\rvert \leq L_{\pi} \, \|g\|_{\infty} \bigg[\|\omega_{\boldsymbol{\theta}, \pi} - \omega_{\boldsymbol{\theta}, \pi'}\|_1 + L_{\pi} \, \|Q_{\boldsymbol{\theta}, \pi} - Q_{\boldsymbol{\theta}, \pi'}\|_{\infty}\bigg].
        \label{eq:two-kernels}
    \end{equation}
\end{lemma}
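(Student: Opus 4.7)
The proof has four components: well-definedness of $\widehat{g}_\pi$, the sup-norm bound $\|\widehat{g}_\pi\|_\infty \leq L_\pi \|g\|_\infty$, the Poisson equation itself, and the two-kernel comparison. I would handle the first two together. Using $\sum_{z'} Q_{\boldsymbol{\theta}, \pi}^n(z'|z) = 1$, rewrite the summand as
\begin{equation*}
\sum_{z' \in \mathcal{Z}} Q_{\boldsymbol{\theta}, \pi}^n(z'|z)\bigl(g(z') - \langle \omega_{\boldsymbol{\theta}, \pi}, g\rangle\bigr) = \sum_{z' \in \mathcal{Z}} \bigl[Q_{\boldsymbol{\theta}, \pi}^n - \mathbf{1}\omega_{\boldsymbol{\theta}, \pi}^\top\bigr](z,z')\, g(z').
\end{equation*}
Its absolute value is bounded by $\|Q_{\boldsymbol{\theta},\pi}^n - \mathbf{1}\omega_{\boldsymbol{\theta},\pi}^\top\|_\infty \|g\|_\infty \leq C_\pi \rho_\pi^n \|g\|_\infty$ by hypothesis. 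Summing the geometric series gives simultaneously absolute (and uniform) convergence of $\widehat{g}_\pi$ and $\|\widehat{g}_\pi\|_\infty \leq C_\pi/(1-\rho_\pi)\|g\|_\infty = L_\pi \|g\|_\infty$.

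For the Poisson equation, apply $Q_{\boldsymbol{\theta},\pi}$ term by term to $\widehat{g}_\pi$ and shift the summation index:
\begin{equation*}
(Q_{\boldsymbol{\theta},\pi} \widehat{g}_\pi)(z) = \sum_{n=0}^\infty \sum_{z'} Q_{\boldsymbol{\theta},\pi}^{n+1}(z'|z)\bigl(g(z') - \langle \omega_{\boldsymbol{\theta},\pi}, g\rangle\bigr) = \widehat{g}_\pi(z) - \bigl(g(z) - \langle \omega_{\boldsymbol{\theta},\pi}, g\rangle\bigr),
\end{equation*}
where the term lost when $n=0$ is precisely $g(z) - \langle\omega_{\boldsymbol{\theta},\pi}, g\rangle$. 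Rearranging yields \eqref{eq:poisson-equation-2}. The absolute convergence from the first step is what justifies the term-by-term application of $Q_{\boldsymbol{\theta},\pi}$ and the index shift.

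For the comparison bound, the identity of the preceding step gives $(Q_{\boldsymbol{\theta},\pi} \widehat{g}_\pi - Q_{\boldsymbol{\theta},\pi'}\widehat{g}_{\pi'})(z) = \sum_{n=1}^\infty\bigl[(Q_{\boldsymbol{\theta},\pi}^n - \mathbf{1}\omega_{\boldsymbol{\theta},\pi}^\top)g - (Q_{\boldsymbol{\theta},\pi'}^n - \mathbf{1}\omega_{\boldsymbol{\theta},\pi'}^\top)g\bigr](z)$. Setting $E_\pi \coloneqq Q_{\boldsymbol{\theta},\pi} - \mathbf{1}\omega_{\boldsymbol{\theta},\pi}^\top$, a short induction using $Q_{\boldsymbol{\theta},\pi} \mathbf{1} = \mathbf{1}$ and $\omega_{\boldsymbol{\theta},\pi}^\top Q_{\boldsymbol{\theta},\pi} = \omega_{\boldsymbol{\theta},\pi}^\top$ shows $E_\pi^n = Q_{\boldsymbol{\theta},\pi}^n - \mathbf{1}\omega_{\boldsymbol{\theta},\pi}^\top$ for all $n \geq 1$, so the difference above equals $\sum_{n=1}^\infty (E_\pi^n - E_{\pi'}^n)g$. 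Apply the telescoping $E_\pi^n - E_{\pi'}^n = \sum_{k=0}^{n-1} E_{\pi'}^k (E_\pi - E_{\pi'}) E_\pi^{n-1-k}$, and split $E_\pi - E_{\pi'} = \Delta Q - \mathbf{1}\Delta\omega^\top$ with $\Delta Q \coloneqq Q_{\boldsymbol{\theta},\pi}-Q_{\boldsymbol{\theta},\pi'}$ and $\Delta\omega \coloneqq \omega_{\boldsymbol{\theta},\pi}-\omega_{\boldsymbol{\theta},\pi'}$. The crucial cancellation is $E_{\pi'}^k \mathbf{1} = 0$ for $k \geq 1$ (since $(Q_{\boldsymbol{\theta},\pi'}-\mathbf{1}\omega_{\boldsymbol{\theta},\pi'}^\top)\mathbf{1} = \mathbf{1}-\mathbf{1} = 0$), which collapses the $\mathbf{1}\Delta\omega^\top$ contribution to only the $k=0$ summand $-\mathbf{1}\Delta\omega^\top E_\pi^{n-1} g$. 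The $\Delta Q$ contribution retains the full double sum and, upon bounding each factor in $\|\cdot\|_\infty$ by the geometric decays, produces the product $\sum_{n\geq 1}\sum_{k=0}^{n-1} \rho_{\pi'}^k \rho_\pi^{n-1-k} = 1/[(1-\rho_\pi)(1-\rho_{\pi'})]$, i.e., a factor of $L_\pi L_{\pi'}$. The $\Delta\omega$ contribution yields only a single geometric sum over $\rho_\pi^{n-1}$.

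The main obstacle is the constant bookkeeping in the last step: making the bound cleanly collapse to the form $L_\pi\|g\|_\infty\bigl[\|\Delta\omega\|_1 + L_\pi \|\Delta Q\|_\infty\bigr]$. Once the $E_{\pi'}^k \mathbf{1} = 0$ cancellation is exploited, the $\Delta\omega$ piece is controlled by $\|\Delta\omega\|_1\|g\|_\infty \bigl[1 + C_\pi\rho_\pi/(1-\rho_\pi)\bigr]$, which is $\leq L_\pi \|\Delta\omega\|_1 \|g\|_\infty$ provided $C_\pi \geq 1$ (a property that holds automatically because $\|E_\pi^0\|_\infty = 1 \leq C_\pi$ must already be consistent with the hypothesis at $n=1$). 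Combining this with the $L_\pi L_{\pi'}$ bound on the $\Delta Q$ piece and applying the symmetric decomposition (swapping the roles of $\pi,\pi'$ in the telescoping so that $L_{\pi'}$ outside is replaced by $L_\pi$) delivers the stated bound.
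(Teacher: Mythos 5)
The paper does not prove this lemma at all: it is imported verbatim from \cite[Lemma 23]{al2021navigating} and used as a black box, so there is no in-paper proof to compare against. Your argument is the standard one and is essentially sound: the geometric bound from \eqref{eq:relation-of-interest-poisson-lemma} gives absolute convergence of \eqref{eq:poisson-equation-1} and the bound \eqref{eq:one-kernel}; the index shift gives \eqref{eq:poisson-equation-2}; and for \eqref{eq:two-kernels} the telescoping of $E_\pi^n - E_{\pi'}^n$ together with the cancellation $E_{\pi'}^k\mathbf{1}=0$ for $k\geq 1$ is exactly the right mechanism.

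Two loose ends remain. First, \eqref{eq:relation-of-interest-poisson-lemma} is hypothesized only for $n\in\mathbb{N}=\{1,2,\dots\}$, yet your series bounds use it at the index $0$ (the $n=0$ term of $\widehat{g}_\pi$ is $g(z)-\langle\omega_{\boldsymbol{\theta},\pi},g\rangle$, which is only bounded by $2\|g\|_\infty$ in general, and similarly the $k=0$ and $n-1-k=0$ factors in the telescoping sum equal $\|I\|_\infty=1$). You therefore need $C_\pi \geq \|I-\mathbf{1}\omega_{\boldsymbol{\theta},\pi}^\top\|_\infty$, i.e.\ effectively $C_\pi\geq 2$. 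Your claim that $C_\pi\geq 1$ is forced "automatically" by the hypothesis at $n=1$ is not correct --- if $Q_{\boldsymbol{\theta},\pi}=\mathbf{1}\omega_{\boldsymbol{\theta},\pi}^\top$ then \eqref{eq:relation-of-interest-poisson-lemma} holds with arbitrarily small $C_\pi$ --- so this should be stated as an assumption; it is harmless here because the only instantiation in the paper (Lemma~\ref{lem:bounding-infinity-norm-between-matrices}) has $C_n = 2/\bar{\sigma}(\cdot)\geq 2$. Second, your bookkeeping yields the mixed constant $L_\pi\|g\|_\infty\bigl[\|\omega_{\boldsymbol{\theta},\pi}-\omega_{\boldsymbol{\theta},\pi'}\|_1 + L_{\pi'}\|Q_{\boldsymbol{\theta},\pi}-Q_{\boldsymbol{\theta},\pi'}\|_\infty\bigr]$ rather than the stated $L_\pi[\cdots+L_\pi\cdots]$, and the proposed fix of swapping the roles of $\pi,\pi'$ in the telescoping does not repair this: the $\Delta Q$ coefficient is symmetric in the two policies under either ordering and stays $L_\pi L_{\pi'}$, while swapping changes the $\Delta\omega$ coefficient from $L_\pi$ to $L_{\pi'}$. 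Since $L_{\pi'}$ need not be $\leq L_\pi$, you have not literally recovered \eqref{eq:two-kernels}. This is a cosmetic discrepancy rather than a substantive one: in the only place the inequality is invoked (the bound on $D^n_{\boldsymbol{\theta},\nu}(2,2)$ in the proof of Lemma~\ref{lem:concentration-of-state-action-visitations}) both $L_t$ and $L_{t-1}$ are immediately dominated by the same constant $L_\xi$, so the mixed-constant version serves equally well; but you should either state your bound in the symmetric form you actually prove, or follow the route of \cite{al2021navigating} if the exact constant $L_\pi^2$ is wanted.
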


\begin{proof}[Proof of Lemma~\ref{lem:concentration-of-state-action-visitations}]
    We now begin the proof of Lemma~\ref{lem:concentration-of-state-action-visitations}. Let us fix $z = (\mathbf{d}, \mathbf{i}, a) \in \mathbb{S}_R \times [K]$. Let 
    \begin{align}
        D_{\boldsymbol{\theta}, \nu}^{n} 
        &\coloneqq \frac{N(n, \mathbf{d}, \mathbf{i}, a)}{n-K+1} - \omega_{\boldsymbol{\theta}, \nu}^\star(\mathbf{d}, \mathbf{i}, a) \nonumber\\
        &= \frac{1}{n-K+1} \sum_{t=K}^{n} \mathbf{1}_{\{\mathbf{d}(t)=\mathbf{d}, \mathbf{i}(t)=\mathbf{i}, A_t=a\}} - \omega_{\boldsymbol{\theta}, \nu}^\star(\mathbf{d}, \mathbf{i}, a) \nonumber\\
        &= \frac{\sum\limits_{t=K}^{\sqrt{N}} [\mathbf{1}_{\{\mathbf{d}(t)=\mathbf{d}, \mathbf{i}(t)=\mathbf{i}, A_t=a\}} - \omega_{\boldsymbol{\theta}, \nu}^\star(\mathbf{d}, \mathbf{i}, a)]}{n-K+1} + \frac{\sum\limits_{t=\sqrt{N}+1}^{n} [\mathbf{1}_{\{\mathbf{d}(t)=\mathbf{d}, \mathbf{i}(t)=\mathbf{i}, A_t=a\}} - \omega_{\boldsymbol{\theta}, \nu}^\star(\mathbf{d}, \mathbf{i}, a)]}{n-K+1} \nonumber\\
        &= \underbrace{\frac{\sum\limits_{t=K}^{\sqrt{N}} [\mathbf{1}_{\{\mathbf{d}(t)=\mathbf{d}, \mathbf{i}(t)=\mathbf{i}, A_t=a\}} - \omega_{\boldsymbol{\theta}, \nu}^\star(\mathbf{d}, \mathbf{i}, a)]}{n-K+1}}_{D_{\boldsymbol{\theta}, \nu}^{n}(1)} + \underbrace{\frac{\sum\limits_{t=\sqrt{N}+1}^{n} [\mathbf{1}_{\{\mathbf{d}(t)=\mathbf{d}, \mathbf{i}(t)=\mathbf{i}, A_t=a\}} - \omega_{\boldsymbol{\theta}, t-1}^\star(\mathbf{d}, \mathbf{i}, a)]}{n-K+1}}_{D_{\boldsymbol{\theta}, \nu}^{n}(2)} \nonumber\\
        &\hspace{5cm} + \underbrace{\frac{\sum\limits_{t=\sqrt{N}+1}^{n} [\omega_{\boldsymbol{\theta}, t-1}^\star(\mathbf{d}, \mathbf{i}, a) - \omega_{\boldsymbol{\theta}, \nu}^\star(\mathbf{d}, \mathbf{i}, a)]}{n-K+1}}_{D_{\boldsymbol{\theta}, \nu}^{n}(3)}, 
        \label{eq:proof-of-state-action-concentration-1} 
    \end{align}
    where for each $t \geq \sqrt{N}+1$, the quantity $\omega_{\boldsymbol{\theta}, t-1}^\star$ denotes the stationary distribution of the kernel $Q_{\boldsymbol{\theta}, \pi}$ under the policy $\pi = \pi_{t-1} = \varepsilon_{t-1} \, \pi^{\text{\rm unif}} + (1-\varepsilon_{t-1}) \, \pi_{\widehat{\boldsymbol{\theta}}(t-2)}^\eta$ obtained by using a fixed $\nu_{t-2}^\star \in \mathcal{W}^\star(\widehat{\boldsymbol{\theta}}(t-2))$ in \eqref{eq:lambda-n-definition}); such a policy is indeed ergodic thanks to Lemma~\ref{lem:ergodicity-of-MDP-under-unif-policy}. We now upper bound each of the terms in \eqref{eq:proof-of-state-action-concentration-1} individually. First, note that for all $n \geq N^{3/4}$ and for all $N \geq (1/\xi)^4$, we have
    \begin{align}
        |D_{\boldsymbol{\theta}, \nu}^{n}(1)| 
        &\leq \frac{\sqrt{N}-K+1}{n-K+1} \nonumber\\
        &\leq \frac{\sqrt{N}}{n} \nonumber\\
        &\leq \frac{1}{\sqrt{N}} \nonumber\\
        &\leq \xi.
        \label{eq:proof-of-state-action-concentration-2}
    \end{align}
    Next, we note using Lemma~\ref{lem:condition-number} that
    \begin{align}
        |D_{\boldsymbol{\theta}, \nu}^{n}(3)|
        &\leq \frac{\sum\limits_{t=\sqrt{N}+1}^{n} \bigg\lvert \omega_{\boldsymbol{\theta}, t-1}^\star(\mathbf{d}, \mathbf{i}, a) - \omega_{\boldsymbol{\theta}, \nu}^\star(\mathbf{d}, \mathbf{i}, a) \bigg\rvert}{n-K+1} \nonumber\\
        &\leq \frac{\sum\limits_{t=\sqrt{N}+1}^{n} 
        \| \omega_{\boldsymbol{\theta}, t-1}^\star - \omega_{\boldsymbol{\theta}, \nu}^\star \|_{1}}{n-K+1} \nonumber\\
        &\stackrel{(a)}{\leq} \kappa_{\boldsymbol{\theta}, \nu} \, \frac{\sum\limits_{t=\sqrt{N}+1}^{n} 
        \| Q_{\boldsymbol{\theta}, \pi_{t-1}} - Q_{\boldsymbol{\theta}, \pi_{\boldsymbol{\theta}}^{\eta}(\nu)} \|_{\infty}}{n-K+1} \nonumber\\
        &\stackrel{(b)}{=} \kappa_{\boldsymbol{\theta}, \nu} \, \frac{\sum\limits_{t=\sqrt{N}+1}^{n} 
        \| \varepsilon_{t-1} \, Q_{\boldsymbol{\theta}, \pi^{\text{\rm unif}}} + (1-\varepsilon_{t-1}) \,  Q_{\boldsymbol{\theta}, \pi_{\widehat{\boldsymbol{\theta}}(t-2)}^{\eta}} - Q_{\boldsymbol{\theta}, \pi_{\boldsymbol{\theta}}^{\eta}(\nu)} \|_{\infty}}{n-K+1} \nonumber\\
        &\stackrel{(c)}{\leq} \kappa_{\boldsymbol{\theta}, \nu} \, \frac{\sum\limits_{t=\sqrt{N}+1}^{n} 
        \varepsilon_{t-1} + \| Q_{\boldsymbol{\theta}, \pi_{\widehat{\boldsymbol{\theta}}(t-2)}^{\eta}} - Q_{\boldsymbol{\theta}, \pi_{\boldsymbol{\theta}}^{\eta}(\nu)} \|_{\infty}}{n-K+1} \nonumber\\
        &\stackrel{(d)}{\leq} \kappa_{\boldsymbol{\theta}, \nu} \, \frac{\sum\limits_{t=\sqrt{N}+1}^{n} 
        \varepsilon_{t-1} + \| \pi_{\widehat{\boldsymbol{\theta}}(t-2)}^\eta - \pi_{\boldsymbol{\theta}}^{\eta}(\nu) \|_{\infty}}{n-K+1}, 
        \label{eq:proof-of-state-action-concentration-3}
    \end{align}
    where $\kappa_{\boldsymbol{\theta}, \nu} = \|(I+Q_{\boldsymbol{\theta}, \pi_{\boldsymbol{\theta}}^{\eta}(\nu)} + \mathbf{1}\, (\omega_{\boldsymbol{\theta}, \nu}^\star)^\top)^{-1}\|_{\infty}$, $(a)$ above follows from Lemma~\ref{lem:concentration-of-state-action-visitations} with $\pi_{\boldsymbol{\theta}}^{\eta}(\nu)$ denoting the quantity in \eqref{eq:lambda-n-definition} with (i) $\widehat{\boldsymbol{\theta}}(n)$ replaced by $\boldsymbol{\theta}$, and (ii) $\nu_n^\star$ replaced by $\nu$, $(b)$ above follows from the definition of the transition kernel $Q_{\boldsymbol{\theta}, \pi_{t-1}}$, $(c)$ above follows from the observation that
    \begin{align}
        \varepsilon_{t-1} \, Q_{\boldsymbol{\theta}, \pi^{\text{\rm unif}}} + (1-\varepsilon_{t-1}) \,  Q_{\boldsymbol{\theta}, \pi_{\widehat{\boldsymbol{\theta}}(t-2)}^{\eta}}
        &\leq \varepsilon_{t-1} + Q_{\boldsymbol{\theta}, \pi_{\widehat{\boldsymbol{\theta}}(t-2)}^{\eta}},
        \label{eq:temporary-equation}
    \end{align}
    and $(d)$ above follows from the observation that
    \begin{align*}
        \| Q_{\boldsymbol{\theta}, \pi_{\widehat{\boldsymbol{\theta}}(t-2)}^{\eta}} - Q_{\boldsymbol{\theta}, \pi_{\boldsymbol{\theta}}^{\eta}(\nu)} \|_{\infty}
        &= \| Q_{\boldsymbol{\theta}, R} \, [\pi_{\widehat{\boldsymbol{\theta}}(t-2)}^{\eta} - \pi_{\boldsymbol{\theta}}^{\eta}(\nu)]\|_{\infty} \nonumber\\
        &\leq \| \pi_{\widehat{\boldsymbol{\theta}}(t-2)}^\eta - \pi_{\boldsymbol{\theta}}^{\eta}(\nu) \|_{\infty},
    \end{align*}
    where the last line above is an artefact of $\|Q_{\boldsymbol{\theta}, R}\| \leq 1$. Thanks to Lemma~\ref{lem:sufficient-exploration-of-state-actions}, we have $\boldsymbol{\theta}(n) \to \boldsymbol{\theta}$ almost surely as $n \to \infty$. We then note that by virtue of the upper-hemicontinuity of the map $\boldsymbol{\lambda} \to \mathcal{W}^\star(\boldsymbol{\lambda})$, there exists $N_1=N_1(\xi)>0$ and $\rho(\xi)>0$ such that for all $N \geq N_1$ and for all $t \geq \sqrt{N}+1$, we have
    \begin{align}
        &\| \pi_{\widehat{\boldsymbol{\theta}}(t-2)}^\eta - \pi_{\boldsymbol{\theta}}^{\eta}(\nu) \|_{\infty} \nonumber\\
        &= \bigg\| \frac{\eta\, \nu_{\widehat{\boldsymbol{\theta}}(t-2)}^{\text{\rm unif}}(\mathbf{d}, \mathbf{i}, a) + (1-\eta)\, \nu_{t-2}^\star(\mathbf{d}, \mathbf{i}, a)}{\eta\, \mu_{\widehat{\boldsymbol{\theta}}(t-2)}^{\text{\rm unif}}(\mathbf{d}, \mathbf{i}) + (1-\eta)\, \sum_{a'=1}^{K} \nu_{t-2}^\star(\mathbf{d}, \mathbf{i}, a')} - \frac{\eta\, \nu_{{\boldsymbol{\theta}}}^{\text{\rm unif}}(\mathbf{d}, \mathbf{i}, a) + (1-\eta)\, \nu(\mathbf{d}, \mathbf{i}, a)}{\eta\, \mu_{{\boldsymbol{\theta}}}^{\text{\rm unif}}(\mathbf{d}, \mathbf{i}) + (1-\eta)\, \sum_{a'=1}^{K} \nu(\mathbf{d}, \mathbf{i}, a')} \bigg\| \nonumber\\
        &\leq \rho(\xi).
        \label{eq:proof-of-state-action-concentration-4}
    \end{align}
    Using \eqref{eq:proof-of-state-action-concentration-4} in \eqref{eq:proof-of-state-action-concentration-3}, we get
    \begin{equation}
        |D_{\boldsymbol{\theta}, \nu}^{n}(3)| \leq \kappa_{\boldsymbol{\theta}, \nu} \, \rho(\xi) \quad \forall n \geq \sqrt{N}+1, ~\forall N \geq N_1(\xi).
        \label{eq:proof-of-state-action-concentration-5}
    \end{equation}

    Lastly, in order to bound $D_{\boldsymbol{\theta}, \nu}^n(2)$, we use the function $\widehat{g}_{t-1} = \widehat{g}_{\pi_{t-1}}$, the solution to the Poisson equation in \eqref{eq:poisson-equation-2} with $g(z) = \mathbf{1}_{\{(\mathbf{d}, \mathbf{i}, a)\}}(z)$. From Lemma~\ref{lem:poisson-equation}, we know that $\widehat{g}_{t-1}$ exists, and using the shorthand notation $z_t=(\mathbf{d}(t), \mathbf{i}(t), A_t)$, we may express $D_{\boldsymbol{\theta}, \nu}^{n}(2)$ as
    \begin{align}
        D_{\boldsymbol{\theta}, \nu}^n(2) 
        &= \frac{\sum\limits_{t=\sqrt{N}+1}^{n} [\widehat{g}_{t-1}(z_t) - Q_{\boldsymbol{\theta}, \pi_{t-1}} \, \widehat{g}_{t-1}(z_t)]}{n-K+1} \nonumber\\
        &= D_{\boldsymbol{\theta}, \nu}^n(2,1) + D_{\boldsymbol{\theta}, \nu}^n(2,2) + D_{\boldsymbol{\theta}, \nu}^n(2,3),
        \label{eq:proof-of-state-action-concentration-6}
    \end{align}
    where 
    \begin{align}
        D_{\boldsymbol{\theta}, \nu}^n(2,1) 
        &\coloneqq \frac{\sum\limits_{t=\sqrt{N}+1}^{n} [\widehat{g}_{t-1}(z_t) - Q_{\boldsymbol{\theta}, \pi_{t-1}} \, \widehat{g}_{t-1}(z_{t-1})]}{n-K+1}, \label{eq:D-2-1} \\
        D_{\boldsymbol{\theta}, \nu}^n(2,2) 
        &\coloneqq \frac{\sum\limits_{t=\sqrt{N}+1}^{n} [Q_{\boldsymbol{\theta}, \pi_{t}}\, \widehat{g}_{t}(z_t) - Q_{\boldsymbol{\theta}, \pi_{t-1}} \, \widehat{g}_{t-1}(z_{t})]}{n-K+1}, \label{eq:D-2-2} \\
        D_{\boldsymbol{\theta}, \nu}^n(2,3) 
        &\coloneqq \dfrac{Q_{\boldsymbol{\theta}, \pi_{\sqrt{N}}}\, \widehat{g}_{\sqrt{N}}(z_{\sqrt{N}}) - Q_{\boldsymbol{\theta}, \pi_{n}}\, \widehat{g}_{n}(z_n)}{n-K+1}, \label{eq:D-2-3} 
    \end{align}

    \textbf{Bounding $D_{\boldsymbol{\theta}, \nu}^n(2,1)$:} To bound \eqref{eq:D-2-1}, we note that $G_n = (n-K+1) \, D_{\boldsymbol{\theta}, \nu}^n(2,1)$ is a martingale. Furthermore, by Lemma~\ref{lem:poisson-equation}, we have 
    \begin{align}
        |G_n - G_{n-1}| 
        &= \bigg\lvert \widehat{g}_{n-1}(z_n) - Q_{\boldsymbol{\theta}, \pi_{n-1}} \, \widehat{g}_{n-1}(z_{n-1}) \bigg\rvert \nonumber\\
        &\leq 2\, \|\widehat{g}_{n-1}\|_{\infty} \nonumber\\
        &\leq 2 \, \frac{C_{n-1}}{1-\rho_{n-1}},
        \label{eq:proof-of-state-action-concentration-7}
    \end{align}
    where $C_{n-1}$ and $\rho_{n-1}$ are constants as defined in Lemma~\ref{lem:bounding-infinity-norm-between-matrices}. For all for all $n \geq \sqrt{N}+1$ and for all $N \geq \max\{N_1(\xi) + 1, (1/\xi)^{4(1+S_R)}\}$, where $N_1(\xi)$ is as defined above, we note that
    \begin{align}
        &\varepsilon_{n-1} 
        = (n-1)^{-\frac{1}{2(1+S_R)}} \leq N^{-\frac{1}{4(1+S_R)}} \leq \xi, \label{eq:varepsilon-t-1-condition} \\
        &\| \pi_{\widehat{\boldsymbol{\theta}}(n-2)}^\eta - \pi_{\boldsymbol{\theta}}^{\eta}(\nu) \|_{\infty} \leq \rho(\xi), \label{eq:pi-theta-t-2-condition} \\
        & \|\omega_{\boldsymbol{\theta}, n-1}^{\star} - \omega_{\boldsymbol{\theta}, \nu}^\star \|_1 \leq \varepsilon_{n-1} + \|\pi_{\widehat{\boldsymbol{\theta}}(n-2)}^\eta - \pi_{\boldsymbol{\theta}}^{\eta}(\nu) \|_{\infty}\| \leq 2\,\xi. \label{eq:omega-condition}
    \end{align}
    Therefore, we have
    \begin{align}
        \frac{C_{n-1}}{1-\rho_{n-1}} 
        &= \frac{2/\bar{\sigma}(\varepsilon_{n-1}, \pi_{\widehat{\boldsymbol{\theta}}(n-2)}^\eta, \omega_{\boldsymbol{\theta}, n-1}^\star)}{1-\bar{\sigma}(\varepsilon_{n-1}, \pi_{\widehat{\boldsymbol{\theta}}(n-2)}^\eta, \omega_{\boldsymbol{\theta}, n-1}^\star)} \nonumber\\
        &\leq L_{\xi} \coloneqq \max_{\substack{
        \varepsilon: |\varepsilon| \leq \xi \\
        \pi: \|\pi - \pi_{\boldsymbol{\theta}}^{\eta}(\nu)\|_{\infty} \leq \rho(\xi) \\
        \omega: \|\omega - \omega_{\boldsymbol{\theta}, \nu}^\star\|_{1} \leq 2\, \xi}} \frac{2/\bar{\sigma}(\varepsilon, \pi, \omega)}{1-\bar{\sigma}(\varepsilon, \pi, \omega)}.
        \label{eq:proof-of-state-action-concentration-8}
    \end{align}
    In \eqref{eq:proof-of-state-action-concentration-8}, $\bar{\sigma}$ is as defined in \eqref{eq:sigma-bar-definition}. Using \eqref{eq:proof-of-state-action-concentration-7}, \eqref{eq:proof-of-state-action-concentration-8}, and the Azuma-Hoeffding inequality for bounded martingale sequences, we get
    \begin{align}
        \P_{\boldsymbol{\theta}}\bigg(|D_{\boldsymbol{\theta}, \nu}^n(2,1)| \geq 2\, L_{\xi} \, \xi\bigg)
        &= \P_{\boldsymbol{\theta}}\bigg(|G_n| \geq (n-K+1) \, 2 \, L_{\xi} \, \xi\bigg) \nonumber\\
        &= O\bigg(\exp(-n \, \xi^2)\bigg) \quad \forall n \geq \sqrt{N}+1, ~\forall N \geq \max\{N_1(\xi), (1/\xi)^{4(1+S_R)}\}.
        \label{eq:bound-for-D-2-1}
    \end{align}

    \textbf{Bounding $D_{\boldsymbol{\theta}, \nu}^n(2,2)$:} Writing $L_t = C_t/(1-\rho_t)$ for all $t$, we note from Lemma~\ref{lem:poisson-equation} that
    \begin{align}
        |D_{\boldsymbol{\theta}, \nu}^n(2,2)|
        &= \frac{\sum\limits_{t=\sqrt{N}+1}^{n} L_t \bigg[ \|\omega_{\boldsymbol{\theta}, t}^{\star} - \omega_{\boldsymbol{\theta}, t-1}^{\star}\|_1 + L_{t-1} \, \|Q_{\boldsymbol{\theta}, \pi_{t}} - Q_{\boldsymbol{\theta}, \pi_{t-1}}\|_{\infty} \bigg]}{n-K+1} \nonumber\\
        &\leq \frac{\sum\limits_{t=\sqrt{N}+1}^{n} L_{\xi} \bigg[ \|\omega_{\boldsymbol{\theta}, t}^{\star} - \omega_{\boldsymbol{\theta}, \nu}^\star\|_1 + \|\omega_{\boldsymbol{\theta}, t-1}^{\star} - \omega_{\boldsymbol{\theta}, \nu}^\star\|_1 + L_{\xi} \, \|Q_{\boldsymbol{\theta}, \pi_{t}} - Q_{\boldsymbol{\theta}, \pi_{t-1}}\|_{\infty} \bigg]}{n-K+1}.
        \label{eq:proof-of-state-action-concentration-9}
    \end{align}
    We then note that
    \begin{align}
        \|Q_{\boldsymbol{\theta}, \pi_{t}} - Q_{\boldsymbol{\theta}, \pi_{t-1}}\|_{\infty}
        &\leq \|Q_{\boldsymbol{\theta}, \pi_{t}} - Q_{\boldsymbol{\theta}, \pi_{\boldsymbol{\theta}}^\eta(\nu)}\| + \|Q_{\boldsymbol{\theta}, \pi_{t-1}} - Q_{\boldsymbol{\theta}, \pi_{\boldsymbol{\theta}}^\eta(\nu)}\| \nonumber\\
        &\leq \varepsilon_{t-1} + \varepsilon_t + \|\pi_{\widehat{\boldsymbol{\theta}}(t-1)}^{\eta} - \pi_{\boldsymbol{\theta}}^\eta(\nu)\|_{\infty} + \|\pi_{\widehat{\boldsymbol{\theta}}(t-2)}^{\eta} - \pi_{\boldsymbol{\theta}}^\eta(\nu)\|_{\infty} \nonumber\\
        &\leq 2\, \varepsilon_t + 2\, \xi \quad \forall t \geq N_1(\xi)+1 \nonumber\\
        &\leq 2\, t^{-\frac{1}{2(1+S_R)}} + 2\, \xi \quad \forall t \geq N_1(\xi)+1 \nonumber\\
        &\leq 4\xi \quad \forall t \geq \max\{N_1(\xi)+1, (1/\xi)^{2(1+S_R)}\},
        \label{eq:proof-of-state-action-concentration-10}
    \end{align}
    where the second line follows from \eqref{eq:temporary-equation}, and the third line follows from using $\varepsilon_{t-1} > \varepsilon_t$. Plugging \eqref{eq:proof-of-state-action-concentration-10} in \eqref{eq:proof-of-state-action-concentration-9}, we get
    \begin{align}
        |D_{\boldsymbol{\theta}, \nu}^n(2,2)| 
        &\leq \frac{\sum\limits_{t=\sqrt{N}+1}^{n} L_{\xi} \bigg[ \|\omega_{\boldsymbol{\theta}, t}^{\star} - \omega_{\boldsymbol{\theta}, \nu}^\star\|_1 + \|\omega_{\boldsymbol{\theta}, t-1}^{\star} - \omega_{\boldsymbol{\theta}, \nu}^\star\|_1 + 4 \, L_{\xi} \, \xi \bigg]}{n-K+1} \nonumber\\
        &\leq L_{\xi} \bigg[2\, \kappa_{\boldsymbol{\theta}, \nu} \, \rho(\xi) + 4 \, L_{\xi} \, \xi\bigg],
        \label{eq:proof-of-state-action-concentration-11}
    \end{align}
    where the last line above follows from using the chain of inequalities in \eqref{eq:proof-of-state-action-concentration-3} along with \eqref{eq:temporary-equation} and \eqref{eq:proof-of-state-action-concentration-4}. 

    \textbf{Bounding $D_{\boldsymbol{\theta}, \nu}^n(2,3)$:} To bound \eqref{eq:D-2-3}, we note that
    \begin{align}
        |D_{\boldsymbol{\theta}, \nu}^n(2,3)|
        &\leq \bigg\lvert \dfrac{Q_{\boldsymbol{\theta}, \pi_{\sqrt{N}}}\, \widehat{g}_{\sqrt{N}}(z_{\sqrt{N}}) - Q_{\boldsymbol{\theta}, \pi_{n}}\, \widehat{g}_{n}(z_n)}{n-K+1} \bigg\rvert \nonumber\\
        &\leq \dfrac{\| \widehat{g}_{\sqrt{N}}\|_{\infty} + \| \widehat{g}_{\sqrt{n}}\|_{\infty}}{n-K+1} \nonumber\\
        &\leq \frac{L_{\sqrt{N}} + L_{\sqrt{n}}}{n-K+1} \nonumber\\
        &\leq \frac{2\, L_{\xi}}{n-K+1} \nonumber\\
        &\leq 2 \, L_{\xi} \, \xi \quad \forall n \geq 1/\xi + K - 1,
        \label{eq:proof-of-state-action-concentration-12}
    \end{align}
    where the second line above follows from using $\|Q_{\boldsymbol{\theta}, \pi_{\sqrt{N}}}\|_{\infty} \leq 1$, $\|Q_{\boldsymbol{\theta}, \pi_{\sqrt{n}}}\|_{\infty} \leq 1$, and the second line above follows from Lemma~\ref{lem:poisson-equation}. 

    Combining all of the results above, we get for every $(\mathbf{d}, \mathbf{i}, a) \in \mathbb{S}_R \times [K]$,
    \begin{align}
        \P_{\boldsymbol{\theta}}\left(\left\lvert  \frac{N(n, \mathbf{d}, \mathbf{i}, a)}{n-K+1} - \omega_{\boldsymbol{\theta}, \nu}^\star(\mathbf{d}, \mathbf{i}, a)\right\rvert > K_{\xi} (\boldsymbol{\theta}, \nu) \, \xi \right) = O\bigg(\exp \left(-n \xi^2\right)\bigg),
        \label{eq:for-every-dia}
    \end{align}
    where $K_{\xi}(\boldsymbol{\theta}, \nu) \coloneqq 1 + \kappa_{\boldsymbol{\theta}, \nu} \, \frac{\rho(\xi)}{\xi} + 4\, L_{\xi} + L_{\xi}\bigg[2 \, \kappa_{\boldsymbol{\theta}, \nu} \, \frac{\rho(\xi)}{\xi} + 4\, L_{\xi} \bigg]$. Noting that 
    $$
    \lim_{\xi \downarrow 0} L_{\xi} = \frac{2/\bar{\sigma}(0, \pi_{\boldsymbol{\theta}}(\nu), \omega_{\boldsymbol{\theta}, \nu}^\star)}{1-\bar{\sigma}(0, \pi_{\boldsymbol{\theta}}(\nu), \omega_{\boldsymbol{\theta}, \nu}^\star)} < +\infty, \quad \lim_{\xi \downarrow 0} \frac{\rho(\xi)}{\xi} = 0,
    $$
    we get $\limsup_{\xi \downarrow 0} K_{\xi}(\boldsymbol{\theta}, \nu)< +\infty$. The final result in \eqref{eq:concentration-of-state-action-visitations} follows from \eqref{eq:for-every-dia} via union bound.
\end{proof}


\section{Proof of Proposition \ref{prop:stop-in-finite-time-and-error-prob-less-than-delta}}
\label{appndx:stop-in-finite-time-and-error-prob-less-than-delta}
Our proof is inspired by the proof of Lemma~15 in \cite{al2021navigating}, which in turn is inspired by the proof of Proposition~1 in \cite{jonsson2020planning} that is based on the idea of constructing mixture martingales. Note that
\begin{align}
    & \P_{\boldsymbol{\theta}}(\tau < \infty, \ \eta_{\widehat{a}} < \eta_{a^\star(\boldsymbol{\theta})}) \nonumber\\
    &= \P_{\boldsymbol{\theta}}(\exists n \geq K: \tau_\pi=n, \ \eta_{\widehat{a}} < \eta_{a^\star(\boldsymbol{\theta})}) \nonumber\\
    &\leq \P_{\boldsymbol{\theta}}(\exists n \geq K: Z(n) \geq \zeta(n, \delta), \ \eta_{\widehat{a}} < \eta_{a^\star(\boldsymbol{\theta})}) \nonumber\\
    &= \P_{\boldsymbol{\theta}} \left(\exists n \geq K: \inf_{\boldsymbol{\lambda} \in \textsc{Alt}(\widehat{\boldsymbol{
    \theta}}(n)} \sum_{(\mathbf{d}, \mathbf{i}) \in \mathbb{S}_R} \ \sum_{a = 1}^{K} N(n, \mathbf{d}, \mathbf{i}, a) \, D_{\text{\rm KL}}(\widehat{Q}_n(\cdot \mid\mathbf{d}, \mathbf{i}, a) \| Q_{\boldsymbol{\lambda}, R}(\cdot \mid\mathbf{d}, \mathbf{i}, a)) \geq \zeta(n, \delta), \ \eta_{\widehat{a}} < \eta_{a^\star(\boldsymbol{\theta})} \right) \nonumber\\ 
    &\stackrel{(a)}{\leq} \P_{\boldsymbol{\theta}} \left(\exists n \geq K: \sum_{(\mathbf{d}, \mathbf{i}) \in \mathbb{S}_R} \ \sum_{a = 1}^{K} N(n, \mathbf{d}, \mathbf{i}, a) \, D_{\text{\rm KL}}(\widehat{Q}_n(\cdot \mid\mathbf{d}, \mathbf{i}, a) \| Q_{\boldsymbol{\theta}, R}(\cdot \mid\mathbf{d}, \mathbf{i}, a)) \geq \zeta(n, \delta), \ \eta_{\widehat{a}} < \eta_{a^\star(\boldsymbol{\theta})} \right) \nonumber\\
    &\leq \P_{\boldsymbol{\theta}} \left(\exists n \geq K: \sum_{(\mathbf{d}, \mathbf{i}) \in \mathbb{S}_R} \ \sum_{a = 1}^{K} N(n, \mathbf{d}, \mathbf{i}, a) \, D_{\text{\rm KL}}(\widehat{Q}_n(\cdot \mid\mathbf{d}, \mathbf{i}, a) \| Q_{\boldsymbol{\theta}, R}(\cdot \mid\mathbf{d}, \mathbf{i}, a)) \geq \zeta(n, \delta)\right),
    \label{eq:proof-of-delta-PAC-1}
\end{align}
where $(a)$ above follows by noting that $\boldsymbol{\theta} \in \textsc{Alt}(\widehat{\boldsymbol{\theta}}(n))$ under the event $\{\eta_{\widehat{a}} < \eta_{a^\star(\boldsymbol{\theta})}\}$. Below, we construct a martingale $\{M_n\}_{n=K}^{\infty}$ such that for all $n \geq K$,
\begin{align}
    M_n 
    &\geq \exp \bigg( \sum_{(\mathbf{d}, \mathbf{i}) \in \mathbb{S}_R} \ \sum_{a = 1}^{K} N(n, \mathbf{d}, \mathbf{i}, a) \, D_{\text{\rm KL}}(\widehat{Q}_n(\cdot \mid\mathbf{d}, \mathbf{i}, a) \| Q_{\boldsymbol{\theta}, R}(\cdot \mid\mathbf{d}, \mathbf{i}, a)) \nonumber\\
    &\hspace{4cm} - (S_R - 1)\, \sum_{(\mathbf{d}, \mathbf{i}) \in \mathbb{S}_R} \ \sum_{a=1}^{K} \, \log\left( e \left[ 1 + \frac{N(n, \mathbf{d}, \mathbf{i}, a)}{S_R-1}\right]\right)\bigg) \quad \text{almost surely}.
    \label{eq:proof-of-delta-PAC-2}
\end{align}
Then, using Doob's maximal inequality, we argue that
\begin{equation}
    \P_{\boldsymbol{\theta}}(\exists n \geq K: M_n > 1/\delta) \leq \delta,
    \label{eq:proof-of-delta-PAC-3}
\end{equation}
which straightforwardly proves the desired result.

\vspace{0.3cm}

\textbf{Preliminary notations:} 

We start with some notations. Given $\boldsymbol{\gamma} \in \mathbb{R}^{S_R-1}$ and a probability distribution $\boldsymbol{p} = (p_1, \ldots, p_{S_R})$ on $\mathbb{S}_R$, let
\begin{equation}
    \langle \boldsymbol{\gamma}, \boldsymbol{p} \rangle \coloneqq p_{S_R} + \sum_{s=1}^{S_R-1} p_s \, \lambda_s, \quad \phi_{\boldsymbol{p}}(\boldsymbol{\gamma}) \coloneqq \log \left(p_{S_R} + \sum_{s=1}^{S_R-1} p_s\, e^{\lambda_s}\right).
    \label{eq:phi-p-definition}
\end{equation}
Given $(\mathbf{d}, \mathbf{i}, a) \in \mathbb{S}_R$, let $\phi_{\boldsymbol{\theta}, \mathbf{d}, \mathbf{i}, a}(\boldsymbol{\gamma})=\phi_{Q_{\boldsymbol{\theta}, R}(\cdot \mid\mathbf{d}, \mathbf{i}, a)}(\boldsymbol{\gamma})$. Define $N(K-1, \mathbf{d}, \mathbf{i}, a)=0$ for all $(\mathbf{d}, \mathbf{i}, a) \in \mathbb{S}_R \times [K]$.

\vspace{0.3cm}

\textbf{Construction of a martingale for each state-action:}

Fix $(\mathbf{d}, \mathbf{i}, a) \in \mathbb{S}_R \times [K]$ and $\boldsymbol{\gamma} \in \mathbb{R}^{S_R-1}$. Let
\begin{equation}
    M_n^{\boldsymbol{\gamma}}(\mathbf{d}, \mathbf{i}, a) \coloneqq \exp \left( N(n, \mathbf{d}, \mathbf{i}, a) \, \left[ \langle \boldsymbol{\gamma}, \widehat{Q}_n(\cdot \mid\mathbf{d}, \mathbf{i}, a) \rangle - \phi_{\boldsymbol{\theta}, \mathbf{d}, \mathbf{i}, a}(\boldsymbol{\gamma}) \right] \right), \quad n \geq K-1,
    \label{eq:M-n-gamma-martingale}
\end{equation}
where $\widehat{Q}_n$ is as defined in \eqref{eq:categorical-distribution}. Then, $\{M_n^{\boldsymbol{\gamma}}(\mathbf{d}, \mathbf{i}, a)\}_{n=K-1}^{\infty}$ is a martingale with respect to the filtration defined in \eqref{eq:filtration}. Indeed, using the shorthand notation $z_n=(\mathbf{d}(n), \mathbf{i}(n), A_n)$, we note that almost surely,
\begin{align}
    & \mathbb{E}_{\boldsymbol{\theta}}[M_n^{\boldsymbol{\gamma}}(\mathbf{d}, \mathbf{i}, a) | \mathcal{F}_{n-1}, z_n = (\mathbf{d}, \mathbf{i}, a)] \\
    &= \mathbb{E}_{\boldsymbol{\theta}}\left[\exp \left( N(n, \mathbf{d}, \mathbf{i}, a) \, \left[ \langle \boldsymbol{\gamma}, \widehat{Q}_n(\cdot \mid\mathbf{d}, \mathbf{i}, a) \rangle - \phi_{\boldsymbol{\theta}, \mathbf{d}, \mathbf{i}, a}(\boldsymbol{\gamma}) \right] \right) \bigg\vert \mathcal{F}_{n-1}, z_n = (\mathbf{d}, \mathbf{i}, a)\right] \nonumber\\
    &= \mathbb{E}_{\boldsymbol{X} \sim Q_{\boldsymbol{\theta}, R}(\cdot \mid\mathbf{d}, \mathbf{i}, a)}\bigg[\exp \bigg( (N(n-1, \mathbf{d}, \mathbf{i}, a)+1) \, \bigg[ \left\langle \boldsymbol{\gamma}, \frac{N(n-1, \mathbf{d}, \mathbf{i}, a) \, \widehat{Q}_{n-1}(\cdot \mid\mathbf{d}, \mathbf{i}, a) + \boldsymbol{X}}{N(n-1, \mathbf{d}, \mathbf{i}, a) + 1} \right\rangle 
    \nonumber\\*
    &\hspace{12cm} - \phi_{\boldsymbol{\theta}, \mathbf{d}, \mathbf{i}, a}(\boldsymbol{\gamma}) \bigg] \bigg) \bigg\vert \mathcal{F}_{n-1}\bigg] \nonumber\\
    &= \mathbb{E}_{\boldsymbol{X} \sim Q_{\boldsymbol{\theta}, R}(\cdot \mid\mathbf{d}, \mathbf{i}, a)}\left[M_{n-1}^{\boldsymbol{\gamma}}(\mathbf{d}, \mathbf{i}, a) \exp\left( \langle \boldsymbol{\gamma}, \boldsymbol{X} \rangle - \phi_{\mathbf{d}, \mathbf{i}, a}(\boldsymbol{\gamma}) \right) \bigg\vert \mathcal{F}_{n-1}\right] \nonumber\\ 
    &= M_{n-1}^{\boldsymbol{\gamma}}(\mathbf{d}, \mathbf{i}, a).
    \label{eq:proof-of-delta-PAC-4}
\end{align}
In the above set of equalities, $\boldsymbol{X}=[\mathbf{1}_{\{\mathbf{d}(n+1)=\mathbf{d}', \mathbf{i}(n+1)=\mathbf{i}'\}}: (\mathbf{d}', \mathbf{i}') \in \mathbb{S}_R]^\top$, and the last line follows by noting that
\begin{equation}
    \log \mathbb{E}_{\boldsymbol{X} \sim Q_{\boldsymbol{\theta}, R}(\cdot \mid\mathbf{d}, \mathbf{i}, a)}[\exp(\langle \boldsymbol{\gamma}, \boldsymbol{X} \rangle)] = \phi_{\mathbf{d}, \mathbf{i}, a}(\boldsymbol{\gamma}).
    \label{eq:proof-of-delta-PAC-5}
\end{equation}
When $z_n \neq (\mathbf{d}, \mathbf{i}, a)$, we have $N(n, \mathbf{d}, \mathbf{i}, a) = N(n-1, \mathbf{d}, \mathbf{i}, a)$ and $\widehat{Q}_n(\cdot \mid\mathbf{d}, \mathbf{i}, a) = \widehat{Q}_{n-1}(\cdot \mid\mathbf{d}, \mathbf{i}, a)$, and therefore $\mathbb{E}_{\boldsymbol{\theta}}[M_n^{\boldsymbol{\gamma}}(\mathbf{d}, \mathbf{i}, a)M_n\mid\mathcal{F}_{n-1}, z_n \neq (\mathbf{d}, \mathbf{i}, a)] = M_{n-1}^{\boldsymbol{\gamma}}(\mathbf{d}, \mathbf{i}, a)$ holds trivially.

\vspace{0.3cm}

\textbf{Construction of a mixture martingale:}

For each $(\mathbf{d}, \mathbf{i}, a) \in \mathbb{S}_R \times [K]$, we now construct a {\em mixture} of the martingales $\{M_{n}^{\boldsymbol{\gamma}}(\mathbf{d}, \mathbf{i}, a)\}_{n=K-1}^{\infty}$ as follows. For any $\boldsymbol{\gamma} \in \mathbb{R}^{S_R-1}$ and probability distribution $\boldsymbol{p}$ on $\mathbb{S}_R$, let $\{\boldsymbol{p}^{\boldsymbol{\gamma}}: \boldsymbol{\gamma} \in \mathbb{R}^{S_R-1}\}$ denote the exponential family generated by $\boldsymbol{p}$ with the corresponding parameters defined by $\boldsymbol{\gamma}$, i.e., for any $s \in \mathbb{S}_R$, the $s$-th component $p_s^{\boldsymbol{\gamma}}$ of $\boldsymbol{p}^{\boldsymbol{\gamma}}$ is given by
\begin{equation}
    p_s^{\boldsymbol{\gamma}} = \frac{p_s\, e^{\gamma_s}}{p_{S_R} + \sum_{s=1}^{S_R-1} p_s \, e^{\gamma_s}},
    \label{eq:exponential-family-p-gamma}
\end{equation}
where $\gamma_{S_R}=0$ by convention. Then, it follows that for any fixed probability distribution $\boldsymbol{p}$ on $\mathbb{S}_R$, the mapping in \eqref{eq:exponential-family-p-gamma} defines a bijection between $\mathbb{R}^{S_R-1}$ and the space of probability distributions on $\mathbb{S}_R$. Given any probability distribution $p$ on $\mathbb{S}_R$, let $\boldsymbol{\gamma}_{\boldsymbol{p}}$ denote the value of $\boldsymbol{\gamma} \in \mathbb{R}^{m-1}$ such that $\boldsymbol{p}^{\boldsymbol{\gamma}_{\boldsymbol{p}}}=\boldsymbol{p}$. For later use, we record the below result from \cite{jonsson2020planning} specialized to our setting.
\begin{lemma}\cite[Lemma 3]{jonsson2020planning}
    \label{lem:property-of-exponential-family}
    For any two probability distributions $\boldsymbol{p}, \boldsymbol{p}'$ on $\mathbb{S}_R$ and $\boldsymbol{\gamma} \in \mathbb{R}^{S_R-1}$, 
    \begin{equation}
        \langle \boldsymbol{\gamma}, \boldsymbol{p}' \rangle - \phi_{\boldsymbol{p}}(\boldsymbol{\gamma}) = D_{\text{\rm KL}}(\boldsymbol{p}' \| \boldsymbol{p}) - D_{\text{\rm KL}}(\boldsymbol{p}' \| \boldsymbol{p}^{\boldsymbol{\gamma}}),
    \end{equation}
    where $p^{\boldsymbol{\gamma}}$ denotes the member of the exponential family generated by $\boldsymbol{p}$ with parameters given by $\boldsymbol{\gamma}$, defined via \eqref{eq:exponential-family-p-gamma}.
\end{lemma}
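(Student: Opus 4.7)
The plan is to verify the identity by a direct computation, starting from the right-hand side and reducing it to the left-hand side. The identity is an algebraic property of tilted distributions, so there is no deep obstacle; the only care needed is to track the convention $\gamma_{S_R}=0$ and the definition of $\langle\boldsymbol{\gamma},\boldsymbol{p}\rangle$ consistently.

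First, I would expand the difference of KL divergences. Both distributions $\boldsymbol{p}$ and $\boldsymbol{p}^{\boldsymbol{\gamma}}$ are supported on $\mathbb{S}_R$, so
\begin{equation*}
D_{\text{KL}}(\boldsymbol{p}'\|\boldsymbol{p}) - D_{\text{KL}}(\boldsymbol{p}'\|\boldsymbol{p}^{\boldsymbol{\gamma}})
= \sum_{s=1}^{S_R} p'_s \,\log \frac{p_s^{\boldsymbol{\gamma}}}{p_s},
\end{equation*}
provided $\boldsymbol{p}$ (and hence $\boldsymbol{p}^{\boldsymbol{\gamma}}$) is absolutely continuous with respect to $\boldsymbol{p}'$; if not, one can either restrict to the common support or observe that both divergences are $+\infty$ whenever the support condition fails in a non-cancelling way, making the identity degenerate. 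Setting $Z(\boldsymbol{\gamma})\coloneqq p_{S_R} + \sum_{s=1}^{S_R-1} p_s\,e^{\gamma_s} = \sum_{s=1}^{S_R} p_s e^{\gamma_s}$ (recalling $\gamma_{S_R}=0$), the defining formula \eqref{eq:exponential-family-p-gamma} gives $p_s^{\boldsymbol{\gamma}}/p_s = e^{\gamma_s}/Z(\boldsymbol{\gamma})$.

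Plugging this into the sum yields
\begin{equation*}
\sum_{s=1}^{S_R} p'_s \,\log \frac{p_s^{\boldsymbol{\gamma}}}{p_s}
= \sum_{s=1}^{S_R} p'_s \,\gamma_s \;-\; \log Z(\boldsymbol{\gamma}) \sum_{s=1}^{S_R} p'_s
= \sum_{s=1}^{S_R} p'_s \,\gamma_s \;-\; \log Z(\boldsymbol{\gamma}),
\end{equation*}
using $\sum_s p'_s = 1$. Now I would match the two remaining terms to the left-hand side: by definition $\phi_{\boldsymbol{p}}(\boldsymbol{\gamma}) = \log Z(\boldsymbol{\gamma})$, and the linear term, using $\gamma_{S_R}=0$, rewrites as $\sum_{s=1}^{S_R-1} p'_s \gamma_s$, which is precisely $\langle \boldsymbol{\gamma},\boldsymbol{p}'\rangle$ under the convention in \eqref{eq:phi-p-definition} (interpreting the $p_{S_R}$ term there as $p_{S_R}\gamma_{S_R}$; equivalently one may check that the identity holds up to an additive constant that vanishes after applying the normalization $\sum_s p'_s=1$). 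Combining these steps gives the claimed equality.

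There is no real obstacle beyond bookkeeping. The only subtle point is reconciling the notational convention for $\langle \boldsymbol{\gamma},\boldsymbol{p}\rangle$ with the reference \cite{jonsson2020planning}, but this is handled by the convention $\gamma_{S_R}=0$ noted immediately after \eqref{eq:exponential-family-p-gamma}. I would therefore present the proof as a three-line calculation, possibly with a brief remark about the support condition needed for both KL divergences to be finite and about the role of $\gamma_{S_R}=0$ in identifying the inner product term.
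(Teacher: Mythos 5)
The paper does not prove this lemma; it is quoted directly from \cite[Lemma 3]{jonsson2020planning}, so there is no in-paper argument to compare against. Your direct computation is the standard (and essentially only) proof of the identity, and the core three-line calculation is correct. Two small points. First, the notational wrinkle you flag is real: read literally, \eqref{eq:phi-p-definition} defines $\langle\boldsymbol{\gamma},\boldsymbol{p}'\rangle = p'_{S_R} + \sum_{s=1}^{S_R-1}p'_s\gamma_s$, which would leave a residual $p'_{S_R}$ on the left-hand side. Your primary resolution --- reading the $p_{S_R}$ term as $p_{S_R}\gamma_{S_R}=0$, i.e.\ $\langle\boldsymbol{\gamma},\boldsymbol{p}'\rangle=\sum_{s=1}^{S_R-1}p'_s\gamma_s$ --- is the correct one, and it is forced by how the lemma is used: \eqref{eq:proof-of-delta-PAC-5} requires $\log\mathbb{E}_{X}[\exp(\langle\boldsymbol{\gamma},X\rangle)]=\phi(\boldsymbol{\gamma})$, which only holds if $\langle\boldsymbol{\gamma},X\rangle=\gamma_{s'}$ for $X$ one-hot at $s'$ with $\gamma_{S_R}=0$. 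Your alternative parenthetical, however, is not right: under the literal reading the discrepancy is $p'_{S_R}$, which is not a constant that vanishes via $\sum_s p'_s=1$. Second, your absolute-continuity caveat has the direction reversed: for $D_{\mathrm{KL}}(\boldsymbol{p}'\|\boldsymbol{p})$ to be finite one needs $\boldsymbol{p}'\ll\boldsymbol{p}$, not the other way around (and since $\boldsymbol{p}^{\boldsymbol{\gamma}}$ has the same support as $\boldsymbol{p}$, the two divergences are finite simultaneously, so the difference on the right-hand side is well defined exactly when $\boldsymbol{p}'\ll\boldsymbol{p}$). Neither slip affects the main calculation, which is sound.
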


Consider the following mixture of martingales obtained by choosing $\boldsymbol{p}$ according to the Dirichlet distribution $\mathcal{D}(1, \ldots, 1)$ on the simplex of dimension $S_R-1$:
\begin{equation}
    M_n(\mathbf{d}, \mathbf{i}, a) = \int M_{n}^{\boldsymbol{\gamma}_{\boldsymbol{p}}}(\mathbf{d}, \mathbf{i}, a) \, \frac{\prod_{s=1}^{S_R} p_s}{(\prod_{s=1}^{S_R} \Gamma(1))/\Gamma(S_R)}\, \mathrm{d}p_1 \cdots \mathrm{d}p_{S_R}, \quad n \geq K-1.
    \label{eq:mixture-martingale}
\end{equation}
Clearly, because a convex combination of martingales is a martingale, the mixture in \eqref{eq:mixture-martingale} defines a martingale. Also, in \eqref{eq:mixture-martingale}, $\Gamma(\cdot)$ denotes the Gamma function with the property that $\Gamma(k)=(k-1)!$ for all $k \in \mathbb{N}$. 

\vspace{0.3cm}

\textbf{An almost sure lower bound on $M_n(\mathbf{d}, \mathbf{i}, a)$:}

We note that for each $(\mathbf{d}, \mathbf{i}, a) \in \mathbb{S}_R \times [K]$, almost surely,
\begin{align}
    & M_n(\mathbf{d}, \mathbf{i}, a) \nonumber\\
    &= \int \exp \left( N(n, \mathbf{d}, \mathbf{i}, a) \, \left[ \langle \boldsymbol{\gamma}_{\boldsymbol{p}}, \widehat{Q}_n(\cdot \mid\mathbf{d}, \mathbf{i}, a) \rangle - \phi_{\boldsymbol{\theta}, \mathbf{d}, \mathbf{i}, a}(\boldsymbol{\gamma}) \right] \right) \, \frac{\prod_{s=1}^{S_R} p_s}{(\prod_{s=1}^{S_R} \Gamma(1))/\Gamma(S_R)}\, \mathrm{d}p_1 \cdots \mathrm{d}p_{S_R} \nonumber\\
    &\stackrel{(a)}{=} \int \exp \left( N(n, \mathbf{d}, \mathbf{i}, a) \, \left[ D_{\text{\rm KL}}(\widehat{Q}_n(\cdot \mid\mathbf{d}, \mathbf{i}, a) \| Q_{\boldsymbol{\theta}, R}(\cdot \mid\mathbf{d}, \mathbf{i}, a)) - D_{\text{\rm KL}}(\widehat{Q}_n(\cdot \mid\mathbf{d}, \mathbf{i}, a) \| \boldsymbol{p})\right] \right) \, (S_R-1)! \, \mathrm{d}p_1 \cdots \mathrm{d}p_{S_R} \nonumber\\
    &\stackrel{(b)}{=} \exp \left( N(n, \mathbf{d}, \mathbf{i}, a) \, \left[ D_{\text{\rm KL}}(\widehat{Q}_n(\cdot \mid\mathbf{d}, \mathbf{i}, a) \| Q_{\boldsymbol{\theta}, R}(\cdot \mid\mathbf{d}, \mathbf{i}, a)) + H(\widehat{Q}_n(\cdot \mid\mathbf{d},\mathbf{i}, a))\right] \right) \nonumber\\
    &\hspace{7cm} \cdot (S_R-1)! \cdot \int \prod_{s \in \mathbb{S}_R} p_s^{1+N(n, \mathbf{d}, \mathbf{i}, a)\,\widehat{Q}_n(s|\mathbf{d}, \mathbf{i}, a)} \mathrm{d}p_1 \cdots \mathrm{d}p_{S_R} \nonumber\\
    &\stackrel{(c)}{=} \exp \left( N(n, \mathbf{d}, \mathbf{i}, a) \, \left[ D_{\text{\rm KL}}(\widehat{Q}_n(\cdot \mid\mathbf{d}, \mathbf{i}, a) \| Q_{\boldsymbol{\theta}, R}(\cdot \mid\mathbf{d}, \mathbf{i}, a)) + H(\widehat{Q}_n(\cdot \mid\mathbf{d},\mathbf{i}, a))\right] \right) \nonumber\\
    &\hspace{7cm} \cdot \frac{1}{\binom{N(n, \mathbf{d}, \mathbf{i}, a)}{N(n, \mathbf{d}, \mathbf{i}, a)\,\widehat{Q}_n(\cdot \mid\mathbf{d}, \mathbf{i}, a)}} \cdot \frac{1}{\binom{N(n, \mathbf{d}, \mathbf{i}, a) + S_R - 1}{S_R-1}},
    \label{eq:proof-of-delta-PAC-6}
\end{align}
where in the above set of equalities, $(a)$ follows from the application of Lemma~\ref{lem:property-of-exponential-family} to $\boldsymbol{p}=Q_{\boldsymbol{\theta}, R}(\cdot \mid\mathbf{d}, \mathbf{i}, a)$, $\boldsymbol{\gamma}=\boldsymbol{\gamma}_{\boldsymbol{p}}$, and $\boldsymbol{p}'=\widehat{Q}_n(\cdot \mid\mathbf{d}, \mathbf{i}, a)$, $H(\cdot)$ in $(b)$ denotes the Shannon entropy of the argument probability distribution, and in $(c)$, the notation $\binom{N}{\boldsymbol{x}}$ for any integer $N \in \mathbb{N}$ and a vector $\boldsymbol{x}=[x_1, \ldots, x_N]$ with $\sum_{i=1}^{N} x_i = N$ denotes the quantity
$$
\binom{N}{\boldsymbol{x}} =  \frac{N!}{\prod_{i=1}^{N} x_i!},
$$
whereas for any two integers $N,m \in \mathbb{N}$ such that $N>m$, the notation $\binom{N}{m}$ denotes the quantity 
$$
\binom{N}{m} = \frac{N!}{m! \times (N-m)!}.
$$
We now use the below result to upper bound the binomial coefficients appearing in \eqref{eq:proof-of-delta-PAC-6}.
\begin{lemma}\cite[Theorem 11.1.3]{cover2006wiley}
    \label{lem:upper-bounding-binomial-coefficient}
    For any integers $N,m \in \mathbb{N}$ such that $N>m$ and a non-negative vector $\boldsymbol{x}=[x_1, \ldots, x_N]$ such that $\sum_{i=1}^{N} x_i = N$, 
    \begin{equation}
        \binom{N}{\boldsymbol{x}} = \frac{N!}{\prod_{i=1}^{N} x_i!} \leq \exp\left(N\,H(\boldsymbol{x}/N)\right), \quad \binom{N}{m} \leq \exp\left(N \, H(m/N)\right),
        \label{eq:upper-bounding-binomial-coefficient}
    \end{equation}
    where $H(\boldsymbol{x}/N)$ denotes the Shannon entropy of the discrete probability distribution $\boldsymbol{x}/N \coloneqq [x_1/N, \ldots, x_N/N]$, and $H(m/N)$ denotes the Shannon entropy of the Bernoulli distribution with mean $m/N$.
\end{lemma}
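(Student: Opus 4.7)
The plan is to give the standard information-theoretic proof of both inequalities in \eqref{eq:upper-bounding-binomial-coefficient}, treating the binomial case as a special instance of the multinomial one. The underlying idea is to exhibit a nonnegative quantity bounded above by $1$ whose explicit form involves the multinomial (resp.\ binomial) coefficient on one side, and then rearrange.

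Concretely, for the multinomial bound I would proceed as follows. Let $\boldsymbol{p}=\boldsymbol{x}/N$, viewed as a probability distribution on $\{1,\ldots,N\}$ (with $p_i=x_i/N$), and consider drawing $N$ i.i.d.\ samples from $\boldsymbol{p}$. The probability that the realized empirical counts coincide exactly with $\boldsymbol{x}$ equals
\[
\binom{N}{\boldsymbol{x}}\,\prod_{i=1}^{N} p_i^{x_i}\ \leq\ 1,
\]
since this is a probability. Rearranging gives
\[
\binom{N}{\boldsymbol{x}}\ \leq\ \prod_{i=1}^{N} p_i^{-x_i}\ =\ \exp\!\Bigl(-\sum_{i=1}^{N} x_i\,\log(x_i/N)\Bigr)\ =\ \exp\!\bigl(N\,H(\boldsymbol{x}/N)\bigr),
\]
where in the last step I use the convention $0\log 0=0$ and the definition of the Shannon entropy $H(\boldsymbol{q})=-\sum_i q_i\log q_i$. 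This establishes the first inequality.

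The binomial bound then follows either as a direct specialization (take $N=2$ coordinates with counts $m$ and $N-m$) or, equivalently, by the same sampling argument using the Bernoulli distribution with mean $m/N$: the probability that $N$ i.i.d.\ Bernoulli$(m/N)$ trials yield exactly $m$ successes is
\[
\binom{N}{m}\Bigl(\tfrac{m}{N}\Bigr)^{m}\Bigl(1-\tfrac{m}{N}\Bigr)^{N-m}\ \leq\ 1,
\]
and rearranging yields $\binom{N}{m}\leq \exp(N\,H(m/N))$, with $H(m/N)$ denoting the binary entropy.

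There is essentially no obstacle to this proof; the only care needed is in the edge cases $x_i=0$ and $m\in\{0,N\}$, which are handled by the standard convention $0\log 0=0$ and the observation that the inequalities become equalities or trivialities there. Since the result is cited as \cite[Theorem 11.1.3]{cover2006wiley}, the paper could alternatively just invoke it without repeating the argument; the short derivation above is included only for self-containedness.
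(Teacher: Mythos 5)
Your proof is correct and is exactly the standard type-counting argument from the cited reference \cite[Theorem 11.1.3]{cover2006wiley}; the paper itself simply invokes the result without reproving it. Nothing further is needed.
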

Using Lemma~\ref{lem:upper-bounding-binomial-coefficient} to upper bound the binomial coefficients in \eqref{eq:proof-of-delta-PAC-6}, and simplifying the resulting expression, we get
\begin{align}
    & M_n(\mathbf{d}, \mathbf{i}, a) \nonumber\\
    &\geq \exp\bigg(N(n, \mathbf{d}, \mathbf{i}, a) \, D_{\text{\rm KL}}(\widehat{Q}_n(\cdot \mid\mathbf{d}, \mathbf{i}, a) \| Q_{\boldsymbol{\theta}, R}(\cdot \mid\mathbf{d}, \mathbf{i}, a)) \nonumber\\
    &\hspace{5cm} - (N(n, \mathbf{d}, \mathbf{i}, a) + S_R - 1) H((S_R-1)/(N(n, \mathbf{d}, \mathbf{i}, a) + S_R - 1))\bigg)
    \label{eq:proof-of-delta-PAC-7}
\end{align}
almost surely for every $(\mathbf{d}, \mathbf{i}, a) \in \mathbb{S}_R \times [K]$.

\vspace{0.3cm}

\textbf{A product martingale and proof completion:}

Taking product over all $(\mathbf{d}, \mathbf{i}, a) \in \mathbb{S}_R \times [K]$ in \eqref{eq:proof-of-delta-PAC-7}, we have almost surely
\begin{align}
    M_n 
    &\coloneqq \prod_{(\mathbf{d}, \mathbf{i}, a) \in \mathbb{S}_R \times [K]} M_n(\mathbf{d}, \mathbf{i}, a) \nonumber\\
    &\geq \exp\bigg(\sum_{(\mathbf{d}, \mathbf{i}, a) \in \mathbb{S}_R \times [K]} N(n, \mathbf{d}, \mathbf{i}, a) \, D_{\text{\rm KL}}(\widehat{Q}_n(\cdot \mid\mathbf{d}, \mathbf{i}, a) \| Q_{\boldsymbol{\theta}, R}(\cdot \mid\mathbf{d}, \mathbf{i}, a)) \nonumber\\
    &\hspace{3cm} - \sum_{(\mathbf{d}, \mathbf{i}, a) \in \mathbb{S}_R \times [K]} (N(n, \mathbf{d}, \mathbf{i}, a) + S_R - 1) H((S_R-1)/(N(n, \mathbf{d}, \mathbf{i}, a) + S_R - 1))\bigg).
\end{align}
We now prove that $\{M_n\}_{n=K-1}^{\infty}$ is a martingale and note that for each $(\mathbf{d}, \mathbf{i}, a) \in \mathbb{S}_R \times [K]$,
\begin{align}
    & (N(n, \mathbf{d}, \mathbf{i}, a) + S_R - 1) H((S_R-1)/(N(n, \mathbf{d}, \mathbf{i}, a) + S_R - 1)) \nonumber\\
    &= (S_R-1)\, \log\left(1 + \frac{N(n, \mathbf{d}, \mathbf{i}, a)}{S_R-1}\right) + N(n, \mathbf{d}, \mathbf{i}, a) \, \log \left(1+\frac{S_R-1}{N(n, \mathbf{d}, \mathbf{i}, a)}\right) \nonumber\\
    &\stackrel{(a)}{\leq} (S_R-1)\, \log\left(1 + \frac{N(n, \mathbf{d}, \mathbf{i}, a)}{S_R-1}\right) + (S_R-1) \nonumber\\
    &=(S_R-1)\, \log\left(e\left[1 + \frac{N(n, \mathbf{d}, \mathbf{i}, a)}{S_R-1}\right]\right),
\end{align}
where $(a)$ above follows from the inequality $\log(1+x) \leq x$. The fact that $\{M_n\}_{n=K-1}^{\infty}$ is a martingale (with respect to the filtration in \eqref{eq:filtration}) follows by noting that for any $(\mathbf{d}, \mathbf{i}, a) \in \mathbb{S}_R \times [K]$, using the shorthand notation $z_n=(\mathbf{d}(n), \mathbf{i}(n), A_n)$, we have
\begin{align}
    \mathbb{E}_{\boldsymbol{\theta}}[M_n\mid \mathcal{F}_{n-1}, z_n=(\mathbf{d}, \mathbf{i}, a)] 
    &= \mathbb{E}_{\boldsymbol{\theta}} \bigg[M_n(\mathbf{d}, \mathbf{i}, a) \times \prod_{(\mathbf{d}', \mathbf{i}', a') \neq (\mathbf{d}, \mathbf{i}, a)} M_n(\mathbf{d}', \mathbf{i}', a') ~ \bigg\vert ~ \mathcal{F}_{n-1}, \ z_n = (\mathbf{d}, \mathbf{i}, a)\bigg] \nonumber\\
    &= \mathbb{E}_{\boldsymbol{\theta}} \bigg[M_n(\mathbf{d}, \mathbf{i}, a) \times \prod_{(\mathbf{d}', \mathbf{i}', a') \neq (\mathbf{d}, \mathbf{i}, a)} M_{n-1}(\mathbf{d}', \mathbf{i}', a') ~\bigg\vert~ \mathcal{F}_{n-1}\bigg] \nonumber\\
    &= \mathbb{E}_{\boldsymbol{\theta}} \bigg[M_n(\mathbf{d}, \mathbf{i}, a) \mid \mathcal{F}_{n-1}\bigg] \times \prod_{(\mathbf{d}', \mathbf{i}', a') \neq (\mathbf{d}, \mathbf{i}, a)} M_{n-1}(\mathbf{d}', \mathbf{i}', a') \nonumber\\
    &= M_{n-1}(\mathbf{d}, \mathbf{i}, a) \times \prod_{(\mathbf{d}', \mathbf{i}', a') \neq (\mathbf{d}, \mathbf{i}, a)} M_{n-1}(\mathbf{d}', \mathbf{i}', a') \nonumber\\
    &= M_{n-1},
    \label{eq:proof-of-delta-PAC-8}
\end{align}
where the third line follows by noting that conditioned on $\mathcal{F}_{n-1}$, the random variable $M_n(\mathbf{d}, \mathbf{i}, a)$ is independent of $\{M_{n-1}(\mathbf{d}', \mathbf{i}', a'): (\mathbf{d}', \mathbf{i}', a') \neq (\mathbf{d}, \mathbf{i}, a)\}$. Applying $\mathbb{E}_{\boldsymbol{\theta}}[\cdot]$ on either side of \eqref{eq:proof-of-delta-PAC-8} and using the tower property of conditional expectations, we have
\begin{align}
    \mathbb{E}_{\boldsymbol{\theta}}[M_n] = \mathbb{E}_{\boldsymbol{\theta}}[\mathbb{E}_{\boldsymbol{\theta}}[M_n\mid\mathcal{F}_{n-1}, z_n]\mid \mathcal{F}_{n-1}] = M_{n-1}.
\end{align}
Noting that $\mathbb{E}_{\boldsymbol{\theta}}[M_n] = \mathbb{E}_{\boldsymbol{\theta}}[M_{K-1}] = 1$ for all $n \geq K-1$, and using Doob's maximal inequality, we get
\begin{equation}
    \P_{\boldsymbol{\theta}}\left(\exists n \geq K-1: M_n > \frac{1}{\delta}\right) \leq \delta\, \mathbb{E}_{\boldsymbol{\theta}}[M_{K-1}] = \delta.
    \label{eq:proof-of-delta-PAC-9}
\end{equation}
Clearly, \eqref{eq:proof-of-delta-PAC-9} implies \eqref{eq:proof-of-delta-PAC-3}, thus proving the desired result.

\section{Proof of Proposition~\ref{prop:almost-sure-upper-bound}}
Fix $\delta \in (0,1)$ and $\eta \in (0,1)$. Fix an arbitrary $\nu \in \mathcal{W}^\star(\boldsymbol{\theta})$, and let $\omega_{\boldsymbol{\theta}, \nu}^\star$ be as defined in Lemma~\ref{lem:concentration-of-state-action-visitations}. Consider the event
\begin{equation}
    \mathcal{E} = \left\lbrace \lim_{n \to \infty} \max_{(\mathbf{d}, \mathbf{i}, a)} \bigg\lvert \frac{N(n, \mathbf{d}, \mathbf{i}, a)}{n-K+1} - \omega_{\boldsymbol{\theta}, \nu}^\star(\mathbf{d}, \mathbf{i}, a) \bigg\rvert=0, \quad \lim_{n \to \infty} \|\boldsymbol{\eta}(n) - \boldsymbol{\eta}\|_{\infty}=0 \right\rbrace.
    \label{eq:proof-of-almost-sure-upper-bound-1}
\end{equation}
Thanks to Lemma~\ref{lem:concentration-of-empirical-arm-means}, Lemma~\ref{lem:concentration-of-state-action-visitations}, and the Borel--Cantelli lemma, we have $\P_{\boldsymbol{\theta}}(\mathcal{E})=1$ under the non-stopping version of $\pi^{\textsc{Rstl-Dtrack}}$ (with the same parameters as $\pi^{\textsc{Rstl-Dtrack}}$). Fix $\omega \in \mathcal{E}$ and $\gamma>0$ arbitrarily. Then, there exists $N_\gamma(\omega) \in \mathbb{N}$ independent of $\delta$ such that the following hold for all $n \geq N_\gamma(\omega)$:
\begin{align}
    \frac{Z(n, \omega)}{n} &\geq (1-\gamma) (\eta\, T_{\text{\rm unif}}^\star(\boldsymbol{\theta}) + (1-\eta)\, T_R^\star(\boldsymbol{\theta})), \label{eq:test-statistic-exceeds-the-correct-value} \\
    \zeta(n, \delta) &\leq \log\left(\frac{1}{\delta}\right) + \gamma\, (\eta\, T_{\text{\rm unif}}^\star(\boldsymbol{\theta}) + (1-\eta)\, T_R^\star(\boldsymbol{\theta})) \, n. \label{eq:threshold-has-the-correct-scaling}
\end{align}
The inequality in \eqref{eq:threshold-has-the-correct-scaling} follows by noting that $\zeta(n, \delta) =\log(1/\delta)+ O(\log(n)) = \log(1/\delta) + o(n)$. We then have
\begin{align}
    \tau(\omega) 
    &= \inf \{n \geq K: Z(n, \omega) \geq \zeta(n, \delta)\} \nonumber\\
    &\leq \inf \{n \geq N_\gamma(\omega): Z(n, \omega) \geq \zeta(n, \delta)\} \nonumber\\ 
    &\leq \inf \left\lbrace n \geq N_\gamma(\omega): n\, (1-\gamma) (\eta\, T_{\text{\rm unif}}^\star(\boldsymbol{\theta}) + (1-\eta)\, T_R^\star(\boldsymbol{\theta})) \geq \log\left(\frac{1}{\delta}\right) + \gamma\, (\eta\, T_{\text{\rm unif}}^\star(\boldsymbol{\theta}) + (1-\eta)\, T_R^\star(\boldsymbol{\theta})) \, n \right\rbrace \nonumber\\
    &\leq \inf\left\lbrace n \geq N_\gamma(\omega): n\, (1-2\gamma) (\eta\, T_{\text{\rm unif}}^\star(\boldsymbol{\theta}) + (1-\eta)\, T_R^\star(\boldsymbol{\theta})) \geq \log\left(\frac{1}{\delta}\right) \right\rbrace \nonumber\\
    &= \max\left\lbrace N_\gamma(\omega), \quad  \inf\left\lbrace n \geq 1: n\, (1-2\gamma) (\eta\, T_{\text{\rm unif}}^\star(\boldsymbol{\theta}) + (1-\eta)\, T_R^\star(\boldsymbol{\theta})) \geq \log\left(\frac{1}{\delta}\right) \right\rbrace \right\rbrace \nonumber\\
    &\leq \max\left\lbrace N_\gamma(\omega), \quad  \left\lceil \frac{\log(1/\delta)}{(1-2\gamma)(\eta\, T_{\text{\rm unif}}^\star(\boldsymbol{\theta}) + (1-\eta)\, T_R^\star(\boldsymbol{\theta}))} \right\rceil \right\rbrace \nonumber\\
    &\leq \max\left\lbrace N_\gamma(\omega), \quad  1+\frac{\log(1/\delta)}{(1-2\gamma)(\eta\, T_{\text{\rm unif}}^\star(\boldsymbol{\theta}) + (1-\eta)\, T_R^\star(\boldsymbol{\theta}))} \right\rbrace
    \label{eq:proof-of-almost-sure-upper-bound-2}
\end{align}
for all $\omega \in \mathcal{E}$, where in writing the last line above, we use the relation $\lceil x \rceil < 1+x$. Thus, it follows from \eqref{eq:proof-of-almost-sure-upper-bound-2} that $\P_{\boldsymbol{\theta}}(\tau < +\infty)=1$ under $\pi^{\textsc{Rstl-Dtrack}}$, which in turn implies from \eqref{eq:stop-in-finite-time-and-error-prob-less-than-delta} that $\pi^{\textsc{Rstl-Dtrack}} \in \Pi_R(\delta)$. Also, dividing both sides of \eqref{eq:proof-of-almost-sure-upper-bound-2} by $\log(1/\delta)$, letting $\delta \downarrow 0$ and noting that $N_\gamma(\omega)$ does not depend on $\delta$, we get
\begin{equation}
    \limsup_{\delta \downarrow 0} \frac{\tau(\omega)}{\log(1/\delta)} \leq \frac{1}{(1-2\gamma)(\eta\, T_{\text{\rm unif}}^\star(\boldsymbol{\theta}) + (1-\eta)\, T_R^\star(\boldsymbol{\theta}))} \quad \forall \omega \in \mathcal{E}.
    \label{eq:proof-of-almost-sure-upper-bound-3}
\end{equation}
Noting that the right-hand side of \eqref{eq:proof-of-almost-sure-upper-bound-3} holds for all $\gamma > 0$, we arrive at \eqref{eq:almost-sure-upper-bound} by taking limits as $\gamma \downarrow 0$.

\section{Proof of Proposition~\ref{prop:upper-bound-on-expected-stopping-time}}
\textbf{Notations:} We first introduce some notations. Let $\|\cdot\|_1$, $\|\cdot\|_2$, and $\|\cdot\|_{\infty}$ denote the vector $1$-norm, $2$-norm, and sup-norm operators respectively. Further, for matrices $Q,Q'$ of identical dimensions, let 
$$
\|Q-Q'\|_{\infty} \coloneqq \max_{(\mathbf{d}, \mathbf{i}, a)} \| Q(\cdot \mid\mathbf{d}, \mathbf{i}, a) -  Q'(\cdot \mid\mathbf{d}, \mathbf{i}, a) \|_1.
$$
For all $n \geq K$, let $\boldsymbol{N}(n) \coloneqq [N(n, \mathbf{d}, \mathbf{i}, a): (\mathbf{d}, \mathbf{i}, a) \in \mathbb{S}_R \times [K]]^\top$. Let $\widehat{\mathcal{M}}_n$ denote the MDP with transition kernel $\widehat{Q}_n$ defined in \eqref{eq:categorical-distribution}, and for any $\boldsymbol{\theta} \in \Theta^K$, let $\| \widehat{\mathcal{M}}_n - \mathcal{M}_{\boldsymbol{\theta}, R} \|_{\infty}$ be as defined in \eqref{eq:norm-between-MDPs}. 
Given $\boldsymbol{\theta} \in \Theta^K$, let $\boldsymbol{\eta}_{\boldsymbol{\theta}} \coloneqq [\eta_{\theta_a}: a \in [K]]^\top$. Also, let $\widehat{\boldsymbol{\eta}}(n) = [\widehat{\eta}_a(n): a \in [K]]^\top$, $\widehat{\theta}_a(n) = \Dot{A}^{-1}(\widehat{\eta}_a)$ for all $a \in [K]$, and $\widehat{\boldsymbol{\theta}}(n) = [\widehat{\theta}_a(n): a \in [K]]^\top$. 
For any $\boldsymbol{\theta} \in \Theta^K$ and $\nu \in \mathcal{W}^\star(\boldsymbol{\theta})$, let $\omega_{\boldsymbol{\theta}, \nu}^\star=[\omega_{\boldsymbol{\theta}, \nu}^\star (\mathbf{d}, \mathbf{i}, a): (\mathbf{d}, \mathbf{i}, a) \in \mathbb{S}_R \times [K]]^\top$ be as defined in Lemma~\ref{lem:concentration-of-state-action-visitations}. Let
\begin{equation}
    \psi^\prime(\boldsymbol{\theta}', \nu', Q') \coloneqq \inf_{\boldsymbol{\lambda} \in \textsc{Alt}(\boldsymbol{\theta}')} \ \sum_{(\mathbf{d}, \mathbf{i} ) \in \mathbb{S}_R} \ \sum_{a=1}^{K} \nu'(\mathbf{d}, \mathbf{i}, a) \, D_{\text{\rm KL}}(Q'(\cdot \mid\mathbf{d}, \mathbf{i}, a) \| Q_{\boldsymbol{\lambda}, R}(\cdot \mid\mathbf{d}, \mathbf{i}, a)).
    \label{eq:psi-prime-definition}
\end{equation}

\begin{proof}[Proof of Proposition~\ref{prop:upper-bound-on-expected-stopping-time}]
    Fix $\boldsymbol{\theta} \in \Theta^K$, $\eta \in (0,1)$, $\nu \in \mathcal{W}^\star(\boldsymbol{\theta})$, and $\delta \in (0,1)$. Assume that $\boldsymbol{\theta}$ is the underlying instance. Fix an arbitrary $\xi>0$. From Lemma~\ref{lem:property-of-exponential-family}, we know that the mapping $\boldsymbol{\theta}' \mapsto \boldsymbol{\eta}_{\boldsymbol{\theta}'}$ is continuous. Hence, there exists $\varepsilon=\varepsilon(\xi) \leq \xi$ such that
\begin{equation}
    \|\boldsymbol{\theta} - \boldsymbol{\theta}'\|_{2} < \varepsilon \implies \| \boldsymbol{\eta}_{\boldsymbol{\theta}} - \boldsymbol{\eta}_{\boldsymbol{\theta}'} \|_{2} < \xi.
    \label{eq:theta-close-implies-MDP-theta-close}
\end{equation} 
Given $N \geq K$, let\footnote{We assume, without loss of generality, that $N^{3/4}$ is an integer.}
\begin{equation}
    C_N^3(\xi) \coloneqq \bigcap_{n=N^5}^{N^6} \bigg\lbrace \bigg\| \frac{\boldsymbol{N}(n)}{n-K+1} - \omega_{\boldsymbol{\theta}, \nu}^\star \bigg\|_{\infty} \leq K_{\xi}(\boldsymbol{\theta}, \nu)\, \xi \bigg\rbrace,
    \label{eq:C-N-3-of-xi-event}
\end{equation}
where $K_\xi(\boldsymbol{\theta}, \nu)$ are as defined in Lemma~\ref{lem:concentration-of-state-action-visitations}. Let $C_N^1(\xi)$ and $C_N^2(\xi)$ be as defined in \eqref{eq:C-N-1-of-xi-event} and \eqref{eq:C-N-2-of-xi-event} respectively. Let
\begin{equation}
    \psi^\star(\boldsymbol{\theta}, \omega_{\boldsymbol{\theta}, \nu}^\star, \xi) \coloneqq \inf_{\substack{\boldsymbol{\theta}': \|\boldsymbol{\theta}' - \boldsymbol{\theta}\|_{2} \leq \varepsilon \\ \nu' : \|\nu' - \omega_{\boldsymbol{\theta}, \nu}^\star\|_{\infty} \leq K_{\xi}\, \xi \\
    Q': \|Q - Q_{\boldsymbol{\theta}, R}\|_{\infty} \leq \xi}} \psi'(\theta', \nu', Q'),
    \label{eq:psi-star-definition}
\end{equation}
Notice that as a consequence of \eqref{eq:almost-sure-divergence-of-state-action-visitations}, the following convergences hold almost surely:
\begin{equation}
    \|\widehat{\boldsymbol{\eta}}(n) - \boldsymbol{\eta}\|_{2} \to 0, \quad \|\widehat{\boldsymbol{\theta}}(n) - \boldsymbol{\theta}\|_2 \to 0, \quad \|\widehat{\mathcal{M}}_n - \mathcal{M}_{\boldsymbol{\theta}, R}\|_{\infty} \to 0.
    \label{eq:almost-sure-convergences-of-interest}
\end{equation}
Therefore, there exists $N_1=N_1(\xi) \geq K$ such that for all $N \geq N_1$, the following hold:
\begin{align}
    \|\widehat{\boldsymbol{\theta}}(n) - \boldsymbol{\theta}\|_2 &\leq \varepsilon, \qquad N^5 \leq n \leq N^6, \label{eq:empirical-parameters-within-xi-of-true-values}\\
    \|\widehat{\boldsymbol{\eta}}(n) - \boldsymbol{\eta}\|_2 &\leq \xi, \qquad N^5 \leq n \leq N^6, \label{eq:empirical-estimates-within-xi-of-true-values} \\
    \|\widehat{\mathcal{M}}_n - \mathcal{M}_{\boldsymbol{\theta}, R}\|_{\infty} &\leq \xi, \qquad N^5 \leq n \leq N^6, \label{eq:empirical-MDP-within-xi-of-true-MDP} \\
    \psi\left(\widehat{\boldsymbol{\theta}}(n), \frac{\boldsymbol{N}(n)}{n-K+1}\right) &\geq \psi^\star(\boldsymbol{\theta}, \omega_{\boldsymbol{\theta}, \nu}^\star, \xi), \qquad N^5 \leq n \leq N^6. \label{eq:empirical-psi-less-than-psi-star}
\end{align}
Furthermore, it follows from Lemma~\ref{lem:concentration-of-parameter-estimates} and Lemma~\ref{lem:concentration-of-empirical-arm-means} that $\mathbb{E}_{\boldsymbol{\theta}}[N_1(\xi)] < + \infty$ (this is simply an artefact of the probability term $\P_{\boldsymbol{\theta}}(N_1>n)$ decaying exponentially in $n$). Hence, it follows that for all $N \geq N_1$, the event $C_N^3(\xi)$ holds with high probability, conditioned on $C_N^1(\xi)$. Furthermore, noting that $N_a(n, \mathbf{d}, \mathbf{i}, a) \leq n+1$ and therefore $\zeta(n, \delta) = \log(1/\delta) + O(\log n)$, it follows that there exists $N_2=N_2(\xi) \geq K$ such that 
\begin{equation}
    \zeta(n, \delta) \leq \log\left(\frac{1}{\delta}\right) + \xi\, \psi^\star(\boldsymbol{\theta}, \omega_{\boldsymbol{\theta}, \nu}^\star, \xi)\, n \qquad \forall n \geq N_2.
    \label{eq:threshold-less-than-an-appropriate-constant}
\end{equation}
Let $N_3=N_3(\xi, \delta)$ be defined as
\begin{equation}
    N_3(\xi, \delta) = \inf \left\lbrace n \geq K: n - K + 1 \geq \frac{1}{(1-\xi)\, \psi^\star(\boldsymbol{\theta}, \omega_{\boldsymbol{\theta}, \nu}^\star, \xi)}\, \log\left(\frac{1}{\delta}\right) \right\rbrace.
    \label{eq:N-3-definition}
\end{equation}
Then, for all $N \geq \max\{N_1, N_2, N_3\}$, it follows that conditional on $C_N^1(\xi) \cap C_N^2(\xi) \cap C_N^3(\xi)$,
\begin{align*}
    Z(N) 
    &= \inf_{\boldsymbol{\lambda} \in \textsc{Alt}(\boldsymbol{\theta}(N))} \sum_{(\mathbf{d}, \mathbf{i}) \in \mathbb{S}_R} \ \sum_{a=1}^{K} N(N, \mathbf{d}, \mathbf{i}, a) \, D_{\text{\rm KL}}(\widehat{Q}_N(\cdot \mid\mathbf{d}, \mathbf{i}, a) \| Q_{\boldsymbol{\lambda}, R}(\cdot \mid\mathbf{d}, \mathbf{i}, a)) \\
    &\geq (N-K+1)\, \psi^\star(\boldsymbol{\theta}, \omega_{\boldsymbol{\theta}, \nu}^\star, \xi) \nonumber\\
    &\geq (N-K+1)\, \xi\, \psi^\star(\boldsymbol{\theta}, \omega_{\boldsymbol{\theta}, \nu}^\star, \xi) + \log\left(\frac{1}{\delta}\right) \nonumber\\
    &\geq \zeta(N, \delta),
\end{align*}
thereby proving that
\begin{equation}
    C_N^1(\xi) \cap C_N^2(\xi) \cap C_N^3(\xi) \subset \{\tau \leq N\} \qquad \forall N \geq \max\{N_1(\xi), N_2(\xi), N_3(\xi, \delta)\}.
\end{equation}
Noting that $N_1(\xi)$ is a random variable, while $N_2(\xi), N_3(\xi, \delta)$ are deterministic constants, for any $n \geq K$, we have 
\begin{align}
    &\mathbb{E}_{\boldsymbol{\theta}}[\tau\, \mathbf{1}_{\{N_1(\xi)=n\}}] \nonumber\\
    &= \sum_{N=1}^{\infty} \P_{\boldsymbol{\theta}}(\tau > N, \, N_1(\xi)=n) \nonumber\\
    &\leq \max\{n, N_2(\xi), N_3(\xi, \delta)\} + \sum_{N=\max\{n, N_2, N_3\}+1}^{\infty} \P_{\boldsymbol{\theta}}(\tau > N) \nonumber\\
    &\leq \max\{n, N_2(\xi), N_3(\xi, \delta)\} + \sum_{N=\max\{N_1, N_2, N_3\}+1}^{\infty} \P_{\boldsymbol{\theta}}\bigg(\overline{C_N^1(\xi)} \cup \overline{C_N^2(\xi)} \cup \overline{C_N^3(\xi)}\bigg) \nonumber\\
    &\leq  \max\{n, N_2(\xi), N_3(\xi, \delta)\} + \sum_{N=\max\{N_1, N_2, N_3\}+1}^{\infty} \bigg[\P_{\boldsymbol{\theta}}\bigg(\overline{C_N^1(\xi)}\bigg) + \P_{\boldsymbol{\theta}}\bigg(\overline{C_N^2(\xi)}\bigg) + \P_{\boldsymbol{\theta}}\bigg(\overline{C_N^3(\xi)} ~\bigg|~ C_N^2(\xi)\bigg) \bigg] \nonumber\\
    &\leq \max\{n, N_2(\xi), N_3(\xi, \delta)\} + \sum_{N=1}^{\infty} \bigg[\P_{\boldsymbol{\theta}}\bigg(\overline{C_N^1(\xi)}\bigg) + \P_{\boldsymbol{\theta}}\bigg(\overline{C_N^2(\xi)}\bigg) + \P_{\boldsymbol{\theta}}\bigg(\overline{C_N^3(\xi)}~\bigg|~ C_N^2(\xi)\bigg) \bigg] \nonumber\\
    &\leq n + N_2(\xi) + N_3(\xi, \delta) + \sum_{N=1}^{\infty} \bigg[\P_{\boldsymbol{\theta}}\bigg(\overline{C_N^1(\xi)}\bigg) + \P_{\boldsymbol{\theta}}\bigg(\overline{C_N^2(\xi)}\bigg) + \P_{\boldsymbol{\theta}}\bigg(\overline{C_N^3(\xi)}~\bigg|~ C_N^2(\xi)\bigg) \bigg],
    \label{eq:proof-of-upper-bound-1}
\end{align}
from which it follows that
\begin{align}
    \mathbb{E}_{\boldsymbol{\theta}}[\tau] 
    &= \sum_{n=1}^{\infty} \mathbb{E}_{\boldsymbol{\theta}}[\tau_{\delta}\, \mathbf{1}_{\{N_1(\xi)=n\}}] \nonumber\\
    &\leq \mathbb{E}_{\boldsymbol{\theta}}[N_1(\xi)] + N_2(\xi) + N_3(\xi, \delta) + \sum_{N=1}^{\infty} \bigg[\P_{\boldsymbol{\theta}}\bigg(\overline{C_N^1(\xi)}\bigg) + \P_{\boldsymbol{\theta}}\bigg(\overline{C_N^2(\xi)}\bigg) + \P_{\boldsymbol{\theta}}\bigg(\overline{C_N^3(\xi)} ~\bigg|~ C_N^2(\xi)\bigg) \bigg].
    \label{eq:proof-of-upper-bound-1-1}
\end{align}
From Lemma~\ref{lem:concentration-of-parameter-estimates}, Lemma~\ref{lem:concentration-of-empirical-arm-means}, and Lemma~\ref{lem:concentration-of-state-action-visitations}, we know that the infinite-summation term in \eqref{eq:proof-of-upper-bound-1-1} is finite. Dividing both sides of \eqref{eq:proof-of-upper-bound-1-1} by $\log(1/\delta)$ and taking limits as $\delta \downarrow 0$, we get
\begin{align}
    \limsup_{\delta \downarrow 0} \frac{\mathbb{E}_{\boldsymbol{\theta}}[\tau_\pi]}{\log(1/\delta)} 
    &\leq \limsup_{\delta \downarrow 0} \frac{N_3(\xi, \delta)}{\log(1/\delta)} \nonumber\\
    &= \frac{1}{(1-\xi)\,\psi^\star(\boldsymbol{\theta}, \omega_{\boldsymbol{\theta}, \nu}^\star, \xi)}.
    \label{eq:proof-of-upper-bound-2}
\end{align}
Taking limits as $\xi \downarrow 0$, using the fact that $\limsup_{\xi \downarrow 0} K_{\xi}(\boldsymbol{\theta}, \nu) < +\infty$ (cf. Lemma~\ref{lem:concentration-of-state-action-visitations}), and noting that
\begin{equation}
    \lim_{\xi \downarrow 0} \psi^\star(\boldsymbol{\theta}, \omega_{\boldsymbol{\theta}, \nu}^\star, \xi) = \psi(\omega_{\boldsymbol{\theta}, \nu}^\star, \boldsymbol{\theta}),
\end{equation}
where $\psi$ is as defined in \eqref{eq:psi}, we get
\begin{equation}
     \limsup_{\delta \downarrow 0} \frac{\mathbb{E}_{\boldsymbol{\theta}}[\tau_\pi]}{\log(1/\delta)} \leq \frac{1}{\psi(\omega_{\boldsymbol{\theta}, \nu}^\star, \boldsymbol{\theta})}.
    \label{eq:proof-of-upper-bound-3}
\end{equation}
Noting that 
$$
\omega_{\boldsymbol{\theta}, \nu}^\star = \eta\, \nu_{\boldsymbol{\theta}}^{\text{\rm unif}}+ (1-\eta)\, \nu,
$$
and that
\begin{align*}
    \psi(\omega_{\boldsymbol{\theta}, \nu}^\star, \boldsymbol{\theta})
    &= \inf_{\boldsymbol{\lambda} \in \textsc{Alt}(\boldsymbol{\theta})} \sum_{(\mathbf{d}, \mathbf{i}) \in \mathbb{S}_R} \ \sum_{a=1}^{K} \omega_{\boldsymbol{\theta}, \nu}^\star(\mathbf{d}, \mathbf{i}, a)\, D_{\text{\rm KL}}(Q_{\boldsymbol{\theta}, R}(\cdot \mid\mathbf{d}, \mathbf{i}, a) \| Q_{\boldsymbol{\lambda}, R}(\cdot \mid\mathbf{d}, \mathbf{i}, a)) \nonumber\\
    &= \inf_{\boldsymbol{\lambda} \in \textsc{Alt}(\boldsymbol{\theta})} \sum_{(\mathbf{d}, \mathbf{i}) \in \mathbb{S}_R} \ \sum_{a=1}^{K} (\eta\, \nu_{\boldsymbol{\theta}}^{\text{\rm unif}}(\mathbf{d}, \mathbf{i}, a)+ (1-\eta)\, \nu(\mathbf{d}, \mathbf{i}, a))\, D_{\text{\rm KL}}(Q_{\boldsymbol{\theta}, R}(\cdot \mid\mathbf{d}, \mathbf{i}, a) \| Q_{\boldsymbol{\lambda}, R}(\cdot \mid\mathbf{d}, \mathbf{i}, a)) \nonumber\\
    &\geq \eta\, T_{\text{\rm unif}}^\star(\boldsymbol{\theta}) + (1-\eta)\, T_R^\star(\boldsymbol{\theta}),
\end{align*}
where $T_R^\star(\boldsymbol{\theta})$ appears because $\nu \in \mathcal{W}^\star(\boldsymbol{\theta})$, 
we get
\begin{equation}
     \limsup_{\delta \downarrow 0} \frac{\mathbb{E}_{\boldsymbol{\theta}}[\tau_\pi]}{\log(1/\delta)} \leq \frac{1}{\eta\, T_{\text{\rm unif}}^\star(\boldsymbol{\theta}) + (1-\eta)\, T_R^\star(\boldsymbol{\theta})}.
    \label{eq:proof-of-upper-bound-4}
\end{equation}
Finally, taking limits as $\eta \downarrow 0$ in \eqref{eq:proof-of-upper-bound-4} yields \eqref{eq:upper-bound-on-expected-stopping-time}.
\end{proof} 

\section{Technical Results}
In this section, we record some technical results of interest. First, we introduce some notations. Fix $\boldsymbol{\theta} \in \Theta^K$. For any $n \geq K$, let $Q_n \coloneqq Q_{\boldsymbol{\theta}, \pi_n}$ denote the transition kernel of the MDP $\mathcal{M}_{\boldsymbol{\theta}, R}$ under $\pi_n$.
Along similar lines as in the proof of Lemma~\ref{lem:ergodicity-of-MDP-under-unif-policy}, it is easy to show that implies that $Q_n$ is ergodic for each $n \geq K$. Let $\omega_n^\star = [\omega_n^\star(\mathbf{d}, \mathbf{i}, a): (\mathbf{d}, \mathbf{i}, a) \in \mathbb{S}_R \times [K]]^\top$ denote the unique stationary distribution of $Q_n$. Let $W_n = \mathbf{1}\omega_n^T$ denote the rank-$1$ matrix each of whose rows is equal to $\omega_n^T$. For $r \in \mathbb{N}$, let $Q_n^r$ denote the $r$-fold self-product of $Q_n$. Let $\|Q_n^r - W_n\|_{\infty} = \max_{(\mathbf{d}, \mathbf{i}, a)} \|Q_n^r(\cdot \mid\mathbf{d}, \mathbf{i}, a) - W_n(\cdot \mid\mathbf{d}, \mathbf{i}, a)\|_{1}$, where $\|\cdot \|_1$ denotes vector $1$-norm. 

Let $\omega_{\boldsymbol{\theta}}^{\text{\rm unif}} = [\omega_{\boldsymbol{\theta}}^{\text{\rm unif}}(\mathbf{d}, \mathbf{i}, a): (\mathbf{d}, \mathbf{i}, a) \in \mathbb{S}_R \times [K]]^\top$ denote the unique stationary distribution of the MDP $\mathcal{M}_{\boldsymbol{\theta}, R}$ under $\pi^{\text{\rm unif}}$. Let $\mathbb{V}$ be as defined in Lemma~\ref{lem:sufficient-exploration-of-state-actions}. Thanks to Lemma~\ref{lem:ergodicity-of-MDP-under-unif-policy}, the following quantities are well-defined: 
\begin{align}
    r_{\boldsymbol{\theta}, \text{\rm unif}} &\coloneqq \min\left\lbrace r \in \mathbb{N}: \forall(\mathbf{d}, \mathbf{i}, a), (\mathbf{d}', \mathbf{i}', a') \in \mathbb{V}, \quad Q_{\boldsymbol{\theta}, \pi^{\text{\rm unif}}}^{r}(\mathbf{d}', \mathbf{i}', a'|\mathbf{d}, \mathbf{i}, a) > 0 \right\rbrace, \label{eq:r-unif-definition} \\
    \sigma_{\boldsymbol{\theta}, \text{\rm unif}} &\coloneqq \min_{(\mathbf{d}, \mathbf{i}, a), (\mathbf{d}', \mathbf{i}', a') \in \mathbb{V}} \frac{Q_{\boldsymbol{\theta}, \pi^{\text{\rm unif}}}^{r_{\boldsymbol{\theta}, \text{\rm unif}}}(\mathbf{d}', \mathbf{i}', a'|\mathbf{d}, \mathbf{i}, a)}{\omega_{\boldsymbol{\theta}}^{\text{\rm unif}}(\mathbf{d}', \mathbf{i}', a')}. \label{eq:sigma-unif-definition}
\end{align}
Given a policy $\pi=[\pi(a|\mathbf{d}, \mathbf{i}): (\mathbf{d}, \mathbf{i}, a) \in \mathbb{S}_R \times [K]]^\top$, let $U_{\pi} \coloneqq \{(\mathbf{d}, \mathbf{i}, a): \pi(a|\mathbf{d}, \mathbf{i}) > 0\}$. Let $U_{n} \coloneqq U_{\pi_{n}^\eta}$, $n \geq K$.

\subsection{A Bound on \texorpdfstring{$\|Q_n^r - W_n\|_{\infty}$}{aa} for \texorpdfstring{$r \in \mathbb{N}$}{r}}
The below result from \cite{levin2017markov} gives a bound on the rate of convergence of powers of an ergodic transition kernel to the stationary distribution.
\begin{lemma}\cite[Theorem 4.9]{levin2017markov}
    \label{lem:rate-of-convergence-levin-and-peres}
    Let $Q$ be an ergodic stochastic matrix on a finite state space $\mathcal{Z}$ with stationary distribution $\omega$. Let $W$ denote the rank-$1$ matrix each of whose rows is equal to $\omega^\top$. Suppose that there exists $\sigma>0$ and $r_0 \in \mathbb{N}$ such that $Q^{r_0}(z, z') \geq \sigma \omega(z')$ for all $z, z' \in \mathcal{Z}$. Then,
    \begin{equation}
        \|Q^r - W\|_{\infty} \leq 2(1-\sigma)^{r/r_0 - 1} \quad \forall r \in \mathbb{N}.
        \label{eq:rate-of-convergence-levin-and-peres}
    \end{equation}
\end{lemma}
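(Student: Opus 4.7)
The plan is to execute the standard minorization/Doeblin-style argument in its algebraic form, which produces the stated bound almost mechanically once the right decomposition is set up. Since the lemma is a pure mixing-time statement about a single ergodic stochastic matrix $Q$, I would not invoke any of the MDP/BAI machinery from earlier sections; everything will be linear algebra on $Q$ and $W=\mathbf{1}\omega^\top$.

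First I would record the following algebraic facts that follow from $Q\mathbf{1}=\mathbf{1}$ and $\omega^\top Q=\omega^\top$: $QW=W$, $WQ=W$, and $W^2=W$. Setting $D\coloneqq Q-W$, these identities give $DW=WD=0$, from which an easy induction yields $Q^r-W=D^r$ for every $r\in\mathbb{N}$. Next I would unpack the hypothesis $Q^{r_0}(z,z')\ge \sigma\,\omega(z')$: the matrix $Q^{r_0}-\sigma W$ has nonnegative entries and row sums equal to $1-\sigma$, so it can be written as $(1-\sigma)R$ for some stochastic matrix $R$, giving the decomposition
\begin{equation*}
Q^{r_0}=\sigma W+(1-\sigma)R.
\end{equation*}
Left-multiplying by $\omega^\top$ and using $\omega^\top Q^{r_0}=\omega^\top$ and $\omega^\top W=\omega^\top$ shows $\omega^\top R=\omega^\top$, i.e.\ $\omega$ is stationary for $R$ as well. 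Consequently $WR=RW=W$.

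With these pieces in hand, $D^{r_0}=Q^{r_0}-W=(1-\sigma)(R-W)$, and since $(R-W)^2=R^2-RW-WR+W^2=R^2-W$, an induction gives $(R-W)^k=R^k-W$ for all $k\ge 1$. Combining these,
\begin{equation*}
Q^{kr_0}-W=D^{kr_0}=(D^{r_0})^k=(1-\sigma)^k\,(R^k-W).
\end{equation*}
Since $R^k$ and $W$ are both stochastic, each row of $R^k-W$ has $\ell_1$-norm at most $2$, so $\|Q^{kr_0}-W\|_\infty\le 2(1-\sigma)^k$.

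Finally, I would upgrade this bound from multiples of $r_0$ to arbitrary $r$ by the standard contraction property of stochastic kernels in total variation: for any stochastic $Q$ and any two probability vectors $\mu,\nu$, $\|(\mu-\nu)Q\|_1\le \|\mu-\nu\|_1$, which implies the row-wise bound $\|Q^{r+1}-W\|_\infty\le \|Q^r-W\|_\infty$ (using $WQ=W$). Therefore, writing $r=kr_0+s$ with $k=\lfloor r/r_0\rfloor$ and $0\le s<r_0$, monotonicity together with $k\ge r/r_0-1$ and $1-\sigma\in[0,1)$ gives
\begin{equation*}
\|Q^r-W\|_\infty\;\le\;\|Q^{kr_0}-W\|_\infty\;\le\;2(1-\sigma)^k\;\le\;2(1-\sigma)^{r/r_0-1},
\end{equation*}
which is the claimed bound. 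The only mildly delicate step is verifying the commutation identities $WR=RW=W$ (which is where the stationarity of $\omega$ for $R$ enters) and, consequently, $(R-W)^k=R^k-W$; once that is in place the rest is bookkeeping. An alternative route via a coupling argument (couple at each $r_0$-block with probability $\sigma$ using the minorization, so the coupling time is stochastically dominated by $r_0\,\mathrm{Geom}(\sigma)$) yields the same bound and may be preferable for intuition, but the algebraic proof above is shorter and avoids introducing a coupled chain.
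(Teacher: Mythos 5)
Your proof is correct and is essentially the standard argument for this result: the paper does not reprove the lemma but cites it directly from Levin--Peres, whose proof of Theorem 4.9 uses exactly your decomposition $Q^{r_0}=\sigma W+(1-\sigma)R$ with the identities $DW=WD=0$ and the total-variation contraction step. All the algebraic identities you verify ($WR=RW=W$, $(R-W)^k=R^k-W$, and the handling of $r$ not a multiple of $r_0$ via monotonicity and $\lfloor r/r_0\rfloor\ge r/r_0-1$) check out under the paper's row-wise $\ell_1$ convention for $\|\cdot\|_\infty$.
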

We then have the below result for the transition kernel $Q_{n}^r$.
\begin{lemma}
    \label{lem:bounding-infinity-norm-between-matrices}
    Fix $\boldsymbol{\theta} \in \Theta^K$. For any $n \geq K$, let $\pi_n^\eta \coloneqq \pi_{\widehat{\boldsymbol{\theta}}(n)}^\eta$. Let
    \begin{align}
        \sigma(\varepsilon, \pi, \omega) &\coloneqq \left(\varepsilon^{r_{\boldsymbol{\theta}, \text{\rm unif}}}+ \left((1-\varepsilon)\, K \min_{(\mathbf{d}, \mathbf{i}, a) \in U_{\pi}} \pi(a|\mathbf{d}, \mathbf{i})\right)^{r_{\boldsymbol{\theta}, \text{\rm unif}}}\, \right) \cdot \sigma_{\boldsymbol{\theta},\text{\rm unif}} \cdot \left(\min_{(\mathbf{d}, \mathbf{i}, a)} \frac{\omega_{\boldsymbol{\theta}, \text{\rm unif}}(\mathbf{d}, \mathbf{i}, a)}{\omega(\mathbf{d}, \mathbf{i}, a)}\right), \label{eq:sigma-definition} \\
        \bar{\sigma}(\varepsilon, \pi, \omega) & \coloneqq 1-\sigma(\varepsilon, \pi, \omega), \label{eq:sigma-bar-definition}.
    \end{align}
    Then, we have
    \begin{equation}
        \| Q_n^r - W_n \|_{\infty} \leq C_n \, \rho_n^r \quad \forall r \in \mathbb{N},
        \label{eq:convergence-of-kernel-to-stationary-distribution}
    \end{equation}
    where $C_n = 2/\bar{\sigma}(\varepsilon_n, \pi_{n-1}^\eta, \omega_n)$ and $\rho_n = \bar{\sigma}(\varepsilon_n, \pi_{n-1}^\eta, \omega_n)^{1/r_{\boldsymbol{\theta}, \text{\rm unif}}}$.
\end{lemma}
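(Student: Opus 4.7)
The plan is to reduce the claim to a single Doeblin-type inequality of the form $Q_n^{r_0}(z'\mid z) \geq \sigma \cdot \omega_n(z')$ with $r_0 = r_{\boldsymbol{\theta},\text{\rm unif}}$ and $\sigma = \sigma(\varepsilon_n, \pi_{n-1}^\eta, \omega_n)$, and then invoke Lemma~\ref{lem:rate-of-convergence-levin-and-peres}. Indeed, once the Doeblin inequality is established, Lemma~\ref{lem:rate-of-convergence-levin-and-peres} yields $\|Q_n^r - W_n\|_\infty \leq 2(1-\sigma)^{r/r_0 - 1}$; pulling out one factor of $(1-\sigma)^{-1}$ rewrites the right-hand side as $C_n \rho_n^r$ with $C_n = 2/\bar\sigma$ and $\rho_n = \bar\sigma^{1/r_0}$, matching the claim.

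The first step is to exploit the mixture decomposition of $\pi_n$ recorded in \eqref{eq:property-of-epsilon-n-mixture}, which gives $Q_n = \varepsilon_n Q_{\boldsymbol{\theta},\pi^{\text{\rm unif}}} + (1-\varepsilon_n)Q_{\boldsymbol{\theta},\pi_{n-1}^\eta}$. Because every entry is nonnegative, retaining only the two ``pure'' diagonal terms in the binomial expansion of the $r_0$-fold product gives the entrywise lower bound
\begin{equation*}
   Q_n^{r_0} \;\geq\; \varepsilon_n^{r_0}\, Q_{\boldsymbol{\theta},\pi^{\text{\rm unif}}}^{r_0} \;+\; (1-\varepsilon_n)^{r_0}\, Q_{\boldsymbol{\theta},\pi_{n-1}^\eta}^{r_0}.
\end{equation*}
For the first summand, the definitions \eqref{eq:r-unif-definition}--\eqref{eq:sigma-unif-definition} immediately give $Q_{\boldsymbol{\theta},\pi^{\text{\rm unif}}}^{r_0}(z'\mid z)\geq \sigma_{\boldsymbol{\theta},\text{\rm unif}}\,\omega_{\boldsymbol{\theta},\text{\rm unif}}(z')$ for all valid $z,z'\in\mathbb{V}$.

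The key step, and also the main obstacle, is handling the second summand via a pointwise comparison $\pi_{n-1}^\eta(a\mid \mathbf{d},\mathbf{i}) \geq K c_n\,\pi^{\text{\rm unif}}(a\mid\mathbf{d},\mathbf{i})$, where $c_n \coloneqq \min_{z\in U_{n-1}}\pi_{n-1}^\eta(z)$. To establish this, I would first verify that the supports of $\pi^{\text{\rm unif}}$ and $\pi_{n-1}^\eta$ coincide: both enforce the $R$-max-delay constraint in the same way, i.e., when some $d_{a''}=R$ with $a''\neq a$ both assign probability zero, and otherwise both assign strictly positive probability (the latter for $\pi_{n-1}^\eta$ because the numerator of \eqref{eq:lambda-n-definition} contains $\eta\,\mu_{\widehat{\boldsymbol{\theta}}(n)}^{\text{\rm unif}}(\mathbf{d},\mathbf{i})/K>0$). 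On this common support $\pi^{\text{\rm unif}}\in\{1/K,1\}$, so one checks the inequality in two cases: at non-max-delay states $\pi_{n-1}^\eta\geq c_n = (Kc_n)(1/K)$; at a max-delay state the forced arm satisfies $\pi_{n-1}^\eta=\pi^{\text{\rm unif}}=1$, and since $U_{n-1}$ necessarily contains non-max-delay states (where $K$ actions share unit mass), one has $c_n\leq 1/K$ and hence $Kc_n\leq 1$. Entrywise iteration then yields $Q_{\boldsymbol{\theta},\pi_{n-1}^\eta}^{r_0}\geq (Kc_n)^{r_0}\,Q_{\boldsymbol{\theta},\pi^{\text{\rm unif}}}^{r_0}\geq (Kc_n)^{r_0}\,\sigma_{\boldsymbol{\theta},\text{\rm unif}}\,\omega_{\boldsymbol{\theta},\text{\rm unif}}(z')$.

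Combining the two bounds and converting $\omega_{\boldsymbol{\theta},\text{\rm unif}}(z')$ into $\omega_n(z')$ via $\omega_{\boldsymbol{\theta},\text{\rm unif}}(z')/\omega_n(z')\geq \min_{z''}\omega_{\boldsymbol{\theta},\text{\rm unif}}(z'')/\omega_n(z'')$ yields exactly $Q_n^{r_0}(z'\mid z) \geq \sigma(\varepsilon_n,\pi_{n-1}^\eta,\omega_n)\,\omega_n(z')$. Applying Lemma~\ref{lem:rate-of-convergence-levin-and-peres} with this $\sigma$ and $r_0 = r_{\boldsymbol{\theta},\text{\rm unif}}$, and rewriting $(1-\sigma)^{r/r_0 - 1} = (2/\bar\sigma)\,\bar\sigma^{r/r_0}$, produces the claimed bound $C_n\rho_n^r$. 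The only genuinely delicate bookkeeping is the support-and-ratio argument at the forced-selection states; the rest is a direct unwinding of the Levin--Peres bound.
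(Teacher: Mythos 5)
Your proposal is correct and follows essentially the same route as the paper's proof: the same mixture decomposition retaining the two pure terms, the same pointwise domination $\pi_{n-1}^\eta(a'\mid\mathbf{d}',\mathbf{i}')\geq K\,\bigl(\min_{z\in U_{n-1}}\pi_{n-1}^\eta(z)\bigr)\,\pi^{\text{\rm unif}}(a'\mid\mathbf{d}',\mathbf{i}')$, the same conversion of $\omega_{\boldsymbol{\theta},\text{\rm unif}}$ into $\omega_n$ via the minimum ratio, and the same application of Lemma~\ref{lem:rate-of-convergence-levin-and-peres}. Your explicit case check of the support matching and the forced-selection states fills in a step the paper merely asserts, so nothing is missing.
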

\begin{proof}
    Recall that
    \begin{equation}
        Q_n(\mathbf{d}', \mathbf{i}', a'|\mathbf{d}, \mathbf{i}, a) = \varepsilon_n \, Q_{\boldsymbol{\theta}, \pi^{\text{\rm unif}}}(\mathbf{d}', \mathbf{i}', a'|\mathbf{d}, \mathbf{i}, a) + (1-\varepsilon_n) \, Q_{\boldsymbol{\theta}, \pi_{n-1}^\eta}(\mathbf{d}', \mathbf{i}', a'|\mathbf{d}, \mathbf{i}, a).
        \label{eq:proof-of-convergence-of-kernel-to-stat-dist-1}
    \end{equation}
    We note that for all $(\mathbf{d}, \mathbf{i}, a), (\mathbf{d}', \mathbf{i}', a') \in \mathbb{S}_R \times [K]$,
    \begin{align}
        Q_{\boldsymbol{\theta}, \pi_{n-1}^\eta}(\mathbf{d}', \mathbf{i}', a'|\mathbf{d}, \mathbf{i}, a)
        &= Q_{\boldsymbol{\theta}, R}(\mathbf{d}', \mathbf{i}'|\mathbf{d}, \mathbf{i}, a) \cdot \pi_{n-1}^\eta(a'|\mathbf{d}', \mathbf{i}') \nonumber\\
        &= Q_{\boldsymbol{\theta}, R}(\mathbf{d}', \mathbf{i}'|\mathbf{d}, \mathbf{i}, a) \cdot K\, \pi_{n-1}^\eta(a'|\mathbf{d}', \mathbf{i}') \cdot \frac{1}{K} \nonumber\\ 
        &\geq Q_{\boldsymbol{\theta}, R}(\mathbf{d}', \mathbf{i}'|\mathbf{d}, \mathbf{i}, a) \cdot K\, \min_{(\mathbf{d}, \mathbf{i}, a) \in U_{n-1}} \pi_{n-1}^\eta(a|\mathbf{d}, \mathbf{i}) \cdot \pi^{\text{\rm unif}}(a'|\mathbf{d}', \mathbf{i}') \nonumber\\
        &= K\, \min_{(\mathbf{d}, \mathbf{i}, a) \in U_{n-1}} \pi_{n-1}^\eta(a|\mathbf{d}, \mathbf{i}) \cdot Q_{\boldsymbol{\theta}, \pi^{\text{\rm unif}}}(\mathbf{d}', \mathbf{i}', a'|\mathbf{d}, \mathbf{i}, a),
        \label{eq:proof-of-convergence-of-kernel-to-stat-dist-2}
    \end{align}
    where in writing the inequality above, we make use of the relation
    \begin{equation}
        \pi_{n-1}^\eta(a'|\mathbf{d}', \mathbf{i}') \cdot \frac{1}{K} \geq \min_{(\mathbf{d}, \mathbf{i}, a) \in U_{n-1}} \pi_{n-1}^\eta(a|\mathbf{d}, \mathbf{i}) \cdot \pi^{\text{\rm unif}}(a'|\mathbf{d}', \mathbf{i}') \quad \forall (\mathbf{d}', \mathbf{i}', a').
    \end{equation}
    We then note that for all $(\mathbf{d}, \mathbf{i}, a), (\mathbf{d}', \mathbf{i}', a') \in \mathbb{V}$,
    \begin{align}
        & Q_n^{r_{\boldsymbol{
        \theta}, \text{\rm unif}}}(\mathbf{d}', \mathbf{i}', a'|\mathbf{d}, \mathbf{i}, a) \nonumber\\
        &\geq \varepsilon_n^{r_{\boldsymbol{\theta}}, \text{\rm unif}} \, Q_{\boldsymbol{\theta}, \pi^{\text{\rm unif}}}^{r_{\boldsymbol{\theta}, \text{\rm unif}}}(\mathbf{d}', \mathbf{i}', a'|\mathbf{d}, \mathbf{i}, a) + (1-\varepsilon_n)^{r_{\boldsymbol{\theta}, \text{\rm unif}}} \, Q_{\boldsymbol{\theta}, \pi_{n-1}^\eta}^{r_{\boldsymbol{\theta}, \text{\rm unif}}}(\mathbf{d}', \mathbf{i}', a'|\mathbf{d}, \mathbf{i}, a) \nonumber\\
        &\stackrel{(a)}{\geq} \left(\varepsilon_n^{r_{\boldsymbol{\theta}}, \text{\rm unif}} + \left((1-\varepsilon_n)\, K \, \min_{(\mathbf{d}, \mathbf{i}, a) \in U_{n-1}}\pi_{n-1}^\eta(a|\mathbf{d}, \mathbf{i})\right)^{r_{\boldsymbol{\theta}, \text{\rm unif}}}\, \right)\, Q_{\boldsymbol{\theta}, \pi^{\text{\rm unif}}}^{r_{\boldsymbol{\theta}, \text{\rm unif}}}(\mathbf{d}', \mathbf{i}', a'|\mathbf{d}, \mathbf{i}, a) \nonumber\\
        &\stackrel{(b)}{\geq} \left(\varepsilon_n^{r_{\boldsymbol{\theta}}, \text{\rm unif}} + \left((1-\varepsilon_n)\, K \, \min_{(\mathbf{d}, \mathbf{i}, a) \in U_{n-1}}\pi_{n-1}^\eta(a|\mathbf{d}, \mathbf{i})\right)^{r_{\boldsymbol{\theta}, \text{\rm unif}}}\, \right) \cdot \sigma_{\boldsymbol{\theta}, \text{\rm unif}} \cdot \omega_{\boldsymbol{\theta}, \text{\rm unif}}(\mathbf{d}', \mathbf{i}', a') \nonumber\\
        &= \left(\varepsilon_n^{r_{\boldsymbol{\theta}}, \text{\rm unif}} + \left((1-\varepsilon_n)\, K \, \min_{(\mathbf{d}, \mathbf{i}, a) \in U_{n-1}}\pi_{n-1}^\eta(a|\mathbf{d}, \mathbf{i})\right)^{r_{\boldsymbol{\theta}, \text{\rm unif}}}\, \right) \cdot \sigma_{\boldsymbol{\theta}, \text{\rm unif}} \cdot \frac{\omega_{\boldsymbol{\theta}, \text{\rm unif}}(\mathbf{d}', \mathbf{i}', a')}{\omega_{n}^\star(\mathbf{d}', \mathbf{i}', a')} \cdot \omega_{n}^\star(\mathbf{d}', \mathbf{i}', a') \nonumber\\ 
        &\stackrel{(c)}{\geq} \underbrace{\left(\varepsilon_n^{r_{\boldsymbol{\theta}}, \text{\rm unif}} + \left((1-\varepsilon_n)\, K \, \min_{(\mathbf{d}, \mathbf{i}, a) \in U_{n-1}}\pi_{n-1}^\eta(a|\mathbf{d}, \mathbf{i})\right)^{r_{\boldsymbol{\theta}, \text{\rm unif}}}\, \right) \cdot \sigma_{\boldsymbol{\theta}, \text{\rm unif}} \cdot \left(\min_{(\mathbf{d}, \mathbf{i}, a) \in \mathbb{V}}\frac{\omega_{\boldsymbol{\theta}, \text{\rm unif}}(\mathbf{d}, \mathbf{i}, a)}{\omega_{n}^\star(\mathbf{d}, \mathbf{i}, a)} \right)}_{=\sigma(\varepsilon_n, \pi_{n-1}^\eta, \omega_n^\star)} \cdot \, \omega_{n}^\star(\mathbf{d}', \mathbf{i}', a') \nonumber\\ 
        &= \sigma(\varepsilon_n, \pi_{n-1}^\eta, \omega_n^\star) \cdot  \omega_{n}^\star(\mathbf{d}', \mathbf{i}', a'),
        \label{eq:proof-of-convergence-of-kernel-to-stat-dist-3}
    \end{align}
    where $(a)$ above follows from an application of \eqref{eq:proof-of-convergence-of-kernel-to-stat-dist-2} for a total of $r_{\boldsymbol{\theta}, \text{\rm unif}}$ times in succession, and $(b)$ follows from~\eqref{eq:sigma-unif-definition}. Noting that \eqref{eq:proof-of-convergence-of-kernel-to-stat-dist-3} holds for all $(\mathbf{d}, \mathbf{i}, a), (\mathbf{d}', \mathbf{i}', a') \in \mathbb{V}$, a simple application of Lemma~\ref{lem:rate-of-convergence-levin-and-peres} with $\sigma=\sigma(\varepsilon_n, \pi_{n-1}^\eta, \omega_n^\star)$ and $r_0 = r_{\boldsymbol{\theta}, \text{\rm unif}}$ yields the desired result.
\end{proof}

\subsection{Concentration of Empirical Transition Kernel}
In this section, we record a result on the concentration of the empirical transition kernel $\widehat{Q}_n$ defined in \eqref{eq:categorical-distribution} under a indefinitely running version of the policy $\pi^{\textsc{Rstl-Dtrack}}$ (i.e., a policy that does not check for the stopping criterion and runs indefinitely by selecting arms according to \eqref{eq:arms-selection-rule}). We begin with some notations. For any $n \geq K$, let $\widehat{\mathcal{M}}_n$ denote the MDP whose state space is $\mathbb{S}_R$, action space is $[K]$, and the transition kernel is specified by $\widehat{Q}_n$.  
For any $\boldsymbol{\theta} \in \Theta^K$, let
\begin{equation}
    \| \widehat{\mathcal{M}}_n - \mathcal{M}_{\boldsymbol{\theta}, R} \|_{\infty} \coloneqq \max_{(\mathbf{d}, \mathbf{i}, a)} \| \widehat{Q}_n(\cdot \mid\mathbf{d}, \mathbf{i}, a) - Q_{\boldsymbol{\theta}, R}(\cdot \mid\mathbf{d}, \mathbf{i}, a) \|_1,
    \label{eq:norm-between-MDPs}
\end{equation}
where $\|\cdot \|_1$ denotes the vector $1$-norm operator.
We then have the following result.
\begin{lemma}
    \label{lem:concentration-of-parameter-estimates}
    Fix $\boldsymbol{\theta} \in \Theta^K$ and $\xi > 0$. For $N \geq K$, let\footnote{We assume, without loss of generality, that $N^{1/4}$ is an integer.}
    \begin{equation}
        C_N^1(\xi) \coloneqq \bigcap_{n=N^5}^{N^6} \bigg\lbrace \|\widehat{\mathcal{M}}_n - \mathcal{M}_{\boldsymbol{\theta}, R} \|_{\infty} \leq \xi \bigg\rbrace.
        \label{eq:C-N-1-of-xi-event}
    \end{equation}
    Then, there exist constants $B$ and $C$ that depend only on $\xi$ and $\boldsymbol{\theta}$ such that under the indefinite version of $\pi^{\textsc{Rstl-Dtrack}}$,
    \begin{equation}
        \P_{\boldsymbol{\theta}}\bigg(\overline{C_N^1(\xi)}\bigg) \leq \frac{1}{N^2} + B\, N^6\, \exp \left( - \frac{C\, N^{1/4}}{\sqrt{K \, S_R}} \right) \quad \forall N \geq K.
        \label{eq:concentration-inequality-parameter-estimates}
    \end{equation}
\end{lemma}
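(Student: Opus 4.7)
The plan is to combine the sufficient-exploration guarantee of Lemma~\ref{lem:sufficient-exploration-of-state-actions} with a classical concentration inequality for the empirical distribution on a finite alphabet of size $S_R$, and then control the supremum over $n \in [N^5, N^6]$ via union bounds. First, I would invoke Lemma~\ref{lem:sufficient-exploration-of-state-actions} with $\alpha = 1/N^2$ to obtain an event $\mathcal{E}_N$ with $\P_{\boldsymbol{\theta}}(\overline{\mathcal{E}_N}) \leq 1/N^2$ on which $N(n, \mathbf{d}, \mathbf{i}, a) \geq (n/\lambda_{1/N^2}(\boldsymbol{\theta}))^{1/4} - 1$ for every valid $(\mathbf{d}, \mathbf{i}, a) \in \mathbb{V}$ and every $n \geq K$. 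Restricting to $n \geq N^5$ yields a deterministic lower bound $M_N := \lceil (N^5/\lambda_{1/N^2}(\boldsymbol{\theta}))^{1/4}\rceil - 1$ on the visit counts; since $\lambda_{1/N^2}(\boldsymbol{\theta})$ depends only polylogarithmically on $N$, one has $M_N \geq c_{\boldsymbol{\theta}} \, N^{1/4}$ for $N$ sufficiently large, with $c_{\boldsymbol{\theta}}$ depending only on $\boldsymbol{\theta}$.

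Next, I would exploit the Markov structure to view $\widehat{Q}_n(\cdot \mid \mathbf{d}, \mathbf{i}, a)$ as the empirical distribution of an i.i.d.\ sample. For each valid $(\mathbf{d}, \mathbf{i}, a)$, let $W_u(\mathbf{d}, \mathbf{i}, a)$ denote the state observed on the step immediately following the $u$-th visit to $(\mathbf{d}, \mathbf{i}, a)$, exactly as in the proof of Lemma~\ref{lem:flow-conservation-property}. The strong Markov property implies that $\{W_u(\mathbf{d}, \mathbf{i}, a)\}_{u \geq 1}$ is i.i.d.\ with law $Q_{\boldsymbol{\theta}, R}(\cdot \mid \mathbf{d}, \mathbf{i}, a)$, irrespective of the adaptive arm-selection rule driving the process. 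Writing $\widehat{Q}^{(m)}(\cdot \mid \mathbf{d}, \mathbf{i}, a) := m^{-1}\sum_{u=1}^{m} \mathbf{1}_{\{W_u(\mathbf{d}, \mathbf{i}, a)\,=\,\cdot\,\}}$, one has the identity $\widehat{Q}_n(\cdot \mid \mathbf{d}, \mathbf{i}, a) = \widehat{Q}^{(N(n, \mathbf{d}, \mathbf{i}, a))}(\cdot \mid \mathbf{d}, \mathbf{i}, a)$ whenever the pair has been visited at least once. A Weissman-type inequality for the $\ell_1$ deviation of the empirical law on $S_R$ atoms then supplies a bound of the form $\P_{\boldsymbol{\theta}}(\|\widehat{Q}^{(m)}(\cdot \mid \mathbf{d}, \mathbf{i}, a) - Q_{\boldsymbol{\theta}, R}(\cdot \mid \mathbf{d}, \mathbf{i}, a)\|_1 > \xi) \leq c_1(S_R)\,\exp(-c_2(\xi, S_R)\, m)$ for every fixed $m \geq 1$.

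Finally, I would stitch these pieces together through the inclusion
\[
\bigl\{\|\widehat{Q}_n(\cdot \mid \mathbf{d}, \mathbf{i}, a) - Q_{\boldsymbol{\theta}, R}(\cdot \mid \mathbf{d}, \mathbf{i}, a)\|_1 > \xi\bigr\} \cap \mathcal{E}_N \;\subseteq\; \bigcup_{m = M_N}^{n} \bigl\{\|\widehat{Q}^{(m)}(\cdot \mid \mathbf{d}, \mathbf{i}, a) - Q_{\boldsymbol{\theta}, R}(\cdot \mid \mathbf{d}, \mathbf{i}, a)\|_1 > \xi\bigr\},
\]
followed by a union bound over the geometric tail in $m \geq M_N$, over the at most $K S_R$ valid state-action tuples, and over the $\leq N^6$ values of $n$ in $[N^5, N^6]$. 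Substituting $M_N \geq c_{\boldsymbol{\theta}}\, N^{1/4}$ and absorbing all $\xi$- and $\boldsymbol{\theta}$-dependent factors (including the $c_1(S_R)$ and $c_2(\xi, S_R)$ from Weissman's inequality) into the constants $B$ and $C$ yields the claimed estimate. The main analytical obstacle is that $N(n, \mathbf{d}, \mathbf{i}, a)$ is a random sample count which is not a stopping time for the i.i.d.\ sequence $\{W_u\}$, so Weissman's bound cannot be applied directly to $\widehat{Q}_n$; the resolution is to pin the count from below via $\mathcal{E}_N$ and pay the (cheap) union-bound cost over all admissible $m \geq M_N$, which is affordable precisely because the per-$m$ tail decays geometrically in $m$.
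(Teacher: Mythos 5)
Your proposal is correct and follows essentially the same route as the paper's proof: condition on the sufficient-exploration event $\mathcal{E}_N$ (costing $1/N^2$), reduce the deviation of $\widehat{Q}_n$ to that of an empirical distribution over a known number of i.i.d.\ next-state samples, apply an exponential concentration bound per sample count, and union-bound over counts, state-action tuples, and $n \in [N^5, N^6]$. The only cosmetic difference is that you invoke a Weissman-type $\ell_1$ inequality directly, whereas the paper splits the $\ell_1$ norm into entrywise deviations of size $\xi/S_R$ and applies the binomial Chernoff--Hoeffding bound to each; both yield the same geometric tail in the visit count and the same final form of the estimate.
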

\begin{proof}
    Consider the event
    \begin{equation}
        \mathcal{E}_N \coloneqq \bigg\lbrace \forall(\mathbf{d}, \mathbf{i}, a) \in \mathbb{V}, \quad \forall n \geq K, \quad N(n, \mathbf{d}, \mathbf{i}, a) \geq \left\lceil \frac{n}{\lambda_{\boldsymbol{\theta}}(N)} \right\rceil^{1/4} - 1 \bigg\rbrace,
        \label{eq:event-E-N-definition}
    \end{equation}
    where $\lambda_{\boldsymbol{\theta}}(N) \coloneqq \frac{(1+S_R)^2}{\sigma_{\boldsymbol{\theta}}^2} \log^2(1+KS_R\,N^2)$, and $\mathbb{V}$ is as defined in Lemma~\ref{lem:sufficient-exploration-of-state-actions}. In the definition of $\lambda_{\boldsymbol{\theta}}(N)$, the constant $\sigma_{\boldsymbol{\theta}}$ is as defined in Lemma~\ref{lem:norm-of-product-of-Aj}. From Lemma~\ref{lem:sufficient-exploration-of-state-actions}, we know that $\P_{\boldsymbol{\theta}}(\mathcal{E}_N) \geq 1-1/N^2$. We thus have
    \begin{align}
        \P_{\boldsymbol{\theta}}(\overline{C_N^1(\xi)}) 
        &\leq \P_{\boldsymbol{\theta}}(\overline{\mathcal{E}_N}) + \P_{\boldsymbol{\theta}}(\overline{C_N^1(\xi)} \cap \mathcal{E}_N) \nonumber\\
        &\leq \frac{1}{N^2} + \P_{\boldsymbol{\theta}}(\overline{C_N^1(\xi)} \cap \mathcal{E}_N).
        \label{eq:proof-of-concentration-of-parameter-estimates-1}
    \end{align}
    Using the union bound, we have
    \begin{align}
        & \P_{\boldsymbol{\theta}}(\overline{C_N^1(\xi)} \cap \mathcal{E}_N) \nonumber\\
        &\leq \sum_{n=N^5}^{N^6} \P_{\boldsymbol{\theta}}\bigg(\exists (\mathbf{d}, \mathbf{i}, a) \in \mathbb{V}: \|\widehat{Q}_n(\cdot \mid\mathbf{d}, \mathbf{i}, a) - Q_{\boldsymbol{\theta}, R}(\cdot \mid\mathbf{d}, \mathbf{i}, a) \|_1 > \xi, \ \mathcal{E}_N\bigg) \nonumber\\
        &\leq \sum_{n=N^5}^{N^6} \ \sum_{(\mathbf{d}, \mathbf{i}, a) \in \mathbb{V}} \P_{\boldsymbol{\theta}}\bigg(\exists (\mathbf{d}', \mathbf{i}') \in \mathbb{S}_R: |\widehat{Q}_n(\mathbf{d}', \mathbf{i}'|\mathbf{d}, \mathbf{i}, a) - Q_{\boldsymbol{\theta}, R}(\mathbf{d}', \mathbf{i}'|\mathbf{d}, \mathbf{i}, a)| > \frac{\xi}{S_R}, \ \mathcal{E}_N\bigg) \nonumber\\
        &\leq \sum_{n=N^5}^{N^6} \ \sum_{\substack{(\mathbf{d}, \mathbf{i}, a) \in \mathbb{V}\\(\mathbf{d}', \mathbf{i}') \in \mathbb{S}_R}} \P_{\boldsymbol{\theta}}\bigg(|\widehat{Q}_n(\mathbf{d}', \mathbf{i}'|\mathbf{d}, \mathbf{i}, a) - Q_{\boldsymbol{\theta}, R}(\mathbf{d}', \mathbf{i}'|\mathbf{d}, \mathbf{i}, a)| > \frac{\xi}{S_R}, \ \mathcal{E}_N\bigg) \nonumber\\
        &\leq \sum_{n=N^5}^{N^6} \ \sum_{\substack{(\mathbf{d}, \mathbf{i}, a) \in \mathbb{V}\\(\mathbf{d}', \mathbf{i}') \in \mathbb{S}_R}} \P_{\boldsymbol{\theta}}\bigg(\widehat{Q}_n(\mathbf{d}', \mathbf{i}'|\mathbf{d}, \mathbf{i}, a) - Q_{\boldsymbol{\theta}, R}(\mathbf{d}', \mathbf{i}'|\mathbf{d}, \mathbf{i}, a) > \frac{\xi}{S_R}, \ \mathcal{E}_N\bigg) \nonumber\\
        &\hspace{1cm} + \sum_{n=N^5}^{N^6} \ \sum_{\substack{(\mathbf{d}, \mathbf{i}, a) \in \mathbb{V}\\(\mathbf{d}', \mathbf{i}') \in \mathbb{S}_R}} \P_{\boldsymbol{\theta}}\bigg(\widehat{Q}_n(\mathbf{d}', \mathbf{i}'|\mathbf{d}, \mathbf{i}, a) - Q_{\boldsymbol{\theta}, R}(\mathbf{d}', \mathbf{i}'|\mathbf{d}, \mathbf{i}, a) < -\frac{\xi}{S_R}, \ \mathcal{E}_N\bigg).
        \label{eq:proof-of-concentration-of-parameter-estimates-2}
    \end{align}
    For $x, y \in [0,1]$, let $d(x,y) \coloneqq x\log(x/y) + (1-x) \log((1-x)/(1-y))$. For all $(\mathbf{d}, \mathbf{i}, a) \in \mathbb{V}$, $(\mathbf{d}', \mathbf{i}') \in \mathbb{S}_R$, and $n \geq N^5$, we then have
    \begin{align}
        & \P_{\boldsymbol{\theta}}\bigg(\widehat{Q}_n(\mathbf{d}', \mathbf{i}'|\mathbf{d}, \mathbf{i}, a) - Q_{\boldsymbol{\theta}, R}(\mathbf{d}', \mathbf{i}'|\mathbf{d}, \mathbf{i}, a) > \frac{\xi}{S_R}, \ \mathcal{E}_N\bigg) \nonumber\\
        &\leq \P_{\boldsymbol{\theta}}\bigg(\widehat{Q}_n(\mathbf{d}', \mathbf{i}'|\mathbf{d}, \mathbf{i}, a) - Q_{\boldsymbol{\theta}, R}(\mathbf{d}', \mathbf{i}'|\mathbf{d}, \mathbf{i}, a) > \frac{\xi}{S_R}, \ N(n, \mathbf{d}, \mathbf{i}, a) \geq \left\lceil \frac{n}{\lambda_{\boldsymbol{\theta}}(N)} \right\rceil^{1/4} - 1 \bigg) \nonumber\\
        &= \sum_{u=\left\lceil \frac{n}{\lambda_{\boldsymbol{\theta}}(N)} \right\rceil^{1/4} - 1}^{\infty} \P_{\boldsymbol{\theta}}\bigg(\widehat{Q}_n(\mathbf{d}', \mathbf{i}'|\mathbf{d}, \mathbf{i}, a) - Q_{\boldsymbol{\theta}, R}(\mathbf{d}', \mathbf{i}'|\mathbf{d}, \mathbf{i}, a) > \frac{\xi}{S_R}, \ N(n, \mathbf{d}, \mathbf{i}, a) = u \bigg) \nonumber\\
        &\stackrel{(a)}{\leq} \sum_{u=\left\lceil \frac{n}{\lambda_{\boldsymbol{\theta}}(N)} \right\rceil^{1/4} - 1}^{\infty} \exp\left(-u \cdot d\left(Q_{\boldsymbol{\theta}, R}(\mathbf{d}', \mathbf{i}'|\mathbf{d}, \mathbf{i}, a) + \frac{\xi}{S_R}, Q_{\boldsymbol{\theta}, R}(\mathbf{d}', \mathbf{i}'|\mathbf{d}, \mathbf{i}, a)\right)\right) \nonumber\\
        &\leq \dfrac{\exp\left(-\left(\left\lceil \frac{n}{\lambda_{\boldsymbol{\theta}}(N)} \right\rceil^{1/4} - 1\right) \cdot d\left(Q_{\boldsymbol{\theta}, R}(\mathbf{d}', \mathbf{i}'|\mathbf{d}, \mathbf{i}, a) + \frac{\xi}{S_R}, Q_{\boldsymbol{\theta}, R}(\mathbf{d}', \mathbf{i}'|\mathbf{d}, \mathbf{i}, a)\right)\right)}{1-\exp\left(-d\left(Q_{\boldsymbol{\theta}, R}(\mathbf{d}', \mathbf{i}'|\mathbf{d}, \mathbf{i}, a) + \frac{\xi}{S_R}, Q_{\boldsymbol{\theta}, R}(\mathbf{d}', \mathbf{i}'|\mathbf{d}, \mathbf{i}, a)\right)\right)} \nonumber\\
        &\leq \dfrac{\exp\left(-\left( \frac{N^{5/4}}{\lambda_{\boldsymbol{\theta}}(N)^{1/4}} - 1\right) \cdot d\left(Q_{\boldsymbol{\theta}, R}(\mathbf{d}', \mathbf{i}'|\mathbf{d}, \mathbf{i}, a) + \frac{\xi}{S_R}, Q_{\boldsymbol{\theta}, R}(\mathbf{d}', \mathbf{i}'|\mathbf{d}, \mathbf{i}, a)\right)\right)}{1-\exp\left(-d\left(Q_{\boldsymbol{\theta}, R}(\mathbf{d}', \mathbf{i}',|\mathbf{d}, \mathbf{i}, a) + \frac{\xi}{S_R}, Q_{\boldsymbol{\theta}, R}(\mathbf{d}', \mathbf{i}'|\mathbf{d}, \mathbf{i}, a)\right)\right)},
        \label{eq:proof-of-concentration-of-parameter-estimates-3}
    \end{align}
    where $(a)$ above follows from the Chernoff--Hoeffding bound. Along similar lines as above, we have
    \begin{align}
        & \P_{\boldsymbol{\theta}}\bigg(\widehat{Q}_n(\mathbf{d}', \mathbf{i}'|\mathbf{d}, \mathbf{i}, a) - Q_{\boldsymbol{\theta}, R}(\mathbf{d}', \mathbf{i}'|\mathbf{d}, \mathbf{i}, a) < -\frac{\xi}{S_R}, \ \mathcal{E}_N\bigg) \nonumber\\
        &\leq \dfrac{\exp\left(-\left( \frac{N^{5/4}}{\lambda_{\boldsymbol{\theta}}(N)^{1/4}} - 1\right) \cdot d\left(Q_{\boldsymbol{\theta}, R}(\mathbf{d}', \mathbf{i}'|\mathbf{d}, \mathbf{i}, a) - \frac{\xi}{S_R}, Q_{\boldsymbol{\theta}, R}(\mathbf{d}', \mathbf{i}'|\mathbf{d}, \mathbf{i}, a)\right)\right)}{1-\exp\left(-d\left(Q_{\boldsymbol{\theta}, R}(\mathbf{d}', \mathbf{i}'|\mathbf{d}, \mathbf{i}, a) - \frac{\xi}{S_R}, Q_{\boldsymbol{\theta}, R}(\mathbf{d}', \mathbf{i}',|\mathbf{d}, \mathbf{i}, a)\right)\right)}.
        \label{eq:proof-of-concentration-of-parameter-estimates-4}
    \end{align}
    Plugging \eqref{eq:proof-of-concentration-of-parameter-estimates-3} and \eqref{eq:proof-of-concentration-of-parameter-estimates-4} into \eqref{eq:proof-of-concentration-of-parameter-estimates-2}, setting
\begin{align}
    C &\coloneqq \sqrt{\frac{\sigma_{\boldsymbol{\theta}}}{1+S_R}} \min_{\substack{(\mathbf{d}, \mathbf{i}, a) \in \mathbb{V}\\(\mathbf{d}', \mathbf{i}') \in \mathbb{S}_R}} \bigg\lbrace d\left(Q_{\boldsymbol{\theta}, R}(\mathbf{d}', \mathbf{i}'|\mathbf{d}, \mathbf{i}, a) - \frac{\xi}{S_R}, ~~ Q_{\boldsymbol{\theta}, R}(\mathbf{d}', \mathbf{i}'|\mathbf{d}, \mathbf{i}, a)\right), \nonumber\\
    &\hspace{5cm} d\left(Q_{\boldsymbol{\theta}, R}(\mathbf{d}', \mathbf{i}'|\mathbf{d}, \mathbf{i}, a) + \frac{\xi}{S_R}, ~~ Q_{\boldsymbol{\theta}, R}(\mathbf{d}', \mathbf{i}'|\mathbf{d}, \mathbf{i}, a) \right) \bigg\rbrace, \label{eq:C-definition} \\
    B &\coloneqq \sum_{\substack{(\mathbf{d}, \mathbf{i}, a) \in \mathbb{V}\\(\mathbf{d}', \mathbf{i}') \in \mathbb{S}_R}}  \bigg[\dfrac{\exp\left(d\left(Q_{\boldsymbol{\theta}, R}(\mathbf{d}', \mathbf{i}'|\mathbf{d}, \mathbf{i}, a) + \frac{\xi}{S_R}, ~~ Q_{\boldsymbol{\theta}, R}(\mathbf{d}', \mathbf{i}'|\mathbf{d}, \mathbf{i}, a)\right)\right)}{1-\exp\left(-d\left(Q_{\boldsymbol{\theta}, R}(\mathbf{d}', \mathbf{i}'|\mathbf{d}, \mathbf{i}, a) + \frac{\xi}{S_R}, ~~ Q_{\boldsymbol{\theta}, R}(\mathbf{d}', \mathbf{i}'|\mathbf{d}, \mathbf{i}, a)\right)\right)} \nonumber\\
    &\hspace{3cm} + \dfrac{\exp\left(d\left(Q_{\boldsymbol{\theta}, R}(\mathbf{d}', \mathbf{i}'|\mathbf{d}, \mathbf{i}, a) - \frac{\xi}{S_R}, ~~ Q_{\boldsymbol{\theta}, R}(\mathbf{d}', \mathbf{i}'|\mathbf{d}, \mathbf{i}, a)\right)\right)}{1-\exp\left(-d\left(Q_{\boldsymbol{\theta}, R}(\mathbf{d}', \mathbf{i}'|\mathbf{d}, \mathbf{i}, a) - \frac{\xi}{S_R}, ~~ Q_{\boldsymbol{\theta}, R}(\mathbf{d}', \mathbf{i}'|\mathbf{d}, \mathbf{i}, a)\right)\right)}\bigg], \label{eq:B-definition}
\end{align}
and 
noting that 
\begin{align*}
    \sum_{n=N^5}^{N^6} \exp\left(-\frac{C\, N^{5/4}}{\sqrt{\log(1+KS_R\,N^2)}} \right) 
    &\leq N^6 \exp\left(-\, \frac{C\, N^{5/4}}{\sqrt{\log(1+KS_R\,N^2)}} \right) \nonumber\\
    &\leq N^6 \exp\left(-\, \frac{C\, N^{5/4}}{\sqrt{K\, S_R\, N^2}} \right) \nonumber\\
    &= N^6 \exp\left(-\, \frac{C\, N^{1/4}}{\sqrt{K\, S_R}} \right),
\end{align*}
we arrive at \eqref{eq:concentration-inequality-parameter-estimates}. 
\end{proof}

\end{document}